\documentclass{article}

\usepackage[preprint]{neurips_2026}

\usepackage[utf8]{inputenc}
\DeclareUnicodeCharacter{266B}{}
\usepackage[T1]{fontenc}      
\usepackage{amsmath,amssymb,amsthm,mathtools}
\usepackage{amsfonts}         
\usepackage{bm}
\usepackage{enumitem}
\usepackage{booktabs}
\usepackage{xcolor}
\usepackage{url}              
\usepackage{microtype}        
\usepackage{nicefrac}         
\usepackage{float}
\usepackage{adjustbox}
\usepackage{multirow}
\usepackage{wrapfig}
\usepackage{placeins}
\usepackage{graphicx}
\usepackage[font=small,skip=2pt]{subcaption}
\usepackage{hyperref}         
\usepackage{cleveref}         
\usepackage[most]{tcolorbox}  

\newtcolorbox{skillbox}[1]{
  enhanced, breakable,
  colback=gray!4, colframe=black!70,
  colbacktitle=black!75, fonttitle=\bfseries\color{white}, coltitle=white,
  title={#1},
  boxrule=0.4pt, arc=1pt,
  left=8pt, right=8pt, top=5pt, bottom=5pt,
  toptitle=3pt, bottomtitle=3pt, lefttitle=8pt,
  before skip=8pt, after skip=8pt,
  toprule at break=0.4pt,
  bottomrule at break=0.4pt,
  pad at break*=2mm,
}

\definecolor{darkgreen}{rgb}{0,0.5,0}

\newcommand{\std}[1]{{\footnotesize$_{\pm#1}$}}

\newtheorem{theorem}{Theorem}
\newtheorem{lemma}[theorem]{Lemma}
\newtheorem{proposition}[theorem]{Proposition}
\newtheorem{corollary}[theorem]{Corollary}
\newtheorem{definition}{Definition}
\newtheorem{remark}{Remark}

\DeclareMathOperator{\Var}{Var}

\newcommand{\E}{\mathbb{E}}
\newcommand{\R}{\mathbb{R}}
\newcommand{\Rtilde}{\tilde{R}}
\newcommand{\Lttb}{\mathcal{L}_{\text{TTB}}}
\newcommand{\Slib}{\mathcal{S}}
\newcommand{\Dset}{\mathcal{D}}
\newcommand{\Batch}{\mathcal{B}}

\title{\textbf{SkillFlow: Flow-Driven Recursive Skill Evolution\\for Agentic Orchestration}}
\author{%
  Mingda Zhang$^{1}$, \;
  Tiesunlong Shen$^{2}$, \;
  Haoran Luo$^{3}$, \;
  Wenjin Liu$^{3}$ \\
  Zikai Xiao$^{4}$, \;
  Erik Cambria$^{3}$, \;
  Xiaoying Tang$^{1}$ \\[6pt]
  $^{1}$The Chinese University of Hong Kong, Shenzhen \quad
  $^{2}$National University of Singapore \\
  $^{3}$Nanyang Technological University \quad
  $^{4}$Zhejiang University
}
\date{}

\newcommand{\afterfloatsep}[1][-1.5]{\vspace*{#1em}}
\newcommand{\beforesubsectionsep}{\vspace*{-0.6em}}
\newcommand{\zerosep}{\vspace*{0em}}
\begin{document}
\raggedbottom
\maketitle


\begin{abstract}
In recent years, a variety of powerful LLM-based agentic systems have been applied to automate complex tasks through task orchestration.
However, existing orchestration methods still face key challenges, including strategy collapse under reward maximization, high gradient variance with opaque credit assignment, and unguided skill evolution whose decisions are typically made by directly prompting an LLM to judge rather than derived from principled training signals.
To address these challenges, we propose SkillFlow, a flow-based framework that takes a trainable Supervisor as the agent and a structured environment with dynamic skill library and frozen executor, automating task orchestration through multi-turn interaction.
SkillFlow employs Tempered Trajectory Balance (TTB), a regression-based flow-matching loss that samples trajectories proportional to reward, preserving diverse orchestration strategies rather than collapsing to a single mode.
The same flow objective yields a jointly learned backward policy that provides transparent per-step credit assignment at zero additional inference cost.
Building on these flow diagnostics, a recursive skill evolution mechanism determines \emph{when} to evolve, \emph{what} skills to create or prune, and \emph{where} decision gaps lie---closing the loop from training signal to autonomous capability growth.
Experimental results on 14 datasets show that SkillFlow significantly outperforms baselines across question answering, mathematical reasoning, code generation, and real-world interactive decision making tasks. Our code is available at \url{https://anonymous.4open.science/r/SkillFlow-E850}.
\end{abstract}


\section{Introduction}\label{sec:intro}

\begin{wrapfigure}[14]{r}{0.5\textwidth}
\vspace{-1.5em}
\centering
\includegraphics[width=7.05cm,height=3.459cm]{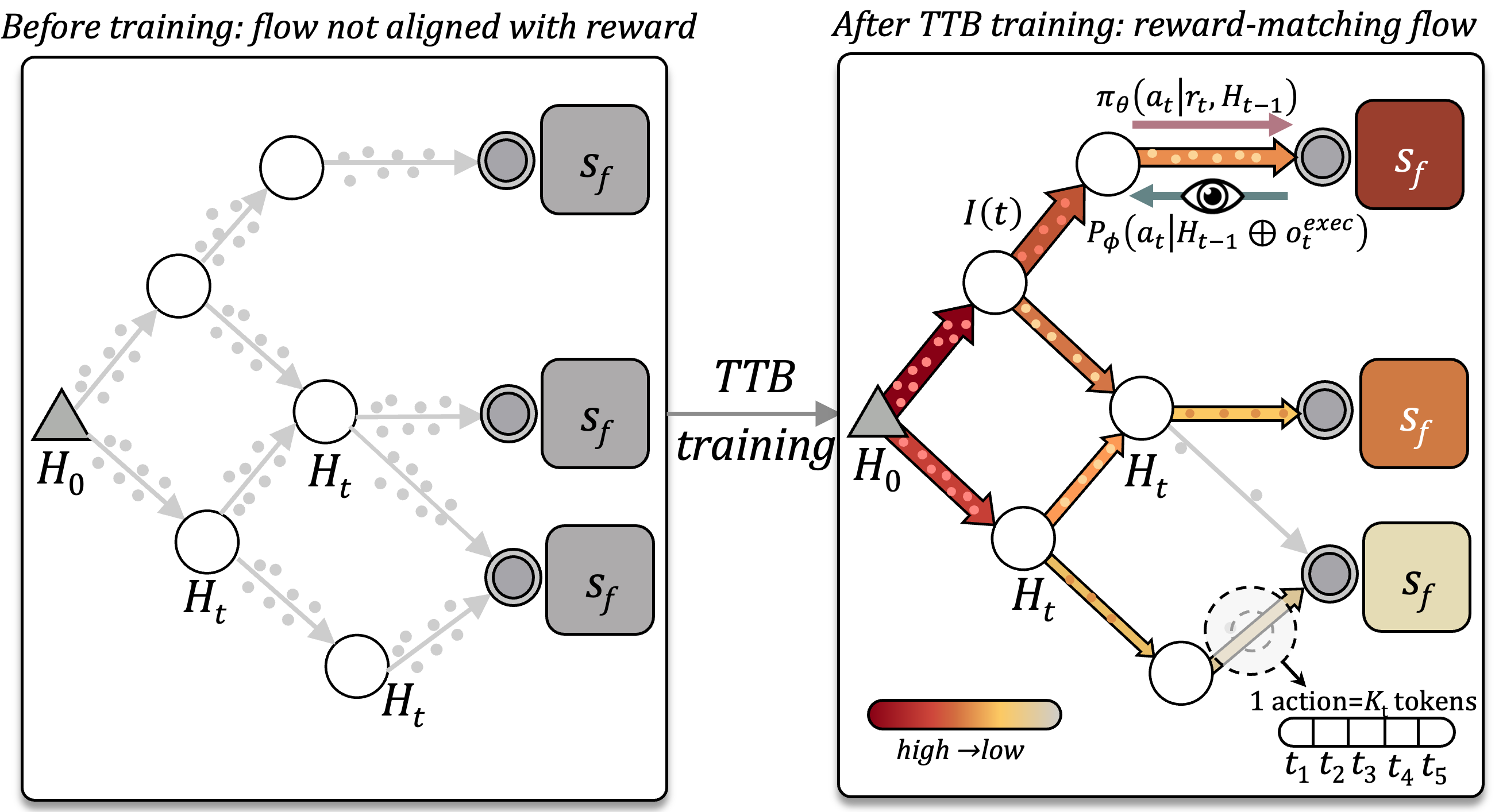}
\caption{SkillFlow at a glance. \textbf{Left}: before training, flow on the orchestration DAG is uniform. \textbf{Right}: after TTB, flow concentrates on reward-rich paths (colormap = flow magnitude). \textbf{Inset}: each edge is one action ($K_t$ tokens).}
\label{fig:teaser}
\end{wrapfigure}

In recent years, a variety of powerful LLM-based agentic systems have been applied to solve a wide range of complex tasks~\citep{yao2023react,hong2024metagpt,openhands2025,dang2025evolving_orchestration}, gradually moving beyond single-turn question answering toward executable end-to-end task completion.

\begin{figure}[!t]
\centering
\includegraphics[width=\textwidth]{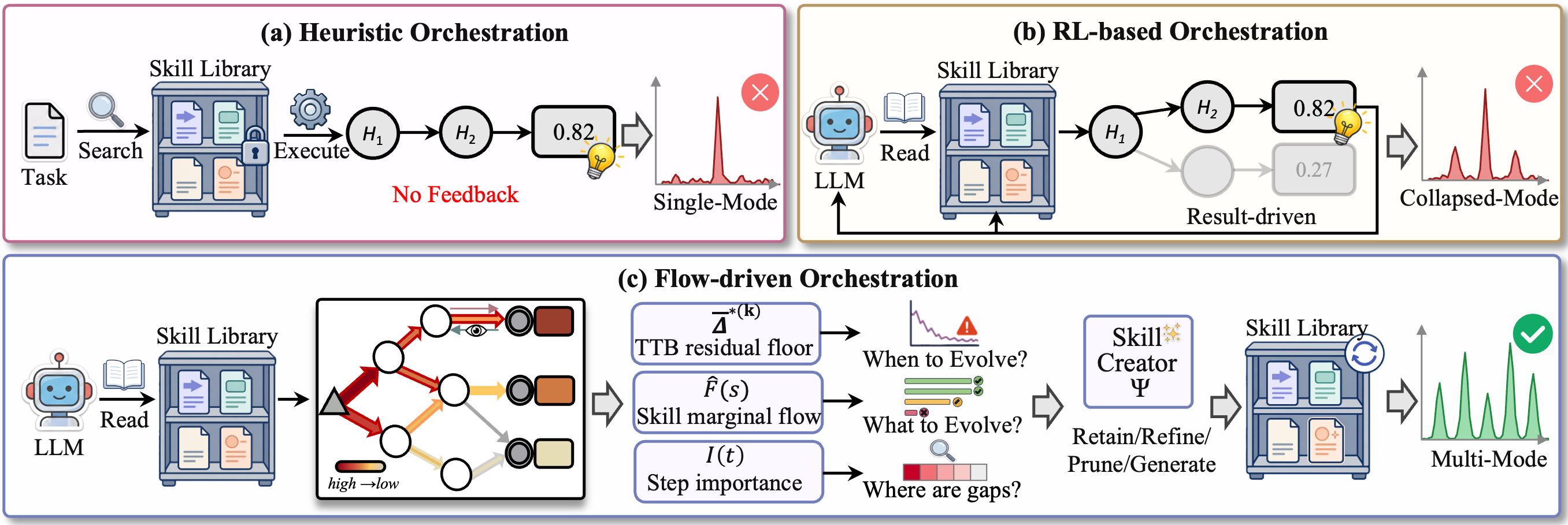}
\caption{Three orchestration paradigms. (a) heuristic dispatch over a locked library; (b) learning-based with terminal reward, prone to trajectory-level mode collapse; (c) SkillFlow co-trains a forward and backward policy under TTB and uses flow diagnostics to drive recursive skill evolution.}
\label{fig:framework}
\afterfloatsep
\end{figure}

In this process, task orchestration has become a key bridge from task goals to reproducible execution: by organizing primitive actions and reusable skills into structured trajectories on an action-level orchestration DAG (Fig.~\ref{fig:teaser}, left), where each node is an interaction history and each edge is one orchestration action, agents can complete complex tasks with improved controllability, compositionality, and reusability~\citep{zeng2023agenttuning,qian2024scaling}. However, in practice, orchestration still heavily relies on manual skill design and rigid action definitions~\citep{hu2024adas,agent_skills_survey2026}, making it costly to transfer across new tasks, new skill configurations, or different agent capabilities.

To address these issues, early approaches rely on heuristic orchestration (Fig.~\ref{fig:framework}a), retrieving skills from prebuilt libraries or searching over workflow graphs at test time, e.g., matching a corresponding tool to each subtask by description, or using tree search to explore multi-step action compositions~\citep{wang2023voyager,wang2024executable,zhang2025aflow,zhuge2024gptswarm}, but their orchestration policies remain bound to pre-designed heuristics and fixed skill repertoires, with no mechanism to adapt from execution outcomes. Recent work has shifted to \emph{learning-based orchestration} (Fig.~\ref{fig:framework}b)~\citep{zeng2023agenttuning,chen2023fireact,shao2024deepseekmath,deepseek2025r1,pan2026agentflow,zhang2026flowsteer,wang2026workflowr1,wang2025ragen,xia2026skillrl}, which trains orchestration policies directly from task-completion signals---sampling orchestration trajectories, collecting terminal rewards such as answer correctness or task success rate, and updating the orchestration model accordingly.

However, these methods still face three challenges.
\textbf{(i) Strategy collapse.} REINFORCE-family objectives, which uniformly raise or lower all action probabilities along a trajectory based on a single terminal reward, converge to a single reward-maximizing mode, forfeiting diverse, equally effective strategies~\citep{yu2025dapo,deepseek2025r1,zhang2026flowsteer,wang2026workflowr1,wang2025ragen}. This brittleness is amplified when executor capabilities evolve---once a tool is upgraded, the agent has no suitable fallback strategy~\citep{zhang2023robust_scheduling,li2025choice_divergence}.
\textbf{(ii) High gradient variance and opaque credit assignment.} Terminal-only rewards create long credit-assignment horizons, where policy-gradient variance and intra-group advantage collapse make per-step attribution opaque, when a multi-step trajectory succeeds or fails, it remains unclear which step is responsible~\citep{schulman2016gae,hcapo2026,process_outcome_credit2026,pan2026agentflow,wang2025ragen}.
\textbf{(iii) Unguided skill evolution.} Existing frameworks with dynamic skill libraries rely on heuristic triggers, fixed schedules, or direct LLM-as-judge prompting, lacking a principled signal for \emph{when} to update the library, \emph{what} skills to prune or create, or \emph{where} the critical decision points lie---so the library evolves blindly~\citep{wang2023voyager,xia2026skillrl,self_evolving_survey2025,agent_skills_survey2026,alzubi2026evoskill,coevoskills2026,sage2026skill_rl,skillx2026}.

To address these challenges, we propose \textbf{SkillFlow} (Fig.~\ref{fig:framework}c), a flow-based framework~\citep{bengio2021gflownet,bengio2023gflownet_foundations,malkin2022trajectory} for general task orchestration with recursive skill evolution. A trainable Supervisor interacts with a structured environment containing a dynamic skill library and a frozen executor.
At the core of SkillFlow (Fig.~\ref{fig:teaser}) is \textbf{Tempered Trajectory Balance (TTB)}, a regression-style flow-matching loss that drives each trajectory's sampling probability to be \emph{proportional to its reward}, rather than concentrated on a single best outcome. In contrast, REINFORCE-family objectives push nearly all probability mass onto one reward-maximizing trajectory; this reward-proportional sampling instead keeps multiple high-reward sub-trajectories---distinct successful paths through the orchestration DAG---alive under a single loss, preserving strategic diversity.
The same loss jointly trains a \emph{backward policy} that, once an orchestration terminates, attributes the trajectory-level outcome to its individual steps at zero additional inference cost---flagging which decisions actually drove success and which were incidental.
Building on these per-step and per-skill credit signals, a recursive skill evolution mechanism answers three questions directly from the training signal itself: \emph{when} the current library starts limiting performance (signaled by TTB convergence), \emph{what} skills to create or prune (ranked by the skill marginal flow $\hat{F}(s)$), and \emph{where} decision gaps lie (localized by the step importance $I(t)$)---closing the loop from training signal to autonomous capability growth.

We evaluate on benchmarks across question answering, mathematical reasoning, interactive decision making, and code generation. Results show SkillFlow outperforms direct LLMs, REINFORCE-style RL baselines~\citep{pan2026agentflow,zhang2026flowsteer,xia2026skillrl}, and skill-evolution methods~\citep{xskill2026,autoskill2026,skillclaw2026,alzubi2026evoskill} in task accuracy, strategy diversity, and orchestration cost---a foundation for self-evolving, flow-based agentic orchestration.


\section{Related Work}\label{sec:related}

\textbf{Agent Task Orchestration.} LLM-era task orchestration has evolved from rule-based automation to feedback-driven plan--act--feedback loops~\citep{yao2023react,wang2024executable,openhands2025}. Existing work spans single-agent sequential decision making~\citep{qin2024toolllm,yao2023react}, LLM-controller-based routing and constrained API planning~\citep{ong2024routellm,wang2024executable,daao2026}, and multi-agent SOP/role collaboration~\citep{hong2024metagpt,wu2023autogen,dang2025evolving_orchestration}. Reusable skill memory~\citep{wang2023voyager,agent_skills_survey2026,sok_agentic_skills2026,skillx2026} and self-evolving architectures~\citep{self_evolving_survey2025,single_agent_skills2026,alzubi2026evoskill,coevoskills2026,sage2026skill_rl,skillsd2026} further improve efficiency, yet all these approaches operate with fixed skill sets and lack a principled mechanism to expand the action space during training. SkillFlow fills this gap by learning orchestration from execution feedback while autonomously evolving its skill repertoire.

\textbf{Reinforcement Learning for Agents.} Agent RL models multi-turn interaction as a long-horizon MDP~\citep{zhou2024lats,rl_long_horizon_agents2026}. Credit-assignment advances include implicit step rewards~\citep{shen2026token_level}, hindsight advantage estimation~\citep{hcapo2026}, plan-execute decomposition~\citep{hiper2026}, progressive reward shaping~\citep{xi2025prs}, and process reward models~\citep{zhu2025prm_lessons,rprm2025}. Policy optimization has converged on GRPO-style objectives~\citep{deepseek2025r1,shao2024deepseekmath} with extensions (DAPO~\citep{yu2025dapo}, VAPO~\citep{liu2025vapo}, multi-agent variants~\citep{magrpo2025,graph_grpo2026}), also applied to workflow orchestration~\citep{zhang2026flowsteer,pan2026agentflow} and search-augmented reasoning~\citep{jin2025searchr1,int_credit2026}. All rely on REINFORCE-family objectives that converge to a single mode, lacking diversity-preserving signals~\citep{li2025choice_divergence}. While reward-matching flow training has been explored in LLM reasoning~\citep{xu2024flow_of_reasoning} and robust scheduling~\citep{zhang2023robust_scheduling}, SkillFlow targets a different setting---multi-turn agentic orchestration with a co-evolving skill library---and contributes reward-proportional trajectory sampling together with zero-cost per-step credit.


\section{Preliminaries}\label{sec:preliminaries}

\textbf{Definition 1: Orchestration State Graph.}
We model the task orchestration process as a directed acyclic graph $\mathcal{G} = (\mathcal{V}, \mathcal{A})$, where each vertex $H_t \in \mathcal{V}$ represents an interaction history and each edge $(H_{t-1}, H_t) \in \mathcal{A}$ corresponds to an orchestration action.
Acyclicity follows from the state update rule $H_t = H_{t-1} \oplus (r_t, a_t, o_t^{\text{exec}})$, which yields strict history growth $|H_t| > |H_{t-1}|$.

\textbf{Definition 2: Orchestration Trajectory.}
A complete path through $\mathcal{G}$ from initial state $H_0$ to a terminal state defines an orchestration trajectory:
\begin{equation}\label{eq:trajectory}
  \tau = \bigl\{(r_t,\; a_t,\; o_t^{\text{exec}})\bigr\}_{t=1}^{T}
  \;\Rightarrow\; y_q,
\end{equation}
where $r_t$ denotes the reasoning reflection at step $t$, $a_t = (\alpha_t, o_t)$ is the action with type $\alpha_t \in \{\texttt{skill},\, \texttt{act},\, \texttt{accept}\}$ and parameters $o_t$, and $o_t^{\text{exec}}$ is the execution feedback.
The episode terminates when $\alpha_t = \texttt{accept}$ or $t = T_{\max}$.

\textbf{Problem Statement.}
Given task $q \in \Dset$, environment $\mathcal{E}$, and executor $\mathcal{M}_{\text{exec}}$, we augment $\mathcal{G}$ with a non-negative flow function $F: \mathcal{V} \to \R_{\geq 0}$ satisfying conservation (incoming flow equals outgoing flow at each non-terminal state), with terminal condition $F(x) = \Rtilde(\tau_x)^\beta$.
The resulting flow network~\citep{bengio2021gflownet,bengio2023gflownet_foundations} (formal foundations in Appendix~\ref{app:gflownet_foundations}) induces a policy that samples trajectories in proportion to reward:
\begin{equation}\label{eq:problem}
  \pi^*(\tau \mid q) \propto \Rtilde(\tau)^\beta,
  \quad
  \Rtilde(\tau) = R(\tau) + \varepsilon_{\min},
\end{equation}
where $R(\tau)$ is task completion quality (per-task definitions in Appendix~\ref{app:reward}), $\varepsilon_{\min} > 0$ is a small constant that shifts rewards to be strictly positive---a prerequisite of flow networks---so that even zero-reward trajectories receive non-zero flow (details in Appendix~\ref{app:epsilon_smoothing}), and $\beta > 0$ controls the diversity--quality tradeoff.
This is equivalent to entropy-regularized RL with temperature $T = 1/\beta$ (Appendix~\ref{app:rl_equivalence}).

\begin{figure}[!t]
\centering
\includegraphics[width=\textwidth]{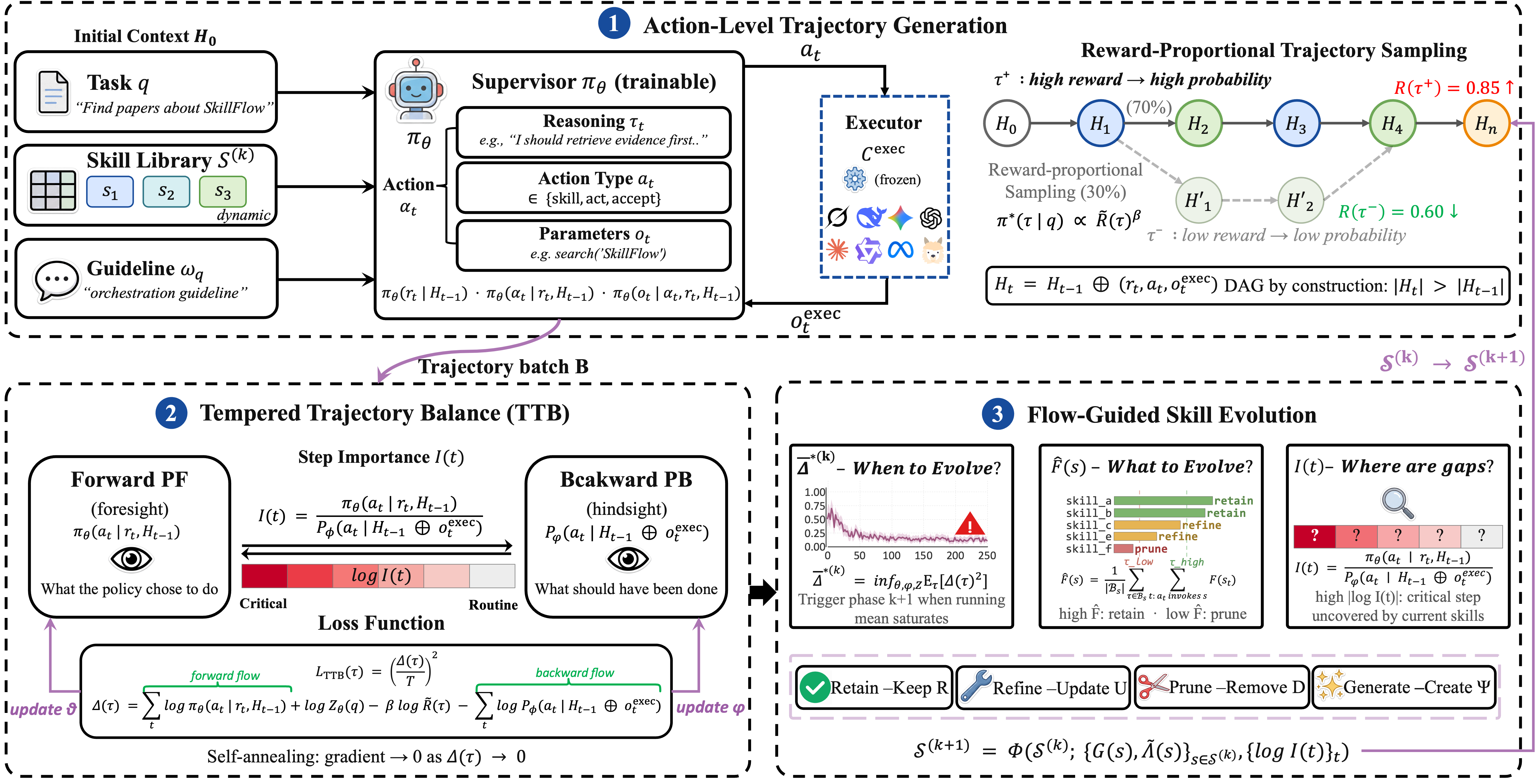}
\caption{SkillFlow architecture. The Supervisor rolls out a tree-structured DAG against a frozen Executor; TTB jointly trains the forward and hindsight backward policies via a flow-matching residual loss; flow diagnostics ($\bar{\Delta}^{*(k)}$, $\hat{F}(s)$, $I(t)$) drive recursive skill curation at phase boundaries.}
\label{fig:architecture}
\afterfloatsep[-1.2]
\end{figure}

\section{Methodology: SkillFlow}\label{sec:methodology}

As illustrated in Figure~\ref{fig:architecture}, this section introduces the SkillFlow framework, including environment design and task modeling (Section~\ref{sec:modeling}), flow-based end-to-end training (Section~\ref{sec:training}), and flow-driven recursive skill evolution (Section~\ref{sec:evolution}).

\beforesubsectionsep
\subsection{Environment Design and Task Modeling}\label{sec:modeling}

SkillFlow follows a Supervisor-Executor paradigm~\citep{yao2023react}: a trainable Supervisor $\pi_\theta$ interacts with the structured environment $\mathcal{E}$ to construct orchestration trajectories. Unlike static frameworks that operate on a fixed skill set, SkillFlow's environment carries a \textbf{dynamic skill library} that co-evolves with the policy through the curation operator $\Phi$ formalised in §\ref{sec:evolution}.

\textbf{Structured Environment $\mathcal{E}$.}
The environment maintains the skill library, skill creator, and executor:
\begin{equation}\label{eq:env}
  \mathcal{E} = (\Slib,\; \Psi,\; \mathcal{M}_{\text{exec}}),
\end{equation}
where $\Slib$ is the dynamic skill library, $\Psi$ is the Skill Creator that evolves $\Slib$ via $s_{\text{new}} = \Psi(c, \mathcal{T}, \Slib)$ using creation context $c$ and trajectory evidence $\mathcal{T}$ (Section~\ref{sec:evolution}), and $\mathcal{M}_{\text{exec}}$ is the pluggable executor.
$\Psi$ updates $\Slib$ only at episode boundaries, keeping the action space constant within each training phase.

\textbf{State Space $\mathcal{H}$ and Orchestration Target.}
The initial state concatenates the task with retrieved context: $H_0 = [q \oplus \Slib_{\text{ret}} \oplus \omega_q]$, where $\Slib_{\text{ret}}$ are retrieved skills and $\omega_q$ is a task-category-specific orchestration guideline. The state evolves via
\begin{equation}\label{eq:state_update}
  H_t = H_{t-1} \oplus (r_t,\, a_t,\, o_t^{\text{exec}}).
\end{equation}
The Supervisor interacts with $\mathcal{E}$ until $\alpha_t = \texttt{accept}$ (early termination) or $t = T_{\max}$ (budget exhausted); the resulting trajectory $\tau$ determines the answer $y_q$ via the terminal action's output.

\textbf{Step-wise Orchestration Policy.}
At each step $t$, the Supervisor generates reasoning $r_t$, selects action type $\alpha_t$, and produces parameters $o_t$, modeled as a hierarchical policy conditioned on $H_{t-1}$:
\begin{equation}\label{eq:policy}
  \pi_\theta(r_t, a_t \mid H_{t-1})
  = \pi_\theta(r_t \mid H_{t-1})
  \!\cdot\! \pi_\theta(\alpha_t \mid r_t, H_{t-1})
  \!\cdot\! \pi_\theta(o_t \mid \alpha_t, r_t, H_{t-1}).
\end{equation}

\textbf{Multi-Turn Interaction and Trajectory Distribution.}
Given action $a_t$, the executor returns feedback $o_t^{\text{exec}} \sim \mathcal{C}_{\text{exec}}(\cdot \mid H_{t-1}, a_t)$, the state evolves via Eq.~\ref{eq:state_update}, and the Supervisor continues until termination.
Marginalising the joint policy--executor draws over the $T$ steps yields the trajectory distribution:
\begin{equation}\label{eq:traj_dist}
  P_\theta(\tau) = \prod_{t=1}^{T}\bigl[\pi_\theta(r_t, a_t \mid H_{t-1})
  \cdot \mathcal{C}_{\text{exec}}(o_t^{\text{exec}} \mid H_{t-1}, a_t)\bigr],
\end{equation}
where only $\pi_\theta$ is trainable and $\mathcal{C}_{\text{exec}}$ is frozen.

\begin{proposition}\label{prop:modeling}
The Supervisor--Executor environment admits a flow-conservative DAG structure suitable for end-to-end flow-based training.
\end{proposition}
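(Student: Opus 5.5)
We make the informal statement precise in three parts. First, the state graph $\mathcal{G}$ of Definition~1, built from the Supervisor--Executor interaction, is a finite (or at least well-founded) DAG with a unique source $H_0$ and a designated set of terminal vertices. Second, every non-terminal vertex has a well-defined, normalized forward transition kernel built from $\pi_\theta$ and $\mathcal{C}_{\text{exec}}$. Third, a non-negative flow $F$ satisfying conservation and the terminal condition $F(x)=\Rtilde(\tau_x)^\beta$ exists, and the policy it induces realizes Eq.~\eqref{eq:problem}. Together these constitute ``suitable for end-to-end flow-based training.''

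\textbf{Acyclicity and finiteness.} Acyclicity follows from Eq.~\eqref{eq:state_update}. Each transition appends a nonempty tuple $(r_t,a_t,o_t^{\text{exec}})$, so $|H_t|>|H_{t-1}|$, and a cycle would force $|H|>|H|$. Termination follows from the rule $\alpha_t=\texttt{accept}$ or $t=T_{\max}$. Hence every path has length at most $T_{\max}$, and every vertex lies on some complete trajectory from $H_0$.

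\textbf{Trajectories as paths.} Because $H_t$ records the entire history, each vertex has a unique parent: its prefix. The DAG is therefore in fact a tree rooted at $H_0$, and trajectories correspond bijectively to terminal vertices $x$. To work with discrete objects, we will assume token vocabularies are finite and the executor feedback space is countable; the tree is then countable with finite depth. Since $\Psi$ modifies $\Slib$ only at episode boundaries (Section~\ref{sec:modeling}), the action space, and hence $\mathcal{G}$, is fixed within a training phase.

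\textbf{Constructing the flow.} On this tree we define, for every vertex $H$,
\[
F(H)=\sum_{x\succeq H}\Rtilde(\tau_x)^\beta,
\]
the sum over terminal descendants of $H$. Conservation at a non-terminal vertex is immediate: its incoming flow is $F(H)$ itself, since there is a single parent edge, and its outgoing flow is the sum of $F$ over its children, which partitions the same set of terminal descendants. Non-negativity holds because $\Rtilde\ge\varepsilon_{\min}>0$.

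The induced forward policy is $P_F(H'\mid H)=F(H')/F(H)$. Along a trajectory this product telescopes to $\Rtilde(\tau)^\beta/Z$ with $Z=F(H_0)$, which recovers Eq.~\eqref{eq:problem}. The backward policy is trivial on a tree, since each vertex has exactly one parent. To connect with Eq.~\eqref{eq:traj_dist}, we note that each edge factors into a trainable Supervisor factor (Eq.~\eqref{eq:policy}) and a frozen executor factor. Matching $P_F$ therefore only requires $\pi_\theta$ to absorb the ratio $F(H')/\bigl(F(H)\,\mathcal{C}_{\text{exec}}(o^{\text{exec}}\mid H,a)\bigr)$, marginalized appropriately. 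Equivalently, the executor is treated as part of the environment dynamics.

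\textbf{Main obstacle.} The hardest step is this stochastic executor. Exact reward-proportional sampling over full trajectories is not in general achievable when only $\pi_\theta$ is controllable: the policy cannot steer the executor's outcome. We will handle this in one of two ways. The first option is to include $o_t^{\text{exec}}$ in the state and state the result conditionally, namely that trajectory balance holds with $\mathcal{C}_{\text{exec}}$ appearing identically in the forward and backward factors, so that it cancels in the TTB residual. The second option is to assume a deterministic executor (temperature zero). If the tree structure, or any merging of identical histories, creates convergence issues, we will invoke the standard GFlowNet existence results cited in Appendix~\ref{app:gflownet_foundations}.
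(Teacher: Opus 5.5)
Your route matches the paper's. Acyclicity from strict growth of $|H_t|$ is exactly Theorem~\ref{thm:dag}, the unique-parent observation is the paper's own tree-structure remark, and your descendant-sum flow $F(H)=\sum_{x\succeq H}\Rtilde(\tau_x)^\beta$ is the same object as Lemma~\ref{lem:tree_db_uniqueness}. You actually go further than the paper's proof of this proposition, which formally establishes only acyclicity and defers everything else to the experiments.

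There is, however, a concrete gap in your flow construction. You relax to ``countable with finite depth,'' but then a vertex can have infinitely many terminal descendants. Every term satisfies $\Rtilde(\tau_x)^\beta\ge\varepsilon_{\min}^\beta>0$, so the sum defining $F(H)$ diverges. Then $F$ is not $\R_{\ge 0}$-valued, $Z=F(H_0)=\infty$, and $\Rtilde(\tau)^\beta/Z$ is not a distribution. The smoothing that gives you positivity is exactly what destroys summability, so non-negativity is not the property to check. You need finite branching: for instance, a maximum token length for each of $r_t$, $a_t$, $o_t^{\text{exec}}$ over a finite vocabulary. Together with depth at most $T_{\max}$, this makes the tree finite, which is what the paper tacitly assumes whenever it writes $\sum_{\tau'}\Rtilde(\tau')^\beta$.

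The executor discussion also needs repair. The sentence saying $\pi_\theta$ can ``absorb the ratio'' $F(H')/\bigl(F(H)\,\mathcal{C}_{\text{exec}}(o^{\text{exec}}\mid H,a)\bigr)$ is false, because $\pi_\theta(r,a\mid H)$ cannot depend on $o^{\text{exec}}$. Exact matching would force $\mathcal{C}_{\text{exec}}(o\mid H,a)\propto F(H\oplus(r,a,o))$, a constraint on the frozen executor that generally fails. Your ``main obstacle'' paragraph is the right diagnosis and should replace that sentence, not follow it.

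In your first remedy, though, there is no backward factor in which $\mathcal{C}_{\text{exec}}$ could appear and cancel, since, as you note yourself, $P_B\equiv 1$ on a tree. What you can legitimately assert is one of two things:
\begin{enumerate}
\item the conditional formulation the paper adopts in Section~\ref{sec:training}, where the realized $r_{1:T}$ and $o^{\text{exec}}_{1:T}$ are held as fixed context and the flow lives on the action-only tree; this involves no cancellation and is not the same as reward-proportional sampling of full trajectories;
\item exact reward-proportionality under a deterministic executor, where $\pi_\theta(r,a\mid H)=F(H')/F(H)$ is attainable given enough policy capacity.
\end{enumerate}
With finite branching and one of these two formulations stated explicitly, your argument becomes a complete proof, and a more informative one than the paper's.
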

\emph{Proof.}\; Main-result gains (§\ref{subsec:main-results}, RQ1) and cross-backbone transferability (§\ref{subsec:transferability}, RQ3) provide empirical validation; formal acyclicity in Appendix~\ref{app:dag_acyclicity}. \hfill$\square$

\beforesubsectionsep
\subsection{Flow-Based End-to-End Training}\label{sec:training}

Building on the DAG structure of $\mathcal{G}$ (Proposition~\ref{prop:modeling}), we train the flow network introduced in §3 to make the action-sequence distribution reward-proportional given the realized reasoning and execution context: $\pi_\theta(a_{1:T} \mid r_{1:T}, o^{\text{exec}}_{1:T}, q) \propto \Rtilde(\tau)^\beta$.

\textbf{Forward Policy $P_F$ and Backward Policy $P_\phi$.}
Because $\mathcal{C}_{\text{exec}}$ is frozen and reasoning $r_t$ is fixed context, the forward policy reduces to action selection, while the backward policy $P_\phi$ conditions on the \textbf{hindsight state} $H_{t-1} \oplus o_t^{\text{exec}}$ to incorporate the execution observation unavailable to $\pi_\theta$:
\begin{equation}\label{eq:fb_policies}
  P_F(H_t \!\mid\! H_{t-1}) = \pi_\theta(a_t \!\mid\! r_t, H_{t-1}), \quad
  P_B(H_{t-1} \!\mid\! H_t) = P_\phi\!\left(a_t \;\middle|\; \underbrace{H_{t-1} \oplus o_t^{\text{exec}}}_{\text{hindsight state}}\right).
\end{equation}

\textbf{Tempered Trajectory Balance (TTB).}
Given $P_F$ and $P_\phi$, the (tempered, hindsight-conditioned) \textbf{Trajectory Balance} (TB) condition~\citep{bengio2023gflownet_foundations} requires for every trajectory:
\begin{equation}\label{eq:tb_condition}
  \log Z_\theta(q) + \sum_{t=1}^{T} \log P_F(H_t \!\mid\! H_{t-1})
  = \beta\log \Rtilde(\tau) + \sum_{t=1}^{T} \log P_B(H_{t-1} \!\mid\! H_t).
\end{equation}
The \textbf{TTB loss} is the squared, length-normalized residual of this condition~\citep{ace_gflownet2026} (full derivation in Appendix~\ref{app:ttb_derivation}):
\begin{gather}
  \Delta(\tau) \coloneqq \log Z_\theta(q) + \sum_{t=1}^{T} \log \pi_\theta(a_t \mid r_t, H_{t-1}) - \beta \log \Rtilde(\tau) - \sum_{t=1}^{T} \log P_\phi\bigl(a_t \mid H_{t-1} \oplus o_t^{\text{exec}}\bigr), \notag\\
  \Lttb(\tau) = \bigl(\Delta(\tau)/T\bigr)^{2}. \label{eq:ttb}
\end{gather}
Here $T = |\tau|$, $Z_\theta(q)$ is a task-conditioned partition function, and each per-step log-probability is \emph{per-token normalized} (details in Appendix~\ref{app:ttb_loss}). At the optimum $\Delta(\tau)=0$, the conditional action-sequence distribution becomes reward-proportional, $\pi_\theta(a_{1:T} \mid r_{1:T}, o^{\text{exec}}_{1:T}, q) \propto \Rtilde(\tau)^\beta$.

\textbf{Step Importance and Skill Marginal Flow.}
Each $H_t$ has a unique parent; $\mathcal{G}$ is therefore tree-structured, and TB convergence implies Detailed Balance $F(H)\,P_F(H'|H) = F(H')\,P_B(H|H')$ at every edge (formal treatment in Appendix~\ref{app:detailed_balance}). Rearranging yields the \textbf{step importance}:
\begin{equation}\label{eq:step_importance}
  I(t) = \frac{F(H_t)}{F(H_{t-1})} = \frac{\pi_\theta(a_t \mid r_t, H_{t-1})}{P_\phi(a_t \mid H_{t-1} \oplus o_t^{\text{exec}})},
\end{equation}
which incurs no extra inference cost. The \textbf{information asymmetry}---$\pi_\theta$ decides without $o_t^{\text{exec}}$, while $P_\phi$ evaluates with it---makes $I(t)$ a credit signal: large $|\log I(t)|$ marks decisions whose appraisal shifted after execution.
Telescoping from $F(s_0) = Z_\theta(q)$ gives the \textbf{skill marginal flow}:
\begin{equation}\label{eq:skill_flow}
  \hat{F}(s) = \frac{1}{|\Batch_s|}
  \sum_{\tau \in \Batch_s}\;
  \sum_{t:\, a_t \text{ invokes } s} F(H_t),
  \qquad \log F(H_t) = \log Z_\theta(q) + \sum_{t'=1}^{t} \log I(t'),
\end{equation}
where $\Batch_s \subseteq \Batch$ is the subset of trajectories invoking $s$ and the inner sum aggregates flow over each occurrence of $s$ within a trajectory.

\begin{proposition}\label{prop:training}
TTB training induces reward-proportional sampling and yields per-step credit at no extra inference cost.
\end{proposition}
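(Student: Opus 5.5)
The argument has two parts, matching the two claims of Proposition~\ref{prop:training}: reward-proportional sampling at the TTB optimum, and per-step credit read off the trained quantities.

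\textbf{Step 1 (optimum).} Since $\Lttb(\tau) = (\Delta(\tau)/T)^2 \ge 0$, a global minimizer over a sufficiently expressive parametrisation achieves $\Delta(\tau) = 0$ on every trajectory in the support. I condition on the realized reasoning and execution context $(r_{1:T}, o^{\text{exec}}_{1:T})$, which Eq.~\eqref{eq:fb_policies} treats as fixed. Setting $\Delta(\tau)=0$ in Eq.~\eqref{eq:ttb} and exponentiating gives
\[
\textstyle\prod_t \pi_\theta(a_t\mid r_t,H_{t-1}) = Z_\theta(q)^{-1}\,\Rtilde(\tau)^\beta \prod_t P_\phi(a_t\mid H_{t-1}\oplus o_t^{\text{exec}}).
\]

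\textbf{Step 2 (tree structure).} By Proposition~\ref{prop:modeling} and the state update Eq.~\eqref{eq:state_update}, every $H_t$ has a unique parent. Hence the backward policy is degenerate: the true $P_B(H_{t-1}\mid H_t)=1$. Alternatively, and more carefully, the backward product is a fixed context-dependent normaliser that can be absorbed into $Z_\theta(q)$. Either way, $\pi_\theta(a_{1:T}\mid r_{1:T},o^{\text{exec}}_{1:T},q)\propto \Rtilde(\tau)^\beta$. Summing over terminal trajectories identifies $Z_\theta(q)=\sum_\tau \Rtilde(\tau)^\beta$, using $\varepsilon_{\min}>0$ to guarantee strictly positive mass. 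This is the standard trajectory-balance sufficiency argument of \citet{malkin2022trajectory} and \citet{bengio2023gflownet_foundations}, which I invoke via Appendix~\ref{app:gflownet_foundations}.

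\textbf{Step 3 (credit).} On a tree, TB at optimum implies detailed balance on each edge (Appendix~\ref{app:detailed_balance}). To show this, I define $F(H_t)$ by the telescoping formula in Eq.~\eqref{eq:skill_flow}. With this definition, the identity $F(H_{t-1})P_F(H_t\mid H_{t-1}) = F(H_t)P_B(H_{t-1}\mid H_t)$ holds by construction. At terminal states, $\Delta=0$ forces $F(x)=\Rtilde(\tau_x)^\beta$, which verifies the terminal condition of the flow network. Rearranging then gives $I(t)$ in Eq.~\eqref{eq:step_importance}.

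The zero-cost claim follows by inspection. Both $\log\pi_\theta(a_t\mid\cdot)$ and $\log P_\phi(a_t\mid\cdot)$ are already computed in the forward pass that evaluates $\Delta(\tau)$ for the loss. So $I(t)$ and $\hat F(s)$ are just differences and cumulative sums of cached scalars, and require no extra model calls at inference.

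\textbf{Main obstacle.} The hard step is Step 2: reconciling a learned, hindsight-conditioned $P_\phi$ with the unique-parent tree structure. If $P_\phi$ is a genuine distribution over actions rather than identically $1$, then exact proportionality requires that $\prod_t P_\phi$ depend on $\tau$ only through context held fixed under conditioning. Otherwise the optimum is proportional to $\Rtilde^\beta\prod P_\phi$, not to $\Rtilde^\beta$.

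I would first handle this by stating the result conditionally on $(r_{1:T},o^{\text{exec}}_{1:T})$, treating the backward product as a context-dependent reweighting absorbed into the conditional partition function. If that fails, I would weaken the claim to the standard GFlowNet statement in which $P_F$ samples terminal states proportionally to reward for any fixed valid $P_B$. I would also note that, in practice, the optimum is only approached approximately, with the proportionality error controlled by the residual $|\Delta|$.
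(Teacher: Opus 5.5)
Your outline matches the paper's: set $\Delta\equiv 0$, invoke trajectory balance, read $I(t)$ off detailed balance, and note that the log-probabilities are already cached. The gap is where you suspected, in Step 2, and neither of your repairs closes it. The product $\prod_t P_\phi(a_t\mid H_{t-1}\oplus o_t^{\text{exec}})$ is a function of $a_{1:T}$, which is the very variable over which proportionality is claimed. Conditioning on $(r_{1:T},o^{\text{exec}}_{1:T})$ therefore does not make it constant, and $Z_\theta(q)$ depends on $q$ alone, so it cannot absorb it. Appealing to the degenerate tree backward $P_B\equiv 1$ fails too, because the residual that training drives to zero contains the learned $P_\phi$, not $P_B$.

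In fact $\Delta\equiv 0$ does not determine $\pi_\theta$ at all. Take a root with two single-token actions $a,b$, a deterministic executor with $o_a\neq o_b$, terminals with smoothed rewards $\Rtilde_a,\Rtilde_b$, and $\pi_\theta(a)=p$. Set $P_\phi(a\mid H_0\oplus o_a)=Zp/\Rtilde_a^{\beta}$ and $P_\phi(b\mid H_0\oplus o_b)=Z(1-p)/\Rtilde_b^{\beta}$, with $Z\le\min\{\Rtilde_a^{\beta}/p,\,\Rtilde_b^{\beta}/(1-p)\}$. This zeroes both residuals for every $p\in(0,1)$; the leftover $P_\phi$ mass sits on action strings inconsistent with the observation. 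If $o_a=o_b$, the shared normalization is still met by shrinking $Z$. So proportionality is forced only if $\prod_t P_\phi$ happens to be the same for all trajectories, and nothing in the loss enforces that.

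The same defect undermines Step 3. Your telescoped $F$ satisfies detailed balance by construction, but conservation at $H_{t-1}$ requires the children's ratios $\pi_\theta/P_\phi$ to sum to one. They sum to at least one, with equality only if $P_\phi=1$ on every child the forward policy reaches. So $F$ is not a flow, and $I(t)$ is not a flow ratio in the sense of Theorem~\ref{thm:db}.

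Independently, Step 1 drops the per-token normalization in Eq.~\eqref{eq:ttb}. Exponentiating $\Delta=0$ constrains $\prod_t\pi_\theta(a_t\mid r_t,H_{t-1})^{1/K_t}$, not the sampling probability $\prod_t\pi_\theta(a_t\mid r_t,H_{t-1})$. When the action lengths $K_t$ vary these are not proportional, so even $P_\phi\equiv 1$ would not make the actual policy reward-proportional.

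The paper's proof does not supply the missing idea either. It crosses this step through Lemma~\ref{lem:hindsight_backward_eq}, which identifies $P_B(s_{t-1}\mid s_t)$ with $P_\phi(a_t\mid\tilde s_t)$. It then uses the reverse direction of Theorem~\ref{thm:tb}, whose key identity $\sum_{\tau:s_T=x}\prod_t P_B(s_{t-1}\mid s_t)=1$ holds on a tree only for $P_B\equiv 1$. Lemma~\ref{lem:per_token_tempering} delivers proportionality only for the geometric-mean quantity $\tilde\pi_\theta$, which is neither normalized over actions nor the sampling law. Your ``main obstacle'' paragraph is thus a sharper diagnosis than the paper offers.

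To obtain a true statement you have to choose between two options.
\begin{enumerate}
\item Constrain $P_\phi\equiv 1$ on realized edges and drop per-token averaging. This recovers genuine trajectory balance on the tree ($\pi_\theta\propto\Rtilde(\tau)^\beta$ and $Z_\theta(q)=\sum_\tau\Rtilde(\tau)^\beta$). But it collapses $I(t)$ to $\pi_\theta(a_t\mid r_t,H_{t-1})$, so the hindsight credit carries nothing beyond the forward policy. Your fallback to ``any fixed valid $P_B$'' is this option in disguise, since on a tree the only valid backward policy is the trivial one.
\item Keep the learned $P_\phi$ and state the honest conclusion $\pi_\theta(a_{1:T}\mid\cdot)\propto\Rtilde(\tau)^\beta\prod_t P_\phi(a_t\mid H_{t-1}\oplus o_t^{\text{exec}})$. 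Add an explicit identifiability assumption on $P_\phi$ if you want to keep the words ``reward-proportional''.
\end{enumerate}
The zero-cost half of your argument is fine and coincides with Proposition~\ref{prop:zero_cost}.
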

\emph{Proof.}\; Main results (§\ref{subsec:main-results}, RQ1), OOD (§\ref{subsec:ood}, RQ2), $-$TTB ablation (§\ref{subsec:ablation}, RQ4), and algorithm comparison (§\ref{subsec:algo-cost}, RQ5) provide empirical validation; full proof in Appendix~\ref{app:proof_prop2}. \hfill$\square$

\beforesubsectionsep
\subsection{Flow-Driven Recursive Skill Evolution}\label{sec:evolution}

Prior work on skill distillation and refinement~\citep{wang2023voyager,alzubi2026evoskill,coevoskills2026,skillsd2026,xia2026skillrl} relies on heuristic schedules or LLM-as-judge prompting, leaving \emph{when}, \emph{what}, and \emph{where} to evolve underspecified. SkillFlow derives all three from the flow signals of Section~\ref{sec:training}: the TTB residual $\Delta(\tau)$ signals \emph{when}, while the step importance $I(t)$ and skill marginal flow $\hat{F}(s)$ localize \emph{what} and \emph{where}.

\textbf{When: TTB Residual Floor.}
Within phase $k$, gradient descent drives $\Delta(\tau)^2 \to 0$ (Proposition~\ref{prop:training}). The squared-residual floor under the current library is
\begin{equation}\label{eq:residual_bound}
  \bar{\Delta}^{*(k)} \;\coloneqq\; \inf_{\theta,\phi,Z_\theta}\; \E_{\tau}\!\bigl[\Delta(\tau \mid \Slib^{(k)},\,\theta,\phi,Z_\theta)^2\bigr] \;\ge\; 0.
\end{equation}
$\bar\Delta^{*(k)} = 0$ when $\Slib^{(k)}$ together with the policy class can express the reward-proportional flow; otherwise the loss plateaus at $\bar\Delta^{*(k)} > 0$. Phase $k{+}1$ is triggered when the running mean of $\Delta(\tau)^2$ saturates against this plateau.

\textbf{What and Where: Flow-Guided Skill Curation via a CGF.}
Each skill $s \in \Slib$ is an \emph{atomic tip}---a short, self-contained piece of strategic guidance, independently composable via the $\texttt{skill}$ action.
We define the \textbf{per-skill cumulant generating function} (CGF) of the telescoped log-flow:
\[
  \Lambda^{(s)}_{\lambda} \;\coloneqq\; \log\!\left(\frac{1}{|\Batch_s|}\sum_{\tau\in\Batch_s}\sum_{t:\,a_t\text{ invokes }s}\!\exp\!\Bigl(\lambda\!\sum_{t'=1}^{t}\log I(t')\Bigr)\right),\quad \lambda\in\R\ \text{(moment order)},
\]
which, by the telescoping identity $\log F(H_t) = \log Z_\theta(q) + \sum_{t'\le t}\log I(t')$ (Eq.~\ref{eq:skill_flow}), is the log $\lambda$-th moment of $F(H_t)/Z_\theta(q)$ along occurrences of $s$.
$\Lambda^{(s)}_{\lambda}$ is convex in $\lambda$; two summaries derived from it drive library evolution---the \textbf{mean log-flow}
$G(s) \coloneqq \tfrac{\partial\Lambda^{(s)}_{\lambda}}{\partial\lambda}\big|_{\lambda=0}$
measures the average flow $s$ attracts when invoked, and the \textbf{centered log-flow share}
$\widetilde\Lambda(s) \coloneqq \Lambda^{(s)}_{1} - \E_{s'}[\Lambda^{(s')}_{1}]$
ranks $s$'s marginal contribution to the library's reward-proportional sampling:
\begin{equation}\label{eq:evolution}
  \Slib^{(k+1)} \;=\; \Phi\bigl(\Slib^{(k)};\,\{G(s),\,\widetilde\Lambda(s)\}_{s\in\Slib^{(k)}},\,\{\log I(t)\}_t\bigr),
\end{equation}
where $\Lambda^{(s)}_{1} = \log\hat{F}(s)-\log Z_\theta(q)$ recovers the skill marginal flow in log-space. The operator $\Phi$ partitions $\Slib^{(k)}$ into four disjoint classes: \emph{retain} (high $G(s)$, small Jensen gap), \emph{refine} (high $G(s)$, persistently large Jensen gap), and \emph{prune} (persistently negative $\widetilde\Lambda(s)$); on top of these it invokes the Skill Creator $\Psi$ to \emph{generate} new atomic tips at high-$\log I(t)$ steps from same-query success/failure pairs $(\tau^+,\tau^-)$. The Jensen gap $\Lambda^{(s)}_{1}-G(s)$ equals the cross-visit variance of $\log F(H_t)$ plus higher-order cumulants, serving as a stability diagnostic that distinguishes \emph{retain} from \emph{refine} for context-inconsistent skills.

\begin{proposition}\label{prop:evolution}
Flow-driven recursive evolution autonomously expands the skill library while preserving its atomic composability.
\end{proposition}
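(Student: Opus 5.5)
We first make the statement precise enough to prove. \emph{Autonomously expands} will mean: the phase trigger and the curation operator $\Phi$ in Eq.~\ref{eq:evolution} are functions only of training-time flow statistics. These are the residual floor $\bar\Delta^{*(k)}$, the summaries $\{G(s),\widetilde\Lambda(s)\}$ and the step importances $\{\log I(t)\}$. We will also show that whenever the trigger fires, the \emph{generate} branch is reachable. \emph{Preserving atomic composability} will mean: if every $s\in\Slib^{(k)}$ is an atomic tip invocable through a single \texttt{skill} action, then so is every $s\in\Slib^{(k+1)}$. In addition, the orchestration state graph built on $\Slib^{(k+1)}$ remains a flow-conservative DAG, so Proposition~\ref{prop:modeling} and Proposition~\ref{prop:training} apply in the next phase. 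The proof then runs by induction on the phase index $k$.

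\textbf{Step 1 (well-posedness of the signals).} By Proposition~\ref{prop:training}, TTB converges within phase $k$. Detailed Balance therefore holds on the tree-structured $\mathcal{G}$, and the telescoping identity $\log F(H_t)=\log Z_\theta(q)+\sum_{t'\le t}\log I(t')$ is valid. Each $\Lambda^{(s)}_\lambda$ is a log-sum-exp of affine functions of $\lambda$, so it is finite and convex. Jensen's inequality gives $\Lambda^{(s)}_1\ge G(s)$, which makes the Jensen gap nonnegative. Hence the retain, refine and prune predicates are well defined, and they are disjoint once thresholds are fixed. The trigger is well defined because $\bar\Delta^{*(k)}\ge 0$ by Eq.~\ref{eq:residual_bound}, and because a plateau with $\bar\Delta^{*(k)}>0$ certifies that $\Slib^{(k)}$ cannot express the reward-proportional flow. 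Under this certificate the generate branch is invoked at the steps with the highest $|\log I(t)|$, using pairs $(\tau^+,\tau^-)$. Such a step exists whenever the batch contains a same-query success/failure pair. This gives expansion: $\Slib^{(k+1)}\supseteq(\Slib^{(k)}\setminus\text{prune})\cup\{s_{\text{new}}\}$.

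\textbf{Step 2 (composability).} Retained skills are unchanged. Refined skills are replaced one-for-one by a single tip. Generated skills are produced by $\Psi$ as single self-contained tips. Thus every element of $\Slib^{(k+1)}$ is again atomic, and each occupies exactly one \texttt{skill} action. Because $\Psi$ acts only at episode boundaries, the action space is constant within phase $k+1$. The history update (Eq.~\ref{eq:state_update}) still strictly grows $|H_t|$, so acyclicity (Appendix~\ref{app:dag_acyclicity}) and the unique-parent tree structure carry over unchanged. This closes the induction.

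\textbf{Main obstacle.} The formal argument can only certify structural properties: well-posedness, reachability of generation, and closure of the atomic-tip class under $\Phi$. It cannot certify that $\Psi$, an LLM-based creator, actually outputs content that is semantically atomic and that lowers $\bar\Delta^{*(k+1)}$. We would therefore state atomicity as an interface assumption on $\Psi$, namely a single tip invoked via one \texttt{skill} action. Strict decrease of the residual floor would be argued only weakly, as $\bar\Delta^{*(k+1)}\le\bar\Delta^{*(k)}$ when pruning removes skills that no optimal flow uses; this follows because adding actions can only enlarge the expressible flow class. The remaining semantic claims would rest on the empirical evidence, as in the proofs of Propositions~\ref{prop:modeling} and~\ref{prop:training}.
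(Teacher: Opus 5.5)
Your composability half is essentially the paper's own argument (Lemma~\ref{lem:phi_preserves_atomic} together with parts (ii)--(iii) of Appendix~\ref{app:proof_prop3}). Retained tips are inherited. Refined and created tips are atomic because $\Psi$ is assumed to emit atomic tips in both modes. The library is frozen within a phase, strict history growth keeps $\mathcal{G}^{(k+1)}$ a tree-structured DAG, and induction on $k$ closes the argument. You should, however, state the base case that the seed library $\Slib^{(0)}$ is atomic, which the paper takes from initialization.

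The trouble is in the expansion half. Step~1 asserts that a plateau ``certifies that $\Slib^{(k)}$ cannot express the reward-proportional flow''. This is exactly the inference the paper's proof declines to make, for three reasons.
\begin{itemize}
\item The trigger only observes a stalled running mean $\overline{\Delta^2}_w^{(k)}$. That mean sits at $\bar\Delta^{*(k)}$ only under descent assumptions. A stall can equally come from limited $\pi_\theta$ or $P_\phi$ capacity, an under-trained $Z_\theta$, exploration deficits, or optimizer stagnation.
\item Even a genuinely positive floor concerns the joint $(\Slib^{(k)},\ \pi_\theta\text{-class},\ P_\phi\text{-class},\ Z_\theta)$, not the library alone. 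Part (iii) of the paper's proof notes that every tree $\mathcal{G}^{(k)}$ admits a unique reward-matching flow for the positive terminal values $\Rtilde(\tau_x)^\beta$. So no library by itself makes the target inexpressible.
\item The paper accordingly proves only that the plateau is a \emph{sufficient} trigger, and treats attribution to the library as an operating assumption.
\end{itemize}
Step~1 also derives well-posedness from ``TTB converges, hence Detailed Balance holds''. That premise contradicts the very regime $\bar\Delta^{*(k)}>0$ in which your trigger fires. You do not need it: $\log I(t)$, $G(s)$ and $\Lambda^{(s)}_\lambda$ are finite log-probability differences and log-sum-exps, and convexity and the Jensen inequality hold unconditionally.

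Your reachability claim for the generate branch also fails as stated. The paper's operator is thresholded on signed $\log I(t)$ and coverage-filtered, not an argmax over $|\log I(t)|$. Under Definition~\ref{def:trigger_steps}, a step $t$ of $\tau^+$ triggers $\Psi$ only if $\log I(t)|_{\tau^+}\ge\zeta_{\text{trig}}$ \emph{and} $t\notin\mathrm{cov}(\mathcal{R}_k\cup\mathcal{U}'_k)$. The existence of a same-query pair $(\tau^+,\tau^-)$ guarantees neither condition. So $\Psi^{\text{new}}_k$ can be empty, and combined with pruning the library can shrink; the paper's own run drops from 22 to 8 skills at a prune event (Table~\ref{tab:case_evolution}).

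Your containment is also wrong for refined skills. Since $\Slib^{(k+1)}=\mathcal{R}_k\cup\mathcal{U}'_k\cup\Psi^{\text{new}}_k$ replaces $\mathcal{U}_k$ by $\mathcal{U}'_k$, it does not contain $\Slib^{(k)}\setminus\mathcal{D}^-_k$.

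You have two options. One is to add an explicit hypothesis that some uncovered step of a successful trajectory clears $\zeta_{\text{trig}}$. The other is to weaken ``expands'' to what the paper actually establishes: evolution is triggered and steered by flow statistics, and $\Phi$ adjoins $\Psi$'s atomic outputs at whatever trigger steps exist.

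Finally, discard the monotonicity remark $\bar\Delta^{*(k+1)}\le\bar\Delta^{*(k)}$. New actions create new trajectories on which the balance must also hold, and they change the sampling measure inside $\E_\tau$. For a fixed finite-capacity parameterization the floor can therefore rise, and refinement and pruning are not pure additions in any case. Nothing in the proposition requires this bound.
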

\emph{Proof.}\; Leave-one-out ablation (§\ref{subsec:ablation}, RQ4) and skill-evolution cost savings (§\ref{subsec:algo-cost}, RQ5) provide empirical validation; full proof in Appendix~\ref{app:proof_prop3}. \hfill$\square$


\begin{table}[t]
\centering
\small

\tabcolsep=2pt
\renewcommand{\arraystretch}{1.15}

\resizebox{\linewidth}{!}{
\begin{tabular}{@{}ll|cccccccc|c@{}}
\toprule
\textbf{} & \textbf{} & \multicolumn{2}{c}{\textbf{Baseline}} & \textbf{SFT} & \textbf{GRPO} & \textbf{AFlow} & \multicolumn{3}{c|}{\textbf{Agent+RL}} & \textbf{Ours} \\
\cmidrule(lr){3-4}\cmidrule(lr){5-5}\cmidrule(lr){6-6}\cmidrule(lr){7-7}\cmidrule(lr){8-10}\cmidrule(lr){11-11}
\textbf{Dataset} & \textbf{Metric} & \textbf{Qwen3.5} & \textbf{v4-flash} & \textbf{Qwen3.5} & \textbf{Qwen3.5} & \textbf{Qwen3.5} & \textbf{AgentFlow} & \textbf{FlowSteer} & \textbf{SkillRL} & \textbf{SkillFlow ($\Delta\uparrow$)} \\
\midrule
\multicolumn{11}{@{}l}{\textit{(a) In-Distribution (IID) benchmarks}} \\
\midrule
\textbf{HotpotQA} & Ans EM & 60.94\std{0.8} & 72.66\std{0.1} & 64.84\std{0.4} & 69.53\std{0.3} & 88.28\std{0.9} & 88.28\std{0.8} & 89.84\std{1.1} & 87.50\std{0.2} & \textbf{92.19}\,\textcolor{darkgreen}{(+31.3)} \\
\textbf{} & Ans F1 & 75.70\std{0.1} & 85.95\std{0.3} & 79.08\std{0.7} & 81.52\std{0.1} & 90.92\std{0.3} & 90.11\std{0.8} & 91.20\std{0.7} & 89.16\std{0.3} & \textbf{93.95}\,\textcolor{darkgreen}{(+18.3)} \\
\textbf{TriviaQA} & Ans EM & 44.88\std{1.0} & 74.02\std{0.1} & 47.24\std{1.0} & 47.24\std{0.9} & 92.97\std{0.5} & 89.84\std{0.3} & 91.41\std{1.2} & 89.06\std{0.5} & \textbf{96.09}\,\textcolor{darkgreen}{(+51.2)} \\
\textbf{} & Ans F1 & 54.30\std{0.2} & 81.99\std{1.0} & 55.73\std{0.8} & 56.93\std{1.0} & 93.25\std{0.9} & 90.47\std{0.7} & 92.06\std{1.2} & 90.63\std{0.5} & \textbf{96.94}\,\textcolor{darkgreen}{(+42.6)} \\
\textbf{AIME 2026} & Acc. & 46.67\std{1.0} & 50.00\std{0.8} & 36.67\std{1.0} & 33.33\std{0.7} & 53.33\std{0.9} & 60.00\std{0.2} & 63.33\std{0.4} & 56.67\std{0.4} & \textbf{70.00}\,\textcolor{darkgreen}{(+23.3)} \\
\textbf{MedQA} & Acc. & 69.53\std{0.4} & 83.59\std{0.2} & 73.44\std{0.4} & 77.34\std{0.8} & 89.84\std{0.5} & 87.50\std{0.5} & 90.63\std{0.3} & 85.94\std{0.4} & \textbf{92.19}\,\textcolor{darkgreen}{(+22.7)} \\
\textbf{WebShop} & Avg Score & 56.93\std{0.8} & 85.17\std{0.8} & 50.80\std{0.3} & 54.10\std{0.9} & 71.09\std{0.3} & 83.61\std{0.5} & 85.43\std{1.2} & 87.96\std{0.8} & \textbf{94.73}\,\textcolor{darkgreen}{(+37.8)} \\
\textbf{} & SR & 32.03\std{0.9} & 67.97\std{1.0} & 35.16\std{1.0} & 36.71\std{0.4} & 59.38\std{0.1} & 74.22\std{0.4} & 79.20\std{0.4} & 82.81\std{0.3} & \textbf{93.75}\,\textcolor{darkgreen}{(+61.7)} \\
\textbf{ALFWorld} & SR & 48.28\std{1.1} & 61.21\std{0.4} & 40.52\std{0.8} & 50.78\std{0.5} & 75.00\std{1.1} & 80.47\std{0.6} & 82.81\std{0.4} & 85.16\std{0.4} & \textbf{96.09}\,\textcolor{darkgreen}{(+47.8)} \\
\textbf{SWE-bench} & Resolved & 17.19\std{0.4} & 38.28\std{0.7} & 16.41\std{1.1} & 18.75\std{0.5} & 30.47\std{0.3} & 41.41\std{1.2} & 43.75\std{0.7} & 39.06\std{0.2} & \textbf{52.34}\,\textcolor{darkgreen}{(+35.2)} \\
\cmidrule(lr){1-11}
\multirow{3}{*}{\textbf{Avg.\,(IID)}} & Ans EM    & 52.91\std{0.9} & 73.34\std{0.1} & 56.04\std{0.7} & 58.39\std{0.6} & 90.63\std{0.7} & 89.06\std{0.6} & 90.63\std{1.2} & 88.28\std{0.4} & \textbf{94.14}\,\textcolor{darkgreen}{(+41.2)} \\
                                     & Ans F1    & 65.00\std{0.2} & 83.97\std{0.7} & 67.41\std{0.8} & 69.23\std{0.6} & 92.09\std{0.6} & 90.29\std{0.8} & 91.63\std{1.0} & 89.90\std{0.4} & \textbf{95.45}\,\textcolor{darkgreen}{(+30.5)} \\
                                     & Acc./Pass & 45.11\std{0.8} & 64.37\std{0.7} & 42.17\std{0.8} & 45.17\std{0.6} & 63.19\std{0.5} & 71.20\std{0.6} & 74.19\std{0.6} & 72.93\std{0.4} & \textbf{83.18}\,\textcolor{darkgreen}{(+38.1)} \\
\midrule
\multicolumn{11}{@{}l}{\textit{(b) Out-of-Distribution (OOD) benchmarks}} \\
\midrule
\textbf{MuSiQue} & Ans EM & 39.06\std{0.2} & 50.78\std{0.8} & 39.10\std{1.0} & 42.20\std{0.6} & 78.13\std{0.2} & 79.69\std{0.5} & 79.69\std{1.2} & 75.78\std{0.7} & \textbf{85.16}\,\textcolor{darkgreen}{(+46.1)} \\
\textbf{} & Ans F1 & 47.79\std{1.0} & 59.32\std{0.1} & 47.90\std{0.9} & 50.30\std{0.8} & 84.38\std{0.7} & 84.90\std{0.4} & 85.22\std{0.8} & 81.38\std{0.2} & \textbf{89.29}\,\textcolor{darkgreen}{(+41.5)} \\
\textbf{NQ-Open} & Ans EM & 21.88\std{0.6} & 37.50\std{1.1} & 23.44\std{1.1} & 21.88\std{0.4} & 75.79\std{0.7} & 78.91\std{0.3} & 80.47\std{1.1} & 78.91\std{1.1} & \textbf{82.81}\,\textcolor{darkgreen}{(+60.9)} \\
\textbf{} & Ans F1 & 30.03\std{0.8} & 47.90\std{0.8} & 30.97\std{0.3} & 28.84\std{0.9} & 80.23\std{0.7} & 83.44\std{1.0} & 84.88\std{0.7} & 83.09\std{0.1} & \textbf{86.04}\,\textcolor{darkgreen}{(+56.0)} \\
\textbf{MATH-Hard} & Acc. & 89.06\std{0.1} & 91.41\std{1.1} & 88.28\std{1.1} & 87.50\std{1.0} & 90.63\std{0.4} & 92.19\std{0.2} & 93.75\std{1.1} & 91.41\std{1.1} & \textbf{96.09}\,\textcolor{darkgreen}{(+7.0)} \\
\textbf{GPQA Diamond} & Acc. & 61.72\std{0.6} & 73.44\std{0.2} & 71.88\std{0.9} & 75.00\std{0.9} & 75.78\std{0.2} & 81.25\std{0.6} & 84.38\std{0.7} & 78.13\std{0.4} & \textbf{89.84}\,\textcolor{darkgreen}{(+28.1)} \\
\textbf{HumanEval} & pass@1 & 89.06\std{0.6} & 96.88\std{0.3} & 80.47\std{0.7} & 85.94\std{0.9} & 92.97\std{0.3} & 93.75\std{0.4} & 93.75\std{1.2} & 92.19\std{0.8} & \textbf{98.44}\,\textcolor{darkgreen}{(+9.4)} \\
\textbf{ScienceWorld} & Success & 25.78\std{0.7} & 43.43\std{0.2} & 15.62\std{0.3} & 24.22\std{0.5} & 29.69\std{0.7} & 44.53\std{0.4} & 47.66\std{0.3} & 39.06\std{0.2} & \textbf{57.81}\,\textcolor{darkgreen}{(+32.0)} \\
\textbf{} & Avg Score & 41.86\std{0.4} & 58.64\std{1.1} & 34.69\std{1.0} & 40.00\std{0.2} & 43.65\std{0.4} & 56.85\std{0.8} & 58.91\std{0.3} & 50.11\std{0.2} & \textbf{67.87}\,\textcolor{darkgreen}{(+26.0)} \\
\textbf{Mind2Web} & Step Acc & 26.64\std{0.7} & 27.57\std{0.6} & 18.74\std{1.0} & 23.51\std{1.0} & 30.47\std{0.3} & 39.84\std{0.2} & 41.41\std{0.6} & 35.16\std{0.6} & \textbf{51.56}\,\textcolor{darkgreen}{(+24.9)} \\
\textbf{} & Action F1 & 63.85\std{0.9} & 66.21\std{0.8} & 50.49\std{1.2} & 66.98\std{0.2} & 62.29\std{0.5} & 71.77\std{0.5} & 75.22\std{1.0} & 69.54\std{0.4} & \textbf{84.89}\,\textcolor{darkgreen}{(+21.0)} \\
\cmidrule(lr){1-11}
\multirow{3}{*}{\textbf{Avg.\,(OOD)}} & EM        & 30.47\std{0.4} & 44.14\std{1.0} & 31.27\std{1.1} & 32.04\std{0.5} & 76.96\std{0.5} & 79.30\std{0.4} & 80.08\std{1.2} & 77.34\std{0.9} & \textbf{83.99}\,\textcolor{darkgreen}{(+53.5)} \\
                                     & F1        & 47.22\std{0.9} & 57.81\std{0.6} & 43.12\std{0.8} & 48.71\std{0.6} & 75.63\std{0.6} & 80.04\std{0.6} & 81.77\std{0.8} & 78.00\std{0.2} & \textbf{86.74}\,\textcolor{darkgreen}{(+39.5)} \\
                                     & Acc./Pass & 55.69\std{0.5} & 65.23\std{0.6} & 51.61\std{0.8} & 56.03\std{0.8} & 60.53\std{0.4} & 68.07\std{0.4} & 69.98\std{0.7} & 64.34\std{0.6} & \textbf{76.94}\,\textcolor{darkgreen}{(+21.3)} \\
\bottomrule
\end{tabular}}
\vspace{5pt}
\caption{Main results on 14 IID and OOD benchmarks. SkillFlow uses Qwen3.5-9B as the Supervisor. ``Agent+RL'' covers AgentFlow, FlowSteer, SkillRL. $\Delta\uparrow$ over Qwen3.5-9B.}
\label{tab:main-results}
\label{tab:ood-results}
\afterfloatsep[-2]
\end{table}

\section{Experiments}\label{sec:experiments}

We evaluate SkillFlow through the following research questions (RQs):
\textbf{RQ1:} Can SkillFlow outperform existing workflow orchestration methods on in-distribution benchmarks?
\textbf{RQ2:} How does SkillFlow generalize to out-of-distribution benchmarks?
\textbf{RQ3:} How transferable is SkillFlow across different LLM backbones?
\textbf{RQ4:} What are the contributions of core components such as TTB training, backward policy, and skill evolution?
\textbf{RQ5:} How does SkillFlow compare against other agent RL algorithms (GRPO, Tree-GRPO, HCAPO) and skill-evolution baselines in accuracy and computational cost?

\beforesubsectionsep
\subsection{Experimental Setup}\label{subsec:setup}

\textbf{Datasets.}
We evaluate SkillFlow on 14 benchmarks covering four task categories.
\textit{In-distribution (IID, 7):} HotpotQA~\citep{yang2018hotpotqa}, TriviaQA~\citep{joshi2017triviaqa}, MedQA~\citep{jin2021medqa}, AIME 2026, WebShop~\citep{yao2022webshop}, ALFWorld~\citep{shridhar2021alfworld}, SWE-bench~\citep{jimenez2024swebench}.
\textit{Out-of-distribution (OOD, 7):} MuSiQue~\citep{trivedi2022musique}, NQ-Open, MATH-Hard, GPQA Diamond, HumanEval~\citep{chen2021humaneval}, ScienceWorld, Mind2Web. More details in Appendix~\ref{app:datasets}.

\textbf{Baselines.}
We compare against \textit{(i)} direct LLMs (Qwen3.5-9B, v4-flash, Claude Haiku 4.5), \textit{(ii)} fine-tuning (SFT, GRPO~\citep{shao2024deepseekmath}), \textit{(iii)} search-based workflows (AFlow~\citep{zhang2025aflow}), and \textit{(iv)} RL agents (AgentFlow~\citep{pan2026agentflow}, FlowSteer~\citep{zhang2026flowsteer}, SkillRL~\citep{xia2026skillrl}). See Appendix~\ref{app:baselines} for details.

\textbf{Evaluation Metrics.}
F1 for QA, Accuracy for AIME/MedQA/MATH/GPQA, Average Score and Success Rate for WebShop/ALFWorld, Resolved Rate for SWE-bench, pass@1 for HumanEval. Details in Appendix~\ref{app:metrics}.

\FloatBarrier
\begin{table}[t]
\centering
\small
\begin{adjustbox}{max width=\textwidth}\tabcolsep=1pt
\renewcommand{\arraystretch}{1.05}
\begin{tabular}{@{}l|ccccccc|ccccccc@{}}
\toprule
\textbf{Variant} & \multicolumn{7}{c|}{\textbf{IID}} & \multicolumn{7}{c}{\textbf{OOD}} \\
\cmidrule(lr){2-8}\cmidrule(lr){9-15}
 & \textbf{Hotpot} & \textbf{Trivia} & \textbf{AIME} & \textbf{MedQA} & \textbf{WShop} & \textbf{ALFW} & \textbf{SWE} & \textbf{MuSQ} & \textbf{NQ} & \textbf{MATH} & \textbf{GPQA} & \textbf{HEval} & \textbf{SciW} & \textbf{M2W} \\
 & EM & EM & Acc & Acc & SR & SR & Res & EM & EM & Acc & Acc & pass@1 & Succ & Step Acc \\
\midrule
$-$ TTB & 83.59 & 87.50 & 53.33 & 85.16 & 82.03 & 82.81 & 40.63 & 73.44 & 79.69 & 87.50 & 79.69 & 89.06 & 39.10 & 33.59 \\
\midrule
$-$ Backward policy & 85.94 & 90.63 & 53.33 & 86.72 & 80.47 & 88.28 & 43.75 & 75.00 & 78.91 & 89.06 & 83.09 & 90.63 & 42.20 & 36.72 \\
\midrule
$-$ $\dot{\mathcal{L}}_{\text{TTB}}$ for \emph{when} & 82.81 & 84.38 & 60.00 & 89.06 & 83.02 & 85.93 & 46.88 & 75.78 & 71.88 & 90.63 & 84.88 & 92.19 & 48.44 & 40.63 \\
$-$ $I(t)$ for \emph{where} & 85.16 & 88.26 & 63.33 & 89.06 & 87.50 & 90.63 & 44.53 & 78.13 & 75.00 & 88.28 & 82.81 & 93.75 & 47.90 & 42.97 \\
$-$ $\hat F(s)$ for \emph{what} & 84.38 & 82.03 & 50.00 & 83.59 & 79.69 & 80.47 & 42.20 & 76.56 & 75.78 & 87.50 & 80.47 & 90.63 & 45.31 & 42.19 \\
\midrule
\textbf{SkillFlow (Full)} & \textbf{92.19} & \textbf{96.09} & \textbf{70.00} & \textbf{92.19} & \textbf{93.75} & \textbf{96.09} & \textbf{52.34} & \textbf{85.16} & \textbf{82.81} & \textbf{96.09} & \textbf{89.84} & \textbf{98.44} & \textbf{57.81} & \textbf{51.56} \\
\bottomrule
\end{tabular}
\end{adjustbox}\vspace{5pt}
\caption{Component ablation across IID and OOD benchmarks (RQ4). Each row removes one component. \emph{$-$TTB} replaces TTB with GRPO (C1); \emph{$-$Backward policy} removes the hindsight $P_\phi$ (C2); the last three rows ablate the \emph{when}/\emph{where}/\emph{what} signals of skill evolution (C3).}
\label{tab:ablation}
\afterfloatsep
\end{table}

\begin{figure}[t]

\centering
\begin{subfigure}[c]{0.708\textwidth}
\zerosep
\centering
  \includegraphics[width=\linewidth]{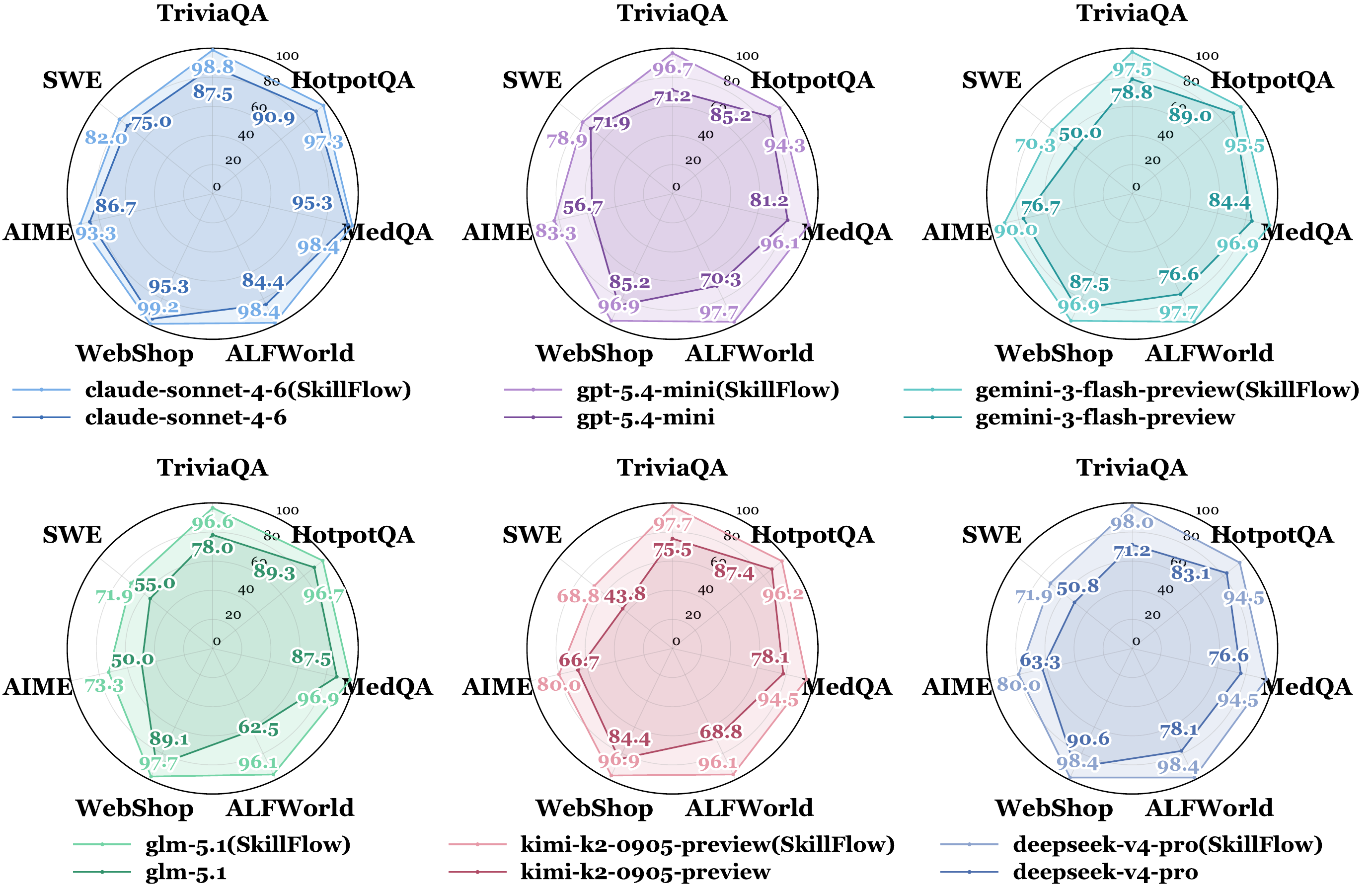}
  \caption{Radar charts on different backbones}\label{fig:transfer-radar}
\end{subfigure}\hfill
\begin{subfigure}[c]{0.271\textwidth}
\zerosep
\centering
  \includegraphics[width=\linewidth]{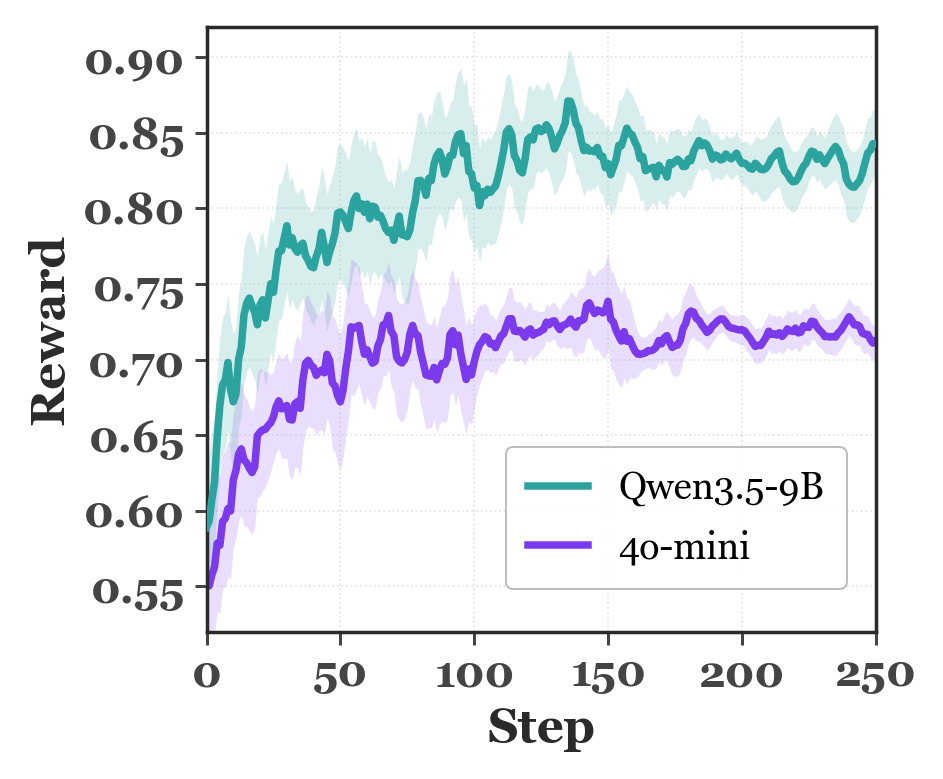}\\[0.3em]
  \includegraphics[width=\linewidth]{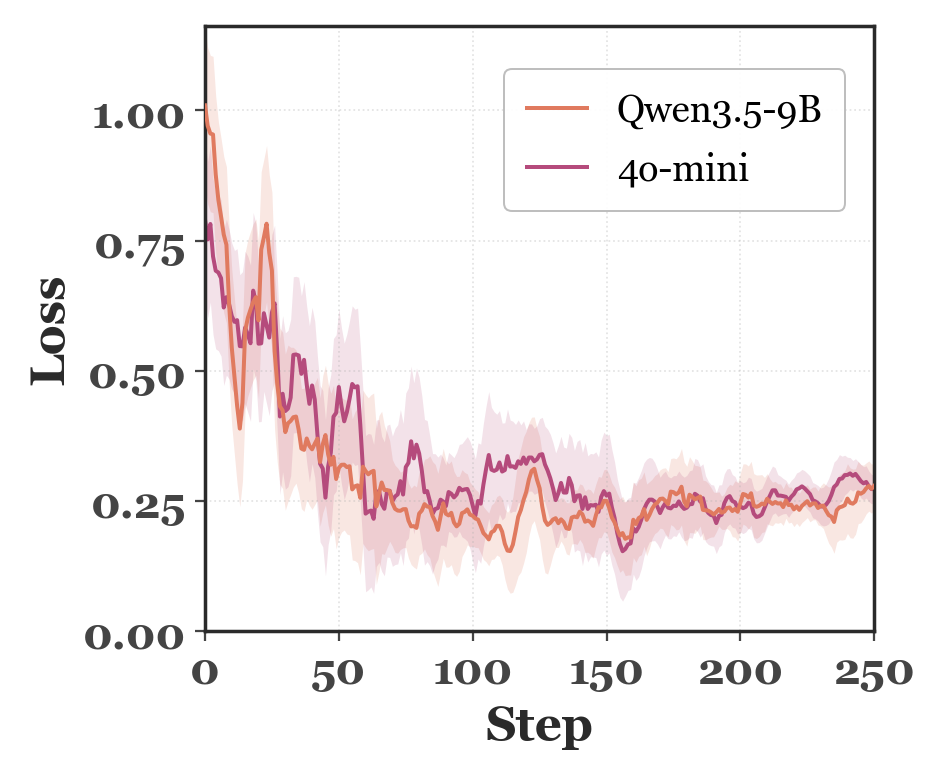}
  \caption{Training dynamics}\label{fig:transfer-dynamics}
\end{subfigure}\\[0.5em]
\begin{subfigure}{\textwidth}
  \centering
  \includegraphics[width=\linewidth]{figures/fig_transfer_bars.png}
  \caption{Aggregated performance by task type}\label{fig:transfer-bars}
\end{subfigure}
\caption{Backbone transferability. (a) Per-backbone radar across seven IID benchmarks (six proprietary LLMs), with vs.\ without SkillFlow. (b) Training dynamics on two trainable backbones. (c) Aggregated gain by task category. SkillFlow lifts every backbone, with weaker ones gaining most.}
\label{fig:transferability}
\afterfloatsep
\end{figure}

\subsection{Main Results (RQ1)}\label{subsec:main-results}

Table~\ref{tab:main-results} shows SkillFlow leads across all 14 benchmarks and surpasses even much stronger direct-LLM baselines (v4-flash, Claude Haiku 4.5), indicating gains come from how the orchestration policy is trained rather than from backbone capacity. Margins concentrate on WebShop, ALFWorld, and SWE-bench — where REINFORCE-family baselines suffer strategy collapse and AFlow's static workflow runs out of moves — and persist across both IID and OOD halves. The gains compound from three mechanisms: TTB's regression loss has lower variance than REINFORCE (formal bound in Appendix~\ref{app:variance}), the backward policy supplies zero-cost per-step credit, and recursive skill evolution removes the static-library bottleneck of AgentFlow, FlowSteer, and SkillRL.

\beforesubsectionsep
\subsection{Out-of-Distribution Generalization (RQ2)}\label{subsec:ood}

The seven OOD benchmarks (Table~\ref{tab:main-results}(b)) share the four task categories with the IID set but contain no training data, and the skill library is frozen at end-of-training — a strict transfer test. SkillFlow retains its IID lead with comparable margins; notably, the gap to REINFORCE-style baselines \emph{widens} on OOD because reward-proportional sampling preserves multiple solution paths, whereas a single-mode policy fails once surface forms shift. On Avg.(OOD) F1, the lift over FlowSteer reaches $+6.08\%$---roughly $1.5\times$ the IID gap. That a frozen library still helps unseen tasks indicates the evolved skills capture transferable orchestration primitives rather than benchmark-specific shortcuts.

\begin{figure}[t]
\vspace{-1.5em}
\centering
\begin{subfigure}[t]{0.65\textwidth}
\zerosep
\centering
\begin{adjustbox}{max width=\linewidth}
\footnotesize
\setlength{\tabcolsep}{3pt}
\renewcommand{\arraystretch}{1.10}
\begin{tabular}{@{}l|ccccccc@{}}
\toprule
\multicolumn{8}{@{}l}{\textit{(i) IID benchmarks}} \\
\midrule
\textbf{Method} & \textbf{Hotpot} & \textbf{Trivia} & \textbf{AIME} & \textbf{MedQA} & \textbf{WShop} & \textbf{ALFW} & \textbf{SWE} \\
 & EM & EM & Acc & Acc & SR & SR & Res \\
\midrule
Qwen3.5      & 60.94 & 44.88 & 46.67 & 69.53 & 32.03 & 48.28 & 17.19 \\
GRPO         & 83.59 & 87.50 & 53.33 & 85.16 & 82.03 & 82.81 & 40.63 \\
Tree-GRPO    & 87.50 & 93.75 & 60.00 & 88.28 & 85.94 & 90.92 & 48.44 \\
HCAPO        & 85.94 & 93.75 & 63.33 & 90.62 & 87.50 & 92.19 & 47.24 \\
\textbf{SkillFlow (Ours)} & \textbf{92.19} & \textbf{96.09} & \textbf{70.00} & \textbf{92.19} & \textbf{93.75} & \textbf{96.09} & \textbf{52.34} \\
\midrule
\multicolumn{8}{@{}l}{\textit{(ii) OOD benchmarks}} \\
\midrule
\textbf{Method} & \textbf{MuSQ} & \textbf{NQ} & \textbf{MATH} & \textbf{GPQA} & \textbf{HEval} & \textbf{SciW} & \textbf{M2W} \\
 & Ans EM & Ans EM & Acc & Acc & pass@1 & Succ & Step Acc \\
\midrule
Qwen3.5      & 39.06 & 21.88 & 89.06 & 61.72 & 89.06 & 25.78 & 26.64 \\
GRPO         & 73.44 & 79.69 & 87.50 & 79.69 & 89.06 & 39.10 & 33.59 \\
Tree-GRPO    & 77.34 & 80.47 & 92.19 & 80.47 & 92.19 & 42.20 & 36.00 \\
HCAPO        & 79.69 & 81.25 & 93.75 & 84.38 & 93.75 & 43.75 & 39.10 \\
\textbf{SkillFlow (Ours)} & \textbf{85.16} & \textbf{82.81} & \textbf{96.09} & \textbf{89.84} & \textbf{98.44} & \textbf{57.81} & \textbf{51.56} \\
\bottomrule
\end{tabular}
\end{adjustbox}
\caption{Algorithm comparison on IID and OOD benchmarks (RQ5)}\label{tab:algo-comparison}
\end{subfigure}\hfill
\begin{subfigure}[t]{0.33\textwidth}
\zerosep
\centering
\renewcommand{\arraystretch}{1.10}
\resizebox{\linewidth}{!}{%
\begin{tabular}{@{}l|ccc@{}}
\toprule
\textbf{Method} & \textbf{collect} & \textbf{s/turn} & \textbf{s/traj} \\
\midrule
GRPO         & 216.8 & 1.79 & 35.6 \\
HCAPO        & 137.6 & 1.72 & 48.1 \\
Tree-GRPO    & 162.8 & 1.99 & 45.1 \\
\textbf{Ours} & \textbf{130.5} & \textbf{1.13} & \textbf{33.9} \\
\bottomrule
\end{tabular}%
}\\[0.4em]
\includegraphics[width=\linewidth]{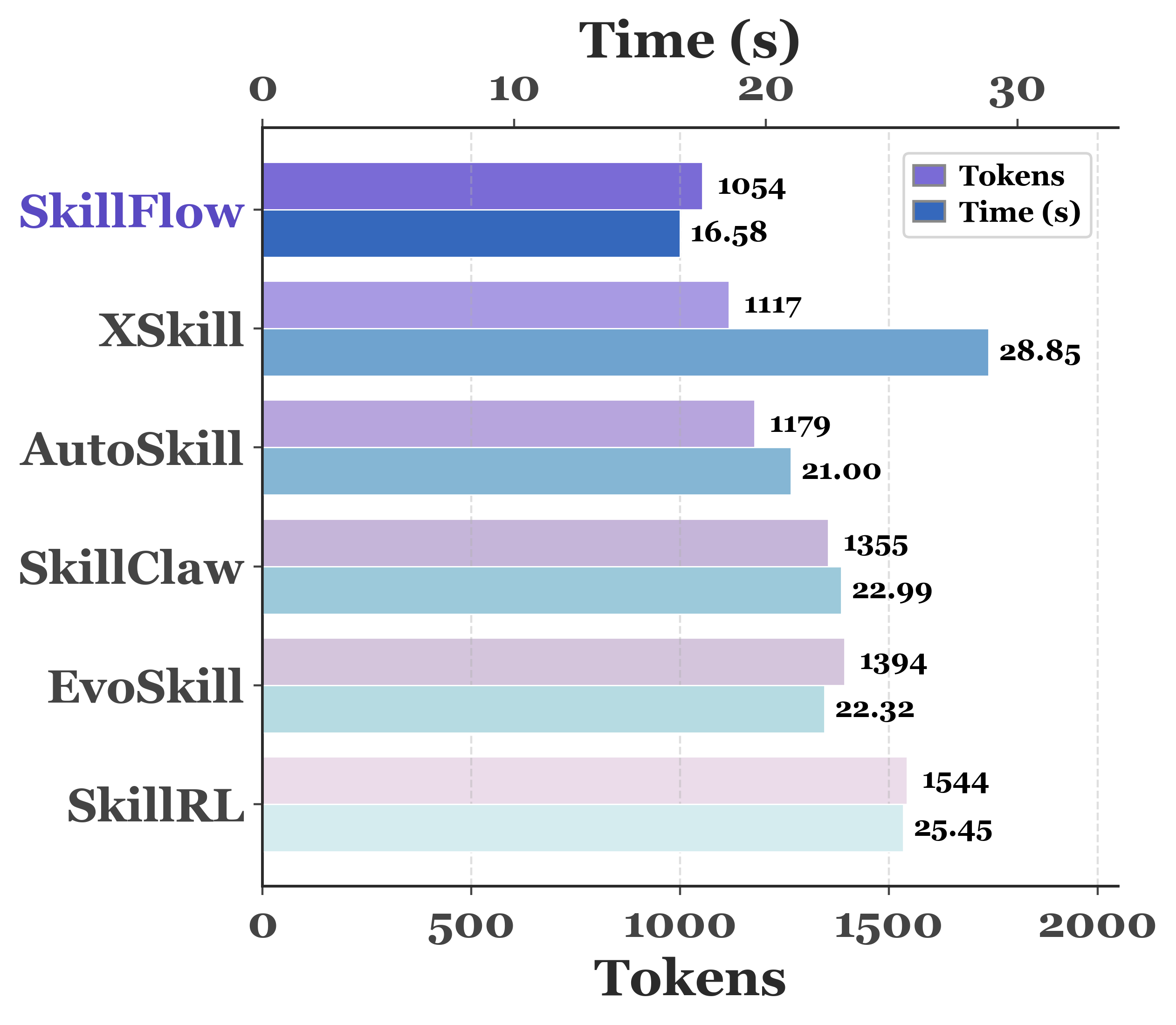}\vspace{1.5pt}
\caption{Training \& per-call cost}\label{fig:mech-cost}
\end{subfigure}\\[0.6em]
\begin{subfigure}[t]{0.33\textwidth}
  \includegraphics[width=\linewidth]{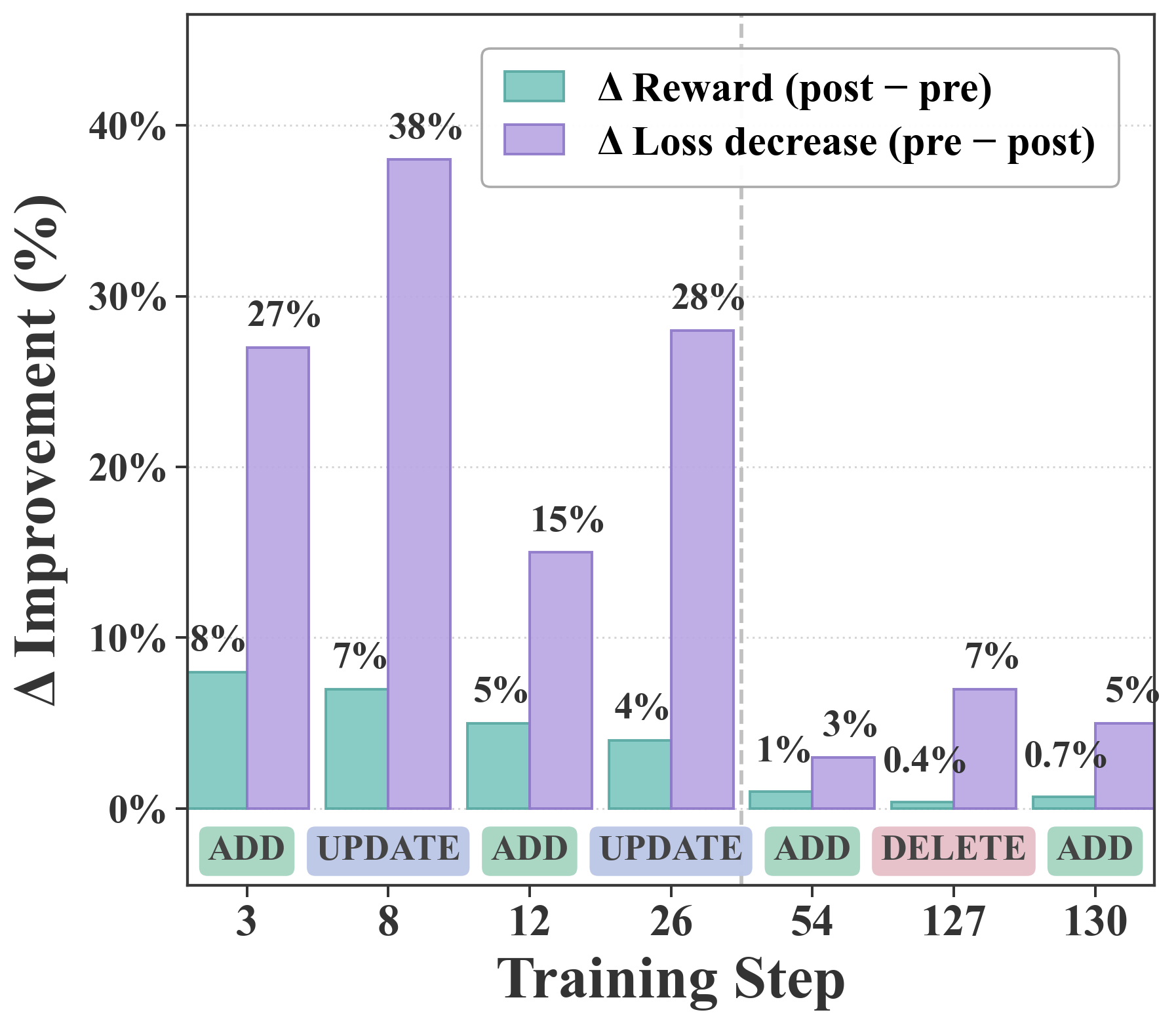}
  \caption{Evolution events}\label{fig:mech-evolution}
\end{subfigure}%
\begin{subfigure}[t]{0.33\textwidth}
  \includegraphics[width=\linewidth]{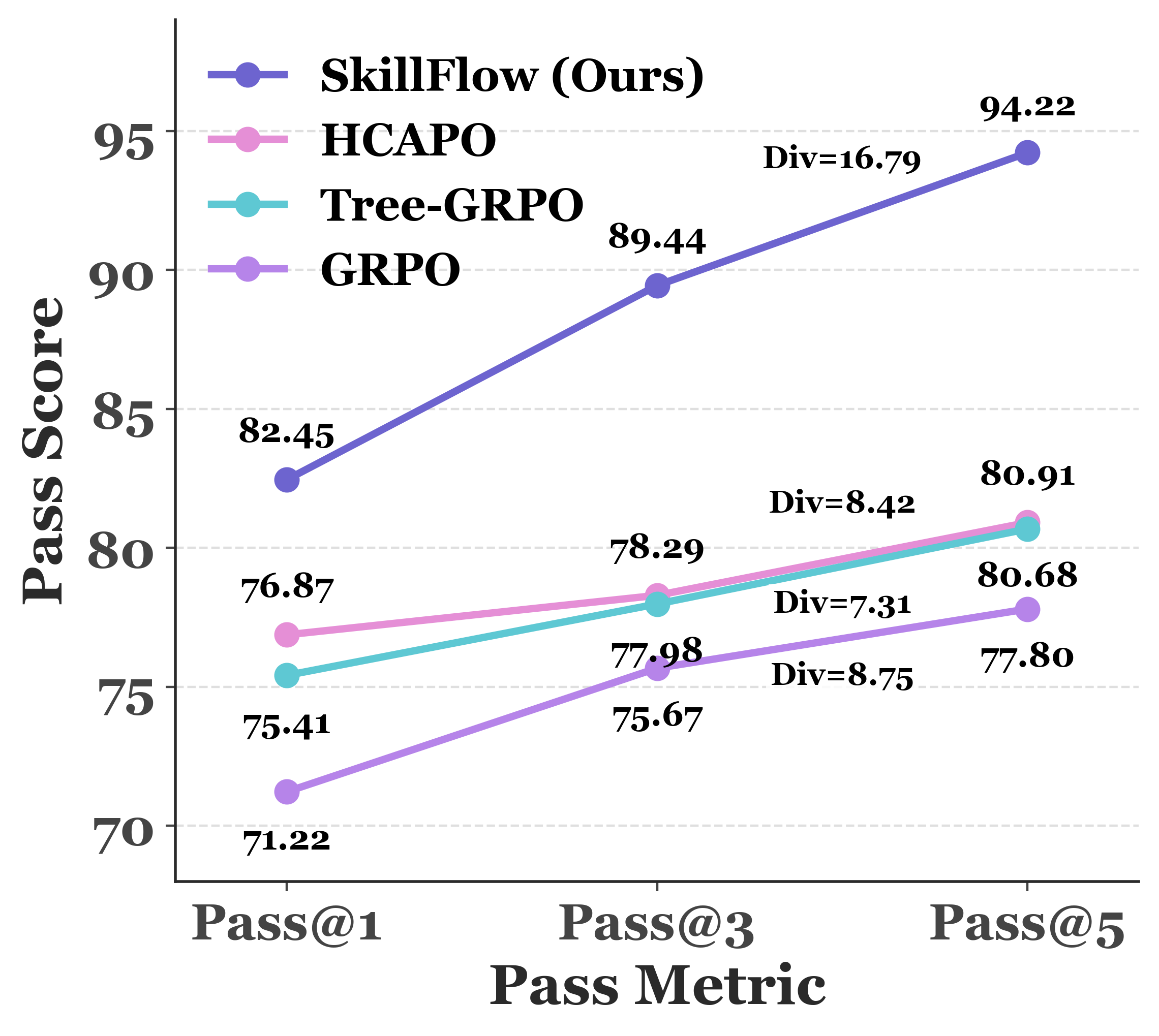}
  \caption{Pass@$K$}\label{fig:mech-passk}
\end{subfigure}%
\begin{subfigure}[t]{0.33\textwidth}
  \includegraphics[width=\linewidth]{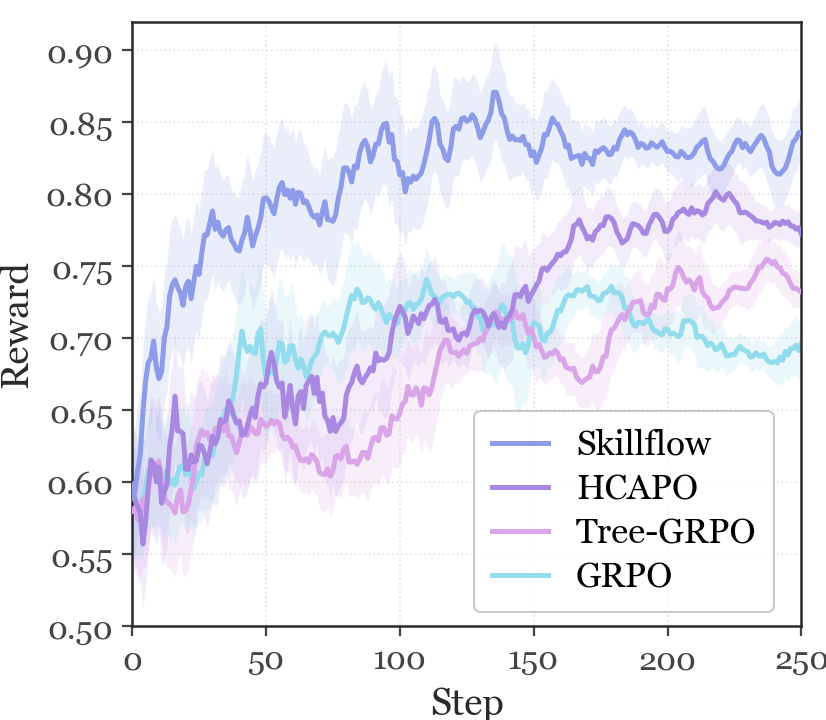}
  \caption{Reward curves}\label{fig:mech-reward}
\end{subfigure}
\caption{Mechanism analysis and algorithm comparison. (a) Algorithm comparison on IID and OOD. (b) Per-call cost vs five skill-evolution baselines. (c) Skill-evolution events. (d) Pass@$K$ vs diversity. (e) Reward curves. SkillFlow leads on accuracy, diversity, and cost simultaneously.}
\label{fig:mechanism}
\afterfloatsep
\end{figure}

\subsection{Backbone Transferability (RQ3)}\label{subsec:transferability}

Figure~\ref{fig:transferability} swaps the Supervisor for six proprietary LLMs. Weaker backbones gain most---explicit credit and diversity-preserving sampling offset shaky base reasoning---while stronger ones still gain on agent-style tasks where orchestration is the bottleneck. The per-category lift hierarchy (Fig.~\ref{fig:transferability}c) is preserved across backbones, locating SkillFlow at the training-recipe level rather than backbone-specific tuning.

\beforesubsectionsep
\subsection{Component Ablation (RQ4)}\label{subsec:ablation}

Table~\ref{tab:ablation} reports a leave-one-out ablation mapping onto claims C1--C3. \emph{Loss (C1):} replacing TTB with GRPO hurts most on diversity-sensitive tasks (AIME, WebShop, ALFWorld) — mode collapse, not reward-curve shape, is the dominant failure mode. \emph{Credit (C2):} removing $P_\phi$ hurts harder multi-step tasks (AIME, WebShop, ScienceWorld, Mind2Web) more than fact-based QA, where per-step credit matters most for identifying decisive decisions. \emph{Evolution (C3):} ablating any of the three flow signals \emph{when}, \emph{where}, or \emph{what} independently degrades performance, indicating none is redundant.

\beforesubsectionsep
\subsection{Algorithm Comparison and Computational Cost (RQ5)}\label{subsec:algo-cost}

Holding the backbone and training data fixed, SkillFlow leads GRPO, Tree-GRPO~\citep{ji2025treegrpo}, and HCAPO~\citep{hcapo2026} on all 14 benchmarks (Figure~\ref{fig:mechanism}(a)); Figure~\ref{fig:mechanism}(d,e) explains the gap: REINFORCE-style methods reinforce a single mode and hit a Pass@5 ceiling at low diversity, while reward-proportional sampling keeps multiple high-reward paths alive (same ranking holds OOD). Skill-evolution events concentrate at TTB plateaus (Fig.~\ref{fig:mechanism}(c)), confirming the plateau-driven trigger of §\ref{sec:evolution}. On cost (Figure~\ref{fig:mechanism}(b)), SkillFlow uses the fewest tokens and lowest time against five skill-evolution baselines~\citep{xskill2026,autoskill2026,skillclaw2026,alzubi2026evoskill,xia2026skillrl}; the $\sim$32\% / $\sim$35\% saving over SkillRL pins the gain on flow-signal-driven evolution.

\enlargethispage{4\baselineskip}
\vspace*{-1.0em}
\section{Conclusion}\label{sec:conclusion}
\vspace*{-0.4em}

SkillFlow unifies orchestration training and recursive skill evolution under TTB: reward-proportional sampling preserves diversity, hindsight backward gives zero-cost per-step credit, and flow diagnostics drive skill curation. Across 14 benchmarks it outperforms direct-LLM, RL, and skill-evolution baselines on accuracy, diversity, and cost.


\bibliographystyle{plainnat}
\bibliography{references}

@inproceedings{joshi2017triviaqa,
  title     = {Triviaqa: A large scale distantly supervised challenge dataset for reading comprehension},
  author    = {Joshi, Mandar and Choi, Eunsol and Weld, Daniel S and Zettlemoyer, Luke},
  booktitle = {Proceedings of the 55th Annual Meeting of the Association for Computational Linguistics (Volume 1: Long Papers)},
  pages     = {1601--1611},
  year      = {2017}
}

@article{jin2021medqa,
  title     = {What disease does this patient have? a large-scale open domain question answering dataset from medical exams},
  author    = {Jin, Di and Pan, Eileen and Oufattole, Nassim and Weng, Wei-Hung and Fang, Hanyi and Szolovits, Peter},
  journal   = {Applied Sciences},
  volume    = {11},
  number    = {14},
  pages     = {6421},
  year      = {2021},
  publisher = {MDPI}
}

@article{yao2022webshop,
  title     = {Webshop: Towards scalable real-world web interaction with grounded language agents},
  author    = {Yao, Shunyu and Chen, Howard and Yang, John and Narasimhan, Karthik},
  journal   = {Advances in Neural Information Processing Systems},
  volume    = {35},
  pages     = {20744--20757},
  year      = {2022}
}

@article{shridhar2021alfworld,
  title     = {Alfworld: Aligning text and embodied environments for interactive learning},
  author    = {Shridhar, Mohit and Yuan, Xingdi and C{\^o}t{\'e}, Marc-Alexandre and Bisk, Yonatan and Trischler, Adam and Hausknecht, Matthew},
  journal   = {arXiv preprint arXiv:2010.03768},
  year      = {2020}
}

@inproceedings{jimenez2024swebench,
  title     = {Swe-bench: Can language models resolve real-world github issues?},
  author    = {Jimenez, Carlos E and Yang, John and Wettig, Alexander and Yao, Shunyu and Pei, Kexin and Press, Ofir and Narasimhan, Karthik R},
  booktitle = {The twelfth international conference on learning representations},
  year      = {2023}
}

@inproceedings{yang2018hotpotqa,
  title     = {HotpotQA: A dataset for diverse, explainable multi-hop question answering},
  author    = {Yang, Zhilin and Qi, Peng and Zhang, Saizheng and Bengio, Yoshua and Cohen, William and Salakhutdinov, Ruslan and Manning, Christopher D},
  booktitle = {Proceedings of the 2018 conference on empirical methods in natural language processing},
  pages     = {2369--2380},
  year      = {2018}
}

@article{trivedi2022musique,
  title     = {♫ MuSiQue: Multihop Questions via Single-hop Question Composition},
  author    = {Trivedi, Harsh and Balasubramanian, Niranjan and Khot, Tushar and Sabharwal, Ashish},
  journal   = {Transactions of the Association for Computational Linguistics},
  volume    = {10},
  pages     = {539--554},
  year      = {2022},
  publisher = {MIT Press One Broadway, 12th Floor, Cambridge, Massachusetts 02142, USA~…}
}

@article{chen2021humaneval,
  title     = {Evaluating large language models trained on code},
  author    = {Chen, Mark and Tworek, Jerry and Jun, Heewoo and Yuan, Qiming and Pinto, Henrique Ponde De Oliveira and Kaplan, Jared and Edwards, Harri and Burda, Yuri and Joseph, Nicholas and Brockman, Greg and others},
  journal   = {arXiv preprint arXiv:2107.03374},
  year      = {2021}
}

@article{schulman2016gae,
  title     = {High-dimensional continuous control using generalized advantage estimation},
  author    = {Schulman, John and Moritz, Philipp and Levine, Sergey and Jordan, Michael and Abbeel, Pieter},
  journal   = {arXiv preprint arXiv:1506.02438},
  year      = {2015}
}

@article{bengio2021gflownet,
  title     = {Flow network based generative models for non-iterative diverse candidate generation},
  author    = {Bengio, Emmanuel and Jain, Moksh and Korablyov, Maksym and Precup, Doina and Bengio, Yoshua},
  journal   = {Advances in neural information processing systems},
  volume    = {34},
  pages     = {27381--27394},
  year      = {2021}
}

@article{malkin2022trajectory,
  title     = {Trajectory balance: Improved credit assignment in gflownets},
  author    = {Malkin, Nikolay and Jain, Moksh and Bengio, Emmanuel and Sun, Chen and Bengio, Yoshua},
  journal   = {Advances in Neural Information Processing Systems},
  volume    = {35},
  pages     = {5955--5967},
  year      = {2022}
}

@article{bengio2023gflownet_foundations,
  title     = {Gflownet foundations},
  author    = {Bengio, Yoshua and Lahlou, Salem and Deleu, Tristan and Hu, Edward J and Tiwari, Mo and Bengio, Emmanuel},
  journal   = {Journal of Machine Learning Research},
  volume    = {24},
  number    = {210},
  pages     = {1--55},
  year      = {2023}
}

@article{yao2023react,
  title     = {React: Synergizing reasoning and acting in language models},
  author    = {Yao, Shunyu and Zhao, Jeffrey and Yu, Dian and Du, Nan and Shafran, Izhak and Narasimhan, Karthik and Cao, Yuan},
  journal   = {arXiv preprint arXiv:2210.03629},
  year      = {2022}
}

@article{zhang2023robust_scheduling,
  title     = {Robust scheduling with gflownets},
  author    = {Zhang, David W and Rainone, Corrado and Peschl, Markus and Bondesan, Roberto},
  journal   = {arXiv preprint arXiv:2302.05446},
  year      = {2023}
}

@article{wang2023voyager,
  title     = {Voyager: An open-ended embodied agent with large language models},
  author    = {Wang, Guanzhi and Xie, Yuqi and Jiang, Yunfan and Mandlekar, Ajay and Xiao, Chaowei and Zhu, Yuke and Fan, Linxi and Anandkumar, Anima},
  journal   = {arXiv preprint arXiv:2305.16291},
  year      = {2023}
}

@article{chen2023fireact,
  title     = {Fireact: Toward language agent fine-tuning},
  author    = {Chen, Baian and Shu, Chang and Shareghi, Ehsan and Collier, Nigel and Narasimhan, Karthik and Yao, Shunyu},
  journal   = {arXiv preprint arXiv:2310.05915},
  year      = {2023}
}

@inproceedings{zeng2023agenttuning,
  title     = {Agenttuning: Enabling generalized agent abilities for llms},
  author    = {Zeng, Aohan and Liu, Mingdao and Lu, Rui and Wang, Bowen and Liu, Xiao and Dong, Yuxiao and Tang, Jie},
  booktitle = {Findings of the Association for Computational Linguistics: ACL 2024},
  pages     = {3053--3077},
  year      = {2024}
}

@inproceedings{wu2023autogen,
  title     = {Autogen: Enabling next-gen LLM applications via multi-agent conversations},
  author    = {Wu, Qingyun and Bansal, Gagan and Zhang, Jieyu and Wu, Yiran and Li, Beibin and Zhu, Erkang and Jiang, Li and Zhang, Xiaoyun and Zhang, Shaokun and Liu, Jiale and others},
  booktitle = {First conference on language modeling},
  year      = {2024}
}

@inproceedings{hong2024metagpt,
  title     = {MetaGPT: Meta programming for a multi-agent collaborative framework},
  author    = {Hong, Sirui and Zhuge, Mingchen and Chen, Jonathan and Zheng, Xiawu and Cheng, Yuheng and Wang, Jinlin and Zhang, Ceyao and Wang, Zili and Yau, Steven Ka Shing and Lin, Zijuan and others},
  booktitle = {The twelfth international conference on learning representations},
  year      = {2023}
}

@article{shao2024deepseekmath,
  title     = {Deepseekmath: Pushing the limits of mathematical reasoning in open language models},
  author    = {Shao, Zhihong and Wang, Peiyi and Zhu, Qihao and Xu, Runxin and Song, Junxiao and Bi, Xiao and Zhang, Haowei and Zhang, Mingchuan and Li, YK and Wu, Yang and others},
  journal   = {arXiv preprint arXiv:2402.03300},
  year      = {2024}
}

@inproceedings{zhuge2024gptswarm,
  title     = {Gptswarm: Language agents as optimizable graphs},
  author    = {Zhuge, Mingchen and Wang, Wenyi and Kirsch, Louis and Faccio, Francesco and Khizbullin, Dmitrii and Schmidhuber, J{\"u}rgen},
  booktitle = {Forty-first International Conference on Machine Learning},
  year      = {2024}
}

@article{zhou2024lats,
  title     = {Language agent tree search unifies reasoning acting and planning in language models},
  author    = {Zhou, Andy and Yan, Kai and Shlapentokh-Rothman, Michal and Wang, Haohan and Wang, Yu-Xiong},
  journal   = {arXiv preprint arXiv:2310.04406},
  year      = {2023}
}

@inproceedings{qin2024toolllm,
  title     = {Toolllm: Facilitating large language models to master 16000+ real-world apis},
  author    = {Qin, Yujia and Liang, Shihao and Ye, Yining and Zhu, Kunlun and Yan, Lan and Lu, Yaxi and Lin, Yankai and Cong, Xin and Tang, Xiangru and Qian, Bill and others},
  booktitle = {The twelfth international conference on learning representations},
  year      = {2023}
}

@inproceedings{wang2024executable,
  title     = {Executable code actions elicit better llm agents},
  author    = {Wang, Xingyao and Chen, Yangyi and Yuan, Lifan and Zhang, Yizhe and Li, Yunzhu and Peng, Hao and Ji, Heng},
  booktitle = {Forty-first International Conference on Machine Learning},
  year      = {2024}
}

@article{qian2024scaling,
  title     = {Scaling large language model-based multi-agent collaboration},
  author    = {Qian, Chen and Xie, Zihao and Wang, Yifei and Liu, Wei and Zhu, Kunlun and Xia, Hanchen and Dang, Yufan and Du, Zhuoyun and Chen, Weize and Yang, Cheng and others},
  journal   = {arXiv preprint arXiv:2406.07155},
  year      = {2024}
}

@article{hu2024adas,
  title     = {Automated design of agentic systems},
  author    = {Hu, Shengran and Lu, Cong and Clune, Jeff},
  journal   = {arXiv preprint arXiv:2408.08435},
  year      = {2024}
}

@article{ong2024routellm,
  title     = {Routellm: Learning to route llms with preference data},
  author    = {Ong, Isaac and Almahairi, Amjad and Wu, Vincent and Chiang, Wei-Lin and Wu, Tianhao and Gonzalez, Joseph E and Kadous, M Waleed and Stoica, Ion},
  journal   = {arXiv preprint arXiv:2406.18665},
  year      = {2024}
}

@article{xu2024flow_of_reasoning,
  title     = {Flow of reasoning: Training llms for divergent reasoning with minimal examples},
  author    = {Yu, Fangxu and Jiang, Lai and Kang, Haoqiang and Hao, Shibo and Qin, Lianhui},
  journal   = {arXiv preprint arXiv:2406.05673},
  year      = {2024}
}

@inproceedings{zhang2025aflow,
  title     = {Aflow: Automating agentic workflow generation},
  author    = {Zhang, Jiayi and Xiang, Jinyu and Yu, Zhaoyang and Teng, Fengwei and Chen, Xiong-Hui and Chen, Jiaqi and Zhuge, Mingchen and Cheng, Xin and Hong, Sirui and Wang, Jinlin and others},
  booktitle = {The Thirteenth International Conference on Learning Representations},
  year      = {2024}
}

@article{openhands2025,
  title     = {Openhands: An open platform for ai software developers as generalist agents},
  author    = {Wang, Xingyao and Li, Boxuan and Song, Yufan and Xu, Frank F and Tang, Xiangru and Zhuge, Mingchen and Pan, Jiayi and Song, Yueqi and Li, Bowen and Singh, Jaskirat and others},
  journal   = {arXiv preprint arXiv:2407.16741},
  year      = {2024}
}

@article{deepseek2025r1,
  title     = {Deepseek-r1: Incentivizing reasoning capability in llms via reinforcement learning},
  author    = {Guo, Daya and Yang, Dejian and Zhang, Haowei and Song, Junxiao and Wang, Peiyi and Zhu, Qihao and Xu, Runxin and Zhang, Ruoyu and Ma, Shirong and Bi, Xiao and others},
  journal   = {arXiv preprint arXiv:2501.12948},
  year      = {2025}
}

@article{yu2025dapo,
  title     = {Dapo: An open-source llm reinforcement learning system at scale},
  author    = {Yu, Qiying and Zhang, Zheng and Zhu, Ruofei and Yuan, Yufeng and Zuo, Xiaochen and Yue, Yu and Dai, Weinan and Fan, Tiantian and Liu, Gaohong and Liu, Lingjun and others},
  journal   = {arXiv preprint arXiv:2503.14476},
  year      = {2025}
}

@article{liu2025vapo,
  title     = {Vapo: Efficient and reliable reinforcement learning for advanced reasoning tasks},
  author    = {Yue, Yu and Yuan, Yufeng and Yu, Qiying and Zuo, Xiaochen and Zhu, Ruofei and Xu, Wenyuan and Chen, Jiaze and Wang, Chengyi and Fan, TianTian and Du, Zhengyin and others},
  journal   = {arXiv preprint arXiv:2504.05118},
  year      = {2025}
}

@article{dang2025evolving_orchestration,
  title     = {Multi-agent collaboration via evolving orchestration},
  author    = {Dang, Yufan and Qian, Chen and Luo, Xueheng and Fan, Jingru and Xie, Zihao and Shi, Ruijie and Chen, Weize and Yang, Cheng and Che, Xiaoyin and Tian, Ye and others},
  journal   = {arXiv preprint arXiv:2505.19591},
  year      = {2025}
}

@inproceedings{magrpo2025,
  title     = {Llm collaboration with multi-agent reinforcement learning},
  author    = {Liu, Shuo and Liang, Zeyu and Lyu, Xueguang and Amato, Christopher},
  booktitle = {Proceedings of the AAAI Conference on Artificial Intelligence},
  volume    = {40},
  number    = {38},
  pages     = {32150--32158},
  year      = {2026}
}

@article{xi2025prs,
  title     = {Enhancing Agentic RL with Progressive Reward Shaping and Value-based Sampling Policy Optimization},
  author    = {Su, Jianghao and Zeng, Xia and Liu, Luhui and Luo, Chao and Chen, Ye and Zhuang, Zhuoran},
  journal   = {arXiv preprint arXiv:2512.07478},
  year      = {2025}
}

@inproceedings{zhu2025prm_lessons,
  title     = {The lessons of developing process reward models in mathematical reasoning},
  author    = {Zhang, Zhenru and Zheng, Chujie and Wu, Yangzhen and Zhang, Beichen and Lin, Runji and Yu, Bowen and Liu, Dayiheng and Zhou, Jingren and Lin, Junyang},
  booktitle = {Findings of the Association for Computational Linguistics: ACL 2025},
  pages     = {10495--10516},
  year      = {2025}
}

@inproceedings{rprm2025,
  title     = {R-prm: Reasoning-driven process reward modeling},
  author    = {She, Shuaijie and Liu, Junxiao and Liu, Yifeng and Chen, Jiajun and Huang, Xin and Huang, Shujian},
  booktitle = {Proceedings of the 2025 Conference on Empirical Methods in Natural Language Processing},
  pages     = {13449--13462},
  year      = {2025}
}

@article{self_evolving_survey2025,
  title     = {A comprehensive survey of self-evolving ai agents: A new paradigm bridging foundation models and lifelong agentic systems},
  author    = {Fang, Jinyuan and Peng, Yanwen and Zhang, Xi and Wang, Yingxu and Yi, Xinhao and Zhang, Guibin and Xu, Yi and Wu, Bin and Liu, Siwei and Li, Zihao and others},
  journal   = {arXiv preprint arXiv:2508.07407},
  year      = {2025}
}

@inproceedings{daao2026,
  title     = {Difficulty-aware agentic orchestration for query-specific multi-agent workflows},
  author    = {Su, Jinwei and Lan, Qizhen and Xia, Yinghui and Sun, Lifan and Tian, Weiyou and Shi, Tianyu and He, Lewei},
  booktitle = {Proceedings of the ACM Web Conference 2026},
  pages     = {2060--2070},
  year      = {2026}
}

@article{rl_long_horizon_agents2026,
  title     = {Reinforcement learning for long-horizon interactive llm agents},
  author    = {Chen, Kevin and Cusumano-Towner, Marco and Huval, Brody and Petrenko, Aleksei and Hamburger, Jackson and Koltun, Vladlen and Kr{\"a}henb{\"u}hl, Philipp},
  journal   = {arXiv preprint arXiv:2502.01600},
  year      = {2025}
}

@article{shen2026token_level,
  title     = {Agentic reinforcement learning with implicit step rewards},
  author    = {Liu, Xiaoqian and Wang, Ke and Wu, Yuchuan and Huang, Fei and Li, Yongbin and Zhang, Junge and Jiao, Jianbin},
  journal   = {arXiv preprint arXiv:2509.19199},
  year      = {2025}
}

@article{sage2026skill_rl,
  title     = {Reinforcement learning for self-improving agent with skill library},
  author    = {Wang, Jiongxiao and Yan, Qiaojing and Wang, Yawei and Tian, Yijun and Mishra, Soumya Smruti and Xu, Zhichao and Gandhi, Megha and Xu, Panpan and Cheong, Lin Lee},
  journal   = {arXiv preprint arXiv:2512.17102},
  year      = {2025}
}

@article{wang2025ragen,
  title     = {Ragen: Understanding self-evolution in llm agents via multi-turn reinforcement learning},
  author    = {Wang, Zihan and Wang, Kangrui and Wang, Qineng and Zhang, Pingyue and Li, Linjie and Yang, Zhengyuan and Jin, Xing and Yu, Kefan and Nguyen, Minh Nhat and Liu, Licheng and others},
  journal   = {arXiv preprint arXiv:2504.20073},
  year      = {2025}
}

@article{jin2025searchr1,
  title     = {Search-r1: Training llms to reason and leverage search engines with reinforcement learning},
  author    = {Jin, Bowen and Zeng, Hansi and Yue, Zhenrui and Yoon, Jinsung and Arik, Sercan and Wang, Dong and Zamani, Hamed and Han, Jiawei},
  journal   = {arXiv preprint arXiv:2503.09516},
  year      = {2025}
}

@article{li2025choice_divergence,
  title     = {The choice of divergence: A neglected key to mitigating diversity collapse in reinforcement learning with verifiable reward},
  author    = {Li, Long and Zhou, Zhijian and Hao, Jiaran and Liu, Jason Klein and Miao, Yanting and Pang, Wei and Tan, Xiaoyu and Chu, Wei and Wang, Zhe and Pan, Shirui and others},
  journal   = {arXiv preprint arXiv:2509.07430},
  year      = {2025}
}

@article{ji2025treegrpo,
  title     = {Tree search for llm agent reinforcement learning},
  author    = {Ji, Yuxiang and Ma, Ziyu and Wang, Yong and Chen, Guanhua and Chu, Xiangxiang and Wu, Liaoni},
  journal   = {arXiv preprint arXiv:2509.21240},
  year      = {2025}
}

@article{zhang2026flowsteer,
  title     = {FlowSteer: Interactive Agentic Workflow Orchestration via End-to-End Reinforcement Learning},
  author    = {Zhang, Mingda and Luo, Haoran and Shen, Tiesunlong and Lin, Qika and Tang, Xiaoying and Mao, Rui and Cambria, Erik},
  journal   = {arXiv preprint arXiv:2602.01664},
  year      = {2026}
}

@article{ace_gflownet2026,
  title     = {Avoid What You Know: Divergent Trajectory Balance for GFlowNets},
  author    = {Dall'Antonia, Pedro and da Silva, Tiago and Csillag, Daniel and Lahlou, Salem and Mesquita, Diego},
  journal   = {arXiv preprint arXiv:2602.17827},
  year      = {2026}
}

@article{hcapo2026,
  title     = {Hindsight Credit Assignment for Long-Horizon LLM Agents},
  author    = {Tan, Hui-Ze and Yang, Xiao-Wen and Chen, Hao and Shao, Jie-Jing and Wen, Yi and Shen, Yuteng and Luo, Weihong and Du, Xiku and Guo, Lan-Zhe and Li, Yu-Feng},
  journal   = {arXiv preprint arXiv:2603.08754},
  year      = {2026}
}

@article{hiper2026,
  title     = {Hiper: Hierarchical reinforcement learning with explicit credit assignment for large language model agents},
  author    = {Peng, Jiangweizhi and Liu, Yuanxin and Zhou, Ruida and Fleming, Charles and Wang, Zhaoran and Garcia, Alfredo and Hong, Mingyi},
  journal   = {arXiv preprint arXiv:2602.16165},
  year      = {2026}
}

@article{agent_skills_survey2026,
  title     = {Agent skills for large language models: Architecture, acquisition, security, and the path forward},
  author    = {Xu, Renjun and Yan, Yang},
  journal   = {arXiv preprint arXiv:2602.12430},
  year      = {2026}
}

@article{single_agent_skills2026,
  title     = {When single-agent with skills replace multi-agent systems and when they fail},
  author    = {Li, Xiaoxiao},
  journal   = {arXiv preprint arXiv:2601.04748},
  year      = {2026}
}

@article{process_outcome_credit2026,
  title     = {Discovering Process-Outcome Credit in Multi-Step LLM Reasoning},
  author    = {Wang, Xiangwei and Wang, Wei and Chen, Ken and Nimalsiri, Nanduni and Halgamuge, Saman},
  journal   = {arXiv preprint arXiv:2602.01034},
  year      = {2026}
}

@article{graph_grpo2026,
  title     = {Graph-GRPO: Stabilizing Multi-Agent Topology Learning via Group Relative Policy Optimization},
  author    = {Cang, Yueyang and Zhang, Xiaoteng and Zhao, Erlu and Ji, Zehua and Liu, Yuhang and He, Yuchen and Ning, Zhiyuan and Yijun, Chen and Que, Wenge and Shi, Li},
  journal   = {arXiv preprint arXiv:2603.02701},
  year      = {2026}
}

@article{int_credit2026,
  title     = {InT: Self-Proposed Interventions Enable Credit Assignment in LLM Reasoning},
  author    = {Yang, Matthew YR and Bai, Hao and Wu, Ian and Yang, Gene and Setlur, Amrith and Kumar, Aviral},
  journal   = {arXiv preprint arXiv:2601.14209},
  year      = {2026}
}

@article{xia2026skillrl,
  title     = {Skillrl: Evolving agents via recursive skill-augmented reinforcement learning},
  author    = {Xia, Peng and Chen, Jianwen and Wang, Hanyang and Liu, Jiaqi and Zeng, Kaide and Wang, Yu and Han, Siwei and Zhou, Yiyang and Zhao, Xujiang and Chen, Haifeng and others},
  journal   = {arXiv preprint arXiv:2602.08234},
  year      = {2026}
}

@article{alzubi2026evoskill,
  title     = {Evoskill: Automated skill discovery for multi-agent systems},
  author    = {Alzubi, Salaheddin and Provenzano, Noah and Bingham, Jaydon and Chen, Weiyuan and Vu, Tu},
  journal   = {arXiv preprint arXiv:2603.02766},
  year      = {2026}
}

@article{coevoskills2026,
  title     = {EvoSkills: Self-Evolving Agent Skills via Co-Evolutionary Verification},
  author    = {Zhang, Hanrong and Fan, Shicheng and Zou, Henry Peng and Chen, Yankai and Wang, Zhenting and Zhou, Jiayu and Li, Chengze and Huang, Wei-Chieh and Yao, Yifei and Zheng, Kening and others},
  journal   = {arXiv preprint arXiv:2604.01687},
  year      = {2026}
}

@article{skillx2026,
  title     = {SkillX: Automatically constructing skill knowledge bases for agents},
  author    = {Wang, Chenxi and Yu, Zhuoyun and Xie, Xin and Yao, Wuguannan and Fang, Runnan and Qiao, Shuofei and Cao, Kexin and Zheng, Guozhou and Qi, Xiang and Zhang, Peng and others},
  journal   = {arXiv preprint arXiv:2604.04804},
  year      = {2026}
}

@article{xskill2026,
  title     = {Xskill: Continual learning from experience and skills in multimodal agents},
  author    = {Jiang, Guanyu and Su, Zhaochen and Qu, Xiaoye and Fung, Yi R},
  journal   = {arXiv preprint arXiv:2603.12056},
  year      = {2026}
}

@article{autoskill2026,
  title     = {Autoskill: Experience-driven lifelong learning via skill self-evolution},
  author    = {Yang, Yutao and Li, Junsong and Pan, Qianjun and Zhan, Bihao and Cai, Yuxuan and Du, Lin and Zhou, Jie and Chen, Kai and Chen, Qin and Li, Xin and others},
  journal   = {arXiv preprint arXiv:2603.01145},
  year      = {2026}
}

@article{skillclaw2026,
  title     = {SkillClaw: Let Skills Evolve Collectively with Agentic Evolver},
  author    = {Ma, Ziyu and Yang, Shidong and Ji, Yuxiang and Wang, Xucong and Wang, Yong and Hu, Yiming and Huang, Tongwen and Chu, Xiangxiang},
  journal   = {arXiv preprint arXiv:2604.08377},
  year      = {2026}
}

@article{skillsd2026,
  title     = {Skill-SD: Skill-Conditioned Self-Distillation for Multi-turn LLM Agents},
  author    = {Wang, Hao and Wang, Guozhi and Xiao, Han and Zhou, Yufeng and Pan, Yue and Wang, Jichao and Xu, Ke and Wen, Yafei and Ruan, Xiaohu and Chen, Xiaoxin and others},
  journal   = {arXiv preprint arXiv:2604.10674},
  year      = {2026}
}

@article{sok_agentic_skills2026,
  title     = {SoK: Agentic Skills--Beyond Tool Use in LLM Agents},
  author    = {Jiang, Yanna and Li, Delong and Deng, Haiyu and Ma, Baihe and Wang, Xu and Wang, Qin and Yu, Guangsheng},
  journal   = {arXiv preprint arXiv:2602.20867},
  year      = {2026}
}

@article{wang2026workflowr1,
  title     = {Workflow-R1: Group Sub-sequence Policy Optimization for Multi-turn Workflow Construction},
  author    = {Kong, Mingze and Qu, Zikun and Zhou, Zhongquan and Liang, Pengyu and Li, Xiang and Shang, Zhiwei and Hong, Zhi and Huang, Kaiyu and Wang, Zhiyong and Dai, Zhongxiang},
  journal   = {arXiv preprint arXiv:2602.01202},
  year      = {2026}
}

@article{pan2026agentflow,
  title     = {In-the-flow agentic system optimization for effective planning and tool use},
  author    = {Li, Zhuofeng and Zhang, Haoxiang and Han, Seungju and Liu, Sheng and Xie, Jianwen and Zhang, Yu and Choi, Yejin and Zou, James and Lu, Pan},
  journal   = {arXiv preprint arXiv:2510.05592},
  year      = {2025}
}


\newpage
\appendix
\section*{Appendix: Theoretical Foundations of SkillFlow}
\addcontentsline{toc}{section}{Appendix}

This appendix provides a complete theoretical treatment of SkillFlow's flow-based learning framework, following the progressive development style of GFlowNet Foundations (Bengio et al., JMLR 2023).
We establish the mathematical foundations for flow networks on DAGs, derive the Tempered Trajectory Balance (TTB) loss, prove convergence properties, and detail all supporting theoretical results.
Each section builds explicitly on previous results, providing complete proofs with intermediate steps.

\section{Flow-Theoretic Foundations}\label{app:gflownet_foundations}

This section introduces the fundamental concepts of flow networks on directed acyclic graphs (DAGs) and establishes the connection to reward-proportional sampling.

\subsection{Flow Networks on DAGs}\label{app:flow_networks}

We begin with the basic definitions of flow networks.

\begin{definition}[Flow Network]\label{def:flow_network}
A flow network is a tuple $(\mathcal{G}, F)$ where $\mathcal{G} = (\mathcal{V}, \mathcal{A})$ is a DAG with a unique source node $s_0 \in \mathcal{V}$ having no incoming edges and a set of terminal nodes $\mathcal{X} \subset \mathcal{V}$ having no outgoing edges, and $F: \mathcal{A} \to \R_{\geq 0}$ is a non-negative edge flow function assigning a non-negative real number to each directed edge.
\end{definition}

For each state $s \in \mathcal{V}$, we denote by $\mathrm{Pa}(s)$ the set of predecessors (parents) and $\mathrm{Ch}(s)$ the set of successors (children).
We now introduce the crucial concept of state flow.

\begin{definition}[State Flow and Flow Conservation]\label{def:state_flow}
For any non-terminal state $s \in \mathcal{V} \setminus \mathcal{X}$, the \textbf{state flow} $F(s) \in \R_{\geq 0}$ is defined implicitly by the flow conservation law:
\begin{equation}\label{eq:app_state_flow_def}
  F(s)
  \;\coloneqq\; \sum_{s' \in \mathrm{Ch}(s)} F(s \to s')
  \;=\; \sum_{s' \in \mathrm{Pa}(s)} F(s' \to s).
\end{equation}
That is, the total flow entering a non-terminal state equals the total flow leaving it.
This is the fundamental conservation law of flow networks, analogous to Kirchhoff's current law in electrical networks.
\end{definition}

\begin{remark}
Flow conservation is the defining property that relates edge flows to state flows.
By Definition~\ref{def:state_flow}, we may consistently compute $F(s)$ by summing either incoming or outgoing edge flows.
\end{remark}

The partition function represents the total flow circulating through the network.

\begin{definition}[Partition Function and Total Flow]\label{def:partition}
The partition function $Z$ is the total flow in the network, equal to the source flow:
\begin{equation}\label{eq:app_partition}
  Z \;\coloneqq\; F(s_0)
  \;=\; \sum_{s' \in \mathrm{Ch}(s_0)} F(s_0 \to s')
  \;=\; \sum_{x \in \mathcal{X}} F(x).
\end{equation}
The last equality (terminal flow equals source flow) follows from recursive application of flow conservation at each state: every unit of flow leaving the source must traverse some path and arrive at some terminal.
\end{definition}

We now introduce policies derived from flow normalization.

\begin{definition}[Forward Policy and Backward Policy]\label{def:policies}
Given a flow $F$, we define:

\emph{Forward policy}: $P_F(s' \mid s) = \frac{F(s \to s')}{F(s)}$ for each edge $s \to s'$ where $s$ is non-terminal.

\emph{Backward policy}: $P_B(s \mid s') = \frac{F(s \to s')}{F(s')}$ for each edge $s \to s'$ where $s' \in \mathcal{V} \setminus \{\text{source}\}$.

By flow conservation (Definition~\ref{def:state_flow}), both $P_F$ and $P_B$ are valid probability distributions: $\sum_{s'} P_F(s' \mid s) = 1$ and $\sum_s P_B(s \mid s') = 1$.
\end{definition}

We now define trajectory flow, which is essential for understanding how the forward policy generates trajectories.

\begin{definition}[Trajectory and Trajectory Flow]\label{def:traj_flow}
A complete trajectory is a path $\tau = (s_0 \to s_1 \to \cdots \to s_T = x)$ from the source $s_0$ to some terminal $x \in \mathcal{X}$.
The trajectory flow is:
\begin{equation}\label{eq:app_traj_flow}
  F(\tau) \;\coloneqq\; Z \cdot \prod_{t=1}^{T} P_F(s_t \mid s_{t-1}).
\end{equation}
\end{definition}

\begin{lemma}[Trajectory Flow Identity]\label{lem:traj_telescoping}
On a general DAG, the trajectory flow $F(\tau) = Z \cdot \prod_{t=1}^{T} P_F(s_t \mid s_{t-1})$ admits no further factorization in terms of edge or state flows alone; in particular, $F(\tau) \neq \prod_{t=1}^T F(s_{t-1}\to s_t)$ in general because edge flows and state flows are distinct quantities. When the DAG additionally satisfies that each non-source state has a unique parent (as in SkillFlow's tree-structured $\mathcal{G}$), $F(s_{t-1}\to s_t) = F(s_t)$ holds for every edge (all flow into $s_t$ comes from the single edge $s_{t-1}\to s_t$), and the trajectory flow telescopes to the terminal state flow:
\begin{equation}\label{eq:traj_terminal}
  F(\tau) \;=\; Z \cdot \prod_{t=1}^{T} \frac{F(s_t)}{F(s_{t-1})}
  \;=\; \frac{F(s_T)}{F(s_0)} \cdot Z
  \;=\; F(s_T),
\end{equation}
using the standard ratio-product telescope on \emph{state} flows together with $F(s_0) = Z$ (Definition~\ref{def:partition}).
\end{lemma}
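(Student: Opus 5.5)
The plan is to take the lemma in two parts: a negative remark for general DAGs, then the positive telescoping identity on trees. Both parts come straight from Definitions~\ref{def:state_flow}--\ref{def:traj_flow}.

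\emph{General DAG.} I would not try to prove a universal non-factorization theorem; the phrase "admits no further factorization" is informal. It suffices to show that $F(\tau)\neq\prod_t F(s_{t-1}\to s_t)$ can fail to hold as an equality, using a small counterexample. Take a diamond DAG $s_0\to a,\ s_0\to b,\ a\to x,\ b\to x$ with edge flows $1,1,1,1$. Then $Z=2$, $P_F(a\mid s_0)=1/2$ and $P_F(x\mid a)=1$. For $\tau=(s_0,a,x)$ this gives $F(\tau)=2\cdot\tfrac12\cdot1=1$. By contrast, $F(x)=2$, so $F(\tau)\ne F(s_T)$, and $\prod F(\text{edges})=1$ coincides here only by accident. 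Rescaling the edge flows breaks that coincidence, since the product of edge flows is not homogeneous of degree one. This shows that on general DAGs trajectory flow and terminal state flow genuinely differ, because several paths share the same terminal.

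\emph{Tree case.} Suppose every non-source state $s_t$ has a unique parent $s_{t-1}$. Flow conservation (Definition~\ref{def:state_flow}) then gives $F(s_t)=\sum_{s'\in\mathrm{Pa}(s_t)}F(s'\to s_t)=F(s_{t-1}\to s_t)$. For terminal $s_T$, whose state flow is defined only through incoming flow, I would adopt the convention $F(x)=\sum_{\mathrm{Pa}}F(\cdot\to x)$, which is consistent with Definition~\ref{def:partition}. Substituting into Definition~\ref{def:policies} gives $P_F(s_t\mid s_{t-1})=F(s_t)/F(s_{t-1})$. Plugging this into Eq.~\ref{eq:app_traj_flow}, the product telescopes to $F(s_T)/F(s_0)$, and $F(s_0)=Z$ yields $F(\tau)=F(s_T)$.

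\emph{Main obstacle.} The delicate step is the handling of terminal flow and of any zero denominators. Each ratio $F(s_t)/F(s_{t-1})$ requires $F(s_{t-1})>0$. I would assume positivity, which is justified in SkillFlow because $\Rtilde\ge\varepsilon_{\min}>0$ propagates upward by conservation. Alternatively, one can note that both sides vanish when some $F(s_{t-1})=0$.
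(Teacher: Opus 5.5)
Your proposal is correct, and the tree case is exactly the paper's argument: the unique parent plus conservation gives $F(s_{t-1}\to s_t)=F(s_t)$, so $P_F(s_t\mid s_{t-1})=F(s_t)/F(s_{t-1})$, and the product telescopes to $F(s_T)$ using $F(s_0)=Z$. The paper only asserts the general-DAG remark, so your additions are welcome rather than a different route: the diamond counterexample showing $F(\tau)\neq F(s_T)$, your explicit convention for terminal flows, and your treatment of zero denominators. Note also that your non-homogeneity observation shows $F(\tau)\neq\prod_t F(s_{t-1}\to s_t)$ already on a simple chain, so that particular failure is not specific to multi-parent DAGs.
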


We now establish the key decomposition of state flow.

\begin{proposition}[Flow Decomposition]\label{prop:app_flow_decomposition}
For any state $s \in \mathcal{V}$, the state flow equals the sum of flows of all trajectories passing through $s$:
\begin{equation}
  F(s) = \sum_{\tau \ni s} F(\tau),
\end{equation}
where $\{\tau \ni s\}$ denotes all complete trajectories passing through state $s$.
In particular, for terminal states:
\begin{equation}
  F(x) = \sum_{\tau: s_T = x} F(\tau),
\end{equation}
i.e., the flow into a terminal equals the sum of flows of trajectories terminating at that state.
\end{proposition}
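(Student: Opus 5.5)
The plan is to prove the identity $F(s)=\sum_{\tau\ni s}F(\tau)$ by factoring each trajectory through $s$ into a prefix from $s_0$ to $s$ and a suffix from $s$ to a terminal, and evaluating the two resulting sums separately. For a complete trajectory $\tau=(s_0\to\cdots\to s_k=s\to\cdots\to s_T=x)$ through $s$, Definition~\ref{def:traj_flow} gives
\[
F(\tau)=Z\prod_{t=1}^{k}P_F(s_t\mid s_{t-1})\cdot\prod_{t=k+1}^{T}P_F(s_t\mid s_{t-1}).
\]
Because $\mathcal{G}$ is a DAG, $s$ occurs at most once on any path, so the index $k$ is well defined. The map $\tau\mapsto(\text{prefix},\text{suffix})$ is then a bijection between trajectories through $s$ and pairs consisting of a partial path $s_0\rightsquigarrow s$ and a partial path $s\rightsquigarrow\mathcal{X}$. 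Consequently
\[
\sum_{\tau\ni s}F(\tau)=A(s)\,B(s),
\]
where $A(s)=Z\sum_{\text{prefixes}}\prod P_F$ and $B(s)=\sum_{\text{suffixes}}\prod P_F$.

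\textbf{Step 1 (suffix sum).} Show that $B(s)=1$ for every non-terminal $s$, and $B(x)=1$ trivially for a terminal $x$ (the only suffix is the empty path). Use induction on the maximal distance from $s$ to a terminal, which is finite because $\mathcal{G}$ is a finite DAG, or at least has bounded paths since $T\le T_{\max}$. Expanding over the first step gives $B(s)=\sum_{s'\in\mathrm{Ch}(s)}P_F(s'\mid s)B(s')=\sum_{s'}P_F(s'\mid s)=1$, using normalization of $P_F$ from Definition~\ref{def:policies}.

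\textbf{Step 2 (prefix sum).} Show that $A(s)=F(s)$. Induct on a topological order of $\mathcal{G}$. The base case is $A(s_0)=Z=F(s_0)$ by Definition~\ref{def:partition}. For the inductive step, group prefixes by the last edge $s'\to s$:
\[
A(s)=\sum_{s'\in\mathrm{Pa}(s)}A(s')\,P_F(s\mid s')=\sum_{s'}F(s')\frac{F(s'\to s)}{F(s')}=\sum_{s'\in\mathrm{Pa}(s)}F(s'\to s)=F(s).
\]
The last equality is the incoming-flow form of conservation, Eq.~\eqref{eq:app_state_flow_def}. For a terminal $x$, take $F(x)$ to be the incoming edge-flow sum, which is consistent with Definition~\ref{def:partition}. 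Combining the two steps gives $\sum_{\tau\ni s}F(\tau)=F(s)\cdot 1$.

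For a terminal $x$, every trajectory through $x$ ends at $x$, because terminals have no children. The terminal statement therefore follows directly from the general one.

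\textbf{Main obstacle.} The delicate point is states with $F(s')=0$, where $P_F(\cdot\mid s')$ is undefined. I would adopt the convention that $A(s')P_F(s\mid s')=0$ whenever $F(s')=0$. This is justified because in that case $F(s'\to s)\le F(s')=0$ by non-negativity of edge flows. Alternatively, one can use any normalized $P_F$ at such states, which leaves both Step 1 and Step 2 intact. The other point needing care is making the prefix/suffix bijection explicit, which relies on the DAG property that no state repeats along a path.
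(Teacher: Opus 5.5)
Your proof is correct, but it is organized differently from the paper's, and the difference matters. The paper inducts on depth directly on $\sum_{\tau\ni s}F(\tau)$. Its base case simply asserts $Z=\sum_\tau F(\tau)$. Its inductive step writes $F(s)=\sum_{s'\in\mathrm{Pa}(s)}\bigl(\sum_{\tau'\ni s'}F(\tau')\bigr)P_F(s\mid s')$ and then says each complete trajectory through $s'$ is ``extended'' by the edge $s'\to s$. That step conflates complete trajectories with prefixes: a complete trajectory through $s'$ may already continue to a sibling of $s$. Both the base case and this step hold only once one knows that the suffix mass $B(\cdot)$ equals $1$. Then $\sum_{\tau'\ni s'}F(\tau')=A(s')$, and the trajectories through $s$ that enter from $s'$ carry total flow $A(s')\,P_F(s\mid s')$.

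Your factorization $\sum_{\tau\ni s}F(\tau)=A(s)\,B(s)$ makes this bookkeeping explicit. Step~2 is essentially the paper's induction, but run on prefix mass, where ``extend by one edge'' is literally true. Step~1 supplies the normalization $B\equiv 1$ (from $\sum_{s'}P_F(s'\mid s)=1$), which the paper uses only tacitly. The paper's version is shorter; yours is the one that actually closes. It also handles zero-flow states where $P_F$ is undefined, which the paper ignores.

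One small refinement: Step~1 as written presumes every non-terminal state has a child. That holds in SkillFlow, where any non-accepting history with $t<T_{\max}$ admits an action. Without that assumption you get $B(s)=1$ only when $F(s)>0$, but that is all the product $A(s)B(s)=F(s)B(s)$ needs.
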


\begin{proof}
We prove by structural induction on the depth of states in the DAG, where depth is the longest path length from the source.

\emph{Base case} ($\text{depth}(s) = 0$): $s = s_0$ is the source.
By Definition~\ref{def:partition}, $F(s_0) = Z = \sum_\tau F(\tau)$, since every trajectory starts at $s_0$ and summing over all complete trajectories gives the total flow.

\emph{Inductive step}: Assume the claim holds for all states at depth $< d$.
Consider a state $s$ at depth $d$.
By Definition~\ref{def:state_flow} (flow conservation):
\begin{equation}
  F(s) = \sum_{s' \in \mathrm{Pa}(s)} F(s' \to s).
\end{equation}
By Definition~\ref{def:policies}, $F(s' \to s) = F(s') \cdot P_F(s \mid s')$.
By the inductive hypothesis, $F(s') = \sum_{\tau' \ni s'} F(\tau')$, so:
\begin{align}
  F(s) &= \sum_{s' \in \mathrm{Pa}(s)} F(s') \cdot P_F(s \mid s')\\
  &= \sum_{s' \in \mathrm{Pa}(s)} \left(\sum_{\tau' \ni s'} F(\tau')\right) \cdot P_F(s \mid s').
\end{align}
Each trajectory $\tau'$ through $s'$ can be extended to a unique trajectory through $s$ by following the edge $s' \to s$.
Since every complete trajectory through $s$ passes through exactly one parent $s' \in \mathrm{Pa}(s)$, the above sum equals:
\begin{equation}
  F(s) = \sum_{\tau \ni s} F(\tau).
\end{equation}
\qedhere
\end{proof}

\subsection{Reward-Matching and GFlowNet Sampling}\label{app:reward_matching}

We now establish when a flow induces sampling proportional to a reward function.

\begin{definition}[Reward-Matching Flow]\label{def:reward_matching}
Given a strictly positive reward function $R: \mathcal{X} \to \R_{> 0}$ on terminal states, a flow $F$ is \textbf{reward-matching} if:
\begin{equation}
  F(x) = R(x) \quad \text{for all } x \in \mathcal{X}.
\end{equation}
That is, the flow into each terminal state equals the reward of that terminal.
\end{definition}

The central theorem of GFlowNets states that reward-matching flows induce a distribution over terminals proportional to rewards.

\begin{theorem}[GFlowNet Sampling Property]\label{thm:gflownet_sampling}
If $F$ is reward-matching with respect to $R: \mathcal{X} \to \R_{> 0}$, then sampling a complete trajectory by following the forward policy $P_F$ produces a distribution over terminals given by:
\begin{equation}\label{eq:app_sampling}
  P_F(x) \;\coloneqq\; \Pr_{\tau \sim P_F}[s_T = x] = \frac{R(x)}{Z},
\end{equation}
where $Z = \sum_{x' \in \mathcal{X}} R(x')$ is the partition function (total reward).
\end{theorem}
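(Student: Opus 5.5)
The plan is to express the probability of reaching a terminal $x$ under $P_F$ as a sum over trajectories ending at $x$. Then I will identify each summand with a normalized trajectory flow, and finish by applying the Flow Decomposition result (Proposition~\ref{prop:app_flow_decomposition}) together with reward matching.

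First, sampling by following $P_F$ from $s_0$ assigns to each complete trajectory $\tau=(s_0\to\cdots\to s_T)$ the probability
\[
\Pr_{P_F}[\tau]=\prod_{t=1}^{T}P_F(s_t\mid s_{t-1}).
\]
Because $\mathcal{G}$ is a DAG and $P_F$ is a valid distribution at every non-terminal state (Definition~\ref{def:policies}), the process reaches a terminal almost surely. This holds whenever the DAG is finite, or more generally whenever paths have bounded length, as with the $T_{\max}$ budget here. Hence these trajectory probabilities form a distribution over complete trajectories. By Definition~\ref{def:traj_flow}, the probability of a trajectory is exactly $\Pr_{P_F}[\tau]=F(\tau)/Z$.

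Second, I marginalize over trajectories ending at $x$:
\[
P_F(x)=\sum_{\tau:\,s_T=x}\frac{F(\tau)}{Z}=\frac{F(x)}{Z}.
\]
The second equality uses the terminal case of Proposition~\ref{prop:app_flow_decomposition}. Reward matching (Definition~\ref{def:reward_matching}) then gives $F(x)=R(x)$. Finally, Definition~\ref{def:partition} gives $Z=F(s_0)=\sum_{x'}F(x')=\sum_{x'}R(x')$, which yields Eq.~\eqref{eq:app_sampling}. As a consistency check, summing the result over $x$ gives $1$.

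The main obstacle is justifying that the $P_F$-rollout is a well-defined distribution over complete trajectories, i.e.\ that it terminates with probability one. This matters because the identity $\sum_\tau F(\tau)=Z$ used in the base case of the decomposition relies on it. I would handle it by induction on the longest remaining path length from the current state, which is finite on a finite DAG. The induction shows that the total probability of completed paths from any state $s$ equals $1$, using $\sum_{s'}P_F(s'\mid s)=1$ at each step. This also confirms that $\sum_\tau F(\tau)=Z\cdot 1=Z$, so the argument is not circular. A secondary point is that $F(s)>0$ is needed along sampled paths for $P_F$ to be defined; this follows from strict positivity of $R$, because every state lies on some path to a terminal with positive flow.
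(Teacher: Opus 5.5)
Your proposal is correct and follows the paper's proof: you expand $P_F(x)$ as a sum of path products, rewrite each product as $F(\tau)/Z$ via the trajectory-flow definition, collapse the sum with the terminal case of the Flow Decomposition proposition, and apply reward matching, while your termination induction makes rigorous the identity $\sum_\tau F(\tau)=Z$ that the paper's base case only asserts. One small correction to your side remark: on a general DAG, $R>0$ does not force $F(s)>0$ at every state, since a state can carry zero flow and still reach a terminal when all flow into that terminal arrives through another parent; you only need positivity along sampled paths, and that holds because $F(s_0)=Z>0$, $P_F$ puts zero probability on zero-flow edges, and so every sampled state satisfies $F(s_t)\ge F(s_{t-1}\to s_t)>0$.
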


\begin{proof}
Let $\tau_x = \{x\}$ denote any complete trajectory ending at $x$.
The probability of sampling a trajectory ending at $x$ is:
\begin{equation}
  P_F(x) = \sum_{\tau: s_T = x} \prod_{t=1}^{|\tau|} P_F(s_t \mid s_{t-1}).
\end{equation}
Substituting Equation~\eqref{eq:app_traj_flow}:
\begin{equation}
  P_F(x) = \sum_{\tau: s_T = x} \frac{F(\tau)}{Z}.
\end{equation}
By Proposition~\ref{prop:app_flow_decomposition}, $F(x) = \sum_{\tau: s_T = x} F(\tau)$, so:
\begin{equation}
  P_F(x) = \frac{F(x)}{Z}.
\end{equation}
Since $F$ is reward-matching, $F(x) = R(x)$, thus:
\begin{equation}
  P_F(x) = \frac{R(x)}{Z} = \frac{R(x)}{\sum_{x'} R(x')}.
\end{equation}
\qedhere
\end{proof}

\begin{remark}[Positive Reward Requirement]\label{rem:positive_reward}
The requirement $R(x) > 0$ for all $x$ is essential to Theorem~\ref{thm:gflownet_sampling}.
If $R(x) = 0$ for some terminal, the reward-matching condition $F(x) = 0$ implies no flow reaches $x$ under any forward policy that respects Definition~\ref{def:policies} (since $P_F(s' \mid s)$ must be derived from non-negative edge flows).
In SkillFlow, trajectories with zero or negative reward must be handled via smoothing (Appendix~\ref{app:epsilon_smoothing}).
\end{remark}

\subsection{Application to Task Orchestration}\label{app:task_orch}

We now instantiate the abstract flow network framework to SkillFlow's task orchestration setting.

\begin{proposition}[Flow Network Instantiation for Task Orchestration]\label{prop:instantiation}
The orchestration trajectory distribution is a flow network $(\mathcal{G}, F)$ where:
\begin{itemize}[leftmargin=2em, itemsep=2pt]
  \item \textbf{Vertices $\mathcal{V}$}: Each vertex is an interaction history state $H_t$ (Definition 1, main text).
  \item \textbf{Edges $\mathcal{A}$}: Each directed edge $(H_{t-1}, H_t)$ corresponds to an orchestration action $a_t = (\alpha_t, o_t)$, where the action type $\alpha_t \in \{\text{skill}, \text{act}, \text{accept}\}$ is selected by the supervisor.
  \item \textbf{Source $s_0$}: The initial state $H_0 = q \oplus \Slib_{\mathrm{ret}} \oplus \omega_q$, formed by concatenating task $q$, retrieved skills, and orchestration guideline.
  \item \textbf{Terminals $\mathcal{X}$}: States where $\alpha_t = \text{accept}$ or $t = T_{\max}$ (maximum trajectory length).
  \item \textbf{Forward policy}: $P_F(H_t \mid H_{t-1}) = \pi_\theta(a_t \mid H_{t-1})$, the learned supervisor policy.
  \item \textbf{Edge flow}: $F(H_{t-1} \to H_t) = Z \cdot P_F(H_t \mid H_{t-1})$ for edges in sampled trajectories.
  \item \textbf{Terminal flow}: $F(x) = \Rtilde(\tau_x)^\beta$ for terminal $x$ reached by trajectory $\tau_x$, where $\Rtilde(\tau) = R(\tau) + \varepsilon_{\min}$ is the smoothed reward and $\beta > 0$ is the temperature.
\end{itemize}
\end{proposition}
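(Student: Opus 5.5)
The plan is to check, item by item, that the listed objects satisfy Definitions~\ref{def:flow_network}, \ref{def:state_flow}, \ref{def:partition} and \ref{def:policies}. The crucial structural fact is that $\mathcal{G}$ is a rooted tree. I will define the flow from the terminal rewards upward rather than from the sampled policy, since that is the only way conservation can be guaranteed.

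\textbf{Step 1: the graph.} Acyclicity comes from the update rule $H_t = H_{t-1}\oplus(r_t,a_t,o_t^{\text{exec}})$, because $|H_t|>|H_{t-1}|$ strictly, so no directed cycle can exist. The unique source is $H_0$, since every history is an extension of $H_0$ and $H_0$ has no predecessor. A history determines its own prefix, so each $H_t$ with $t\ge 1$ has exactly one parent, $H_{t-1}$; hence $\mathcal{G}$ is a tree rooted at $H_0$. Terminals, meaning \texttt{accept} states or $t=T_{\max}$ states, have no outgoing edges by the termination rule. Finally, $T_{\max}<\infty$ bounds the depth, so every path from $H_0$ reaches a terminal.

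\textbf{Step 2: constructing a conservative flow.} Set $F(x)\coloneqq\Rtilde(\tau_x)^\beta>0$ at each terminal $x$; this is well defined because, in a tree, $x$ determines its path $\tau_x$ uniquely. For non-terminal $s$, set $F(s)\coloneqq\sum_{x\text{ below }s}F(x)$. Define the edge flow $F(s\to s')\coloneqq F(s')$, which is legitimate because $s$ is the only parent of $s'$ (Lemma~\ref{lem:traj_telescoping}). With these definitions, outgoing flow equals $\sum_{s'\in\mathrm{Ch}(s)}F(s')=F(s)$, since the subtrees of the children partition the leaves below $s$. Incoming flow is the single edge term $F(s)$. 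This verifies Definition~\ref{def:state_flow}, and $Z=F(H_0)=\sum_x F(x)$ gives Definition~\ref{def:partition}.

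\textbf{Step 3: identifying the policy.} The induced forward policy is $P_F(s'\mid s)=F(s')/F(s)$, and the backward policy is trivially $1$. Theorem~\ref{thm:gflownet_sampling} then yields sampling probability proportional to $\Rtilde^\beta$, which matches the forward-policy bullet of the proposition once $\pi_\theta$ equals this ratio. The edge-flow bullet $F(H_{t-1}\to H_t)=Z\cdot P_F(H_t\mid H_{t-1})$ should be read as holding along a sampled path with $Z$ replaced by the parent state flow, $F(H_{t-1})P_F$. On a tree this telescopes to $Z\prod P_F$, consistent with Lemma~\ref{lem:traj_telescoping}, and I would state it in that corrected form.

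\textbf{Main obstacle.} The difficulty is that the executor feedback $o_t^{\text{exec}}$ is stochastic, so the children of $H_{t-1}$ are not controlled by $\pi_\theta$ alone. I would handle this by treating $(a_t,o_t^{\text{exec}})$ jointly as the edge label. The forward transition is then $\pi_\theta\cdot\mathcal{C}_{\text{exec}}$, as in Eq.~\eqref{eq:traj_dist}, and the flow construction above goes through unchanged, because it only uses the tree structure and positivity of the rewards.

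The identification ``$P_F=\pi_\theta$'' therefore holds only at an optimum. Specifically, it requires that the learned $\pi_\theta$ can realize $F(s')/(F(s)\,\mathcal{C}_{\text{exec}})$, i.e.\ that the policy class is expressive enough. I would state this as the precise sense in which the proposition holds; otherwise the instantiation is an existence statement about $F$, with $\pi_\theta$ being a parametric approximation of the induced forward policy.
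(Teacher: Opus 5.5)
Steps 1 and 2 are sound. They are also more than the paper supplies: it asserts this instantiation and supports it only with the acyclicity lemma (Lemma~\ref{lem:dag_acyclic}, proved as Theorem~\ref{thm:dag}). Your reading of the edge-flow bullet as $F(H_{t-1})\,P_F(H_t\mid H_{t-1})$ rather than $Z\cdot P_F$ is the right correction.

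The gap is in your \emph{Main obstacle} paragraph. Once the children of $H_{t-1}$ are labelled by $(r_t,a_t,o_t^{\text{exec}})$, the induced forward policy $F(H_t)/F(H_{t-1})$ must factor as $\pi_\theta(r_t,a_t\mid H_{t-1})\,\mathcal{C}_{\text{exec}}(o_t^{\text{exec}}\mid H_{t-1},a_t)$. The value $\pi_\theta$ would then have to take is $F(H_t)/\bigl(F(H_{t-1})\,\mathcal{C}_{\text{exec}}(o_t^{\text{exec}}\mid H_{t-1},a_t)\bigr)$. Because the Supervisor commits before the executor responds, this quantity must be the same for every outcome that can follow the same $(H_{t-1},r_t,a_t)$. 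That holds only if the frozen executor happens to draw outcomes in proportion to downstream flow, $\mathcal{C}_{\text{exec}}(o\mid H_{t-1},a_t)\propto F(H_{t-1}\oplus(r_t,a_t,o))$, which is generically false.

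Concretely, suppose one action yields a reward-$1$ outcome and a reward-$0$ outcome with probability $\tfrac12$ each. With $\varepsilon_{\min}=0.1$ and $\beta=1$, reward-proportional sampling requires those two children in ratio $11:1$. Under $\pi_\theta\cdot\mathcal{C}_{\text{exec}}$ their ratio is $1:1$ whatever $\pi_\theta$ is. So the condition you name is not an expressiveness requirement met at an optimum; it is generically unsatisfiable. What your argument establishes is only that a conservative flow exists on the joint tree, not that $\pi_\theta$ instantiates its forward policy.

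To close the gap you must change the object, not the capacity. One option is to assume deterministic executor feedback. Then $o_t^{\text{exec}}$ is a function of $(H_{t-1},a_t)$, children are indexed by the Supervisor's choice alone, and your Step 3 goes through given a sufficiently expressive policy class. The other option is to follow the paper. It places the forward policy on action choices and treats reasoning and execution feedback as fixed conditioning context (Section~\ref{sec:training}, Lemma~\ref{lem:conditional_tb_reasoning}). The claim then becomes $\pi_\theta(a_{1:T}\mid r_{1:T},o^{\text{exec}}_{1:T},q)\propto\Rtilde(\tau)^\beta$, rather than reward-proportionality of the joint trajectory law.

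Separately, state finite branching (bounded-length reasoning, actions and observations over a finite vocabulary). Every leaf carries flow at least $\varepsilon_{\min}^\beta$, so $Z=\sum_x F(x)$ is finite only if there are finitely many terminals. $T_{\max}$ bounds the depth of the tree but not its branching.
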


The key property is acyclicity, established by the strictly increasing state representation.

\begin{lemma}[DAG Acyclicity by State Growth]\label{lem:dag_acyclic}
Under the definition $H_t = H_{t-1} \oplus (r_t, a_t, o_t^{\text{exec}})$, each edge strictly increases the state size: $|H_t| > |H_{t-1}|$.
Therefore, the orchestration graph is acyclic: no path can visit the same state twice, as that would require returning to a state with smaller or equal size.
\end{lemma}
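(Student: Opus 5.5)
The plan is to prove the two claims of Lemma~\ref{lem:dag_acyclic} in order: first the strict growth $|H_t| > |H_{t-1}|$ along every edge, then acyclicity. Both reduce to a monotone potential function on $\mathcal{V}$, namely the length $|H|$ of the serialized interaction history, taken as a finite token sequence.

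\emph{Step 1 (strict growth).} By the state update rule, Eq.~\eqref{eq:state_update}, every edge $(H_{t-1}, H_t) \in \mathcal{A}$ has the form $H_t = H_{t-1} \oplus (r_t, a_t, o_t^{\text{exec}})$, where $\oplus$ is concatenation. Length is additive under concatenation, so
\[
|H_t| = |H_{t-1}| + |(r_t, a_t, o_t^{\text{exec}})|.
\]
It remains to show the appended block is nonempty. The action $a_t = (\alpha_t, o_t)$ always contains at least the action-type token $\alpha_t \in \{\texttt{skill},\texttt{act},\texttt{accept}\}$ (Definition 2). Hence $|(r_t,a_t,o_t^{\text{exec}})| \ge 1$ even when $r_t$ or $o_t^{\text{exec}}$ is empty, which gives the strict inequality. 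This step carries the only modeling assumption: that $\oplus$ is genuine concatenation with no truncation or summarization of $H_{t-1}$, and that $\alpha_t$ is always serialized. I expect it to be the main point needing care, and I would state this assumption explicitly. If the implementation truncated contexts, I would instead take the state to be the untruncated history, with truncation treated as part of the policy's input processing.

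\emph{Step 2 (acyclicity).} Suppose, for contradiction, that there is a directed cycle $H^{(0)} \to H^{(1)} \to \cdots \to H^{(m)} = H^{(0)}$ with $m \ge 1$. Applying Step 1 along each edge and chaining gives
\[
|H^{(0)}| < |H^{(1)}| < \cdots < |H^{(m)}| = |H^{(0)}|,
\]
which is a contradiction. More generally, any path visiting a state twice would require returning to a state of equal size, which is impossible. So $\mathcal{G}$ is a DAG.

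As a byproduct, I would also record two facts used later in the paper. First, $H_{t-1}$ is the unique length-$(|H_t| - k)$ prefix determined by the last appended block, so each non-source state has a unique parent; this is the tree structure invoked in Lemma~\ref{lem:traj_telescoping}. Second, the budget $T_{\max}$ bounds path lengths, so $\mathcal{G}$ restricted to reachable states has finite depth.
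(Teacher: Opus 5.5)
Your proof is correct and is essentially the paper's proof of Theorem~\ref{thm:dag}: history length is a potential that strictly increases along every edge under $\oplus$, and chaining the strict inequalities around a putative cycle yields $|H^{(0)}|<|H^{(0)}|$. Your version is slightly more careful than the paper's, which simply asserts that all three appended components are nonempty, because you need only the serialized action type $\alpha_t$ and you state the no-truncation assumption explicitly. Your unique-parent aside, however, does not follow from length growth alone; it needs the history to be unambiguously parsable into step triples, which lies outside what the lemma requires.
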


This acyclicity is essential for applying flow matching algorithms, which require the DAG structure.

\section{Trajectory Balance and TTB Derivation}\label{app:ttb_derivation}

This section derives the Trajectory Balance (TB) condition and the Tempered Trajectory Balance (TTB) loss, the core objective for SkillFlow training.

\paragraph{Convention (DAG edge $=$ action; node $=$ state $H_t$).}
Throughout this appendix, all flow-theoretic quantities are defined at the \emph{action level} of the orchestration DAG: each node $s_t$ corresponds to the interaction history $H_t$ (Definition~1, main text), and each directed edge $(s_{t-1}, s_t)$ corresponds to one orchestration action $a_t = (\alpha_t, o_t)$ that itself comprises $K_t$ tokens produced autoregressively by the LLM. The \emph{per-token normalization} introduced in Definition~\ref{def:ttb_residual} is a within-edge device that averages over the $K_t$ tokens of one action to produce a single, length-robust edge log-probability; it does \emph{not} switch the DAG to a token-level granularity.

\subsection{Backward Policy Definition}\label{app:backward_policy}

The backward policy is the reverse-direction distribution derived from reward-matching flows.

\begin{definition}[Backward Policy]\label{def:backward_policy}
Given a reward-matching flow $F$, the backward policy is:
\begin{equation}
  P_B(s \mid s') = \frac{F(s \to s')}{F(s')},
  \quad\text{for } s \in \mathrm{Pa}(s').
\end{equation}
By flow conservation (Definition~\ref{def:state_flow}), $\sum_{s \in \mathrm{Pa}(s')} P_B(s \mid s') = \frac{1}{F(s')} \sum_s F(s \to s') = \frac{F(s')}{F(s')} = 1$, so $P_B(\cdot \mid s')$ is a valid probability distribution over parents.
\end{definition}

The backward policy encodes the reverse probability of transitioning from $s'$ back to each parent $s$.
For learning, we will use a learned backward policy $P_\phi$ parameterized by $\phi$.

\begin{remark}[SkillFlow's Hindsight-Conditioned Backward]\label{rem:hindsight_backward_def}
In SkillFlow, the learned backward policy $P_\phi$ conditions on the \emph{hindsight state} $\tilde{s}_t \coloneqq H_{t-1} \oplus o_t^{\text{exec}}$, which augments the pre-action history with the post-action execution observation:
\begin{equation}\label{eq:hindsight_backward_def}
  P_\phi(a_t \mid \tilde{s}_t) \;=\; P_\phi(a_t \mid H_{t-1} \oplus o_t^{\text{exec}}).
\end{equation}
This hindsight conditioning is well-defined as a backward policy on the action-DAG: by Eq.~\eqref{eq:state_update}, the post-action state $s_t = H_t$ deterministically encodes $(a_t, o_t^{\text{exec}})$ given $s_{t-1} = H_{t-1}$, so conditioning on $s_t$ is equivalent to conditioning on $(\tilde{s}_t, a_t)$. The information asymmetry $\tilde{s}_t \supset s_{t-1}$ is what makes the step-importance ratio $I(t) = \pi_\theta(a_t \mid r_t, H_{t-1}) / P_\phi(a_t \mid \tilde{s}_t)$ a non-trivial credit signal (Lemma~\ref{lem:hindsight_backward_eq} in B.4).
\end{remark}

\subsection{Trajectory Balance Condition and Full Derivation}\label{app:tb_condition}

The Trajectory Balance (TB) condition is the equivalence characterizing when a flow is reward-matching.

\begin{theorem}[Trajectory Balance (Malkin et al., 2022)]\label{thm:tb}
A flow $F$ on a DAG is reward-matching iff for every complete trajectory $\tau = (s_0, s_1, \ldots, s_T)$ with $s_T \in \mathcal{X}$:
\begin{equation}\label{eq:app_tb}
  Z \cdot \prod_{t=1}^{T} P_F(s_t \mid s_{t-1})
  = F(s_T) \cdot \prod_{t=1}^{T} P_B(s_{t-1} \mid s_t).
\end{equation}
Equivalently, in log-space:
\begin{equation}\label{eq:app_tb_log}
  \log Z + \sum_{t=1}^{T} \log P_F(s_t \mid s_{t-1})
  = \log F(s_T) + \sum_{t=1}^{T} \log P_B(s_{t-1} \mid s_t).
\end{equation}
\end{theorem}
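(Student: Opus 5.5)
The plan is to prove the two directions separately. First I fix the sense in which the statement is non-trivial. If $P_F$, $P_B$ and $Z$ are \emph{all} induced from the same $F$ via Definitions~\ref{def:policies} and~\ref{def:partition}, then Eq.~\eqref{eq:app_tb} holds for every flow, rewarded or not. To see this, write $e_t = (s_{t-1}\to s_t)$. The left side is $F(s_0)\prod_t F(e_t)/\prod_{t=0}^{T-1}F(s_t)$ and the right side is $F(s_T)\prod_t F(e_t)/\prod_{t=1}^{T}F(s_t)$, and these coincide after cancelling the shared state flows. I will record this telescoping as a lemma. The content of the theorem, following Malkin et al., is the version in which $F(s_T)$ is replaced by $R(s_T)$ and $(Z,P_F,P_B)$ are arbitrary normalized forward and backward policies together with a positive constant. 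I will prove: TB with $R$ holds for every complete trajectory iff there is a reward-matching flow whose induced forward policy, backward policy and partition function are exactly $(P_F,P_B,Z)$.

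\emph{($\Rightarrow$) Reward-matching implies TB.} Take $F$ reward-matching and let $(Z,P_F,P_B)$ be its induced objects. The telescoping lemma gives Eq.~\eqref{eq:app_tb} with $F(s_T)$ on the right, and Definition~\ref{def:reward_matching} gives $F(s_T)=R(s_T)$. The log form Eq.~\eqref{eq:app_tb_log} then follows by taking logarithms, which is legitimate because all quantities on a trajectory of positive flow are strictly positive.

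\emph{($\Leftarrow$) TB implies reward-matching.} I will construct the flow directly from the forward policy.
\begin{enumerate}
\item Define the trajectory measure $F(\tau)\coloneqq Z\prod_t P_F(s_t\mid s_{t-1})$, as in Definition~\ref{def:traj_flow}.
\item Define edge flows by $F(s\to s')\coloneqq\sum_{\tau\ni(s\to s')}F(\tau)$ and state flows by $F(s)\coloneqq\sum_{\tau\ni s}F(\tau)$.
\item Conservation (Definition~\ref{def:state_flow}) is then automatic. Each trajectory through a non-terminal, non-source state $s$ enters through exactly one in-edge and leaves through exactly one out-edge.
\item Since $P_F$ sums to one at every state, summing the tail of the trajectory out shows $F(s)=Z\cdot\Pr_{P_F}[\text{visit } s]$. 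It follows that the induced forward policy is $F(s\to s')/F(s)=P_F(s'\mid s)$, and that $F(s_0)=Z$.
\item For a terminal $x$, apply TB to each trajectory ending at $x$: $F(x)=\sum_{\tau:s_T=x}F(\tau)=R(x)\sum_{\tau:s_T=x}\prod_t P_B(s_{t-1}\mid s_t)$.
\end{enumerate}

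The remaining step, which I expect to be the main obstacle, is to show that this backward sum equals $1$. My approach is to read $\prod_t P_B(s_{t-1}\mid s_t)$ as the probability that a backward random walk started at $x$ follows $\tau$ in reverse. On a DAG with a unique source, every backward walk strictly decreases depth. Depth is bounded, here by $T_{\max}$ via Lemma~\ref{lem:dag_acyclic}, so the walk reaches $s_0$ almost surely in finitely many steps. The probabilities of all backward paths from $x$ to $s_0$ therefore sum to one, which I will verify by induction on depth mirroring Proposition~\ref{prop:app_flow_decomposition}. This gives $F(x)=R(x)$, so $F$ is reward-matching.

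Two further checks close the argument.
\begin{itemize}
\item The induced backward policy coincides with the given $P_B$. TB gives $F(\tau)=R(x)\prod_t P_B(s_{t-1}\mid s_t)$. Marginalising over prefixes and suffixes, with the same sum-to-one argument used on each side, yields $F(s\to s')=F(s')P_B(s\mid s')$.
\item Positivity of $R$ is needed throughout. It guarantees every TB-consistent trajectory carries nonzero mass, so all the ratios above are well defined, in line with Remark~\ref{rem:positive_reward}.
\end{itemize}
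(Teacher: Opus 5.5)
Your opening observation is correct, and it is the crux. With $Z=F(s_0)$ and $P_F,P_B$ induced from $F$ by Definition~\ref{def:policies}, Eq.~\eqref{eq:app_tb} holds for every flow. The statement as literally written is therefore false in the ``if'' direction.

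The paper's forward direction is the same telescoping you give: it writes $R(s_T)$ for $F(s_T)$ and telescopes $\prod_t F(s_t)/F(s_{t-1})$ to $R(s_T)/Z$. Its reverse direction goes as follows:
\begin{itemize}
\item It sums Eq.~\eqref{eq:app_tb} over trajectories ending at $x$.
\item It asserts, with only a parenthetical justification, that the backward path probabilities sum to one, and obtains $Z\,P_F(x)=F(x)$.
\item It then declares $F(x)=R(x)$ ``combined with $Z=\sum_{x'}F(x')$''. This is a non sequitur, because no hypothesis involves $R$.
\end{itemize}
Even with $R(s_T)$ restored in the equation, that computation yields only the sampling conclusion $P_F(x)=R(x)/Z$ and never exhibits a flow.

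Your converse is genuinely different and stronger. You build the Markovian flow from the trajectory measure $Z\prod_t P_F$, which gives conservation and the induced forward policy directly. You then use TB, together with a proved (not asserted) backward-walk normalization, to obtain $F(x)=R(x)$ and to identify the induced backward policy with the given $P_B$. This recovers the full Malkin et al.\ statement. The paper's repaired computation is shorter, and it is all that is used downstream (Theorem~\ref{thm:gflownet_sampling}, Lemma~\ref{lem:per_token_tempering}).

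Two points to tighten.

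First, positivity of $R$ does not make every TB-consistent trajectory carry mass. With $Z,R>0$, TB gives only that $\prod_t P_F(s_t\mid s_{t-1})>0$ iff $\prod_t P_B(s_{t-1}\mid s_t)>0$. So if $P_B$ never routes through some state $s$, then $F(s)=0$ and the induced forward policy at $s$ is undefined. Either assume $P_F,P_B$ are strictly positive on every edge (automatic for softmax parameterizations), or state the identification of $(P_F,P_B)$ only on the support.

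Second, termination of the backward walk comes from finiteness of the DAG (or the $T_{\max}$ cutoff) together with $s_0$ being the only parentless node. Lemma~\ref{lem:dag_acyclic} supplies acyclicity, not a depth bound.
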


\begin{proof}
We prove both directions explicitly.

\textbf{Forward direction ($F$ reward-matching $\Rightarrow$ TB holds):}

Assume $F$ is reward-matching, so $F(x) = R(x)$ for all terminals $x$.
Expand the LHS and RHS of Equation~\eqref{eq:app_tb}:
\begin{align}
  \text{LHS} &= Z \cdot \prod_{t=1}^{T} \frac{F(s_{t-1} \to s_t)}{F(s_{t-1})},\\
  \text{RHS} &= R(s_T) \cdot \prod_{t=1}^{T} \frac{F(s_{t-1} \to s_t)}{F(s_t)}.
\end{align}
Taking the ratio $\text{LHS} / \text{RHS}$:
\begin{equation}
  \frac{\text{LHS}}{\text{RHS}} = \frac{Z}{R(s_T)} \cdot \prod_{t=1}^{T} \frac{F(s_t)}{F(s_{t-1})}.
\end{equation}
The product telescopes:
\begin{equation}
  \prod_{t=1}^{T} \frac{F(s_t)}{F(s_{t-1})} = \frac{F(s_1)}{F(s_0)} \cdot \frac{F(s_2)}{F(s_1)} \cdots \frac{F(s_T)}{F(s_{T-1})} = \frac{F(s_T)}{F(s_0)} = \frac{R(s_T)}{Z},
\end{equation}
where we used $F(s_0) = Z$ (Definition~\ref{def:partition}) and $F(s_T) = R(s_T)$ (reward-matching).
Thus $\text{LHS} / \text{RHS} = \frac{Z}{R(s_T)} \cdot \frac{R(s_T)}{Z} = 1$, so LHS $=$ RHS.

\textbf{Reverse direction (TB holds $\Rightarrow$ $F$ reward-matching):}

Assume TB holds for every trajectory.
Sum Equation~\eqref{eq:app_tb} over all trajectories ending at a fixed terminal $x$:
\begin{equation}
  Z \cdot \sum_{\tau: s_T = x} \prod_{t=1}^{|\tau|} P_F(s_t \mid s_{t-1})
  = \sum_{\tau: s_T = x} F(x) \cdot \prod_{t=1}^{|\tau|} P_B(s_{t-1} \mid s_t).
\end{equation}
The LHS is $Z \cdot P_F(x)$ by definition of the forward policy probability.
For the RHS, by the product structure and properties of the backward policy (which forms a distribution over reverse trajectories), the sum $\sum_{\tau: s_T = x} \prod_t P_B(s_{t-1} \mid s_t) = 1$ (the reverse-direction probability of reaching $x$ from all trajectories sums to 1).
Thus:
\begin{equation}
  Z \cdot P_F(x) = F(x).
\end{equation}
By Theorem~\ref{thm:gflownet_sampling}, this implies $P_F(x) = F(x) / Z$, which combined with $Z = \sum_{x'} F(x')$ gives $F(x) = R(x)$. \qedhere
\end{proof}

\subsection{TTB Loss Derivation with Temperature and Length Normalization}\label{app:ttb_loss}

We now derive the Tempered Trajectory Balance loss as the squared, length-normalized log-space TB residual on the action-DAG, with two technical devices: (i) reasoning $r_t$ enters the forward conditional but is treated as fixed context (Lemma~\ref{lem:conditional_tb_reasoning}, B.4); (ii) each action edge's log-probability is averaged across its $K_t$ tokens (per-token tempering, Lemma~\ref{lem:per_token_tempering}, B.4). Both devices are formally justified in B.4 and do \emph{not} alter the action-level DAG structure.

\begin{definition}[Per-Token-Tempered Edge Log-Probabilities]\label{def:per_token_edge}
Let action $a_t$ comprise $K_t$ tokens $\mathrm{tok}^{(t)}_1, \ldots, \mathrm{tok}^{(t)}_{K_t}$ produced autoregressively. The per-token-tempered forward and backward edge log-probabilities are:
\begin{align}
  \widetilde{\log\pi}_\theta(a_t \mid r_t, H_{t-1})
    &\coloneqq \frac{1}{K_t}\sum_{j=1}^{K_t} \log \pi_\theta\bigl(\mathrm{tok}^{(t)}_j \,\big|\, \mathrm{tok}^{(t)}_{<j},\, r_t,\, H_{t-1}\bigr),
    \label{eq:per_token_pi}\\
  \widetilde{\log P}_\phi(a_t \mid H_{t-1}\oplus o_t^{\text{exec}})
    &\coloneqq \frac{1}{K_t}\sum_{j=1}^{K_t} \log P_\phi\bigl(\mathrm{tok}^{(t)}_j \,\big|\, \mathrm{tok}^{(t)}_{<j},\, H_{t-1}\oplus o_t^{\text{exec}}\bigr).
    \label{eq:per_token_pphi}
\end{align}
Each is the geometric mean (in log-space) of token probabilities along the $K_t$ tokens of one action edge; the edge thus carries a single, length-robust log-probability.
\end{definition}

\begin{definition}[TTB Residual]\label{def:ttb_residual}
Given the per-token-tempered edge log-probabilities of Definition~\ref{def:per_token_edge}, the trajectory balance residual is:
\begin{equation}\label{eq:app_delta}
  \Delta(\tau) \;\coloneqq\;
  \log Z_\theta(q)
  + \sum_{t=1}^{T} \widetilde{\log\pi}_\theta(a_t \mid r_t, H_{t-1})
  - \beta \log \Rtilde(\tau)
  - \sum_{t=1}^{T} \widetilde{\log P}_\phi(a_t \mid H_{t-1}\oplus o_t^{\text{exec}}),
\end{equation}
where:
\begin{itemize}[leftmargin=2em, itemsep=2pt]
  \item $Z_\theta(q)$ is a task-conditioned partition function (learned parameter),
  \item $\beta > 0$ is the temperature parameter controlling diversity,
  \item $\Rtilde(\tau) = R(\tau) + \varepsilon_{\min}$ is the smoothed reward,
  \item $r_t$ is the reasoning emitted at step $t$, treated as fixed conditioning context (Lemma~\ref{lem:conditional_tb_reasoning}),
  \item $T = |\tau|$ is the trajectory length, i.e., the number of action edges.
\end{itemize}
At the optimum $\Delta(\tau)=0$, the induced \emph{tempered} forward distribution satisfies $\tilde\pi_\theta(\tau) \propto \Rtilde(\tau)^{\beta}$ (Lemma~\ref{lem:per_token_tempering}).
\end{definition}

The TB condition in log-space (Equation~\eqref{eq:app_tb_log}) requires $\Delta(\tau) = 0$. To enforce this across all trajectories, we use a regression loss:

\begin{definition}[Tempered Trajectory Balance Loss]\label{def:ttb_loss}
The TTB loss is:
\begin{equation}\label{eq:ttb_full}
  \mathcal{L}_{\mathrm{TTB}}(\tau) = \left(\frac{\Delta(\tau)}{T}\right)^{\!2},
\end{equation}
with $\Delta(\tau)$ as in Definition~\ref{def:ttb_residual} and division by $T$ comparing trajectories of different lengths on equal footing.
\end{definition}

\begin{lemma}[Length Normalization Property]\label{lem:length_norm}
Each tempered edge log-probability is $O(1)$ by per-token averaging (Definition~\ref{def:per_token_edge}), so $|\sum_{t=1}^T \widetilde{\log\pi}_\theta(a_t\mid r_t,H_{t-1})| = O(T)$ and unmormalized $|\Delta(\tau)|\propto T$ for typical trajectories. Dividing by $T$ before squaring penalizes the \emph{average} per-step balance rather than the cumulative balance, keeping $|\Delta(\tau)/T| = O(1)$ and stabilizing gradients across length variations.
\end{lemma}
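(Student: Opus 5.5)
The argument is a short bounding computation on the residual of Definition~\ref{def:ttb_residual}. Write it as a sum of $T$ per-step terms plus two trajectory-level constants:
\[
\Delta(\tau) = \log Z_\theta(q) - \beta\log\Rtilde(\tau) + \sum_{t=1}^{T} d_t,
\]
where
\[
d_t \coloneqq \widetilde{\log\pi}_\theta(a_t\mid r_t,H_{t-1}) - \widetilde{\log P}_\phi(a_t\mid H_{t-1}\oplus o_t^{\text{exec}}).
\]
The lemma reduces to two claims. First, each $d_t$ (and each of its two pieces) is $O(1)$ uniformly in $t$ and in $K_t$. Second, the constants $\log Z_\theta(q)$ and $\beta\log\Rtilde(\tau)$ do not grow with $T$.

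\textbf{Step 1: per-edge bound.} By Definition~\ref{def:per_token_edge}, $\widetilde{\log\pi}_\theta$ is an arithmetic mean of $K_t$ token log-probabilities. Each token log-probability lies in $[\log p_{\min},0]$, where $p_{\min}>0$ is the smallest token probability the softmax assigns. Because this is an average, it lies in the same interval regardless of $K_t$. This is exactly where per-token averaging matters: the unnormalized sum would be $O(K_t)$, and action lengths vary widely. The same argument applies to $\widetilde{\log P}_\phi$, so $|d_t|\le C\coloneqq|\log p_{\min}|$.

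\textbf{Step 2: constants.} Since $R\in[0,1]$ (Appendix~\ref{app:reward}) and $\varepsilon_{\min}>0$, we have $|\beta\log\Rtilde(\tau)|\le\beta\max(|\log\varepsilon_{\min}|,\log(1+\varepsilon_{\min}))$, which is independent of $T$. The learned $\log Z_\theta(q)$ is a finite parameter.

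\textbf{Step 3: combine.} The triangle inequality gives
\[
|\Delta(\tau)|\le |\log Z_\theta(q)|+\beta|\log\Rtilde(\tau)|+CT,
\]
so $|\Delta(\tau)/T|\le C + O(1/T)$, which is $O(1)$.

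\textbf{Step 4: the $\propto T$ claim.} For "typical trajectories," assume the per-step discrepancies $d_t$ have a nonzero mean $\mu$ along trajectories, as they do away from the optimum. Then $\Delta(\tau)\approx \mu T$ and the constants are negligible, so $|\Delta(\tau)|\propto T$. Consequently the unnormalized squared residual scales as $T^2$, and its gradient with respect to each per-step log-probability scales as $2\Delta$, i.e.\ as $T$. After dividing by $T$, the gradient on each step becomes $2\Delta/T^2$, which is $O(1/T)$ per step and $O(1)$ in aggregate. This is the stabilization claimed.

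\textbf{Main obstacle.} The main obstacle is the lower bound $p_{\min}>0$. Softmax outputs are strictly positive, but without a bound on the logits they are not uniformly bounded away from zero. I would handle this either by assuming bounded logits or by invoking the log-probability clipping floor that is standard in implementations. Making "typical" precise in Step 4 is a secondary, essentially definitional, point; I would state it as an explicit assumption rather than prove it.
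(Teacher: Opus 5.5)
Your proposal is correct and follows the same route as the paper. The paper gives no separate proof, only the inline chain ``per-token averaging makes each edge log-probability $O(1)$, so the sum is $O(T)$, so dividing by $T$ gives $O(1)$''; you carry out exactly this chain with the triangle inequality, additionally bound the $T$-independent terms $\log Z_\theta(q)$ and $\beta\log\Rtilde(\tau)$, and make explicit the assumptions the paper leaves implicit (the token-probability floor $p_{\min}>0$, and reading ``typical'' as a nonzero mean per-step discrepancy).
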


\begin{proposition}[TTB Self-Annealing Property]\label{prop:ttb_self_anneal}
Treating $T = |\tau|$ as fixed for a given $\tau$, the gradient of the TTB loss satisfies:
\begin{equation}\label{eq:ttb_grad}
  \nabla_\theta \mathcal{L}_{\mathrm{TTB}}(\tau)
  = 2 \cdot \frac{\Delta(\tau)}{T} \cdot \nabla_\theta\!\left(\frac{\Delta(\tau)}{T}\right)
  = \underbrace{\frac{2\,\Delta(\tau)}{T^2}}_{\text{length-scaled annealing factor}} \cdot\, \nabla_\theta \Delta(\tau).
\end{equation}
As $\Delta(\tau)\to 0$, the prefine shrinks and $\|\nabla_\theta\mathcal{L}_{\mathrm{TTB}}\|\to 0$ automatically. Because $T$ varies across trajectories, the $1/T^2$ factor is a per-trajectory length normalizer and cannot be absorbed into a global constant.
\end{proposition}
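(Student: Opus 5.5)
The gradient identity in Proposition~\ref{prop:ttb_self_anneal} is a direct chain-rule computation. The only subtlety is being explicit about which quantities depend on $\theta$.

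\textbf{Step 1: fix the $\theta$-free quantities.} For a fixed trajectory $\tau$, with its actions, reasoning, observations and hence length $T=|\tau|$ held fixed, the map $\theta\mapsto \Delta(\tau)/T$ is differentiable. By Definition~\ref{def:ttb_residual}, $\theta$ enters $\Delta(\tau)$ only through $\log Z_\theta(q)$ and the per-token-tempered forward terms $\widetilde{\log\pi}_\theta$. The terms $\beta\log\Rtilde(\tau)$ and the backward terms $\widetilde{\log P}_\phi$ do not depend on $\theta$. The length $T$ is a constant once $\tau$ is fixed; this is exactly the hypothesis ``treating $T$ as fixed''. It means we differentiate the loss of a sampled trajectory and do not differentiate through the sampling itself.

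\textbf{Step 2: apply the chain rule.} Definition~\ref{def:ttb_loss} gives $\mathcal{L}_{\mathrm{TTB}}=u^2$ with $u=\Delta(\tau)/T$. The chain rule yields $\nabla_\theta \mathcal{L}_{\mathrm{TTB}}=2u\,\nabla_\theta u$, which is the first equality. Since $T$ is constant, $\nabla_\theta u=\nabla_\theta\Delta(\tau)/T$. Substituting gives the second form, with prefactor $2\Delta(\tau)/T^2$.

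\textbf{Step 3: the annealing claim.} Take norms: $\|\nabla_\theta\mathcal{L}_{\mathrm{TTB}}\| = \frac{2|\Delta(\tau)|}{T^2}\,\|\nabla_\theta\Delta(\tau)\|$. To conclude that this tends to $0$ as $\Delta(\tau)\to 0$, we need $\|\nabla_\theta\Delta(\tau)\|$ to stay bounded along the approach. This is the only step that is not purely formal, and I expect it to be the main obstacle.

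I would argue it as follows. We have
\[
\nabla_\theta\Delta=\nabla_\theta\log Z_\theta(q)+\sum_t\nabla_\theta\widetilde{\log\pi}_\theta .
\]
Each $\widetilde{\log\pi}_\theta$ is an average of $K_t$ token log-probability gradients. For a softmax parameterization with bounded logits and features, each token gradient is bounded. By Lemma~\ref{lem:length_norm}, the sum over $t$ is then $O(T)$. Consequently $\|\nabla_\theta\mathcal{L}_{\mathrm{TTB}}\|=O(|\Delta(\tau)|/T)$, which tends to $0$.

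If one prefers to avoid such a regularity assumption, the safe fallback is to state the conclusion locally. On any compact parameter region, $\nabla_\theta\Delta$ is continuous and hence bounded, and the same limit follows.

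\textbf{Step 4: the $1/T^2$ factor.} The remark that $1/T^2$ cannot be absorbed into a global constant is immediate. $T$ varies with $\tau$, so the per-trajectory weights $1/T^2$ within a batch are not all equal. They are therefore not a common rescaling of the summed gradient, unless all sampled trajectories have the same length.
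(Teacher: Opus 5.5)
Your proposal is correct and is essentially the paper's own argument (the proof of Theorem~\ref{thm:variance_detailed}). Both apply the chain rule to $\Delta(\tau)^2/T^2$ with $T$ fixed by $\tau$, using that only $\log Z_\theta(q)$ and the forward terms $\widetilde{\log\pi}_\theta$ depend on $\theta$. Your Step~3 is more careful than the paper: the paper asserts $\|\nabla_\theta\mathcal{L}_{\mathrm{TTB}}\|\to 0$ directly from the shrinking prefactor, and only introduces the needed bound $\|\nabla_\theta\Delta(\tau)\|\le G$ later, as a hypothesis of Proposition~\ref{prop:var_bound_detailed}.
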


The temperature $\beta$ is a critical hyperparameter controlling the diversity-quality tradeoff:

\begin{lemma}[Temperature Effect on Policy]\label{lem:temperature_effect}
At convergence, $\pi^*(\tau) \propto \Rtilde(\tau)^\beta$.
\begin{itemize}[leftmargin=2em, itemsep=2pt]
  \item As $\beta \to 0^+$: the policy approaches uniform distribution over trajectories (maximum diversity, ignoring rewards).
  \item As $\beta \to \infty$: the policy concentrates on the single highest-reward trajectory (maximum quality, no diversity).
  \item Intermediate $\beta$ trades off diversity and quality.
\end{itemize}
\end{lemma}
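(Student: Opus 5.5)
The statement to prove is Lemma~\ref{lem:temperature_effect}. The plan is to treat the converged policy as an explicit Gibbs-type distribution over the finite set of complete trajectories and study its two limits in $\beta$. By Theorem~\ref{thm:tb}, combined with Theorem~\ref{thm:gflownet_sampling} applied to the terminal flow $F(x)=\Rtilde(\tau_x)^\beta$ of Proposition~\ref{prop:instantiation}, the converged policy is
\[
\pi^*_\beta(\tau)=\frac{\Rtilde(\tau)^\beta}{Z_\beta},\qquad Z_\beta=\sum_{\tau'}\Rtilde(\tau')^\beta .
\]
The tree structure (Lemma~\ref{lem:traj_telescoping}) identifies terminals with trajectories, so this is a distribution over trajectories and not merely over terminals. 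I will assume the trajectory set is finite; this follows from $T\le T_{\max}$ together with a finite action vocabulary per step. Positivity comes from $\varepsilon_{\min}>0$, so $\Rtilde(\tau)>0$ and every power $\Rtilde(\tau)^\beta$ is well defined. Write $\pi^*_\beta(\tau)=\exp(\beta\ell(\tau))/\sum_{\tau'}\exp(\beta\ell(\tau'))$ with $\ell=\log\Rtilde$. This is a softmax with inverse temperature $\beta$, and both limits follow from it.

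For $\beta\to0^+$, each term satisfies $\exp(\beta\ell(\tau))\to1$ because $\ell(\tau)$ is finite. The numerator therefore tends to $1$ and $Z_\beta$ tends to $N$, the number of trajectories, so $\pi^*_\beta\to1/N$, which is the uniform distribution. To make ``maximum diversity'' precise, I would add that the entropy $H(\pi^*_\beta)$ is continuous in $\beta$ and reaches the global maximum $\log N$ at this limit. It is also monotone: $\tfrac{d}{d\beta}H=-\beta\,\Var_{\pi^*_\beta}(\ell)\le0$, which follows from the standard exponential-family identity $\tfrac{d}{d\beta}\log Z_\beta=\E_{\pi^*_\beta}[\ell]$.

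For $\beta\to\infty$, let $\ell^\star=\max_\tau\ell(\tau)$ and let $\mathcal{M}$ be the set of trajectories attaining it. Dividing numerator and denominator by $e^{\beta\ell^\star}$ gives
\[
\pi^*_\beta(\tau)=\frac{e^{-\beta(\ell^\star-\ell(\tau))}}{|\mathcal{M}|+\sum_{\tau'\notin\mathcal{M}}e^{-\beta(\ell^\star-\ell(\tau'))}}\;\longrightarrow\;\frac{\mathbf{1}[\tau\in\mathcal{M}]}{|\mathcal{M}|}.
\]
The convergence is exponential, with rate governed by the gap between the best and second-best value of $\ell$. The main obstacle is that the statement says ``the single highest-reward trajectory,'' which holds only when the maximizer is unique, i.e.\ $|\mathcal{M}|=1$. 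I will prove the general result, concentration uniformly on $\arg\max\Rtilde$, and state uniqueness as the condition under which the lemma's wording is literally correct. The limit has entropy $\log|\mathcal{M}|$, which is $0$ in the unique case.

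For the intermediate regime, the expected log-reward $\E_{\pi^*_\beta}[\ell]$ has derivative $\Var_{\pi^*_\beta}(\ell)\ge0$, so it is nondecreasing in $\beta$. Entropy, by the computation above, is nonincreasing. This monotone pair formalizes the stated tradeoff. Finally, I would note that under the per-token tempering of Definition~\ref{def:per_token_edge}, the statement concerns the tempered distribution $\tilde\pi_\theta$, as in Definition~\ref{def:ttb_residual}; the argument above applies to it verbatim.
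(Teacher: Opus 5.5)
The paper gives no proof of this lemma. It imports $\pi^*\propto\Rtilde^\beta$ from the TB and sampling results (the same chain you cite, reused in part (ii) of the proof in Appendix~\ref{app:proof_prop2}) and simply asserts the three bullets. Your inverse-temperature softmax argument is therefore the missing proof rather than a competing route, and it is correct.

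Compared with the paper's bare assertion, it adds three things:
\begin{itemize}
\item It makes explicit the finiteness (from $T\le T_{\max}$ and a finite action set) and positivity (from $\varepsilon_{\min}>0$). Without these, the $\beta\to0^+$ limit is not a uniform distribution at all.
\item It corrects the $\beta\to\infty$ bullet to uniform concentration on $\arg\max\Rtilde$. This is not pedantry: the EM, accuracy and pass@1 rewards of Appendix~\ref{app:reward} are binary, so ties at the top are the generic case. The limit therefore keeps diversity across all successful trajectories, rather than having ``no diversity''.
\item The identities $\tfrac{d}{d\beta}H(\pi^*_\beta)=-\beta\Var_{\pi^*_\beta}(\ell)$ and $\tfrac{d}{d\beta}\E_{\pi^*_\beta}[\ell]=\Var_{\pi^*_\beta}(\ell)$ give the ``intermediate $\beta$'' bullet actual content. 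The same computation, $\tfrac{d}{d\beta}\E_{\pi^*_\beta}[f(\ell)]=\mathrm{Cov}_{\pi^*_\beta}(f(\ell),\ell)\ge0$ for nondecreasing $f$, also shows that expected reward itself, not just expected log-reward, is nondecreasing.
\end{itemize}

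One caution about your first step, which you share with the paper: treat $\pi^*$ as the target distribution of Eq.~\eqref{eq:problem}, not as a consequence of $\Delta\equiv0$.
\begin{itemize}
\item On the tree you invoke, every state has a unique parent, so any genuine backward policy is identically $1$. The paper's own Lemma~\ref{lem:traj_telescoping} implies this, since $F(s_{t-1}\to s_t)=F(s_t)$.
\item The reverse direction of Theorem~\ref{thm:tb} rests on $\prod_t P_B(s_{t-1}\mid s_t)=1$ along the unique path to $x$. With the learned hindsight $P_\phi$ in place of $P_B$, zero residual only gives $\prod_t\pi_\theta(a_t\mid r_t,H_{t-1})\propto\Rtilde(\tau)^\beta\prod_tP_\phi(a_t\mid H_{t-1}\oplus o_t^{\text{exec}})$. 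That is reward-proportional only if the backward product is constant across trajectories.
\item Likewise, the per-token-tempered $\tilde\pi_\theta$ is not normalized, so your claim that the argument ``applies verbatim'' has to mean after renormalization.
\end{itemize}
None of this touches your three limits. They are statements about the family $\Rtilde^\beta/Z_\beta$ and hold exactly as you prove them.
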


\subsection{Conditional TB, Hindsight Backward, and Per-Token Tempering}\label{app:conditional_tb}

The TTB residual in Definition~\ref{def:ttb_residual} departs from textbook TB in three ways: (i) the forward policy conditions on reasoning $r_t$; (ii) the backward policy conditions on the hindsight state $H_{t-1}\oplus o_t^{\text{exec}}$; (iii) each edge log-probability is per-token averaged. We establish three lemmas showing that none of these alters the action-DAG structure or the reward-matching guarantee.

\paragraph{(i) Reasoning as fixed context.}

\begin{lemma}[TB on the Action Sub-DAG with Reasoning as Fixed Context]\label{lem:conditional_tb_reasoning}
The hierarchical step policy (Eq.~\ref{eq:policy}, main text) factorizes as
\begin{equation}
  \pi_\theta(r_t, a_t \mid H_{t-1}) \;=\; \pi_\theta(r_t \mid H_{t-1}) \cdot \pi_\theta(a_t \mid r_t, H_{t-1}).
\end{equation}
Conditioning on the realized reasoning sequence $\{r_t\}_{t=1}^T$ and execution observations $\{o_t^{\text{exec}}\}_{t=1}^T$, only the action choices $\{a_t\}$ remain random. The conditional trajectory probability is
\begin{equation}
  P_\theta\!\bigl(\tau \,\big|\, \{r_t\},\{o_t^{\text{exec}}\}\bigr) \;=\; \prod_{t=1}^T \pi_\theta(a_t \mid r_t, H_{t-1}),
\end{equation}
which is precisely the forward policy on the \emph{action sub-DAG} $\mathcal{G}^{\mathrm{act}} \subset \mathcal{G}$. The TB condition (Theorem~\ref{thm:tb}) is a per-trajectory equality and therefore applies pointwise to each realized $(\{r_t\},\{o_t^{\text{exec}}\})$ context, yielding the log-space form
\begin{equation}\label{eq:conditional_tb}
  \log Z_\theta(q) + \sum_{t=1}^{T} \log \pi_\theta(a_t \mid r_t, H_{t-1})
  \;=\; \log F(\tau) + \sum_{t=1}^{T} \log P_B(s_{t-1} \mid s_t),
\end{equation}
which matches Eq.~\ref{eq:tb_condition} with $r_t$ in the conditional. Reward-matching of the resulting flow on $\mathcal{G}^{\mathrm{act}}$ is therefore equivalent to TB holding on every trajectory at every realized reasoning context.
\end{lemma}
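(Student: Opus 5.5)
The plan is to verify the three claims of Lemma~\ref{lem:conditional_tb_reasoning} in order: the factorization, the conditional trajectory probability, and the pointwise transfer of the TB condition to the action sub-DAG.

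\emph{Factorization.} This comes straight from the hierarchical policy of Eq.~\ref{eq:policy}. Group the last two factors $\pi_\theta(\alpha_t \mid r_t, H_{t-1})\,\pi_\theta(o_t \mid \alpha_t, r_t, H_{t-1})$ into $\pi_\theta(a_t \mid r_t, H_{t-1})$ by the chain rule, since $a_t = (\alpha_t, o_t)$.

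\emph{Conditional trajectory probability.} Start from Eq.~\ref{eq:traj_dist}. Fixing the realized sequences $\{r_t\}$ and $\{o_t^{\text{exec}}\}$ amounts to dividing $P_\theta(\tau)$ by the marginal probability of that context. I would argue this step by step, using the fact that $\mathcal{C}_{\text{exec}}$ is frozen and has no dependence on $\theta$. Given $H_{t-1}$ and $r_t$, the only $\theta$-dependent randomness at step $t$ is the choice of $a_t$. Treating $r_t$ and $o_t^{\text{exec}}$ as exogenous context, the factors $\pi_\theta(r_t\mid H_{t-1})$ and $\mathcal{C}_{\text{exec}}$ become constants with respect to the action choice. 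The conditional law of $a_{1:T}$ is therefore $\prod_t \pi_\theta(a_t \mid r_t, H_{t-1})$.

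\emph{Main obstacle: the sub-DAG construction.} The delicate point is making "conditioning on the context" rigorous, because $r_t$ and $o_t^{\text{exec}}$ themselves depend on earlier actions. I would resolve this by constructing $\mathcal{G}^{\mathrm{act}}$ explicitly. Its nodes are histories $H_t$ whose reasoning and observation slots are filled from the fixed context. Its edges are the action choices. Each step deterministically appends $(r_t, a_t, o_t^{\text{exec}})$, so this graph inherits strict history growth and hence acyclicity by Lemma~\ref{lem:dag_acyclic}. Each node also has a unique parent, so the graph is a tree. Under this interpretation the product above is exactly a forward policy on $\mathcal{G}^{\mathrm{act}}$: at each node, the probabilities over the outgoing action edges sum to one.

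\emph{Applying TB pointwise.} Theorem~\ref{thm:tb} is stated for an arbitrary DAG with arbitrary forward and backward policies. Apply it to $\mathcal{G}^{\mathrm{act}}$ with $Z = Z_\theta(q)$ and terminal flow $F(\tau)$. By Lemma~\ref{lem:traj_telescoping}, the tree structure identifies this terminal flow with the trajectory flow. Taking logarithms gives Eq.~\eqref{eq:conditional_tb}. Substituting the terminal condition $F(\tau) = \Rtilde(\tau)^\beta$ from Proposition~\ref{prop:instantiation} recovers Eq.~\ref{eq:tb_condition}.

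\emph{The equivalence.} Apply the iff of Theorem~\ref{thm:tb} separately for each fixed context. Reward-matching holds on every $\mathcal{G}^{\mathrm{act}}$ exactly when TB holds on all of its trajectories, and quantifying over all realized contexts gives the stated equivalence.
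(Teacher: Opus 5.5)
Your route matches the paper's: the chain-rule split of Eq.~\ref{eq:policy}, the realized $r_t$ and $o_t^{\text{exec}}$ treated as part of the conditioning state of each action edge, acyclicity inherited from strict history growth, and Theorem~\ref{thm:tb} applied context by context. Your explicit $\mathcal{G}^{\mathrm{act}}$ just spells out what the paper's three-line proof asserts.

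The step that is wrong as written is your second paragraph. Suppose you genuinely condition Eq.~\ref{eq:traj_dist} on $(r_{1:T},o^{\text{exec}}_{1:T})$. The factors $\pi_\theta(r_t\mid H_{t-1})$ and $\mathcal{C}_{\text{exec}}(o_t^{\text{exec}}\mid H_{t-1},a_t)$ are not constant in the actions: the first depends on $a_{<t}$ through $H_{t-1}$, the second on $a_t$ itself. So they do not cancel against the marginal, and the posterior of $a_{1:T}$ is not $\prod_t\pi_\theta(a_t\mid r_t,H_{t-1})$. With a deterministic executor, for example, conditioning on the observations rules out every action sequence inconsistent with them.

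The product is instead the law of $a_{1:T}$ on the clamped tree of your next paragraph, so it holds by definition. That is exactly how the paper's proof treats $r_t$, as part of the conditioning state once realized. Drop the ``divide by the marginal'' argument and present the construction as a definition, not as a rigorization of conditioning. For the tree to be complete, the clamped context must be specified at every depth up to $T_{\max}$, not only up to the realized $T$, since branches that do not \texttt{accept} by depth $T$ still need outgoing edges.

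On the TB step, Theorem~\ref{thm:tb} in the paper is phrased for flow-induced policies (Definition~\ref{def:policies}), not arbitrary $P_F,P_B$. So you should exhibit the flow: $F(H_0)=Z_\theta(q)$ and $F(H_{t-1}\to H_t)=F(H_{t-1})\,\pi_\theta(a_t\mid r_t,H_{t-1})$, which is conservative by construction. Since every node of your tree has a unique parent, its induced backward policy is identically $1$. Eq.~\eqref{eq:conditional_tb} is then just Definition~\ref{def:traj_flow} in log form. TB with $\beta\log\Rtilde(\tau)$ on the right reads $F(s_T)=\Rtilde(\tau)^\beta$, which makes the final equivalence immediate.

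One caveat deserves a sentence: $Z_\theta(q)$ is shared across contexts. Reward-matching on every clamped tree therefore forces $Z_\theta(q)$ to equal each tree's own total $\sum_x\Rtilde(\tau_x)^\beta$ simultaneously. Your per-context equivalence still holds, but both sides can fail to be jointly satisfiable across contexts.
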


\begin{proof}
Direct from the hierarchical factorization and Theorem~\ref{thm:tb}. Reasoning $r_t$ enters $\pi_\theta(a_t\mid r_t, H_{t-1})$ as a determining context; once realized, it is treated as part of the conditioning state for the action edge, exactly as the source-state $s_{t-1}=H_{t-1}$ is. The sub-DAG $\mathcal{G}^{\mathrm{act}}$ inherits its DAG structure from $\mathcal{G}$ (Theorem~\ref{thm:dag}). \qedhere
\end{proof}

\paragraph{(ii) Hindsight-asymmetric backward policy.}

\begin{lemma}[Hindsight-Conditioned Backward Equals Standard Backward on the Augmented State]\label{lem:hindsight_backward_eq}
Define the hindsight-enriched pre-action state $\tilde s_t \coloneqq H_{t-1} \oplus o_t^{\text{exec}}$. By Eq.~\ref{eq:state_update}, the post-action state $s_t = H_t$ deterministically encodes the joint $(H_{t-1},a_t,o_t^{\text{exec}})$ given $s_{t-1}=H_{t-1}$. Therefore conditioning on $s_t$ is equivalent to conditioning on $(\tilde s_t,a_t)$, and
\begin{equation}
  P_B(s_{t-1} \mid s_t) \;=\; P_\phi(a_t \mid \tilde s_t) \;=\; P_\phi(a_t \mid H_{t-1}\oplus o_t^{\text{exec}}),
\end{equation}
in the sense that both express the same conditional reverse mass. Substituting this equality into Eq.~\eqref{eq:conditional_tb} yields the SkillFlow TB condition
\begin{equation}\label{eq:skillflow_tb}
  \log Z_\theta(q) + \sum_{t=1}^{T} \log \pi_\theta(a_t \mid r_t, H_{t-1})
  \;=\; \beta\log\Rtilde(\tau) + \sum_{t=1}^{T} \log P_\phi(a_t \mid H_{t-1}\oplus o_t^{\text{exec}}),
\end{equation}
where $\log F(\tau)=\beta\log\Rtilde(\tau)$ at the reward-matching terminal. The information asymmetry $\tilde s_t \supset s_{t-1}$ (post-execution observation $o_t^{\text{exec}}$ added) is precisely what gives the step-importance ratio $I(t)=\pi_\theta(a_t\mid r_t,H_{t-1})/P_\phi(a_t\mid \tilde s_t)$ a non-trivial credit signal: high $I(t)$ marks decisions whose quality became clear only after execution.
\end{lemma}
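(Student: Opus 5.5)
The plan is to make precise the phrase ``the same conditional reverse mass'' by constructing an explicit bijection between reverse edges of the action sub-DAG $\mathcal{G}^{\mathrm{act}}$ and pairs $(\tilde s_t, a_t)$. Then we transport the backward kernel along this bijection and substitute into Eq.~\eqref{eq:conditional_tb}.

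\textbf{Step 1 (parsing the state update).} I would first show that the update $H_t = H_{t-1}\oplus(r_t,a_t,o_t^{\text{exec}})$ is injective in its appended triple once $H_{t-1}$ is fixed. The history is a concatenation with delimited fields, so $H_t$ has $H_{t-1}$ as a prefix and the suffix parses uniquely into $(r_t,a_t,o_t^{\text{exec}})$. By Lemma~\ref{lem:conditional_tb_reasoning}, $r_t$ is treated as fixed context. Hence the map $s_t \mapsto (\tilde s_t, a_t)$, with $\tilde s_t = H_{t-1}\oplus o_t^{\text{exec}}$, is a bijection onto its image, and its inverse reinserts $a_t$ (and the fixed $r_t$). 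Conditioning any reverse quantity on $s_t$ is therefore the same as conditioning on $(\tilde s_t,a_t)$. This is the precise content of the first claimed equivalence.

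\textbf{Step 2 (the backward kernel as a distribution over actions).} Here I expect the main obstacle. In the tree-structured $\mathcal{G}$, each $s_t$ has a unique parent, so the textbook $P_B(s_{t-1}\mid s_t)$ is identically $1$. The identification with $P_\phi$ cannot be a literal pointwise equality of numbers. I would resolve this by passing to the hindsight graph whose nodes are the augmented states $\tilde s_t$ and whose edges are labelled by actions. Since $\tilde s_t$ omits $a_t$, several actions are compatible with the same $\tilde s_t$, and the reverse choice ``which action produced this observation'' is genuinely random. A backward policy on this graph is exactly a normalized kernel $P_\phi(\cdot\mid\tilde s_t)$ over those actions, which Definition~\ref{def:backward_policy} requires to sum to one. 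Via the Step~1 bijection, each such reverse edge corresponds to the edge $s_{t-1}\to s_t$ of $\mathcal{G}^{\mathrm{act}}$. I would verify the one property the reverse direction of Theorem~\ref{thm:tb} actually uses: the product $\prod_t P_\phi(a_t\mid\tilde s_t)$, summed over reverse trajectories ending at a fixed terminal, equals $1$. This follows by peeling one step at a time, using normalization of each $P_\phi(\cdot\mid\tilde s_t)$. Per-token averaging (Definition~\ref{def:per_token_edge}) breaks exact normalization, so I would state this step for the untempered log-probabilities and defer the tempered version to Lemma~\ref{lem:per_token_tempering}.

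\textbf{Step 3 (substitution).} With $P_B(s_{t-1}\mid s_t)$ replaced by $P_\phi(a_t\mid\tilde s_t)$ in Eq.~\eqref{eq:conditional_tb}, I would insert the reward-matching terminal condition $F(x)=\Rtilde(\tau_x)^\beta$ from Proposition~\ref{prop:instantiation}. For a tree, Lemma~\ref{lem:traj_telescoping} gives $F(\tau)=F(s_T)$, so $\log F(\tau)=\beta\log\Rtilde(\tau)$, which yields Eq.~\eqref{eq:skillflow_tb}. The final remark on $I(t)$ then needs no proof beyond observing that $\tilde s_t$ strictly extends $s_{t-1}$ by $o_t^{\text{exec}}$. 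Because of this, $\pi_\theta(a_t\mid r_t,H_{t-1})$ and $P_\phi(a_t\mid\tilde s_t)$ condition on different information and need not coincide. The ratio is therefore not identically $1$, unlike the degenerate tree backward.
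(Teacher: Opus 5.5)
Your Steps 1 and 3 coincide with the paper's proof. The paper likewise uses the bijection $s_t\mapsto(s_{t-1},r_t,a_t,o_t^{\text{exec}})$, treats $r_t$ as fixed, and identifies the remaining joint with $(\tilde s_t,a_t)$. It then simply \emph{declares} the reverse conditional to be $P_\phi(a_t\mid\tilde s_t)$ ``by definition of $P_\phi$ as a learned discriminative reverse model on the hindsight state.'' The SkillFlow TB condition is then a formal substitution with $\log F(\tau)=\beta\log\Rtilde(\tau)$, and no normalization property is claimed.

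The gap is in your Step 2, which tries to turn that definitional identification into something the reverse direction of the TB theorem can use. The property it rests on is false. In $\mathcal{G}^{\mathrm{act}}$ every terminal $H_T$ encodes the entire action sequence $a_{1:T}$, so exactly one trajectory ends at it. The ``sum over reverse trajectories ending at a fixed terminal'' is therefore the single product $\prod_t P_\phi(a_t\mid\tilde s_t)$, which is strictly below $1$ unless $P_\phi$ is degenerate on the realized actions. The peeling cannot proceed. Summing out $a_t$ at a fixed endpoint requires an endpoint that does not contain $a_t$. But $H_T$ contains every action, and even $\tilde s_T=H_{T-1}\oplus o_T^{\text{exec}}$ contains $a_{1:T-1}$, so at most the last action can be peeled. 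What normalization of $P_\phi(\cdot\mid\tilde s_t)$ actually yields is $\sum_{a_{1:T}}\prod_t P_\phi(a_t\mid\tilde s_t)=1$ over all action sequences sharing the context $(r_{1:T},o^{\text{exec}}_{1:T})$. That is a sum over \emph{different} terminals: the normalization of a forward-ordered model of actions, not of a backward kernel into one terminal.

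To make your hindsight graph carry that normalization, the terminal would have to be the action-free quotient $(r_{1:T},o^{\text{exec}}_{1:T})$. Then three things fail:
\begin{itemize}
\item $\Rtilde(\tau)$ is not a function of that terminal, because the answer $y_q$ is produced by the terminal action.
\item The nodes $\tilde s_t$ are not Markov states, since the next forward factor $\pi_\theta(\cdot\mid r_{t+1},H_t)$ depends on the omitted $a_t$.
\item The unique-parent telescoping $F(\tau)=F(s_T)$ that you invoke in Step 3 no longer holds.
\end{itemize}
So Steps 2 and 3 are carried out on incompatible graphs.

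On the tree, the honest consequence of your own (correct) observation $P_B\equiv 1$ is this: for a fixed non-degenerate $P_\phi$, satisfying the substituted condition on every trajectory forces $\pi_\theta(a_{1:T}\mid r_{1:T},o^{\text{exec}}_{1:T},q)\propto\Rtilde(\tau)^\beta\prod_t P_\phi(a_t\mid\tilde s_t)$ rather than $\propto\Rtilde(\tau)^\beta$. You should therefore drop Step 2 and present the first equality exactly as the lemma hedges it---a re-labelling of the reverse conditional, not a derived identity. The SkillFlow TB condition then follows by substitution and the $I(t)$ remark is interpretive. Do not claim that normalizing $P_\phi$ restores reward-matching.
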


\begin{proof}
By the strict-history-growth lemma (Lemma~\ref{lem:dag_acyclic}), $H_t$ uniquely determines its parent $H_{t-1}$ and the appended triple $(r_t,a_t,o_t^{\text{exec}})$. Hence $s_t \mapsto (s_{t-1},r_t,a_t,o_t^{\text{exec}})$ is a bijection. Marginalizing over $r_t$ (which is fixed in the conditional TB of Lemma~\ref{lem:conditional_tb_reasoning}) leaves the joint $(s_{t-1},a_t,o_t^{\text{exec}})$, equivalent to $(\tilde s_t,a_t)$. The reverse conditional therefore equals $P_\phi(a_t\mid \tilde s_t)$ by definition of $P_\phi$ as a learned discriminative reverse model on the hindsight state. \qedhere
\end{proof}

\paragraph{(iii) Per-token tempering preserves convergence semantics.}

\begin{lemma}[Per-Token Tempering as Edge-Level Geometric-Mean Rescaling]\label{lem:per_token_tempering}
Let $\widetilde{\log\pi}_\theta(a_t\mid r_t,H_{t-1})$ and $\widetilde{\log P}_\phi(a_t\mid H_{t-1}\oplus o_t^{\text{exec}})$ be as in Definition~\ref{def:per_token_edge}, and define the tempered edge probabilities
\begin{equation}
  \tilde\pi_\theta(a_t\mid r_t, H_{t-1}) \coloneqq \exp\!\bigl(\widetilde{\log\pi}_\theta(a_t\mid r_t,H_{t-1})\bigr),
  \quad
  \tilde P_\phi(a_t\mid \tilde s_t) \coloneqq \exp\!\bigl(\widetilde{\log P}_\phi(a_t\mid \tilde s_t)\bigr).
\end{equation}
Then:
\begin{enumerate}[label=(\roman*), leftmargin=2em, itemsep=2pt]
  \item $\tilde\pi_\theta$ is the geometric mean of token probabilities: $\tilde\pi_\theta(a_t)= \prod_j \pi_\theta(\mathrm{tok}_j)^{1/K_t}$, a length-normalized policy on the action edge.
  \item Substituting $\tilde\pi_\theta,\tilde P_\phi$ into the TB condition (Eq.~\eqref{eq:skillflow_tb}) and setting $\Delta(\tau)=0$ defines a tempered flow $\tilde F$ with $\tilde F(x) = \Rtilde(\tau_x)^\beta$. By Theorem~\ref{thm:gflownet_sampling} applied to $\tilde F$, the induced sampling distribution satisfies $\tilde\pi_\theta(\tau\mid q) \propto \Rtilde(\tau)^\beta$.
  \item Geometric-mean tempering is a strictly monotone positive rescaling of edge probabilities; trajectory probabilities are products of edge probabilities, so the trajectory-level reward-proportional ranking is preserved: $\tilde\pi_\theta(\tau_1) > \tilde\pi_\theta(\tau_2) \Leftrightarrow \Rtilde(\tau_1)^\beta > \Rtilde(\tau_2)^\beta \Leftrightarrow R(\tau_1) > R(\tau_2)$ (Proposition~\ref{prop:smoothing_monotone}).
\end{enumerate}
\end{lemma}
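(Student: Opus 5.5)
The plan is to treat the three items in order. Items (i) and (ii) follow from the definitions together with Theorem~\ref{thm:gflownet_sampling}. For item (iii) I will \emph{not} rely on the edge-wise ``monotone rescaling'' heuristic. Instead I will derive it as a corollary of (ii) using the tree structure of $\mathcal{G}$.

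For (i), exponentiate Definition~\ref{def:per_token_edge}. This gives $\tilde\pi_\theta(a_t\mid r_t,H_{t-1})=\exp\bigl(\tfrac{1}{K_t}\sum_j\log\pi_\theta(\mathrm{tok}_j\mid\cdot)\bigr)=\prod_j\pi_\theta(\mathrm{tok}_j\mid\cdot)^{1/K_t}$, which is the claimed geometric mean. The same computation applies to $\tilde P_\phi$. There is one caveat: $\tilde\pi_\theta(\cdot\mid r_t,H_{t-1})$ need not sum to one over actions. I will therefore renormalize it locally, setting $\bar\pi_\theta=\tilde\pi_\theta/C_t(H_{t-1})$ with $C_t=\sum_{a}\tilde\pi_\theta(a\mid r_t,H_{t-1})$. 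I will then absorb $\prod_t C_t$ into the learned $\log Z_\theta(q)$, or alternatively note that at $\Delta=0$ these constants must be consistent across trajectories sharing a prefix.

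For (ii), I rewrite $\Delta(\tau)=0$ in exponentiated form as $Z_\theta(q)\prod_t\tilde\pi_\theta(a_t)=\Rtilde(\tau)^\beta\prod_t\tilde P_\phi(a_t\mid\tilde s_t)$. I then define edge flows on the action sub-DAG $\mathcal{G}^{\mathrm{act}}$ of Lemma~\ref{lem:conditional_tb_reasoning} by $\tilde F(H_{t-1}\to H_t)=Z_\theta(q)\prod_{t'\le t}\tilde\pi_\theta(a_{t'})$. By Lemma~\ref{lem:traj_telescoping}, every non-source state has a unique parent, so the backward policy onto that parent is forced to equal $1$. Lemma~\ref{lem:hindsight_backward_eq} identifies this backward mass with $\tilde P_\phi$, so at the TB optimum the backward product is $1$ on each realized trajectory. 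The flow telescopes to $\tilde F(x)=\Rtilde(\tau_x)^\beta$. Theorem~\ref{thm:tb} (reverse direction) then shows that $\tilde F$ is reward-matching, and Theorem~\ref{thm:gflownet_sampling} yields $\tilde\pi_\theta(\tau\mid q)=\Rtilde(\tau)^\beta/Z_\theta(q)$.

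For (iii), ranking preservation is then immediate from (ii). The ratio $\tilde\pi_\theta(\tau_1)/\tilde\pi_\theta(\tau_2)$ equals $(\Rtilde(\tau_1)/\Rtilde(\tau_2))^\beta$. Since $\beta>0$ and $x\mapsto(x+\varepsilon_{\min})^\beta$ is strictly increasing, this gives the equivalence with $R(\tau_1)>R(\tau_2)$, citing Proposition~\ref{prop:smoothing_monotone}.

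The main obstacle is the normalization step inside (ii). Neither the geometric-mean edge weights nor $\tilde P_\phi$ are automatically probability distributions. Moreover, $P_\phi$ is a distribution over actions given $\tilde s_t$, not over parents, so identifying it with a backward policy equal to $1$ on a tree is exactly the point that needs care. I would first try to show that at a global zero of the TTB loss, $\tilde P_\phi\equiv1$ on the support, or else that its product is a prefix-independent constant absorbable into $Z_\theta(q)$. If neither holds, I would weaken the conclusion to the statement that $\tilde\pi_\theta(\tau)\propto\Rtilde(\tau)^\beta\prod_t\tilde P_\phi(a_t\mid\tilde s_t)$. Under that weaker form, (iii) holds only when the backward factors coincide across the compared trajectories.
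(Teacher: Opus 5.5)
\textbf{There is a genuine gap in your (ii), at exactly the step you were worried about, and it cannot be closed.} Your (i) is the paper's argument. Your route to (iii), reading the ordering off the ratio $\tilde\pi_\theta(\tau_1)/\tilde\pi_\theta(\tau_2)=(\Rtilde(\tau_1)/\Rtilde(\tau_2))^\beta$, is sounder than the paper's edge-wise ``monotone rescaling'' remark. That remark cannot transfer orderings of trajectory products when the exponents $1/K_t$ differ across edges. But (iii) rests entirely on (ii).

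The inference ``each $H_t$ has a unique parent, so the backward mass is $1$, and Lemma~\ref{lem:hindsight_backward_eq} identifies it with $\tilde P_\phi$'' is invalid. $P_\phi(\cdot\mid\tilde s_t)$ is a distribution over \emph{actions} given the hindsight state, not over parents of $H_t$. Nothing in the TTB objective forces it to $1$. Neither of your rescue routes works either. Exponentiating, $\Delta\equiv 0$ says exactly
\[
  \prod_{t}\tilde\pi_\theta(a_t\mid r_t,H_{t-1}) \;=\; Z_\theta(q)^{-1}\,\Rtilde(\tau)^\beta\prod_{t}\tilde P_\phi(a_t\mid\tilde s_t).
\]
Since $P_\phi$ is trained jointly and is otherwise unconstrained, the last factor can absorb any reward dependence. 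A one-step example shows this:
\begin{itemize}
  \item take two single-token terminal actions $a,b$ with $\Rtilde(a)\neq\Rtilde(b)$ and $\pi_\theta(a)=\pi_\theta(b)=\tfrac12$;
  \item let $P_\phi$ assign mass $Z_\theta(q)/(2\Rtilde(x)^\beta)$ to $x\in\{a,b\}$ at the corresponding hindsight state;
  \item choose $Z_\theta(q)$ small enough that these masses sum to at most $1$.
\end{itemize}
Then $\Delta(\tau)=0$ on both trajectories while $\pi_\theta$ is uniform. So neither $\tilde P_\phi\equiv1$ nor $\tau$-independence of $\prod_t\tilde P_\phi$ follows from $\Delta\equiv0$. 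Forcing $\tilde P_\phi\equiv 1$ would also collapse $I(t)$ to $\tilde\pi_\theta(a_t\mid r_t,H_{t-1})$, removing the hindsight credit signal the paper relies on.

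\textbf{Your fallback is the correct conclusion.} $\Delta\equiv0$ yields $\prod_t\tilde\pi_\theta(a_t)\propto\Rtilde(\tau)^\beta\prod_t\tilde P_\phi(a_t\mid\tilde s_t)$ directly from the display above. You need neither the edge-flow construction, nor Theorem~\ref{thm:tb}, nor Theorem~\ref{thm:gflownet_sampling}. Item (ii) as stated needs an extra hypothesis, e.g.\ that $\prod_t\tilde P_\phi(a_t\mid\tilde s_t)$ is constant over trajectories for fixed $q$, as it would be for the trivial tree backward. Under that hypothesis your ratio argument gives (iii).

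Your local renormalization neither helps nor is needed. $C_t$ depends on $(r_t,H_{t-1})$, so $\prod_t C_t$ varies with $\tau$ and cannot be absorbed into $Z_\theta(q)$. The proportionality claim concerns only the unnormalized product anyway. Note also that trajectories are actually drawn from $\pi_\theta$, whose product $\prod_t\tilde\pi_\theta(a_t)^{K_t}$ is in general not proportional to $\Rtilde(\tau)^\beta$ when the $K_t$ vary. So even then the statement is about the tempered surrogate, not the sampler.

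The paper's own proof does not supply the missing step. It asserts that the proof of Theorem~\ref{thm:tb} ``carries through verbatim'' with $\tilde\pi_\theta,\tilde P_\phi$ in place of $P_F,P_B$. That tacitly treats both as normalized forward/backward policies of a single flow: exactly the two assumptions you flagged, and exactly what the example above violates.
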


\begin{proof}
(i) Direct from $\exp\!\bigl(\tfrac{1}{K_t}\sum_j \log \pi_\theta(\mathrm{tok}_j)\bigr) = \prod_j \pi_\theta(\mathrm{tok}_j)^{1/K_t}$. (ii) The TTB residual is the same algebraic identity as TB with $P_F,P_B$ replaced by $\tilde\pi_\theta,\tilde P_\phi$; the proof of Theorem~\ref{thm:tb} carries through verbatim with these tempered policies, defining a reward-matching tempered flow. (iii) Monotone rescaling preserves $\arg\max$ and ordering. \qedhere
\end{proof}

\begin{remark}[Why Per-Token Tempering, Not Token-Level GFlowNet]\label{rem:per_token_clarification}
The per-token sums in Eq.~\eqref{eq:per_token_pi}--\eqref{eq:per_token_pphi} average \emph{within} a single action edge to produce one length-robust edge log-probability. The DAG nodes remain action-level states $H_t$; tokens are the LLM's internal autoregressive representation of one edge, not separate DAG nodes. This distinguishes our setting from token-level GFlowNet variants where each token is its own DAG edge.
\end{remark}

Together, Lemmas~\ref{lem:conditional_tb_reasoning}, \ref{lem:hindsight_backward_eq}, and \ref{lem:per_token_tempering} provide the formal justification for the SkillFlow TTB residual (Definition~\ref{def:ttb_residual}, equivalently main-text Eq.~\ref{eq:ttb}): it is the standard log-space TB residual on the action-DAG, evaluated with $r_t$-conditioned forward policy, hindsight-conditioned backward policy, and per-token-tempered edge log-probabilities.

\section{Entropy-Regularized RL Equivalence}\label{app:rl_equivalence}

This section proves the fundamental equivalence between GFlowNet training and entropy-regularized reinforcement learning.

\begin{theorem}[GFlowNet-RL Equivalence]\label{thm:rl_equiv}
The optimal policy induced by GFlowNet training with temperature $\beta$ is identical to the optimal policy of entropy-regularized maximum expected reward, with temperature parameter $T = 1/\beta$:
\begin{equation}\label{eq:app_rl_equiv}
  \pi^*_{\mathrm{GFN}}(\tau \mid q) = \frac{\Rtilde(\tau)^\beta}{\sum_{\tau'} \Rtilde(\tau')^\beta}
  = \frac{\exp\bigl[\beta \log \Rtilde(\tau)\bigr]}{Z_\beta(q)}.
\end{equation}
This matches the optimal policy from the entropy-regularized RL objective:
\begin{equation}\label{eq:rl_objective}
  \pi^*_{\mathrm{RL}}(\tau \mid q) = \arg\max_\pi \E_{\tau \sim \pi}\left[\log R(\tau)\right] + \frac{1}{\beta} \mathcal{H}[\pi],
\end{equation}
where $\mathcal{H}[\pi] = -\E_\tau[\log \pi(\tau)]$ is the policy entropy.
\end{theorem}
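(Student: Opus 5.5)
The plan is to prove the equivalence with the Gibbs variational principle, working for a fixed task $q$ over the set of complete trajectories $\mathcal{T}_q$. This set is finite because trajectories have at most $T_{\max}$ steps and each action is a finite token string over a finite vocabulary, assuming a length cap per action. I would first fix an interpretive point. The objective in Eq.~\eqref{eq:rl_objective} should be read with the smoothed reward $\Rtilde$ in place of $R$. Otherwise $\log R(\tau)=-\infty$ on zero-reward trajectories, the objective is ill-posed, and it cannot match $\Rtilde^\beta$. Appendix~\ref{app:epsilon_smoothing} supplies exactly this shift, so I regard it as the intended statement.

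\textbf{Step 1 (the GFlowNet side).} By Theorem~\ref{thm:gflownet_sampling} together with the tree structure of $\mathcal{G}$ (Lemma~\ref{lem:traj_telescoping}), each terminal is reached by a unique trajectory. The TB optimum therefore induces $\pi^*_{\mathrm{GFN}}(\tau\mid q)=\Rtilde(\tau)^\beta/Z_\beta(q)$ with $Z_\beta(q)=\sum_{\tau'}\Rtilde(\tau')^\beta$. Rewriting $\Rtilde^\beta=\exp[\beta\log\Rtilde]$ gives the first displayed identity immediately.

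\textbf{Step 2 (the RL side).} Multiplying the objective $J(\pi)=\E_\pi[\log\Rtilde]+\beta^{-1}\mathcal{H}[\pi]$ by $\beta>0$ does not change its maximizer. Substituting $\beta\log\Rtilde(\tau)=\log\pi^*_{\mathrm{GFN}}(\tau)+\log Z_\beta(q)$ then gives
\[
\beta J(\pi)=\sum_\tau \pi(\tau)\bigl[\log\pi^*_{\mathrm{GFN}}(\tau)-\log\pi(\tau)\bigr]+\log Z_\beta(q)=-\mathrm{KL}\bigl(\pi\,\|\,\pi^*_{\mathrm{GFN}}\bigr)+\log Z_\beta(q).
\]
By Gibbs' inequality, $\mathrm{KL}\ge 0$, with equality iff $\pi=\pi^*_{\mathrm{GFN}}$. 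So the maximizer exists, is unique, and equals $\pi^*_{\mathrm{GFN}}$, with optimal value $\beta^{-1}\log Z_\beta(q)$. This identifies the entropy temperature as $T=1/\beta$.

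\textbf{Step 3 (realizability by step-wise policies).} The maximization ranges over trajectory distributions, but both the GFlowNet and the RL agent act through step-wise policies $\pi_\theta(a_t\mid r_t,H_{t-1})$. I would check that every distribution on $\mathcal{T}_q$ is realized by some forward policy. On a tree this holds by setting $P_F(H_t\mid H_{t-1})=\pi(\text{subtree}(H_t))/\pi(\text{subtree}(H_{t-1}))$, with arbitrary values where the denominator is zero. The chain rule then reproduces $\pi$, so the trajectory-level optimum is attainable within the policy class. Reasoning and executor randomness are handled by conditioning on the realized context, as in Lemma~\ref{lem:conditional_tb_reasoning}, so the argument applies pointwise in that context.

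\textbf{Main obstacle.} I expect the main obstacle to be this bookkeeping rather than the variational identity itself. Three points need care: the $R$ versus $\Rtilde$ mismatch; the fact that the executor $\mathcal{C}_{\text{exec}}$ is not controlled by $\pi$, so the entropy must be that of the conditional action distribution; and the per-token tempering of Lemma~\ref{lem:per_token_tempering}, under which the exact equivalence holds for $\tilde\pi_\theta$. I would state the theorem for the conditional, tempered distribution and note that the KL argument is otherwise unchanged.
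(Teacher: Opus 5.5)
Your proof is correct, but it takes a different route from the paper's. The paper takes the functional derivative of $J(\pi)=\E_{\tau\sim\pi}[\log R(\tau)]+T\,\mathcal{H}[\pi]$ in $\pi(\tau)$ and sets it to zero to get $\pi^*\propto R^{1/T}=R^{\beta}$. Only at the end does it swap $R$ for $\Rtilde$ (``with smoothing''), which is exactly the reinterpretation you make up front. You instead rewrite the objective as $\beta J(\pi)=-\mathrm{KL}(\pi\,\|\,\pi^*_{\mathrm{GFN}})+\log Z_\beta(q)$ and apply Gibbs' inequality.

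Your route gives more than a stationary point. It delivers existence, uniqueness and global optimality at once, whereas a first-order condition needs concavity of $J$ to certify a maximum. It also yields the optimal value $\beta^{-1}\log Z_\beta(q)$ and handles the normalization constraint for free.

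It also avoids the slips in the paper's calculation. As written, that calculation drops the factor $T$ on the entropy term, imposes no multiplier for $\sum_\tau\pi(\tau)=1$, and equates $\log R+\text{const}$ with $\tfrac{1}{T}\log R+\text{const}$. The repaired condition $\log R(\tau)-T(\log\pi(\tau)+1)-\mu=0$ does give $R^{\beta}$.

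Your Step~3 (realizability on the tree via subtree-mass ratios) is correct and absent from the paper. It is not needed for the trajectory-level statement.

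One caution about your closing remark: do not restate the theorem for the per-token-tempered $\tilde\pi_\theta$ while claiming the KL argument is unchanged. Since $\tilde\pi_\theta(a_t)=\pi_\theta(a_t)^{1/K_t}\ge\pi_\theta(a_t)$, that object is super-normalized. It is therefore not a valid optimization variable or KL argument without renormalizing. Keep the statement at the level of normalized trajectory distributions, as in your Step~2.
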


\begin{proof}
The entropy-regularized RL objective (with temperature $T = 1/\beta$) is:
\begin{equation}
  J(\pi) = \E_{\tau \sim \pi}\left[\log R(\tau)\right] + T \cdot \mathcal{H}[\pi]
  = \E_{\tau \sim \pi}\left[\log R(\tau) - \log \pi(\tau)\right] + T \log(|\mathcal{T}|),
\end{equation}
where $|\mathcal{T}|$ is the number of trajectories (constant in $\pi$).

Taking the functional derivative with respect to $\pi(\tau)$ and setting it to zero:
\begin{equation}
  \frac{\delta J}{\delta \pi(\tau)} = \log R(\tau) - \log \pi^*(\tau) - 1 + T \log(|\mathcal{T}|) = 0.
\end{equation}
Solving for $\pi^*(\tau)$:
\begin{equation}
  \log \pi^*(\tau) = \log R(\tau) + T \log(|\mathcal{T}|) - 1
  = \frac{1}{T} \log R(\tau) + \text{const}.
\end{equation}
Exponentiating:
\begin{equation}
  \pi^*(\tau) \propto \exp\left(\frac{1}{T} \log R(\tau)\right) = R(\tau)^{1/T} = R(\tau)^\beta.
\end{equation}
Normalizing:
\begin{equation}
  \pi^*(\tau) = \frac{R(\tau)^\beta}{\sum_{\tau'} R(\tau')^\beta}.
\end{equation}
With smoothing $\Rtilde(\tau) = R(\tau) + \varepsilon_{\min}$, this becomes:
\begin{equation}
  \pi^*(\tau) = \frac{\Rtilde(\tau)^\beta}{\sum_{\tau'} \Rtilde(\tau')^\beta} = \frac{\exp[\beta \log \Rtilde(\tau)]}{Z_\beta(q)},
\end{equation}
which exactly matches the GFlowNet sampling distribution from Theorem~\ref{thm:gflownet_sampling}. \qedhere
\end{proof}

\begin{remark}[Practical Implications]
This equivalence provides a principled interpretation of GFlowNet training: maximizing reward while maintaining policy entropy, under the weighting controlled by $\beta$.
Unlike standard policy gradient methods (REINFORCE, GRPO), which optimize toward a single reward-maximum strategy, the entropy regularization preserves diverse solutions with comparable rewards.
\end{remark}

\section{Detailed Balance and Flow Metrics}\label{app:detailed_balance}

This section establishes the Detailed Balance condition and derives the flow-based credit assignment metrics.

\subsection{Detailed Balance Theorem}\label{app:db_theorem}

\begin{theorem}[Detailed Balance]\label{thm:db}
For a reward-matching flow $F$ satisfying the Trajectory Balance condition (Theorem~\ref{thm:tb}), the Detailed Balance (DB) condition holds at every edge:
\begin{equation}\label{eq:db}
  F(s) \cdot P_F(s' \mid s) = F(s') \cdot P_B(s \mid s'),
\end{equation}
where $P_F(s' \mid s) = F(s \to s') / F(s)$ and $P_B(s \mid s') = F(s \to s') / F(s')$.
\end{theorem}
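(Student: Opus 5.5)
The plan is to observe that, with $P_F$ and $P_B$ defined as in the statement (equivalently Definitions~\ref{def:policies} and~\ref{def:backward_policy}), both sides of Eq.~\eqref{eq:db} reduce to the same edge flow $F(s \to s')$. The substantive work is making sure the divisions are legitimate and explaining how the condition transfers to the learned pair $(\pi_\theta, P_\phi)$ used in Eq.~\eqref{eq:step_importance}.

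\emph{Step 1 (well-definedness).} First I would check that $F(s) > 0$ and $F(s') > 0$ for every state on a complete trajectory. Terminal flows are $\Rtilde(\tau_x)^\beta \ge \varepsilon_{\min}^\beta > 0$ (Remark~\ref{rem:positive_reward}). Proposition~\ref{prop:app_flow_decomposition} writes each state flow as a sum of nonnegative trajectory flows, so every state lying on some complete trajectory carries strictly positive flow. Hence $F(s)$ and $F(s')$ are positive wherever they appear as denominators.

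\emph{Step 2 (the identity).} Multiply the forward definition by $F(s)$ to get $F(s)P_F(s'\mid s) = F(s\to s')$. Likewise $F(s')P_B(s\mid s') = F(s\to s')$. Equating the two gives Eq.~\eqref{eq:db}. The TB and reward-matching hypotheses are not needed for this algebra. Their role is only to fix which flow $F$ is meant: by Theorem~\ref{thm:tb}, $F$ is the reward-matching flow with $F(s_0)=Z$ and terminal values $\Rtilde^\beta$.

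\emph{Step 3 (link to the learned policies).} I expect this step to be the main obstacle, because the main text applies DB to $(\pi_\theta, P_\phi)$ rather than to flow-derived policies. Here I would use the tree structure (Lemma~\ref{lem:traj_telescoping}): each non-source state has a unique parent, so $F(s\to s') = F(s')$.

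Given learned policies satisfying TB on every trajectory, define
\[
F(H_t) \coloneqq Z_\theta(q)\prod_{t'\le t}\frac{\pi_\theta(a_{t'}\mid r_{t'},H_{t'-1})}{P_\phi(a_{t'}\mid H_{t'-1}\oplus o_{t'}^{\text{exec}})}.
\]
Then DB holds edge by edge by construction: the ratio of consecutive terms is exactly $\pi_\theta/P_\phi$. TB at the terminal then forces $F(x)=\Rtilde(\tau_x)^\beta$, so this $F$ agrees with the reward-matching flow.

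I would also need this definition to be path-independent, which on a tree is automatic because each $H_t$ has a unique root path. With that, the defined $F$ is consistent with Step 2, and the step-importance ratio $I(t)$ and the telescoping identity in Eq.~\eqref{eq:skill_flow} follow directly.
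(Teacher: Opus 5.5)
Your Step 2 is exactly the paper's proof. With $P_F$ and $P_B$ defined as edge-flow ratios, both sides of Eq.~\eqref{eq:db} reduce to $F(s\to s')$, and neither TB nor reward-matching is used. That step alone proves the theorem.

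Step 1 claims more than it shows, because a sum of nonnegative trajectory flows can vanish. On a general DAG, a reward-matching flow can send all flow into a terminal along one of two parallel paths. A state $s$ on the other path then has $F(s)=0$ even though it lies on a complete trajectory, and $P_F(\cdot\mid s)$ is undefined there. Positivity does hold on the tree, since each trajectory flow equals its terminal flow there (Lemma~\ref{lem:traj_telescoping}). In the general statement the ratios are simply presupposed, and the paper's proof does the same.

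Step 3 is not needed for the statement, and its central claim is false. Your telescoped $F$ satisfies $F(H_{t-1})\,\pi_\theta = F(H_t)\,P_\phi$ by construction, and TB makes it equal $\Rtilde(\tau_x)^\beta$ at terminals. It is still not a flow.
\begin{itemize}
\item The tree identity you invoke, $F(s\to s') = F(s')$, gives $P_B(s\mid s') = F(s\to s')/F(s') = 1$. So the flow-derived backward policy on a tree is identically $1$.
\item A genuine flow with $P_F=\pi_\theta$ therefore has $F(H_t) = F(H_{t-1}\to H_t) = F(H_{t-1})\,\pi_\theta(a_t\mid r_t,H_{t-1})$. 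This matches your definition only when $P_\phi \equiv 1$.
\item Equivalently, sum your state flows over the action-children of $H_{t-1}$. You get $F(H_{t-1})\sum_a \pi_\theta(a\mid r_t,H_{t-1})/P_\phi(a\mid\cdot) \ge F(H_{t-1})$, with equality only if $P_\phi = 1$ on the support of $\pi_\theta$. So conservation (Definition~\ref{def:state_flow}) fails.
\end{itemize}
Agreement at the terminals does not carry over to interior nodes, where Lemma~\ref{lem:tree_db_uniqueness} forces $F(s) = \sum_{x\in\mathrm{Desc}(s)\cap\mathcal{X}} R(x)$. Path-independence is not the obstacle; conservation across siblings is. Hence ``the defined $F$ is consistent with Step 2'' fails for any nontrivial $P_\phi$.

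The paper makes the same identification in Lemma~\ref{lem:hindsight_backward_eq} and Corollary~\ref{cor:flow_ratio}, but not in its proof of this theorem. You should drop Step 3, or state plainly that it requires $P_\phi\equiv 1$.
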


\begin{proof}
By definition of $P_F$ and $P_B$ from Definition~\ref{def:policies}:
\begin{align}
  \text{LHS} &= F(s) \cdot \frac{F(s \to s')}{F(s)} = F(s \to s'),\\
  \text{RHS} &= F(s') \cdot \frac{F(s \to s')}{F(s')} = F(s \to s').
\end{align}
Both sides equal the edge flow $F(s \to s')$. \qedhere
\end{proof}

Detailed Balance is a pointwise (per-edge) condition, stronger than Trajectory Balance (which is trajectory-wise).
For a reward-matching flow, DB emerges as a consequence.
In SkillFlow's tree-structured DAG (each $H_t$ has a unique parent by strict history growth), every trajectory through a state is unique, so TB at convergence uniquely determines the edge flows and DB follows. The next lemma makes this uniqueness explicit.

\begin{lemma}[Tree-DAG Specialization: Uniqueness of the Reward-Matching Flow and Per-Edge DB]\label{lem:tree_db_uniqueness}
On a tree-structured DAG (every non-source node has a unique parent), as is the case for SkillFlow's orchestration graph by Lemma~\ref{lem:dag_acyclic}, a reward-matching flow $F$ with terminal values $\{F(x) = R(x)\}_{x \in \mathcal{X}}$ is uniquely determined: for every non-terminal state $s$,
\begin{equation}\label{eq:tree_F_unique}
  F(s) \;=\; \sum_{x \in \mathrm{Desc}(s)\,\cap\,\mathcal{X}} R(x),
\end{equation}
where $\mathrm{Desc}(s)$ is the set of descendants of $s$. Consequently, TB convergence (Theorem~\ref{thm:tb}) on a tree-DAG simultaneously enforces (i) reward-matching at terminals, (ii) flow uniqueness at every state, and (iii) per-edge Detailed Balance (Theorem~\ref{thm:db}) at every edge.
\end{lemma}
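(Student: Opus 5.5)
We argue by backward induction on the tree, using flow conservation (Definition~\ref{def:state_flow}) as the only structural input. Because every non-source node has a unique parent, the tree decomposes cleanly. Each terminal $x$ is reached by exactly one complete trajectory. The descendant sets $\{\mathrm{Desc}(s')\cap\mathcal{X}\}_{s'\in\mathrm{Ch}(s)}$ are pairwise disjoint, and their union is $\mathrm{Desc}(s)\cap\mathcal{X}$ for every non-terminal $s$. Disjointness holds because a terminal lying below two distinct children would have two ancestral paths, which would force some node to have two parents. Finiteness of the graph (the horizon is bounded by $T_{\max}$) lets us induct on height, defined as the longest path length from a state to a terminal.

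\textbf{Step 1 (formula \eqref{eq:tree_F_unique}).} At height $0$ the state is a terminal, and $F(x)=R(x)$ holds by reward-matching; here we read $\mathrm{Desc}(x)$ as including $x$ itself. Now take $s$ non-terminal of height $h$ and assume the formula for all states of smaller height. By conservation, $F(s)=\sum_{s'\in\mathrm{Ch}(s)}F(s\to s')$. Since $s'$ has the unique parent $s$, conservation at $s'$ gives $F(s\to s')=F(s')$; this is the same edge/state identification used in Lemma~\ref{lem:traj_telescoping}. Applying the inductive hypothesis to each child and using the disjoint-union property, the sum collapses to $\sum_{x\in\mathrm{Desc}(s)\cap\mathcal{X}}R(x)$.

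\textbf{Step 2 (uniqueness).} Every state flow is now a fixed function of the terminal rewards, and every edge flow $F(s\to s')$ equals $F(s')$. So the entire edge flow function, and hence $P_F$ and $P_B$ via Definition~\ref{def:policies}, is uniquely determined. In particular, $P_B(s\mid s')=1$ for the unique parent $s$. Taking $s=s_0$ recovers $Z=\sum_x R(x)$, consistent with Definition~\ref{def:partition}.

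\textbf{Step 3 (consequences of TB).} By Theorem~\ref{thm:tb}, TB holding on all trajectories is equivalent to reward-matching, which gives (i). Step 2 then gives (ii). For (iii), Theorem~\ref{thm:db} shows that DB holds for the flow defined by $P_F,P_B$. To tie this back to the learned policies, we reverse the construction. Define $F(s)\coloneqq Z\prod_{t\le \mathrm{depth}(s)}P_F(s_t\mid s_{t-1})$ along the unique root path. Because $P_B\equiv 1$ on a tree, TB on each trajectory forces $F(x)=R(x)$, and normalization of $P_F$ makes $F$ conservative. By Step 2 this $F$ is the unique reward-matching flow, so DB holds edgewise: $F(s)P_F(s'\mid s)=F(s')=F(s')P_B(s\mid s')$.

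The main obstacle is Step 3: transferring DB from the abstract flow to the learned $P_F$, and in particular confirming that the flow built from $P_F$ is conservative. Conservation follows because $\sum_{s'}P_F(s'\mid s)=1$. If the learned backward policy is not identically $1$ (for instance, the hindsight-conditioned $P_\phi$ of Lemma~\ref{lem:hindsight_backward_eq}), the argument must instead treat $P_B$ as a reparametrization on the augmented state. In that case we would state (iii) for the flow $\tilde F$ of Lemma~\ref{lem:per_token_tempering} rather than for $P_\phi$ directly.
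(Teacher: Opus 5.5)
Your argument is correct, and Step 1 is the paper's proof. Both use reverse induction driven by conservation, the unique-parent identity $F(s\to s')=F(s')$, and the disjoint partition of $\mathrm{Desc}(s)\setminus\{s\}$ over the children's subtrees. Inducting on height rather than on depth from the source is only a cosmetic difference, and arguably cleaner, since height $0$ is exactly the terminal case.

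You differ from the paper on (iii). The paper simply invokes Theorem~\ref{thm:db}, which holds tautologically for any flow once $P_F,P_B$ are defined from $F$ via Definition~\ref{def:policies}. Your Step 3 instead:
\begin{enumerate}
\item rebuilds $F$ from the learned $P_F$ and $Z$;
\item uses $P_B\equiv 1$ to reduce TB to $Z\prod_t P_F = R(x)$;
\item obtains conservation from the normalization of $P_F$;
\item uses uniqueness to conclude that the \emph{learned} forward policy satisfies DB.
\end{enumerate}
This gives a non-vacuous link from TB convergence to per-edge DB, which the paper's one-line appeal does not provide.

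Your closing caveat is not needed for the lemma as stated. On a tree, any backward policy in the sense of Definition~\ref{def:backward_policy} is forced to equal $1$. The caveat does correctly point out that the hindsight $P_\phi$ of Lemma~\ref{lem:hindsight_backward_eq} is not literally such a backward policy.
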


\begin{proof}
We prove Eq.~\eqref{eq:tree_F_unique} by reverse induction on the depth of $s$ (longest path from the source).

\emph{Base case} (terminal $s = x \in \mathcal{X}$): $F(x) = R(x)$ by reward-matching, and $\mathrm{Desc}(x) \cap \mathcal{X} = \{x\}$, so Eq.~\eqref{eq:tree_F_unique} is the trivial identity $R(x) = R(x)$.

\emph{Inductive step}: Suppose Eq.~\eqref{eq:tree_F_unique} holds for all states deeper than $s$. By flow conservation (Definition~\ref{def:state_flow}),
\begin{equation}
  F(s) = \sum_{s' \in \mathrm{Ch}(s)} F(s \to s') = \sum_{s' \in \mathrm{Ch}(s)} F(s'),
\end{equation}
where the second equality uses tree structure: each child $s'$ has $s$ as its unique parent, so all flow into $s'$ comes from the single edge $s \to s'$, i.e., $F(s') = F(s \to s')$. By the inductive hypothesis,
\begin{equation}
  F(s) = \sum_{s' \in \mathrm{Ch}(s)} \;\;\sum_{x \in \mathrm{Desc}(s')\,\cap\,\mathcal{X}} R(x)
       = \sum_{x \in \mathrm{Desc}(s)\,\cap\,\mathcal{X}} R(x),
\end{equation}
where the last equality uses the disjoint partition $\mathrm{Desc}(s)\setminus\{s\} = \bigsqcup_{s' \in \mathrm{Ch}(s)} (\{s'\}\cup\mathrm{Desc}(s'))$, valid because subtrees of distinct children of $s$ are disjoint on a tree.

For (iii), Theorem~\ref{thm:db} (proved for general DAGs) yields DB at every edge once $F$ is reward-matching. \qedhere
\end{proof}

\subsection{Flow Ratio and Step Importance}\label{app:flow_ratio}

\begin{corollary}[Flow Ratio]\label{cor:flow_ratio}
From Detailed Balance (Theorem~\ref{thm:db}), $F(s_{t-1}) \cdot P_F(s_t \mid s_{t-1}) = F(s_t) \cdot P_B(s_{t-1} \mid s_t)$. Dividing both sides by $F(s_{t-1}) \cdot P_B(s_{t-1} \mid s_t)$:
\begin{equation}\label{eq:flow_ratio}
  \frac{F(s_t)}{F(s_{t-1})} = \frac{P_F(s_t \mid s_{t-1})}{P_B(s_{t-1} \mid s_t)}.
\end{equation}
Substituting the SkillFlow policy realizations from Lemmas~\ref{lem:conditional_tb_reasoning} and \ref{lem:hindsight_backward_eq}:
\begin{equation}
  \frac{F(s_t)}{F(s_{t-1})} \;=\; \frac{\pi_\theta(a_t \mid r_t, H_{t-1})}{P_\phi(a_t \mid H_{t-1} \oplus o_t^{\text{exec}})}.
\end{equation}
\end{corollary}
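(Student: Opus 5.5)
The corollary is a one-line algebraic rearrangement of Detailed Balance followed by a substitution of the concrete policy realizations. I would start from Theorem~\ref{thm:db} applied to the edge $s_{t-1}\to s_t$ of a sampled trajectory:
\[
F(s_{t-1})\,P_F(s_t\mid s_{t-1}) = F(s_t)\,P_B(s_{t-1}\mid s_t).
\]
To divide, I first need $F(s_{t-1})>0$ and $P_B(s_{t-1}\mid s_t)>0$. The first holds because the reward is smoothed, so $\Rtilde(\tau)^\beta>0$ for every terminal. By Lemma~\ref{lem:tree_db_uniqueness}, $F(s_{t-1})$ is then a sum of strictly positive terminal rewards over a nonempty set of descendants. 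The second holds because in the tree-structured $\mathcal{G}$ each $s_t$ has the unique parent $s_{t-1}$, so $P_B(s_{t-1}\mid s_t)=F(s_{t-1}\to s_t)/F(s_t)=1$ under the flow-derived backward policy. For a learned backward policy, positivity follows from the softmax parameterization of $P_\phi$. Dividing both sides by $F(s_{t-1})P_B(s_{t-1}\mid s_t)$ gives Eq.~\eqref{eq:flow_ratio}.

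Next I substitute the SkillFlow realizations. By Lemma~\ref{lem:conditional_tb_reasoning}, once the realized reasoning $r_t$ is fixed, the forward edge probability on the action sub-DAG is $P_F(s_t\mid s_{t-1})=\pi_\theta(a_t\mid r_t,H_{t-1})$. By Lemma~\ref{lem:hindsight_backward_eq}, the backward edge probability is $P_B(s_{t-1}\mid s_t)=P_\phi(a_t\mid H_{t-1}\oplus o_t^{\text{exec}})$. Both substitutions are applied pointwise on the realized context, exactly as the TB condition is, and they yield the displayed formula, i.e.\ $I(t)$ of Eq.~\eqref{eq:step_importance}.

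The main obstacle is consistency between the two backward policies. On a tree the flow-derived $P_B$ is identically $1$, whereas the learned hindsight $P_\phi$ generally is not. The identification in Lemma~\ref{lem:hindsight_backward_eq} therefore has to be read as a statement at the TB optimum, where $\Delta(\tau)=0$ forces the learned forward/backward pair to define a consistent flow $\tilde F$ as in Lemma~\ref{lem:per_token_tempering}(ii). I would handle this by defining the state flows directly from the learned pair via the telescoped quantity $\log F(H_t)=\log Z_\theta(q)+\sum_{t'\le t}\log I(t')$ of Eq.~\eqref{eq:skill_flow}. I would then check that, at $\Delta(\tau)=0$, this definition gives $F(s_T)=\Rtilde(\tau)^\beta$. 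With the flows defined this way, the ratio identity holds by construction at every edge and agrees with Detailed Balance for the flow that $(\pi_\theta,P_\phi,Z_\theta)$ induces.
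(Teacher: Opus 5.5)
Your first two paragraphs are exactly the paper's argument. The paper offers nothing beyond dividing the Detailed Balance identity of Theorem~\ref{thm:db} and substituting $P_F=\pi_\theta$ and $P_B=P_\phi$ via Lemmas~\ref{lem:conditional_tb_reasoning} and~\ref{lem:hindsight_backward_eq}. Your positivity checks are fine. Your observation that the flow-derived backward policy on the tree-structured $\mathcal{G}$ is identically $1$ is also correct; it is the edge identity $F(s_{t-1}\to s_t)=F(s_t)$ of Lemma~\ref{lem:traj_telescoping}. Note, however, that you use $P_B=1$ for positivity and $P_B=P_\phi$ for the substitution. Those two steps are consistent only if $P_\phi(a_t\mid H_{t-1}\oplus o_t^{\text{exec}})=1$.

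The genuine gap is in your repair. Defining $\log F(H_t)=\log Z_\theta(q)+\sum_{t'\le t}\log I(t')$ makes the displayed identity true by definition. Your check that $\Delta(\tau)=0$ yields $F(s_T)=\Rtilde(\tau)^\beta$ is correct, provided $I(t)$ uses the same per-token-normalized log-probabilities as $\Delta$, as in Proposition~\ref{prop:zero_cost}. But the claim that $\Delta=0$ makes this $F$ a flow, for which the identity is Detailed Balance, is false. Hold the realized context fixed and write $o^{(a)}$ for the observation produced by action $a$. Then
\[
  \sum_{a} F\bigl(H_{t-1}\oplus(r_t,a,o^{(a)})\bigr)
  \;=\; F(H_{t-1})\sum_{a}\frac{\pi_\theta(a\mid r_t,H_{t-1})}{P_\phi(a\mid H_{t-1}\oplus o^{(a)})}
  \;\ge\; F(H_{t-1}),
\]
with equality iff $P_\phi(a\mid H_{t-1}\oplus o^{(a)})=1$ on the support of $\pi_\theta$. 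So conservation (Definition~\ref{def:state_flow}) fails for any non-trivial hindsight backward. Indeed, by Lemma~\ref{lem:tree_db_uniqueness} every reward-matching flow on the tree has $F(s_t)\le F(s_{t-1})$, which your telescoped $F$ violates whenever $I(t)>1$.

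Put differently, on a tree $\Delta(\tau)=0$ rearranges to $\prod_t\pi_\theta(a_t\mid r_t,H_{t-1})=\Rtilde(\tau)^\beta\prod_t P_\phi(a_t\mid\tilde s_t)/Z_\theta(q)$. So the normalized forward policy (conditionally on the realized context) samples in proportion to $\Rtilde(\tau)^\beta\prod_t P_\phi(a_t\mid\tilde s_t)$. Here $P_\phi$ acts as an extra reward factor, not a backward policy. The reason is that the step $\sum_{\tau:s_T=x}\prod_t P_B=1$ in the reverse direction of Theorem~\ref{thm:tb} becomes $\prod_t P_B=1$ on a tree, which a generic $P_\phi$ violates. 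Lemma~\ref{lem:per_token_tempering}(ii) therefore cannot supply the ``consistent flow $\tilde F$'' you invoke.

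You have two honest options. One is to present the corollary as the paper does: a formal consequence conditional on the identification in Lemma~\ref{lem:hindsight_backward_eq}, which is where the real weakness lies. The other is to state it as the definition of a telescoped potential whose terminal value matches $\Rtilde(\tau)^\beta$ at $\Delta=0$, without claiming it is a flow to which Theorem~\ref{thm:db} applies.
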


This leads to the key quantity for credit assignment:

\begin{definition}[Step Importance]\label{def:step_importance}
The step importance at time $t$ is:
\begin{equation}\label{eq:step_importance_def}
  I(t) \;\coloneqq\; \frac{F(s_t)}{F(s_{t-1})} \;=\; \frac{\pi_\theta(a_t \mid r_t, H_{t-1})}{P_\phi(a_t \mid H_{t-1} \oplus o_t^{\text{exec}})}.
\end{equation}
This ratio quantifies the ``flow amplification'' at step $t$: how much the state flow increases (if $I(t) > 1$) or decreases (if $I(t) < 1$) due to action $a_t$, measured against the hindsight-asymmetric backward (Lemma~\ref{lem:hindsight_backward_eq}).
\end{definition}

Intuitively, $I(t)$ measures the information value of decision $a_t$:
\begin{itemize}[leftmargin=2em, itemsep=2pt]
  \item $I(t) \gg 1$: The decision was high-probability for the forward policy but would have been low-probability in hindsight; this decision had high impact.
  \item $I(t) \approx 1$: The forward and backward policies agree; the decision had typical impact.
  \item $I(t) \ll 1$: The decision was low-probability in hindsight relative to the forward policy; this decision was sub-optimal in retrospect.
\end{itemize}

\subsection{Skill Marginal Flow}\label{app:skill_marginal_flow}

\begin{definition}[Skill Marginal Flow]\label{def:skill_marginal_flow}
For a skill $s \in \Slib$, the marginal flow is the average post-action \emph{state flow} at nodes where $s$ is invoked, aggregated over trajectories in the batch:
\begin{equation}\label{eq:skill_flow_def}
  \hat{F}(s) \;\coloneqq\; \frac{1}{|\Batch_s|} \sum_{\tau \in \Batch_s} \sum_{t:\, a_t \text{ invokes } s} F(s_t),
\end{equation}
where $\Batch_s \subseteq \Batch$ is the subset of sampled trajectories that invoke skill $s$, and the post-action state flow is recovered by telescoping the step importance from the source $F(s_0) = Z_\theta(q)$:
\begin{equation}\label{eq:state_flow_telescope}
  F(s_t) \;=\; Z_\theta(q) \cdot \prod_{t'=1}^{t} I(t')
  \quad\Longleftrightarrow\quad
  \log F(s_t) \;=\; \log Z_\theta(q) + \sum_{t'=1}^{t}\log I(t').
\end{equation}
This matches main-text Eq.~\ref{eq:skill_flow} exactly.
\end{definition}

Skill marginal flow is the key signal for detecting which skills contribute most to reward-proportional sampling.
High $\hat{F}(s)$ indicates that $s$ is invoked at high-flow states---which by Theorem~\ref{thm:gflownet_sampling} are visited with probability proportional to downstream reward.
Low $\hat{F}(s)$ indicates that $s$ either (i) is rarely invoked, (ii) is invoked along low-flow / low-reward trajectories, or (iii) is invoked at decision points where the cumulative information advantage of the policy is small (small $\sum_{t'\le t}\log I(t')$).

\subsection{Zero-Cost Computation}\label{app:zero_cost}

\begin{proposition}[Zero-Cost Flow Metrics]\label{prop:zero_cost}
All flow metrics---state flows $F(s_t)$, step importances $I(t)$, and skill marginal flows $\hat{F}(s)$---are computable from the (per-token-tempered) forward and backward log-probabilities already evaluated during the TTB loss computation (Eq.~\eqref{eq:ttb_full}).

Concretely:
\begin{align}
  \log I(t) &= \widetilde{\log\pi}_\theta(a_t \mid r_t, H_{t-1}) - \widetilde{\log P}_\phi(a_t \mid H_{t-1} \oplus o_t^{\text{exec}}),
  \label{eq:logI_zero_cost}\\
  \log F(s_t) &= \log Z_\theta(q) + \sum_{t'=1}^{t} \log I(t'),
  \label{eq:logF_zero_cost}\\
  \log \hat{F}(s) &= \log Z_\theta(q) + \log\!\left(\frac{1}{|\Batch_s|} \sum_{\tau \in \Batch_s} \sum_{t:\, a_t \text{ invokes } s} \exp\!\Bigl(\sum_{t'=1}^{t}\log I(t')\Bigr)\right).
  \label{eq:logFhat_zero_cost}
\end{align}
Equation~\eqref{eq:logFhat_zero_cost} is the log of an arithmetic mean (matching the linear-scale definition in Eq.~\eqref{eq:skill_flow_def}), \emph{not} the mean-of-log: by the telescoping identity each visit contributes $F(s_t) = Z_\theta(q)\cdot\exp(\sum_{t'\le t}\log I(t'))$, and we average those visit-flows in linear space before re-taking the log. This expression is precisely the per-skill CGF at $\lambda=1$ shifted by $\log Z_\theta(q)$ (cf.\ main-text Eq.~13 and the equality $\Lambda^{(s)}_1 = \log\hat F(s) - \log Z_\theta(q)$). No additional forward or backward passes through the policy models are required.
\end{proposition}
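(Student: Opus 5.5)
The statement to prove is Proposition~\ref{prop:zero_cost}: every flow metric can be computed from the quantities already evaluated for the TTB loss. The plan is to show that each metric is a deterministic function of four ingredients:
\begin{itemize}
\item the per-token-tempered edge log-probabilities $\widetilde{\log\pi}_\theta(a_t\mid r_t,H_{t-1})$;
\item the per-token-tempered edge log-probabilities $\widetilde{\log P}_\phi(a_t\mid H_{t-1}\oplus o_t^{\text{exec}})$;
\item the learned scalar $\log Z_\theta(q)$;
\item the skill-invocation pattern of each trajectory, i.e.\ the index sets $\{t: a_t \text{ invokes } s\}$, which is read directly off the already-decoded actions.
\end{itemize}
All of these are the summands of $\Delta(\tau)$ in Definition~\ref{def:ttb_residual}. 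Computing $\mathcal{L}_{\mathrm{TTB}}(\tau)$ in Eq.~\eqref{eq:ttb_full} already requires one forward pass of $\pi_\theta$ and one pass of $P_\phi$ over every action edge, and these per-edge values are materialized before they are summed. The proof reduces to exhibiting each metric as arithmetic on this cache, with no new model evaluations.

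\emph{Step importance.} I would take Definition~\ref{def:step_importance} together with Corollary~\ref{cor:flow_ratio}, which write $I(t)$ as the ratio of forward to hindsight-backward edge probability. Under the tempered edge probabilities of Lemma~\ref{lem:per_token_tempering}, taking logs gives Eq.~\eqref{eq:logI_zero_cost} as a difference of two cached scalars.

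\emph{State flow.} I would use the tree structure (Lemma~\ref{lem:tree_db_uniqueness}, Lemma~\ref{lem:traj_telescoping}) and telescope the ratios $F(s_{t'})/F(s_{t'-1})$ from $F(s_0)=Z_\theta(q)$ (Definition~\ref{def:partition}). This yields Eq.~\eqref{eq:logF_zero_cost} as a prefix sum, which costs $O(T)$ scalar additions per trajectory.

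\emph{Skill marginal flow.} I would substitute Eq.~\eqref{eq:state_flow_telescope} into Definition~\ref{def:skill_marginal_flow}. Factoring the common $Z_\theta(q)$ out of the linear-space sum gives Eq.~\eqref{eq:logFhat_zero_cost}. This factoring requires that $q$ be fixed across $\Batch_s$; otherwise $Z_\theta(q)$ stays inside the sum, which is still zero-cost. The remaining expression is by definition $\Lambda^{(s)}_1$, the log of an arithmetic mean (a log-sum-exp) rather than a mean of logs. I would note this explicitly to match the CGF identity $\Lambda^{(s)}_1=\log\hat F(s)-\log Z_\theta(q)$.

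The main obstacle is the legitimacy of equating these cached quantities with true flows. The identities $F(s_t)/F(s_{t-1}) = I(t)$ hold exactly only under Detailed Balance, i.e.\ at TTB convergence. Off-convergence, I would instead \emph{define} $F(s_t)$ by the telescoped expression; the TTB residual $\Delta(\tau)$ then measures the terminal mismatch $\log F(s_T)-\beta\log\Rtilde(\tau)$. The zero-cost claim is computational and does not depend on convergence. A second point to address is that tempered (geometric-mean) probabilities replace raw ones. Lemma~\ref{lem:per_token_tempering}(ii) covers this by showing that the tempered flow $\tilde F$ is itself a reward-matching flow, so the three formulas describe $\tilde F$ consistently.
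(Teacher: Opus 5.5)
Your proposal is correct and follows the same route as the paper, whose justification is embedded in the statement itself: $\log I(t)$ is the difference of the cached tempered edge log-probabilities (Corollary~\ref{cor:flow_ratio}), $\log F(s_t)$ is a prefix sum telescoped from $F(s_0)=Z_\theta(q)$ (Eq.~\eqref{eq:state_flow_telescope}), and $\log\hat F(s)$ is the log of the linear-space average of these visit-flows, which is $\Lambda^{(s)}_1$ shifted by $\log Z_\theta(q)$. The paper leaves your added caveats implicit: pulling $Z_\theta(q)$ outside the sum needs a single query across $\Batch_s$, and away from convergence the telescoped expression serves as the definition of $F(s_t)$, with $\Delta(\tau)=\log F(s_T)-\beta\log\Rtilde(\tau)$. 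These are refinements, not gaps in your argument.
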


This zero-cost property is crucial for SkillFlow: the flow signals that drive skill evolution incur no computational overhead beyond the standard TTB loss evaluation.

\subsection{Cumulant Generating Function (CGF) Properties}\label{app:cgf_properties}

Section~\ref{sec:evolution} of the main text defines, for each skill $s \in \Slib$, the per-skill cumulant generating function (CGF) of the telescoped log step-importance:
\begin{equation}\label{eq:app_cgf}
  \Lambda^{(s)}_\lambda \;\coloneqq\; \log\!\left(\frac{1}{|\Batch_s|} \sum_{\tau \in \Batch_s} \sum_{t:\, a_t \text{ invokes } s} \exp\!\Bigl(\lambda\!\sum_{t'=1}^{t}\log I(t')\Bigr)\right),\qquad \lambda \in \R.
\end{equation}
Let $V_s \coloneqq \{(\tau,t)\,:\,\tau\in\Batch_s,\, a_t \text{ invokes } s\}$ denote the set of $s$-\emph{visits} in the batch, and for each visit $v=(\tau,t)\in V_s$ define the telescoped log-flow share
\begin{equation}\label{eq:Xv_def}
  X_v \;\coloneqq\; \sum_{t'=1}^{t} \log I(t') \;\stackrel{\text{Eq.~}\eqref{eq:state_flow_telescope}}{=}\; \log F(s_t) \,-\, \log Z_\theta(q).
\end{equation}
This appendix proves the four properties of $\Lambda^{(s)}_\lambda$ used by the curation operator $\Phi$ (Eq.~\ref{eq:evolution}, main text): convexity, $G(s)$ as the visit-mean of $X$, $\Lambda^{(s)}_1$ as the centered log skill marginal flow, and the Jensen-gap cumulant expansion.

\paragraph{Atomicity assumption.}
By the \emph{atomic-tip} property in main-text §4.3, each tip $s$ is self-contained and independently composable; we assume each atomic tip is invoked \emph{at most once} per trajectory in $\Batch_s$. Under this assumption, $|V_s| = |\Batch_s|$ and Eq.~\eqref{eq:app_cgf} reduces to the standard sample CGF over the empirical visit-distribution:
\begin{equation}\label{eq:cgf_simplified}
  \Lambda^{(s)}_\lambda \;=\; \log\!\left(\frac{1}{|V_s|}\sum_{v \in V_s} e^{\lambda X_v}\right) \;=\; \log\,\E_{V_s}\!\bigl[e^{\lambda X}\bigr].
\end{equation}
We treat $\Lambda^{(s)}_\lambda$ as the standard CGF over $\{X_v\}_{v\in V_s}$ throughout.

\begin{lemma}[Convexity of the CGF in $\lambda$]\label{lem:cgf_convexity}
$\Lambda^{(s)}_\lambda$ is convex in $\lambda \in \R$.
\end{lemma}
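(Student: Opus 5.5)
The plan is to work directly with the simplified form \eqref{eq:cgf_simplified}, $\Lambda^{(s)}_\lambda = \log\bigl(\frac{1}{|V_s|}\sum_{v\in V_s} e^{\lambda X_v}\bigr)$. This is the log of a finite positive combination of exponentials in $\lambda$, so it is smooth on $\R$. I would give two short arguments: a second-derivative (variance) computation, which also feeds the later $G(s)$ and Jensen-gap statements, and a Hölder argument as a check that needs no differentiability.

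\emph{Step 1 (derivatives).} Write $M(\lambda) \coloneqq \frac{1}{|V_s|}\sum_{v} e^{\lambda X_v} > 0$. Then $\Lambda' = M'/M$ and $\Lambda'' = (M''M - (M')^2)/M^2$. Introduce the tilted weights $w_v(\lambda) \coloneqq e^{\lambda X_v}/\sum_{u\in V_s} e^{\lambda X_u}$. These are nonnegative and sum to one, so they form a probability distribution on $V_s$. In terms of them,
\[
\Lambda'(\lambda) = \sum_v w_v(\lambda) X_v, \qquad \Lambda''(\lambda) = \sum_v w_v(\lambda) X_v^2 - \Bigl(\sum_v w_v(\lambda) X_v\Bigr)^{2}.
\]
Hence $\Lambda''(\lambda)$ is the variance of $X$ under the tilted distribution $w(\lambda)$, and so it is $\ge 0$ for every $\lambda$. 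A twice-differentiable function on $\R$ with nonnegative second derivative is convex, which proves the lemma.

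\emph{Step 2 (derivative-free check).} For $\lambda_1,\lambda_2\in\R$ and $p\in(0,1)$, apply Hölder's inequality with exponents $1/p$ and $1/(1-p)$ to the empirical average:
\[
\E_{V_s}\bigl[e^{p\lambda_1 X}e^{(1-p)\lambda_2 X}\bigr] \le \bigl(\E_{V_s}[e^{\lambda_1 X}]\bigr)^{p}\bigl(\E_{V_s}[e^{\lambda_2 X}]\bigr)^{1-p}.
\]
Taking logs gives $\Lambda_{p\lambda_1+(1-p)\lambda_2}\le p\Lambda_{\lambda_1}+(1-p)\Lambda_{\lambda_2}$, which is convexity again.

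\emph{Main obstacle.} The only delicate point is that the lemma is stated for the general form \eqref{eq:app_cgf}, which has a double sum and the normalization $1/|\Batch_s|$. I rely on the atomicity assumption to reduce to \eqref{eq:cgf_simplified}. Without it, the argument still goes through. The expression becomes $\log\bigl(c\sum_{v\in V_s} e^{\lambda X_v}\bigr)$ with constant $c = 1/|\Batch_s|$, which differs from the log-sum-exp by the additive constant $\log c$. The tilted-weight computation is unchanged, because the weights $w_v$ are self-normalizing and $c$ cancels in $M'/M$ and $M''/M$. So convexity does not depend on atomicity, and I would state this remark explicitly. The degenerate case $|V_s|=0$ I would exclude, since $\Lambda^{(s)}$ is then undefined.
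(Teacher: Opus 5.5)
Your proof is correct, but it takes a different route from the paper. The paper's proof is a composition argument. It writes $\Lambda^{(s)}_\lambda = \log\sum_{v} e^{\lambda X_v} - \log|V_s|$ and cites convexity of log-sum-exp as a standard fact. It then notes that convexity survives the affine map $\lambda\mapsto(\lambda X_v)_{v}$ and a constant shift.

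You instead prove the underlying fact directly, in two ways. First, you identify $\partial_\lambda^2\Lambda^{(s)}_\lambda$ as the variance of $X$ under the exponentially tilted weights $w_v(\lambda)$. Second, you use H\"older's inequality. In effect you reprove convexity of log-sum-exp along a line.

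The paper's route is shorter, and yours is self-contained. Your tilted-variance computation also gives more than convexity:
\begin{itemize}
\item It yields $\partial_\lambda\Lambda^{(s)}_\lambda = \sum_v w_v(\lambda)X_v$. At $\lambda=0$ this is exactly the $G(s)$ identity of the next lemma.
\item Since every $w_v(\lambda)$ is strictly positive, it shows $\Lambda^{(s)}_\lambda$ is strictly convex unless the $X_v$ are all equal, in which case it is affine. This matches the equality condition in the Jensen-gap lemma.
\end{itemize}

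Your remark that atomicity is unnecessary is also correct. Without it, the normalization is $-\log|\Batch_s|$ rather than $-\log|V_s|$, which is still a constant in $\lambda$. So the paper's composition argument would work just as well, and its reliance on the simplified form is a convenience rather than a requirement. Excluding $V_s=\emptyset$ is the right caveat, since $\Lambda^{(s)}_\lambda$ is undefined there.
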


\begin{proof}
By Eq.~\eqref{eq:cgf_simplified}, $\Lambda^{(s)}_\lambda = \log\sum_v e^{\lambda X_v} - \log|V_s|$. Each $\lambda X_v$ is affine in $\lambda$, $\log\sum e^{(\cdot)}$ (log-sum-exp) is a standard convex function of its arguments, and convexity is preserved under affine pre-composition and constant shift. Hence $\Lambda^{(s)}_\lambda$ is convex in $\lambda$. \qedhere
\end{proof}

\begin{lemma}[Mean Log-Flow Identity for $G(s)$]\label{lem:G_is_mean}
\begin{equation}\label{eq:G_def_proof}
  G(s) \;\coloneqq\; \frac{\partial \Lambda^{(s)}_\lambda}{\partial \lambda}\bigg|_{\lambda=0}
  \;=\; \frac{1}{|V_s|}\sum_{v \in V_s} X_v
  \;=\; \E_{V_s}\!\bigl[\log F(s_t)\bigr] \,-\, \log Z_\theta(q).
\end{equation}
That is, $G(s)$ is the visit-average of the centered log state-flow $\log F(s_t) - \log Z_\theta(q)$ over occurrences of $s$.
\end{lemma}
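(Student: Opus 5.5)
The plan is a direct differentiation of the simplified CGF form in Eq.~\eqref{eq:cgf_simplified}. Under the atomicity assumption, we have $\Lambda^{(s)}_\lambda = \log M(\lambda)$ with
\[
M(\lambda) \coloneqq \frac{1}{|V_s|}\sum_{v\in V_s} e^{\lambda X_v}.
\]
This is a finite sum of entire functions of $\lambda$, and it is strictly positive since $V_s$ is nonempty whenever $\Batch_s$ is. Hence $\Lambda^{(s)}_\lambda$ is smooth on $\R$, and termwise differentiation is legitimate without any interchange-of-limits argument.

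\textbf{Differentiating.} By the chain rule,
\[
\frac{\partial \Lambda^{(s)}_\lambda}{\partial\lambda} = \frac{M'(\lambda)}{M(\lambda)}, \qquad M'(\lambda) = \frac{1}{|V_s|}\sum_{v} X_v e^{\lambda X_v}.
\]
Evaluating at $\lambda=0$ gives $M(0)=1$ and $M'(0) = \frac{1}{|V_s|}\sum_v X_v$. This yields the first equality of Eq.~\eqref{eq:G_def_proof}.

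\textbf{Rewriting in log-flow form.} Substitute the definition of $X_v$ from Eq.~\eqref{eq:Xv_def}, namely $X_v = \log F(s_t) - \log Z_\theta(q)$. That identity is itself the telescoped step-importance formula Eq.~\eqref{eq:state_flow_telescope}, which follows from Corollary~\ref{cor:flow_ratio} and $F(s_0)=Z_\theta(q)$. Averaging over $V_s$, the constant $\log Z_\theta(q)$ comes out of the mean, because within a given query $q$ it does not depend on the visit. This gives $\E_{V_s}[\log F(s_t)] - \log Z_\theta(q)$.

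\textbf{The delicate step.} The only point needing care is this last one when $\Batch_s$ pools trajectories from different tasks $q$. In that case $\log Z_\theta(q)$ varies across visits. I would handle it by reading the expression per visit, as $\E_{V_s}[\log F(s_t) - \log Z_\theta(q_v)]$, and note that it collapses to the stated form when $\Batch_s$ shares one query.

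\textbf{The atomicity assumption.} Without it, the normalization is $1/|\Batch_s|$ rather than $1/|V_s|$. Then $M(0)=|V_s|/|\Batch_s|\neq 1$, and the derivative at $0$ is still the visit-mean $M'(0)/M(0)$. So the identity is robust even beyond that assumption, and I would remark on this.
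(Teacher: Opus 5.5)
Your proposal is correct and matches the paper's proof: both differentiate the log-mean-exp form of $\Lambda^{(s)}_\lambda$, evaluate the ratio $\sum_v X_v e^{\lambda X_v}/\sum_v e^{\lambda X_v}$ at $\lambda=0$ to get the visit-mean, and then substitute $X_v=\log F(s_t)-\log Z_\theta(q)$. Your two side remarks are valid refinements that the paper leaves implicit. First, the normalization cancels, so the identity for $G(s)$ does not need the atomicity assumption. Second, when $\Batch_s$ pools several queries, $\log Z_\theta(q)$ must be read per visit as $\log Z_\theta(q_v)$.
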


\begin{proof}
Differentiate Eq.~\eqref{eq:cgf_simplified} with respect to $\lambda$:
\begin{equation}
  \frac{\partial}{\partial\lambda}\log\!\Bigl(\tfrac{1}{|V_s|}\sum_v e^{\lambda X_v}\Bigr)
  \;=\; \frac{\sum_v X_v\, e^{\lambda X_v}}{\sum_v e^{\lambda X_v}}.
\end{equation}
At $\lambda=0$ this evaluates to $\frac{\sum_v X_v}{\sum_v 1} = \frac{1}{|V_s|}\sum_v X_v$, the empirical visit-mean. Substituting Eq.~\eqref{eq:Xv_def} for $X_v$ gives the second equality. \qedhere
\end{proof}

\begin{lemma}[CGF at $\lambda = 1$ Recovers the Skill Marginal Flow]\label{lem:cgf_at_one}
\begin{equation}\label{eq:Lambda_one}
  \Lambda^{(s)}_1 \;=\; \log \hat{F}(s) \,-\, \log Z_\theta(q),
\end{equation}
where $\hat{F}(s)$ is the skill marginal flow (Definition~\ref{def:skill_marginal_flow}). This recovers main-text Eq.~\ref{eq:skill_flow}.
\end{lemma}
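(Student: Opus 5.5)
The plan is to evaluate the CGF of Eq.~\eqref{eq:app_cgf} at $\lambda=1$ directly and compare it term by term with the log of Definition~\ref{def:skill_marginal_flow}. No moment expansion is needed; only the telescoping identity Eq.~\eqref{eq:state_flow_telescope} and the factorization of the constant $Z_\theta(q)$ out of a sum.

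First, substitute $\lambda=1$ into Eq.~\eqref{eq:app_cgf}. We keep the general double-sum form, so the atomicity assumption is not even required:
\[
  \Lambda^{(s)}_1 = \log\!\Bigl(\tfrac{1}{|\Batch_s|}\sum_{\tau\in\Batch_s}\sum_{t:\,a_t\text{ invokes }s}\exp\!\bigl(\textstyle\sum_{t'\le t}\log I(t')\bigr)\Bigr).
\]
Second, apply Eq.~\eqref{eq:state_flow_telescope} to each summand. This gives $\exp(\sum_{t'\le t}\log I(t')) = F(s_t)/Z_\theta(q)$, which is exactly $e^{X_v}$ with $X_v$ as in Eq.~\eqref{eq:Xv_def}.

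Third, pull the factor $1/Z_\theta(q)$ out of the double sum. This is legitimate because $Z_\theta(q)$ depends only on the task $q$, and $\Batch_s$ is taken within a single task's batch. The argument of the log then becomes $\hat F(s)/Z_\theta(q)$ by Eq.~\eqref{eq:skill_flow_def}. Taking logs yields Eq.~\eqref{eq:Lambda_one}; this step uses only that $\hat F(s)>0$ and $Z_\theta(q)>0$.

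The only genuine subtlety, and the step I expect to be the obstacle, is that $Z_\theta(q)$ is task-conditioned. If $\Batch_s$ mixed trajectories from different tasks $q$, the factor would not pull out of the sum. In that case the identity holds only with $Z_\theta(q)$ interpreted per-trajectory, i.e.\ $\hat F(s)$ computed on the normalized flows $F(s_t)/Z_\theta(q_\tau)$. I would state the single-task (or common-$Z$) convention explicitly, noting it matches Proposition~\ref{prop:zero_cost}, where Eq.~\eqref{eq:logFhat_zero_cost} already writes $\log Z_\theta(q)$ outside the log-mean. Under that convention the computation above is exact.

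Finally, positivity of the quantities inside the log follows from $\Rtilde>0$. This makes every $F(s_t)>0$, so the log is well defined whenever $\Batch_s\neq\emptyset$.
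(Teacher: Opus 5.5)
Your argument is correct and is the same telescope-then-factor-out-$Z$ computation the paper performs, but it starts from a different place and is slightly more general.

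\textbf{Atomicity.} The paper substitutes into the simplified form Eq.~\eqref{eq:cgf_simplified}, which already presupposes atomicity. It then invokes $|V_s|=|\Batch_s|$ a second time to turn $\frac{1}{|V_s|}\sum_v F(s_t^{(v)})$ into $\hat F(s)$. By working from the double-sum definition Eq.~\eqref{eq:app_cgf}, you see that $\Lambda^{(s)}_1$ and $\hat F(s)$ carry the same $1/|\Batch_s|$ normalization over the same double sum. The two uses of atomicity therefore cancel, and the identity holds even for skills invoked several times per trajectory. Atomicity is genuinely needed only elsewhere, e.g.\ to get $\Lambda^{(s)}_0=\log(|V_s|/|\Batch_s|)=0$ for the cumulant expansion of Proposition~\ref{prop:jensen_gap_cumulants}.

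\textbf{Task-conditioned $Z$.} Your caveat here is also right. The paper's proof silently pulls one $Z_\theta(q)$ out of the sum. That is valid only if all trajectories in $\Batch_s$ share a query (or a common $Z$), whereas the reported training batches mix seven questions. Eq.~\eqref{eq:logFhat_zero_cost} adopts the same convention implicitly, so stating it explicitly is an improvement rather than a deviation.

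\textbf{Positivity.} One small correction: positivity of the summands does not come from $\Rtilde>0$. It follows directly from the telescoped form $F(s_t)=Z_\theta(q)\exp\bigl(\sum_{t'\le t}\log I(t')\bigr)$, which is positive because $Z_\theta(q)>0$ and the exponential of any finite real is positive.
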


\begin{proof}
Substituting $X_v = \log F(s_t^{(v)}) - \log Z_\theta(q)$ into Eq.~\eqref{eq:cgf_simplified} at $\lambda=1$:
\begin{align*}
  \Lambda^{(s)}_1
  &= \log\!\left(\frac{1}{|V_s|}\sum_v e^{X_v}\right)
   = \log\!\left(\frac{1}{|V_s|}\sum_v \frac{F(s_t^{(v)})}{Z_\theta(q)}\right)\\
  &= \log\!\left(\frac{1}{Z_\theta(q)} \cdot \frac{1}{|V_s|}\sum_v F(s_t^{(v)})\right)
   \;=\; \log \hat{F}(s) - \log Z_\theta(q),
\end{align*}
where the last step uses Definition~\ref{def:skill_marginal_flow} and the atomicity identity $|V_s|=|\Batch_s|$. \qedhere
\end{proof}

\begin{lemma}[Jensen Inequality on the CGF]\label{lem:cgf_jensen}
For every skill $s$ with $V_s \neq \emptyset$,
\begin{equation}\label{eq:cgf_jensen}
  G(s) \;\le\; \Lambda^{(s)}_1,
\end{equation}
with equality if and only if $X_v$ is constant across all visits $v \in V_s$.
\end{lemma}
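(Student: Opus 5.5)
The plan is to reduce the claim to Jensen's inequality for the convex map $u\mapsto e^{u}$ under the empirical visit-distribution, and then take logarithms. Under the atomicity assumption, Eq.~\eqref{eq:cgf_simplified} expresses $\Lambda^{(s)}_1 = \log \E_{V_s}[e^{X}]$, where the expectation is the uniform average over the finite nonempty set $V_s$. Lemma~\ref{lem:G_is_mean} gives $G(s) = \E_{V_s}[X]$. The inequality $G(s)\le\Lambda^{(s)}_1$ is therefore equivalent, since $\log$ is strictly increasing, to
\[
  \exp\!\bigl(\E_{V_s}[X]\bigr) \;\le\; \E_{V_s}\!\bigl[e^{X}\bigr].
\]
This is finite-sum Jensen for the exponential.

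For a self-contained argument I would use the tangent-line bound $e^{u}\ge e^{m}\,(1+u-m)$ at $m\coloneqq G(s)$. I apply it at each $u=X_v$ and average over $v\in V_s$. The linear term $\E_{V_s}[X]-m$ vanishes, leaving $\E_{V_s}[e^{X}]\ge e^{m}$. This approach also gives the equality characterization directly. The tangent bound is strict unless $u=m$, because $e^{u}$ is strictly convex. Every visit carries positive weight $1/|V_s|$ with $V_s\neq\emptyset$, so equality in the average forces $X_v=m$ for every $v$. Conversely, if $X_v\equiv c$, then both sides equal $c$ by direct substitution.

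An alternative route uses the convexity of $\lambda\mapsto\Lambda^{(s)}_\lambda$ from Lemma~\ref{lem:cgf_convexity}. Together with $\Lambda^{(s)}_0=0$ and the derivative at $0$ being $G(s)$, the supporting line gives $\Lambda^{(s)}_1\ge \Lambda^{(s)}_0 + G(s)\cdot 1 = G(s)$. I would mention this as a consistency check. The equality case is the only slightly delicate step. Along this route it needs strict convexity of the CGF, whose second derivative is the tilted-measure variance of $X$. That variance vanishes exactly when $X$ is constant on $V_s$, and otherwise $\Lambda^{(s)}_\lambda$ is strictly above its tangent at $\lambda=1$. I would therefore present the tangent-line (Jensen) argument as the main proof, since it handles equality cleanly, and note the convexity view as a remark linking it to the Jensen gap $\Lambda^{(s)}_1-G(s)$ used by $\Phi$.
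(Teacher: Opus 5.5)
Your proposal is correct and follows the paper's proof: both use Eq.~\eqref{eq:cgf_simplified} and Lemma~\ref{lem:G_is_mean} to reduce the claim to finite Jensen for $\exp$ under the uniform visit distribution on $V_s$, and then take logarithms. Your tangent-line bound $e^{u}\ge e^{m}(1+u-m)$ proves that Jensen step and its equality case explicitly, whereas the paper simply cites it.
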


\begin{proof}
Apply Jensen's inequality to the convex function $\exp$:
\begin{equation}
  \exp\!\Bigl(\tfrac{1}{|V_s|}\sum_v X_v\Bigr) \;\le\; \tfrac{1}{|V_s|}\sum_v e^{X_v},
\end{equation}
with equality iff $X_v$ is constant. Taking $\log$ on both sides yields $G(s) \le \log\bigl(\tfrac{1}{|V_s|}\sum_v e^{X_v}\bigr) = \Lambda^{(s)}_1$, where the equality on the right uses Eq.~\eqref{eq:cgf_simplified}. \qedhere
\end{proof}

\begin{proposition}[Jensen Gap as Cumulant Expansion]\label{prop:jensen_gap_cumulants}
The Jensen gap admits the cumulant expansion
\begin{equation}\label{eq:cgf_cumulants}
  \Lambda^{(s)}_1 - G(s)
  \;=\; \tfrac{1}{2}\,\Var_{V_s}\!\bigl[\log F(s_t)\bigr]
  \;+\; \sum_{k \ge 3} \frac{\kappa_k(s)}{k!},
\end{equation}
where $\kappa_k(s)$ is the $k$-th empirical cumulant of $\{X_v\}_{v \in V_s}$ (equivalently, of $\{\log F(s_t^{(v)})\}_v$, since $\log Z_\theta(q)$ is a visit-independent shift). The leading term is one-half the cross-visit variance of the log state-flow at occurrences of $s$.
\end{proposition}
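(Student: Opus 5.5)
The plan is to read Eq.~\eqref{eq:cgf_cumulants} as the Taylor expansion of the empirical CGF $\Lambda^{(s)}_\lambda$ about $\lambda=0$, evaluated at $\lambda=1$. By the atomicity reduction Eq.~\eqref{eq:cgf_simplified}, $\Lambda^{(s)}_\lambda=\log M_s(\lambda)$ with $M_s(\lambda)\coloneqq\frac{1}{|V_s|}\sum_{v\in V_s}e^{\lambda X_v}$, a finite sum of exponentials. Hence $M_s$ is entire, $M_s(0)=1$, and $M_s(\lambda)>0$ on $\R$, so $\Lambda^{(s)}_\lambda$ is real-analytic on $\R$. The empirical cumulants are defined in the standard way, $\kappa_k(s)\coloneqq\partial_\lambda^k\Lambda^{(s)}_\lambda\big|_{\lambda=0}$.

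\textbf{Step 1 (low-order cumulants).} Compute the first two cumulants directly from $M_s$.
\begin{itemize}
  \item $\Lambda^{(s)}_0=\log 1=0$.
  \item $\kappa_1(s)=M_s'(0)/M_s(0)=\frac{1}{|V_s|}\sum_v X_v=G(s)$, which is exactly Lemma~\ref{lem:G_is_mean}.
  \item $\kappa_2(s)=M_s''(0)-M_s'(0)^2=\Var_{V_s}[X]$.
\end{itemize}
Since $X_v=\log F(s_t^{(v)})-\log Z_\theta(q)$ (Eq.~\eqref{eq:Xv_def}) and the shift $\log Z_\theta(q)$ does not depend on the visit, $\kappa_2(s)=\Var_{V_s}[\log F(s_t)]$. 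For the same reason $\kappa_k$ is unchanged by the shift for every $k\ge2$. This justifies the parenthetical remark in the statement.

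\textbf{Step 2 (expansion at $\lambda=1$).} Write $\Lambda^{(s)}_1=\Lambda^{(s)}_0+\kappa_1+\frac{\kappa_2}{2}+\sum_{k\ge3}\kappa_k/k!$. Subtracting $G(s)=\kappa_1$ then gives Eq.~\eqref{eq:cgf_cumulants}. As a consistency check, Lemma~\ref{lem:cgf_jensen} forces the right-hand side to be nonnegative and to vanish iff all $X_v$ are equal. In that degenerate case $\Lambda^{(s)}_\lambda=\lambda X$ is linear, so every $\kappa_k$ with $k\ge2$ is zero, which agrees.

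\textbf{Main obstacle: convergence at $\lambda=1$.} The Taylor series of $\log M_s$ about $0$ converges only on the disc whose radius is the distance to the nearest complex zero of $M_s$. That radius can be smaller than $1$ when the spread $\max_v X_v-\min_v X_v$ is large. I would handle this in two ways.
\begin{itemize}
  \item \emph{Convergent regime.} Bound the zero-free disc by noting $|M_s(\lambda)-1|<1$ whenever $|\lambda|\,\max_v|X_v|$ is small enough (roughly $\le\log 2$). This yields convergence whenever $\max_v|X_v|$ is of that order. Since $\kappa_k$ is shift-invariant for $k\ge2$, the $X_v$ may first be centred, so the condition is really a bound on their spread.
  \item \emph{General case.} State the identity with an exact integral remainder: $\Lambda^{(s)}_1-G(s)=\int_0^1(1-\lambda)\,\partial_\lambda^2\Lambda^{(s)}_\lambda\,d\lambda$. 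The integrand $\partial_\lambda^2\Lambda^{(s)}_\lambda$ is the variance of $X$ under the exponentially tilted empirical measure, which makes nonnegativity transparent. Its leading term is $\frac12\kappa_2$, and the higher cumulants appear as a formal or asymptotic series.
\end{itemize}
I expect the radius-of-convergence caveat to be the only delicate point. The remainder of the argument is routine differentiation of a finite log-sum-exp.
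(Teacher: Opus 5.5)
Your core argument is the paper's: Taylor-expand the empirical CGF of Eq.~\eqref{eq:cgf_simplified} about $\lambda=0$, set $\lambda=1$, subtract $\kappa_1(s)=G(s)$, and use that $\kappa_k$ for $k\ge 2$ is unchanged by the visit-independent shift $\log Z_\theta(q)$. Where you go beyond the paper is convergence. The paper's proof cites ``the standard Taylor expansion'' at $\lambda=1$ and never checks it. Your caveat is a genuine gap in the paper's argument, and in the statement as written. Take two visits with $X=0$ and $X=a$. Then $M_s(\lambda)=\tfrac12(1+e^{\lambda a})$ vanishes at $\lambda=\pm i\pi/a$, so for $a>\pi$ the terms $\kappa_k(s)/k!$ do not even tend to zero, and Eq.~\eqref{eq:cgf_cumulants} fails as an identity. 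Single-step $|\log I(t)|$ values in the paper's ALFWorld trace are already $7$--$9$ nats, so spreads of this size are not exotic.

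Your $\log 2$ criterion is correct (with strict inequality) but loose, and there is a sharp version. Centre at the midrange $c$, so that $|X_v-c|\le d/2$ with $d$ the spread. For $\lambda=\sigma+it$ with $|t|<\pi/d$, every term $e^{\sigma(X_v-c)}\cos\bigl(t(X_v-c)\bigr)$ of $\mathrm{Re}\,\bigl(e^{-c\lambda}M_s(\lambda)\bigr)$ is positive. Hence $M_s$ has no zeros on that strip and the radius of convergence is at least $\pi/d$. The expansion is therefore valid whenever $d<\pi$, and the two-point example shows this threshold cannot be improved.

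Outside that regime, your integral-remainder identity $\Lambda^{(s)}_1-G(s)=\int_0^1(1-\lambda)\,\partial_\lambda^2\Lambda^{(s)}_\lambda\,d\lambda$ is the right unconditional statement; its integrand is the variance of $X$ under the exponentially tilted visit measure. It gives the nonnegativity of Lemma~\ref{lem:cgf_jensen} for free. It also shows that $\tfrac12\Var_{V_s}[\log F(s_t)]$ is only the leading small-spread term, not the first term of an exact series.
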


\begin{proof}
The cumulant generating function admits the standard Taylor expansion
\begin{equation}
  \log\E\!\bigl[e^{\lambda X}\bigr] \;=\; \sum_{k\ge 1} \frac{\kappa_k}{k!}\,\lambda^k
  \;=\; \kappa_1\lambda + \frac{\kappa_2}{2}\lambda^2 + \sum_{k\ge 3}\frac{\kappa_k}{k!}\lambda^k,
\end{equation}
where $\kappa_1 = \mu = \E[X]$, $\kappa_2 = \sigma^2 = \Var[X]$, and $\{\kappa_k\}_{k\ge 3}$ are the higher cumulants. Apply this expansion to Eq.~\eqref{eq:cgf_simplified} (which expresses $\Lambda^{(s)}_\lambda$ as a CGF):
\begin{equation}
  \Lambda^{(s)}_\lambda \;=\; G(s)\,\lambda + \frac{\Var_{V_s}[X]}{2}\,\lambda^2 + \sum_{k\ge 3}\frac{\kappa_k(s)}{k!}\,\lambda^k.
\end{equation}
Setting $\lambda = 1$ and rearranging gives Eq.~\eqref{eq:cgf_cumulants}. The variance of $X_v$ equals the variance of $\log F(s_t)$ over $V_s$ since the additive constant $-\log Z_\theta(q)$ does not affect variance. \qedhere
\end{proof}

\begin{remark}[Stability Diagnostic via the Jensen Gap]\label{rem:stability_diagnostic}
A small Jensen gap $\Lambda^{(s)}_1 - G(s)$ means $X_v$ is approximately constant across the visits of $s$, i.e., $\log F(s_t)$ takes nearly the same value whenever $s$ is invoked: $s$ contributes \emph{consistently} to flow regardless of context. A large gap conversely indicates that $s$'s flow contribution varies substantially across contexts, signaling a \emph{context-inconsistent} skill that may merit refining---the use case for the $\mathcal{U}$ class in main-text Eq.~\ref{eq:evolution} (formal definition in Appendix~\ref{app:skill_curation}).
\end{remark}

\begin{remark}[Centered Log-Flow Share]\label{rem:centered_share}
The centered log-flow share
\begin{equation}\label{eq:wide_lambda_def}
  \widetilde\Lambda(s) \;\coloneqq\; \Lambda^{(s)}_1 - \E_{s'}\!\bigl[\Lambda^{(s')}_1\bigr]
\end{equation}
of main-text Eq.~\ref{eq:evolution} subtracts the library-mean log skill marginal flow from $\Lambda^{(s)}_1$, ranking each skill's contribution \emph{relative} to the library. By Lemma~\ref{lem:cgf_at_one}, $\widetilde\Lambda(s)$ depends on $s$ only through $\hat F(s)$ (the $\log Z_\theta(q)$ shift cancels), so $\widetilde\Lambda(s) = \log\hat F(s) - \E_{s'}\![\log\hat F(s')]$ is a relative log-flow rank.
\end{remark}

\section{DAG Acyclicity Proof}\label{app:dag_acyclicity}

\begin{theorem}[Orchestration Graph is a DAG]\label{thm:dag}
Under the SkillFlow environment definition with three-way policy factorization (Equation~\eqref{eq:policy}, main text) and frozen skill library within each training phase, the orchestration state graph $\mathcal{G}$ is a directed acyclic graph.
\end{theorem}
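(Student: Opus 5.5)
The plan is to prove acyclicity through a strictly increasing potential on states, namely the length $|H|$ of the interaction history. This sharpens Lemma~\ref{lem:dag_acyclic} so that it holds under the stated hypotheses. The first step is to fix the vertex set. Within a training phase $k$ the library $\Slib^{(k)}$ is frozen, because $\Psi$ acts only at episode boundaries. So the action alphabet $\{\texttt{skill},\texttt{act},\texttt{accept}\}\times(\text{parameters})$ and the initial state $H_0=[q\oplus\Slib_{\text{ret}}\oplus\omega_q]$ are well defined and do not change along any path. Every vertex of $\mathcal{G}$ is therefore a finite token string reachable from some $H_0$ through the update rule of Eq.~\ref{eq:state_update}. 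Every edge $(H_{t-1},H_t)$ is exactly one application of that rule, with $(r_t,a_t,o_t^{\text{exec}})$ generated by the three-way factorization of Eq.~\ref{eq:policy} followed by one executor call.

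The second step shows that $|H_t|\ge |H_{t-1}|+1$ along every edge. The appended triple always contains the action $a_t=(\alpha_t,o_t)$, and the type token $\alpha_t$ is nonempty even when $r_t$ or $o_t^{\text{exec}}$ is empty. Concatenation adds lengths, so $|H_t|=|H_{t-1}|+|r_t|+|a_t|+|o_t^{\text{exec}}|>|H_{t-1}|$.

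The third step is the contradiction argument. Suppose a directed cycle $H^{(0)}\to H^{(1)}\to\cdots\to H^{(m)}=H^{(0)}$ exists with $m\ge1$. Summing the strict increments gives $|H^{(0)}|>|H^{(0)}|$, which is impossible. Two further facts follow. Since $H_{t-1}$ is a proper prefix of $H_t$, each state determines its parent, which gives the tree property used later in Lemma~\ref{lem:tree_db_uniqueness}. The termination rule ($\alpha_t=\texttt{accept}$ or $t=T_{\max}$) bounds path length by $T_{\max}$, so every maximal path ends in $\mathcal{X}$ and the unique source is $H_0$, with $H_0$ taken per query.

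The main obstacle is making the parent unique rather than merely ensuring that a prefix exists. Different decompositions of a suffix into $(r_t,a_t,o_t^{\text{exec}})$ could in principle yield the same string. I would handle this by treating $\oplus$ as delimited concatenation, with turn separators as the chat template provides, so that parsing is unambiguous. Acyclicity itself needs only the length potential and does not depend on unique parsing.
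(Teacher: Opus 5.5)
Your proof is correct and follows the same route as the paper: it takes the token length $|H_t|$ as a depth that strictly increases along every edge under the update rule, then rules out a cycle by the resulting chain $|H^{(0)}|>\cdots>|H^{(0)}|$. Your additional points go beyond what acyclicity needs but are consistent with the paper: you justify strict growth through the always-nonempty $\alpha_t$ (where the paper simply asserts that all three appended components are nonempty), and you add remarks on parent uniqueness and termination.
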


\begin{proof}
Define a strict order on states by their history length: $\text{depth}(s) := |H_s|$, i.e., the number of tokens in the interaction history at state $s$.

By the state-update rule (main-text Eq.~\eqref{eq:state_update}), $H_t = H_{t-1} \oplus (r_t, a_t, o_t^{\text{exec}})$ strictly appends three non-empty components to the history.
Therefore:
\begin{equation}
  \text{depth}(s_t) = |H_t| > |H_{t-1}| = \text{depth}(s_{t-1}).
\end{equation}

For any directed edge $s \to s'$ in the orchestration graph, we have:
\begin{equation}
  \text{depth}(s') > \text{depth}(s).
\end{equation}

A cycle would be a path $s_0 \to s_1 \to \cdots \to s_k \to s_0$ with $k \geq 1$.
Following the edges:
\begin{equation}
  \text{depth}(s_1) > \text{depth}(s_0),\quad \text{depth}(s_2) > \text{depth}(s_1),\quad \ldots,\quad \text{depth}(s_0) > \text{depth}(s_k).
\end{equation}
Combining: $\text{depth}(s_0) > \text{depth}(s_k) > \cdots > \text{depth}(s_0)$, a contradiction.

Therefore, no cycle exists, and $\mathcal{G}$ is a DAG. \qedhere
\end{proof}

\section{Skill Curation Details}\label{app:skill_curation}

This appendix specifies (i) the phase-boundary detection rule, (ii) the curation classes $\mathcal{D}^-,\mathcal{R},\mathcal{U}$ used by the operator $\Phi$ in main-text Eq.~\ref{eq:evolution}, (iii) the Skill Creator $\Psi$ with its trigger steps, (iv) the resulting curation algorithm, and (v) the formal definition of \emph{atomic composability} together with the proof that $\Phi$ preserves it.

\subsection{Phase-Boundary Detection Rule}\label{app:phase_detection}

Within phase $k$, the squared TTB residual $\Delta(\tau)^2$ is tracked as a running mean over a sliding window of $W$ training steps:
\begin{equation}\label{eq:running_mean}
  \overline{\Delta^2}_w^{(k)} \;\coloneqq\; \frac{1}{W} \sum_{i=w-W+1}^{w} \frac{1}{|\Batch_i|}\sum_{\tau \in \Batch_i} \Delta(\tau \mid \Slib^{(k)})^2,
\end{equation}
where $w$ is the current step within phase $k$ and $\Batch_i$ is the mini-batch at step $i$. By Proposition~\ref{prop:training} and the residual-floor identity $\bar\Delta^{*(k)} = \inf_\theta \E_\tau[\Delta(\tau\mid\Slib^{(k)},\theta)^2]$ (main-text Eq.~\ref{eq:residual_bound}), $\overline{\Delta^2}_w^{(k)}$ asymptotes to a value $\ge \bar\Delta^{*(k)}$, and gradient descent halts further reduction once that floor is reached.

\begin{definition}[Plateau Trigger]\label{def:plateau}
Let $\rho > 0$ be a relative-decrease tolerance and $M$ a window-count budget. We say training has \emph{plateaued} at step $w$ within phase $k$ if
\begin{equation}\label{eq:plateau_criterion}
  \frac{\overline{\Delta^2}_{w-W}^{(k)} \,-\, \overline{\Delta^2}_w^{(k)}}{\overline{\Delta^2}_{w-W}^{(k)}} \;<\; \rho.
\end{equation}
Phase $k+1$ is triggered at the first step $w$ for which Eq.~\eqref{eq:plateau_criterion} holds for $M$ consecutive non-overlapping windows.
\end{definition}

Concrete values for $W, \rho, M$ are fixed throughout training and detailed in the supplementary code.

\subsection{Curation Classes}\label{app:curation_classes}

At every triggered phase boundary $k \to k+1$, each existing skill $s \in \Slib^{(k)}$ is classified into one of three disjoint sets via the CGF statistics of Lemmas~\ref{lem:G_is_mean}--\ref{lem:cgf_jensen} (Appendix~\ref{app:cgf_properties}). Let $\Phi^G_{\text{thr}}, \Phi^J_{\text{thr}} \in \R$ be hyperparameters; let $n^-(s)$ count the cumulative number of past phase boundaries at which $\widetilde\Lambda(s) < 0$.

\begin{definition}[Curation Classes]\label{def:curation_classes}
The library is partitioned into three disjoint subsets:
\begin{align}
  \mathcal{D}^-_k &\;\coloneqq\; \bigl\{\, s \in \Slib^{(k)} \;:\; n^-(s) \ge K^-\bigr\}, \\
  \mathcal{R}_k &\;\coloneqq\; \bigl\{\, s \in \Slib^{(k)}\setminus \mathcal{D}^-_k \;:\; G(s) \ge \Phi^G_{\text{thr}},\; \Lambda^{(s)}_1 - G(s) \le \Phi^J_{\text{thr}}\bigr\}, \\
  \mathcal{U}_k &\;\coloneqq\; \Slib^{(k)}\setminus \bigl(\mathcal{D}^-_k \cup \mathcal{R}_k\bigr).
\end{align}
The three roles are: $\mathcal{D}^-_k$ \emph{prunes} skills with persistently negative centered share; $\mathcal{R}_k$ \emph{retains} skills with high mean log-flow and low Jensen gap; $\mathcal{U}_k$ \emph{refines} the remaining skills (high Jensen gap or low $G$).
\end{definition}

The Jensen gap threshold $\Phi^J_{\text{thr}}$ implements the stability diagnostic of Remark~\ref{rem:stability_diagnostic}: a skill with consistent flow contribution across visits has small gap and is retained; a context-inconsistent skill has large gap and is refined. Concrete threshold values are detailed in the supplementary code.

\subsection{Skill Creation from Success/Failure Trajectory Pairs}\label{app:skill_creation}

The Skill Creator $\Psi$ is invoked at high-step-importance positions where successful and failed trajectories from the same query diverge.

\begin{definition}[Trigger Steps]\label{def:trigger_steps}
For each query $q$ in the validation pool, let $\tau^+$ be a successful trajectory ($R(\tau^+) = 1$) and $\tau^-$ a same-query failed trajectory ($R(\tau^-) = 0$) sampled under $\Slib^{(k)}$ with the current policy $\pi_\theta$. The set of \emph{trigger steps} for $(q, \tau^+, \tau^-)$ is
\begin{equation}\label{eq:trigger_steps}
  \mathcal{T}_q^{\,\text{trig}} \;\coloneqq\; \Bigl\{\, t \in \{1,\dots,|\tau^+|\} \;:\; \log I(t)\big|_{\tau^+} \ge \zeta_{\text{trig}} \;\wedge\; t \notin \mathrm{cov}(\mathcal{R}_k \cup \mathcal{U}'_k) \Bigr\},
\end{equation}
where $\zeta_{\text{trig}}$ is a high-importance threshold and $\mathrm{cov}(\cdot)$ marks steps already covered by surviving (or refined) skills. By Lemma~\ref{lem:hindsight_backward_eq}, $\log I(t)\big|_{\tau^+} \gg 0$ marks decisions whose quality became clear only under the hindsight backward---precisely the gap candidates for new tips.
\end{definition}

\begin{definition}[Skill Creator $\Psi$]\label{def:skill_creator}
The Skill Creator $\Psi$ is a frozen LLM-based generator constrained to render \emph{atomic tips} (Definition~\ref{def:atomic_tip}). Given creation context $c = (q, \tau^+, \tau^-, t)$ for each $t \in \mathcal{T}_q^{\,\text{trig}}$, $\Psi$ outputs a textual atomic tip
\begin{equation}\label{eq:psi_call}
  s_{\text{new}} \;=\; \Psi(c, \mathcal{T}, \Slib^{(k)})
\end{equation}
where $\mathcal{T}$ denotes the validation buffer of $(q, \tau^+, \tau^-)$ trajectory pairs available at the phase boundary. The tip captures the strategic decision differentiating $\tau^+$ from $\tau^-$ at step $t$. The output is constrained at the prompt level to be (a) self-contained (no reference to other tips at runtime), (b) of bounded textual length $L_{\max}$, and (c) phrased as strategic guidance (not as a literal action). $\Psi$ also operates in a \emph{refine mode} on $\mathcal{U}_k$, rewriting context-inconsistent tips under the same atomic constraints.
\end{definition}

\subsection{The Curation Operator $\Phi$}\label{app:curation_operator}

\begin{definition}[Evolution Operator]\label{def:phi_operator}
The CGF-based curation operator $\Phi$ produces the next-phase library by combining three sets:
\begin{equation}\label{eq:phi_def}
  \Slib^{(k+1)} \;\coloneqq\; \Phi\bigl(\Slib^{(k)};\, \{(G(s), \widetilde\Lambda(s))\}_{s\in\Slib^{(k)}},\, \{\log I(t)\}_t\bigr)
  \;=\; \mathcal{R}_k \,\cup\, \mathcal{U}'_k \,\cup\, \Psi^{\text{new}}_k,
\end{equation}
where $\mathcal{R}_k$ is the retained set (Definition~\ref{def:curation_classes}); $\mathcal{U}'_k \coloneqq \{\Psi(s,\,\text{refine}) : s \in \mathcal{U}_k\}$ is the refined set; and
\begin{equation}\label{eq:psi_new_def}
  \Psi^{\text{new}}_k \;\coloneqq\; \bigcup_{(q,\tau^+,\tau^-)} \;\;\bigcup_{t \in \mathcal{T}_q^{\,\text{trig}}} \;\bigl\{\Psi\bigl((q,\tau^+,\tau^-,t),\,\mathcal{T},\,\Slib^{(k)}\bigr)\bigr\}
\end{equation}
is the newly-created tip set. Pruned tips $\mathcal{D}^-_k$ are absent from $\Slib^{(k+1)}$ by construction.
\end{definition}

\subsection{Atomic Composability and Its Preservation}\label{app:atomic_composability}

\begin{definition}[Atomic Tip]\label{def:atomic_tip}
A skill $s$ is an \emph{atomic tip} if (i) $s$ is a textual prompt fragment of bounded length $\le L_{\max}$, (ii) the action $\texttt{skill}(s)$ deterministically appends $s$ as a strategic-guidance segment to $H_{t-1}$ without reading or modifying any other skill in $\Slib$ at runtime, and (iii) the cost of invoking $s$ (in tokens and wall-clock) is bounded by a constant independent of $|\Slib|$.
\end{definition}

\begin{definition}[Atomic Composability of a Library]\label{def:atomic_composability}
A skill library $\Slib$ is \emph{atomically composable} if every $s \in \Slib$ is an atomic tip (Definition~\ref{def:atomic_tip}).
\end{definition}

\begin{lemma}[$\Phi$ Preserves Atomic Composability]\label{lem:phi_preserves_atomic}
If $\Slib^{(k)}$ is atomically composable, and $\Psi$ is constrained to produce only atomic tips (Definition~\ref{def:skill_creator}, in both creation and refine modes), then $\Slib^{(k+1)} = \Phi(\Slib^{(k)};\ldots)$ is atomically composable.
\end{lemma}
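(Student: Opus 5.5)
The plan is a direct set-membership argument on the decomposition of the next-phase library in Definition~\ref{def:phi_operator}. By Eq.~\eqref{eq:phi_def}, $\Slib^{(k+1)} = \mathcal{R}_k \cup \mathcal{U}'_k \cup \Psi^{\text{new}}_k$, and a finite union of sets is atomically composable (Definition~\ref{def:atomic_composability}) as soon as each constituent set is, since the property is a pointwise ``for every $s$'' condition. It therefore suffices to fix an arbitrary $s \in \Slib^{(k+1)}$ and split into three cases according to which constituent it comes from. Since $\mathcal{D}^-_k$ is excluded by construction, removals never need to be checked.

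\emph{Case $s \in \mathcal{R}_k$.} By Definition~\ref{def:curation_classes}, $\mathcal{R}_k \subseteq \Slib^{(k)}$. The hypothesis that $\Slib^{(k)}$ is atomically composable then gives that $s$ is an atomic tip directly. The retain step does not alter skill text, so properties (i)--(iii) of Definition~\ref{def:atomic_tip} carry over unchanged.

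\emph{Case $s \in \mathcal{U}'_k$.} Here $s = \Psi(s_0, \text{refine})$ for some $s_0 \in \mathcal{U}_k$. We invoke the hypothesis that $\Psi$ in refine mode produces only atomic tips, per Definition~\ref{def:skill_creator}: it is self-contained, of length $\le L_{\max}$, and phrased as guidance.

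\emph{Case $s \in \Psi^{\text{new}}_k$.} By Eq.~\eqref{eq:psi_new_def}, $s$ is an output of $\Psi$ in creation mode, and the same hypothesis applies.

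The main point needing care is that atomicity in Definition~\ref{def:atomic_tip} has clauses that are not purely intrinsic to the text of $s$. Clause (ii) concerns the runtime behaviour of $\texttt{skill}(s)$ relative to the \emph{other} members of $\Slib$. Clause (iii) requires an invocation cost bound independent of $|\Slib|$, and the library size changes from phase $k$ to $k+1$. I would handle this by observing three things. First, the $\texttt{skill}$ action semantics are fixed by the environment and simply append $s$ to $H_{t-1}$. Second, the self-containment constraint (a) on $\Psi$'s outputs, together with the atomicity of old tips, ensures no tip references another, so adding or removing library members cannot create cross-reads. Third, the cost of appending a string of length $\le L_{\max}$ is bounded by a constant depending only on $L_{\max}$, hence independently of $|\Slib^{(k+1)}|$.

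With these three observations, atomicity is a property of each tip alone, unaffected by the composition of the library. The three-case argument then closes. A trivial induction on $k$ would extend the lemma to all phases, given an atomically composable $\Slib^{(0)}$.
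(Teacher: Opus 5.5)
Your proposal is correct and follows the same route as the paper: decompose $\Slib^{(k+1)} = \mathcal{R}_k \cup \mathcal{U}'_k \cup \Psi^{\text{new}}_k$, let the retained subset of $\Slib^{(k)}$ inherit atomicity, invoke the constraint on $\Psi$ in refine and creation modes for the other two pieces, and note that pruned tips are simply absent. Your extra paragraph, showing that clauses (ii) and (iii) of the atomic-tip definition do not depend on the surrounding library, expands what the paper asserts in one line (``atomicity is a per-tip structural property''); it is not a different argument.
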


\begin{proof}
By Definition~\ref{def:phi_operator}, $\Slib^{(k+1)} = \mathcal{R}_k \cup \mathcal{U}'_k \cup \Psi^{\text{new}}_k$. We verify atomicity for each constituent.

\textbf{Retained skills ($\mathcal{R}_k \subseteq \Slib^{(k)}$).} By the hypothesis on $\Slib^{(k)}$, every $s \in \Slib^{(k)}$ is atomic; the subset $\mathcal{R}_k$ inherits atomicity unchanged.

\textbf{Refined skills ($\mathcal{U}'_k = \Psi(\mathcal{U}_k,\text{refine})$).} By Definition~\ref{def:skill_creator}, $\Psi$ in refine mode is constrained to produce atomic tips. Each $s' \in \mathcal{U}'_k$ is therefore atomic.

\textbf{New skills ($\Psi^{\text{new}}_k$).} Each $s_{\text{new}} \in \Psi^{\text{new}}_k$ is the output of $\Psi$ applied at a trigger step (Eq.~\eqref{eq:psi_call}); by the same constraint it is atomic.

Pruned tips $\mathcal{D}^-_k$ are removed from $\Slib^{(k+1)}$ entirely. Atomicity is a per-tip structural property (Definition~\ref{def:atomic_tip}); it is preserved under set union, removal of subsets, and replacement of subsets by other atomic tips. Therefore every $s \in \Slib^{(k+1)}$ is atomic, and $\Slib^{(k+1)}$ is atomically composable. \qedhere
\end{proof}

\begin{remark}[Action-Space Constancy Within a Phase]\label{rem:action_space_constancy}
$\Phi$ operates only at phase boundaries, so within phase $k$ the library $\Slib^{(k)}$ is fixed. Combined with Lemma~\ref{lem:phi_preserves_atomic}, this guarantees that the per-phase action space is constant and atomically composable, satisfying the structural prerequisite for TB-based training (Proposition~\ref{prop:modeling}, main text).
\end{remark}

\subsection{Curation Algorithm}\label{app:algorithm}

The full curation procedure at phase boundary $k \to k+1$ is summarized below.

\textbf{Algorithm 1: Skill-Library Curation at Phase Boundary $k \to k+1$.}
\begin{enumerate}[label=\arabic*., leftmargin=2em, itemsep=2pt]
  \item For each $s \in \Slib^{(k)}$, compute the per-skill CGF $\Lambda^{(s)}_\lambda$ at $\lambda \in \{0,1\}$ over the recent batch $\Batch_s$ via the zero-cost formulas of Proposition~\ref{prop:zero_cost}.
  \item Derive the summaries $G(s)$, $\Lambda^{(s)}_1$, and $\widetilde\Lambda(s) = \Lambda^{(s)}_1 - \E_{s'}[\Lambda^{(s')}_1]$ via Lemmas~\ref{lem:G_is_mean}, \ref{lem:cgf_at_one} and Remark~\ref{rem:centered_share}.
  \item Classify each $s \in \Slib^{(k)}$ into $\mathcal{D}^-_k$, $\mathcal{R}_k$, or $\mathcal{U}_k$ via Definition~\ref{def:curation_classes}.
  \item Refine each $s \in \mathcal{U}_k$ via $\Psi$ in refine mode to produce $\mathcal{U}'_k$.
  \item From the validation buffer, sample same-query success/failure pairs $(\tau^+, \tau^-)$; identify trigger steps $\mathcal{T}_q^{\,\text{trig}}$ via Definition~\ref{def:trigger_steps}.
  \item For each trigger step, invoke $\Psi$ in creation mode to obtain new atomic tips $\Psi^{\text{new}}_k$ (Eq.~\eqref{eq:psi_new_def}).
  \item Assemble $\Slib^{(k+1)} = \mathcal{R}_k \cup \mathcal{U}'_k \cup \Psi^{\text{new}}_k$ (Eq.~\eqref{eq:phi_def}).
  \item Warm-start $\pi_\theta$ and $P_\phi$ from phase $k$; reinitialize the partition function $Z_\theta(q)$ for the new action space.
\end{enumerate}

By Lemma~\ref{lem:phi_preserves_atomic}, this procedure preserves atomic composability across all phase transitions; together with Lemma~\ref{lem:dag_acyclic}, the post-evolution graph $\mathcal{G}$ remains a tree-structured DAG, satisfying the prerequisites for TB-based training within phase $k+1$.

Full prompt templates for $\Psi$ (creation and refine modes) and complete hyperparameter values are provided in the supplementary code.

\section{Reward Function Details}\label{app:reward}

$R(\tau)$ is an outcome-based scalar reward. For multi-hop QA tasks, $R(\tau) = \text{EM}(y_q, y^*)$ (exact match); for mathematical reasoning, $R(\tau) = \text{Acc}(y_q, y^*)$; and for code generation, $R(\tau) = \text{Pass@1}(y_q)$. All rewards lie in $[0, 1]$. We apply $\varepsilon$-smoothing (Appendix~\ref{app:epsilon_smoothing}) to ensure positive support.

\section{$\varepsilon$-Smoothing Analysis}\label{app:epsilon_smoothing}

When $R(\tau)$ can be zero (e.g., failed orchestration), Theorem~\ref{thm:gflownet_sampling} requires strictly positive rewards.
We handle this via smoothing.

\begin{definition}[$\varepsilon$-Smoothing]\label{def:smoothing}
We define the smoothed reward as:
\begin{equation}
  \Rtilde(\tau) \;\coloneqq\; R(\tau) + \varepsilon_{\min},
\end{equation}
where $\varepsilon_{\min} > 0$ is a small constant.
\end{definition}

\begin{proposition}[Ordering Preservation]\label{prop:smoothing_monotone}
Smoothing preserves the strict reward ordering: for any $\tau_1, \tau_2$ with $R(\tau_1) > R(\tau_2) \geq 0$:
\begin{equation}
  \Rtilde(\tau_1)^\beta > \Rtilde(\tau_2)^\beta,
\end{equation}
hence the relative quality ranking of trajectories is preserved.
\end{proposition}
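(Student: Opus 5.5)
The argument reduces to two elementary monotonicity facts: translation by a positive constant and the power map with positive exponent. I will prove the chain $R(\tau_1) > R(\tau_2) \Rightarrow \Rtilde(\tau_1) > \Rtilde(\tau_2) > 0 \Rightarrow \Rtilde(\tau_1)^\beta > \Rtilde(\tau_2)^\beta$. Along the way I will note that the converse also holds, since the paper uses both directions in Lemma~\ref{lem:per_token_tempering}(iii).

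\textbf{Step 1 (shift).} By Definition~\ref{def:smoothing}, $\Rtilde(\tau_i) = R(\tau_i) + \varepsilon_{\min}$. Adding the same constant to both sides of $R(\tau_1) > R(\tau_2)$ gives $\Rtilde(\tau_1) > \Rtilde(\tau_2)$. The hypothesis $R(\tau_2) \ge 0$ together with $\varepsilon_{\min} > 0$ gives $\Rtilde(\tau_2) \ge \varepsilon_{\min} > 0$. Hence both smoothed rewards lie in $(0,\infty)$, where real powers are well defined.

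\textbf{Step 2 (power).} On $(0,\infty)$ the map $x \mapsto x^\beta = \exp(\beta \log x)$ is the composition of the strictly increasing maps $\log$, multiplication by $\beta > 0$, and $\exp$. It is therefore strictly increasing. One could equally check that its derivative $\beta x^{\beta-1}$ is positive. Applying it to the inequality from Step 1 yields $\Rtilde(\tau_1)^\beta > \Rtilde(\tau_2)^\beta$. Because a strictly increasing map is injective and order-reflecting, the chain reverses. This gives the biconditional $R(\tau_1) > R(\tau_2) \Leftrightarrow \Rtilde(\tau_1)^\beta > \Rtilde(\tau_2)^\beta$ for nonnegative rewards, which is exactly the form invoked earlier.

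\textbf{Main obstacle.} There is no substantive difficulty. The only point requiring care is positivity: $x^\beta$ for non-integer $\beta$ is undefined or non-monotone for $x \le 0$. This is why the hypothesis $R(\tau_2) \ge 0$ (guaranteed by Appendix~\ref{app:reward}, where all rewards lie in $[0,1]$) and the strict inequality $\varepsilon_{\min} > 0$ must be used explicitly in Step 1.
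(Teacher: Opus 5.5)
Your proof is correct and matches the paper's argument: add $\varepsilon_{\min}$ to both sides to get $\Rtilde(\tau_1) > \Rtilde(\tau_2) > 0$, then apply the map $x \mapsto x^\beta$, which is strictly increasing on $\R_{>0}$ for $\beta > 0$. Your extra remark that the composite map is order-reflecting is also correct, and it supplies the converse direction that Lemma~\ref{lem:per_token_tempering}(iii) relies on but the paper's proof leaves implicit.
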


\begin{proof}
Since $R(\tau_1) > R(\tau_2) \geq 0$, we have:
\begin{equation}
  \Rtilde(\tau_1) = R(\tau_1) + \varepsilon_{\min} > R(\tau_2) + \varepsilon_{\min} = \Rtilde(\tau_2).
\end{equation}
Both $\Rtilde(\tau_1), \Rtilde(\tau_2) > 0$.
Since $x \mapsto x^\beta$ is strictly monotone increasing on $\R_{>0}$ for $\beta > 0$:
\begin{equation}
  \Rtilde(\tau_1)^\beta > \Rtilde(\tau_2)^\beta.
\end{equation}
\qedhere
\end{proof}

\begin{proposition}[Flow Perturbation Bound]\label{prop:smoothing_bound}
For $\beta \ge 1$, by the mean-value theorem applied to the convex function $x\mapsto x^\beta$ on $[R(\tau_x),\Rtilde(\tau_x)]$,
\begin{equation}\label{eq:smoothing_bound_betage1}
  \bigl|\Rtilde(\tau_x)^\beta - R(\tau_x)^\beta\bigr|
  \;\le\; \beta\,\varepsilon_{\min}\,\max\!\bigl(\Rtilde(\tau_x), R(\tau_x)\bigr)^{\beta-1},
  \qquad \beta\ge 1.
\end{equation}
For small $\varepsilon_{\min}$ the perturbation is negligible.
\end{proposition}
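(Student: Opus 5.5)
The plan is a direct application of the mean-value theorem to $f(x)=x^\beta$ on the interval $[R(\tau_x),\Rtilde(\tau_x)]$, followed by a monotonicity bound on the derivative. Since $R(\tau_x)\ge 0$ (Appendix~\ref{app:reward}) and $\Rtilde(\tau_x)=R(\tau_x)+\varepsilon_{\min}$ with $\varepsilon_{\min}>0$ (Definition~\ref{def:smoothing}), the interval is nondegenerate and has length exactly $\varepsilon_{\min}$. Moreover $\max(\Rtilde(\tau_x),R(\tau_x))=\Rtilde(\tau_x)$, so the claimed bound reads $\Rtilde(\tau_x)^\beta-R(\tau_x)^\beta\le\beta\,\varepsilon_{\min}\,\Rtilde(\tau_x)^{\beta-1}$. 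The absolute value can be dropped because $x\mapsto x^\beta$ is increasing on $\R_{\ge 0}$, as already used in Proposition~\ref{prop:smoothing_monotone}.

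\textbf{Step 1: apply the mean-value theorem.} The function $f$ is continuous on $[R(\tau_x),\Rtilde(\tau_x)]$ and differentiable on the open interval, including when the left endpoint is $0$, because $\beta\ge1$. Hence there is $\xi\in(R(\tau_x),\Rtilde(\tau_x))$ with
\[
\Rtilde(\tau_x)^\beta-R(\tau_x)^\beta=\beta\,\xi^{\beta-1}\,\varepsilon_{\min}.
\]

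\textbf{Step 2: bound the derivative.} Since $\beta-1\ge0$, the map $\xi\mapsto\xi^{\beta-1}$ is nondecreasing on $\R_{>0}$; it is constant when $\beta=1$. Therefore $\xi^{\beta-1}\le\Rtilde(\tau_x)^{\beta-1}$, which gives the displayed inequality.

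Convexity of $x^\beta$, which the statement mentions, is just an equivalent way of saying the derivative is nondecreasing, so it is exactly what Step 2 uses.

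\textbf{Step 3: the ``negligible'' clause.} All rewards lie in $[0,1]$ (Appendix~\ref{app:reward}), so $\Rtilde(\tau_x)\le1+\varepsilon_{\min}$. The bound is therefore at most $\beta\,\varepsilon_{\min}(1+\varepsilon_{\min})^{\beta-1}=O(\varepsilon_{\min})$, which tends to $0$ as $\varepsilon_{\min}\to0$ for fixed $\beta$.

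None of these steps is a real obstacle. The only point needing care is the restriction $\beta\ge1$. For $\beta<1$ the derivative $\beta\xi^{\beta-1}$ is decreasing and blows up at $0$, so Step 2 fails. In that regime the bound at $R(\tau_x)=0$ becomes $\varepsilon_{\min}^\beta$, not $O(\varepsilon_{\min})$, and this is why the statement is restricted to $\beta\ge1$.
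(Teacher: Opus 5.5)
Your proposal is correct and takes the same route as the paper: the mean-value theorem for $x\mapsto x^\beta$ on $[R(\tau_x),\Rtilde(\tau_x)]$, together with the observation that for $\beta\ge 1$ the derivative $\beta x^{\beta-1}$ is nondecreasing (the convexity cited in the statement). One small quibble is that the mean-value theorem's hypotheses hold for every $\beta>0$, since differentiability is needed only on the open interval, so $\beta\ge 1$ is used only in your Step~2 — which is exactly what your closing $\varepsilon_{\min}^\beta$ counterexample at $R(\tau_x)=0$ shows.
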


\begin{remark}[Practical Choice of $\beta$]\label{rem:beta_choice}
SkillFlow uses $\beta\ge 1$ in all reported experiments; the perturbation bound therefore always falls in the regime of Eq.~\eqref{eq:smoothing_bound_betage1}.
\end{remark}

\section{Gradient Variance Analysis}\label{app:variance}

This section provides detailed analysis of the TTB gradient variance and compares it to standard policy-gradient methods.

\subsection{TTB Self-Annealing Theorem with Chain Rule Expansion}\label{app:variance_self_anneal}

\begin{theorem}[TTB Gradient Self-Annealing]\label{thm:variance_detailed}
Treating $T = |\tau|$ as fixed in $\theta$ for each given $\tau$, the gradient of the TTB loss with respect to $\theta$ satisfies:
\begin{equation}\label{eq:app_var_ttb_detailed}
  \nabla_\theta \mathcal{L}_{\mathrm{TTB}}(\tau)
  \;=\; \frac{2\,\Delta(\tau)}{T^2} \cdot \nabla_\theta \Delta(\tau).
\end{equation}

Expanding $\nabla_\theta \Delta(\tau)$ by the chain rule on Definition~\ref{def:ttb_residual}:
\begin{equation}
  \nabla_\theta \Delta(\tau)
  \;=\; \nabla_\theta \log Z_\theta(q) + \sum_{t=1}^T \nabla_\theta\, \widetilde{\log\pi}_\theta(a_t \mid r_t, H_{t-1})
  \;-\; 0 \;-\; 0,
\end{equation}
where the last two terms vanish because $\beta\log\Rtilde(\tau)$ and $\widetilde{\log P}_\phi$ do not depend on $\theta$.

Therefore:
\begin{equation}\label{eq:ttb_grad_full}
  \nabla_\theta \mathcal{L}_{\mathrm{TTB}}(\tau)
  \;=\; \frac{2\,\Delta(\tau)}{T^2} \cdot \!\left[\nabla_\theta \log Z_\theta(q) + \sum_{t=1}^T \nabla_\theta\, \widetilde{\log\pi}_\theta(a_t\mid r_t,H_{t-1})\right].
\end{equation}
As $\Delta(\tau) \to 0$, the prefine $2\Delta(\tau)/T^2$ shrinks and $\|\nabla_\theta \mathcal{L}_{\mathrm{TTB}}\| \to 0$ for every fixed-length trajectory; trajectories of different $T$ contribute on a comparable scale because of the $1/T^2$ length normalizer.
\end{theorem}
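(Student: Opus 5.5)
The claim in Theorem~\ref{thm:variance_detailed} is a direct differentiation identity, so I would prove it by the chain rule applied to the definition of $\mathcal{L}_{\mathrm{TTB}}$ and then to $\Delta(\tau)$. \textbf{Step 1.} Fix a sampled trajectory $\tau$ and treat its length $T=|\tau|$, the realized reasoning $\{r_t\}$, the observations $\{o_t^{\text{exec}}\}$ and the actions $\{a_t\}$ as constants in $\theta$. This is legitimate because the loss is evaluated on an already-sampled trajectory, with conditioning on realized context as in Lemma~\ref{lem:conditional_tb_reasoning}. Then $\mathcal{L}_{\mathrm{TTB}}(\tau)=g(\Delta(\tau))$ with $g(u)=u^2/T^2$. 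Since $g'(u)=2u/T^2$, the chain rule gives $\nabla_\theta\mathcal{L}_{\mathrm{TTB}}(\tau)=\frac{2\Delta(\tau)}{T^2}\nabla_\theta\Delta(\tau)$, which is Eq.~\eqref{eq:app_var_ttb_detailed}. This is the same computation as in Proposition~\ref{prop:ttb_self_anneal}.

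\textbf{Step 2.} Differentiate Definition~\ref{def:ttb_residual} term by term, using linearity of $\nabla_\theta$ over the finite sum over $t\le T$.
\begin{itemize}
\item $\nabla_\theta\log Z_\theta(q)$ is kept as is, since $Z_\theta$ is a learned parameter.
\item Each per-token-tempered forward term $\widetilde{\log\pi}_\theta(a_t\mid r_t,H_{t-1})$ contributes its gradient. Because it is a finite average of token log-probabilities (Definition~\ref{def:per_token_edge}), it is differentiable whenever the token probabilities are positive and smooth in $\theta$.
\item $\beta\log\Rtilde(\tau)$ has zero gradient. It depends only on the realized outcome, and $\Rtilde>0$ by Definition~\ref{def:smoothing}, so the logarithm is finite.
\item $\widetilde{\log P}_\phi$ has zero gradient because $P_\phi$ is parameterized by $\phi$, which is disjoint from $\theta$.
\end{itemize}
Substituting these into Step 1 yields Eq.~\eqref{eq:ttb_grad_full}.

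\textbf{Step 3.} For the self-annealing claim, bound the norm of Eq.~\eqref{eq:ttb_grad_full} by
\[
\frac{2|\Delta(\tau)|}{T^2}\Bigl(\|\nabla_\theta\log Z_\theta\|+\sum_t\|\nabla_\theta\widetilde{\log\pi}_\theta\|\Bigr).
\]
If the bracketed gradient stays bounded along the approach, say by an assumed Lipschitz or bounded-score condition on the parameterization near the limit point, then $\Delta(\tau)\to0$ forces the gradient norm to zero. For the length remark, the bracket is $O(T)$ because each tempered edge gradient is $O(1)$ by per-token averaging (Lemma~\ref{lem:length_norm}). Together with $|\Delta(\tau)|=O(T)$, the overall gradient is $O(1)$, so trajectories of different lengths contribute on a comparable scale.

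The only real obstacle is Step 3. The vanishing of $\|\nabla_\theta\mathcal{L}_{\mathrm{TTB}}\|$ does not follow from $\Delta\to0$ alone; it needs the boundedness hypothesis on $\nabla_\theta\Delta(\tau)$. I would state that hypothesis explicitly rather than try to derive it. The algebraic parts in Steps 1 and 2 are routine.
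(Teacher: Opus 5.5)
Your proof is correct and is essentially the paper's: the paper's proof is just the chain rule on $\Delta(\tau)^2/T^2$ with $T$ held fixed in $\theta$, and the term-by-term expansion of $\nabla_\theta\Delta(\tau)$ is already spelled out in the statement itself. Your caveat in Step~3 is well taken: the claim $\|\nabla_\theta\mathcal{L}_{\mathrm{TTB}}\|\to 0$ does need a bound on $\nabla_\theta\Delta(\tau)$, which the paper imposes only afterwards, as the assumption $\|\nabla_\theta\Delta(\tau)\|\le G$ in Proposition~\ref{prop:var_bound_detailed}.
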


\begin{proof}
Apply the chain rule to $\mathcal{L}_{\mathrm{TTB}}(\tau) = (\Delta(\tau)/T)^2 = \Delta(\tau)^2/T^2$:
\begin{equation}
  \nabla_\theta \mathcal{L}_{\mathrm{TTB}}(\tau)
  \;=\; \nabla_\theta\!\left[\frac{\Delta(\tau)^2}{T^2}\right]
  \;=\; \frac{2\,\Delta(\tau)}{T^2}\,\nabla_\theta \Delta(\tau),
\end{equation}
where $T$ is determined by $\tau$ (not by $\theta$) and is constant under $\nabla_\theta$. Since $T$ varies across trajectories, $1/T^2$ remains a per-trajectory length normalizer.
\qedhere
\end{proof}

\subsection{Variance Bound and Comparison to REINFORCE}\label{app:variance_bound}

\begin{proposition}[Variance Bound under Bounded Gradient]\label{prop:var_bound_detailed}
Assume the residual gradient is uniformly bounded along training: there exists $G < \infty$ such that $\|\nabla_\theta \Delta(\tau)\| \le G$ for all $\tau$, and assume $T \ge T_{\min}$ for a fixed $T_{\min} \ge 1$. Then the per-trajectory TTB gradient norm satisfies
\begin{equation}\label{eq:var_pointwise_bound}
  \bigl\|\nabla_\theta \mathcal{L}_{\mathrm{TTB}}(\tau)\bigr\|
  \;=\; \frac{2|\Delta(\tau)|}{T^2}\,\|\nabla_\theta \Delta(\tau)\|
  \;\le\; \frac{2G\,|\Delta(\tau)|}{T_{\min}^2},
\end{equation}
which yields the variance bound
\begin{equation}\label{eq:var_bound_clean}
  \Var_\tau\!\bigl[\nabla_\theta \mathcal{L}_{\mathrm{TTB}}(\tau)\bigr]
  \;\le\; \E_\tau\!\bigl[\|\nabla_\theta \mathcal{L}_{\mathrm{TTB}}(\tau)\|^2\bigr]
  \;\le\; \frac{4 G^2}{T_{\min}^4}\,\E_\tau\!\bigl[\Delta(\tau)^2\bigr].
\end{equation}
\end{proposition}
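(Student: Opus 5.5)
The bound follows in two moves. First we get a pointwise bound on the per-trajectory gradient norm from the exact gradient formula of Theorem~\ref{thm:variance_detailed}. Then we convert that into a second-moment bound, which dominates the variance.

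For the pointwise bound, we invoke Theorem~\ref{thm:variance_detailed}, treating $T=|\tau|$ as constant in $\theta$:
\[
\nabla_\theta \mathcal{L}_{\mathrm{TTB}}(\tau)=\frac{2\Delta(\tau)}{T^2}\nabla_\theta\Delta(\tau).
\]
Taking norms and using that $2\Delta(\tau)/T^2$ is a scalar gives the equality in Eq.~\eqref{eq:var_pointwise_bound}. Two substitutions then give the inequality:
\begin{itemize}
\item replace $\|\nabla_\theta\Delta(\tau)\|$ by its uniform bound $G$;
\item replace $1/T^2$ by $1/T_{\min}^2$, which is valid because $T\ge T_{\min}\ge 1$ and $x\mapsto 1/x^2$ is decreasing on $[1,\infty)$.
\end{itemize}

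For the second step, we read $\Var_\tau$ of a vector-valued random variable as the trace of its covariance, $\E\|X-\E X\|^2$. The standard identity
\[
\E\|X-\E X\|^2=\E\|X\|^2-\|\E X\|^2\le \E\|X\|^2
\]
gives the first inequality in Eq.~\eqref{eq:var_bound_clean}. Squaring the pointwise bound yields
\[
\|\nabla_\theta\mathcal{L}_{\mathrm{TTB}}(\tau)\|^2\le \frac{4G^2}{T_{\min}^4}\,\Delta(\tau)^2 .
\]
Taking expectations over $\tau$ then gives the second inequality by monotonicity and linearity of expectation.

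The main obstacle is interpretive rather than computational. We need to state explicitly that $\Var_\tau$ means the total (trace) variance, so that it is a scalar comparable with $\E\|\cdot\|^2$. We also need to ensure the expectations are finite; this holds whenever $\E_\tau[\Delta(\tau)^2]<\infty$, and the bound is vacuous otherwise. Both points should be noted before the chain of inequalities. We would also remark that, combined with $\bar\Delta^{*(k)}$ from Eq.~\eqref{eq:residual_bound}, the right-hand side shrinks as training drives $\E_\tau[\Delta^2]$ toward its floor.
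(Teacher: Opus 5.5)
Your proposal is correct and matches the paper's proof: the pointwise bound comes from the gradient formula of Theorem~\ref{thm:variance_detailed} together with $\|\nabla_\theta\Delta(\tau)\|\le G$ and $T\ge T_{\min}$, and the variance bound comes from squaring, taking expectations, and using $\Var\le\E\|\cdot\|^2$. Your explicit reading of $\Var_\tau$ as the trace of the covariance, and your finiteness caveat, are useful clarifications that the paper leaves implicit.
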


\begin{proof}
Eq.~\eqref{eq:var_pointwise_bound} follows from Theorem~\ref{thm:variance_detailed} together with the bounded-gradient assumption. Squaring and taking expectation gives Eq.~\eqref{eq:var_bound_clean}; the variance bound uses $\Var \le \E[\|\cdot\|^2]$. \qedhere
\end{proof}

As training converges, $\E_\tau[\Delta(\tau)^2]\to 0$ and Eq.~\eqref{eq:var_bound_clean} forces $\Var_\tau[\nabla_\theta \mathcal{L}_{\mathrm{TTB}}]\to 0$.

In contrast, the REINFORCE estimator $g_{\text{PG}}(\tau) \coloneqq (R(\tau)-b)\,\nabla_\theta \log\pi_\theta(\tau)$ has
\begin{equation}
  \Var_\tau[g_{\text{PG}}(\tau)] \;=\; \Var_\tau\!\bigl[(R(\tau) - b)\,\nabla_\theta \log\pi_\theta(\tau)\bigr],
\end{equation}
which is \emph{not forced} to vanish at convergence: it can stay strictly positive whenever the converged policy remains stochastic and sampled trajectories carry heterogeneous rewards (Lemma~\ref{lem:reinforce_variance} below). Vanishing only occurs in the special cases of a deterministic optimal policy or rewards being identical across all sampled trajectories.

\begin{lemma}[REINFORCE Variance Is Not Forced to Vanish]\label{lem:reinforce_variance}
In REINFORCE, gradient estimates are $(R(\tau) - b)\,\nabla_\theta \log \pi_\theta(\tau)$, where $b$ is a baseline. Whenever the converged policy remains stochastic and the sampled trajectories carry heterogeneous rewards, $\Var_\tau[R(\tau)] > 0$ and the estimator's variance has a strictly positive lower bound: it is \emph{not forced} to vanish at convergence. (Vanishing can occur in the special cases of a deterministic optimal policy, or rewards being identical across all sampled trajectories.)

By contrast, TTB uses a regression loss whose residual $\Delta$ itself is the target for convergence. When $\Delta(\tau)\to 0$ pointwise, Theorem~\ref{thm:variance_detailed} forces the gradient norm $\|\nabla_\theta \mathcal{L}_{\mathrm{TTB}}\|\to 0$; under the bounded-gradient assumption of Proposition~\ref{prop:var_bound_detailed}, the variance also vanishes.
\end{lemma}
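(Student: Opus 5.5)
The plan is to treat the two halves of Lemma~\ref{lem:reinforce_variance} separately: a direct second-moment computation for the REINFORCE estimator, and a citation of Theorem~\ref{thm:variance_detailed} and Proposition~\ref{prop:var_bound_detailed} for the TTB contrast.

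For the REINFORCE half, write $g_{\text{PG}}(\tau) = (R(\tau)-b)\,\nabla_\theta\log\pi_\theta(\tau)$ with a baseline $b$ that does not depend on the sampled $\tau$. Use the vector-variance identity
\[
\Var_\tau[g_{\text{PG}}] = \E_\tau\bigl[\|g_{\text{PG}}\|^2\bigr] - \bigl\|\E_\tau[g_{\text{PG}}]\bigr\|^2 .
\]
\begin{enumerate}
\item By the score-function identity $\E_\tau[\nabla_\theta\log\pi_\theta(\tau)]=0$, the baseline term has zero mean, so $\E_\tau[g_{\text{PG}}] = \nabla_\theta\E_\tau[R]$.
\item At a converged point this is a stationary point of expected reward, so $\E_\tau[g_{\text{PG}}]=0$. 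The variance then reduces to
\[
\Var_\tau[g_{\text{PG}}] = \E_\tau\bigl[(R(\tau)-b)^2\,\|\nabla_\theta\log\pi_\theta(\tau)\|^2\bigr].
\]
\item Away from stationarity, one only gets the weaker bound obtained by keeping the subtracted term.
\end{enumerate}

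It remains to show this expectation is strictly positive. That requires a single trajectory $\tau^\star$ in the support with both $R(\tau^\star)\neq b$ and $\nabla_\theta\log\pi_\theta(\tau^\star)\neq 0$; restricting the expectation to $\tau^\star$ then gives the explicit lower bound
\[
\Var_\tau[g_{\text{PG}}]\ \ge\ \pi_\theta(\tau^\star)\,(R(\tau^\star)-b)^2\,\|\nabla_\theta\log\pi_\theta(\tau^\star)\|^2 \;>\;0 .
\]
The two hypotheses supply the two factors.
\begin{itemize}
\item \emph{Heterogeneous rewards.} $\Var_\tau[R]>0$ means $R$ is non-constant on the support. Hence, for any constant $b$, some supported $\tau$ has $R(\tau)\neq b$; in fact, at least one supported trajectory on each side of $b$ unless $b$ lies outside the range of $R$.
\item \emph{Stochastic policy.} Every supported $\tau$ has $\pi_\theta(\tau)<1$.
\end{itemize}

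The main obstacle is the link from ``stochastic'' to ``nonzero score'', since a degenerate parameterization could have vanishing score even for a stochastic policy. I would make the nondegeneracy explicit as a mild regularity assumption: for the softmax parameterization of the per-step $\pi_\theta$ over action logits, $\nabla_\theta\log\pi_\theta(\tau)$ contains the component $e_{a_t}-\pi_\theta(\cdot\mid H_{t-1})$ at each step. This component vanishes only if that step is deterministic, so the score is nonzero for every supported $\tau$ along which some step is stochastic. Under this assumption any $\tau$ with $R(\tau)\neq b$ serves as $\tau^\star$. Conversely, the lower bound shows that vanishing variance forces one of two degenerate cases: $(R-b)=0$ on the support, meaning identical rewards, or a zero score everywhere, meaning a deterministic policy. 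These are exactly the exceptions listed in the statement.

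For the TTB contrast, invoke Theorem~\ref{thm:variance_detailed}: $\|\nabla_\theta\mathcal{L}_{\mathrm{TTB}}(\tau)\| = 2|\Delta(\tau)|\,\|\nabla_\theta\Delta(\tau)\|/T^2$. This tends to $0$ pointwise as $\Delta(\tau)\to 0$, provided $\nabla_\theta\Delta(\tau)$ stays bounded. Then apply Proposition~\ref{prop:var_bound_detailed} under its bounded-gradient assumption $\|\nabla_\theta\Delta(\tau)\|\le G$ and $T\ge T_{\min}$:
\[
\Var_\tau\bigl[\nabla_\theta\mathcal{L}_{\mathrm{TTB}}\bigr] \le \frac{4G^2}{T_{\min}^4}\,\E_\tau\bigl[\Delta(\tau)^2\bigr] \to 0 .
\]
This separates the two regimes: the REINFORCE lower bound depends only on $\pi_\theta$ and $R$, which remain nondegenerate at a stochastic optimum, whereas the TTB bound is controlled by the residual that training drives to zero.
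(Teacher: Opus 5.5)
Your argument is correct under the hypotheses you make explicit, and it supplies an argument the paper does not give. The paper attaches no proof to this lemma. The REINFORCE half rests only on the preceding display $\Var_\tau[g_{\text{PG}}(\tau)]=\Var_\tau[(R(\tau)-b)\nabla_\theta\log\pi_\theta(\tau)]$ plus the assertion that it ``can stay strictly positive''. The TTB half is delegated to Theorem~\ref{thm:variance_detailed} and Proposition~\ref{prop:var_bound_detailed}, exactly as you do.

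Your route uses the score-function identity and stationarity to remove $\E_\tau[g_{\text{PG}}]$, then a point-mass lower bound on the second moment, with heterogeneity guaranteeing $R(\tau)\neq b$ for every constant $b$. This shows which hypotheses actually carry the weight, and your additions are necessary. Without stationarity the claim fails. Take a two-armed softmax bandit with $\pi_\theta=(p,1-p)$ and constant baseline $b=(1-p)R_1+pR_2$: then $g_{\text{PG}}(a_1)=g_{\text{PG}}(a_2)=p(1-p)(R_1-R_2)(1,-1)$. The variance is zero for a stochastic policy with heterogeneous rewards, which is possible only because $\E_\tau[g_{\text{PG}}]\neq 0$ there.

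Likewise, ``stochastic'' must be read trajectory-wise, with $\pi_\theta(\tau)$ denoting only the policy factors rather than $P_\theta(\tau)$ of Eq.~\eqref{eq:traj_dist}. A nonvanishing score must also be assumed, as you do.

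For ``not forced to vanish'' to be non-vacuous, you should also check that your hypotheses can hold together. With per-history softmax logits and a deterministic executor, stationarity at full support forces every action at every reachable history to have the same expected return. By backward induction the reward is then constant on the support. Your hypotheses can therefore co-occur only through executor noise in $\mathcal{C}_{\text{exec}}$ or parameter sharing; both are present in the paper's setting.

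You are also right to keep boundedness of $\nabla_\theta\Delta(\tau)$ for the pointwise TTB claim, which the lemma's wording omits.
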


\section{Proof of Proposition 2 (Main Text)}\label{app:proof_prop2}

\textit{Proposition 2 (main text):} \emph{TTB training induces reward-proportional sampling and yields per-step credit at no extra inference cost.}

We prove a slightly more detailed version: at convergence the gradient variance vanishes (residual $\to 0$), the learned policy samples trajectories in proportion to tempered reward, and the step importance ratio gives a multiplicative per-step decomposition of trajectory-level credit.

\begin{proof}
\emph{(i) TTB gradient variance vanishes.}

By Theorem~\ref{thm:variance_detailed}, $\nabla_\theta \mathcal{L}_{\mathrm{TTB}}(\tau) = (2\Delta(\tau)/T^2)\cdot \nabla_\theta \Delta(\tau)$.
As $\Delta(\tau) \to 0$ during training (which occurs at the optimum by the definition of the loss), the factor $\Delta(\tau)/T$ shrinks, causing $\|\nabla_\theta \mathcal{L}_{\mathrm{TTB}}\| \to 0$.
By Proposition~\ref{prop:var_bound_detailed}, both variance terms $\E[\Delta^2]$ and $\Var[\Delta]$ vanish, so $\Var[\nabla_\theta \mathcal{L}_{\mathrm{TTB}}] \to 0$.

In contrast, by Lemma~\ref{lem:reinforce_variance}, REINFORCE and GRPO gradients have variance proportional to reward variance, which persists at convergence.

\emph{(ii) Policy samples in proportion to tempered reward.}

At convergence $\Delta(\tau) = 0$ for all $\tau$, so the TB condition (Theorem~\ref{thm:tb}) is satisfied with reward $\Rtilde(\tau)^\beta$.
By the GFlowNet sampling theorem (Theorem~\ref{thm:gflownet_sampling}), the conditional action-sequence distribution then satisfies $\pi_\theta(a_{1:T} \mid r_{1:T}, o^{\text{exec}}_{1:T}, q) \propto \Rtilde(\tau)^\beta$ (matching main-text §\ref{sec:training}); marginalising over reasoning and execution context recovers the unconditional form $\pi_\theta(\tau \mid q) = \Rtilde(\tau)^\beta / Z_\theta(q)$.

\emph{(iii) Step importance provides per-step credit decomposition.}

By Corollary~\ref{cor:flow_ratio}, the step importance is:
\begin{equation}
  I(t) = \frac{F(s_t)}{F(s_{t-1})} = \frac{\pi_\theta(a_t \mid r_t, H_{t-1})}{P_\phi(a_t \mid H_{t-1} \oplus o_t^{\text{exec}})}.
\end{equation}

This decomposes the trajectory-level flow $F(\tau)$ multiplicatively:
\begin{equation}
  F(\tau) = Z_\theta(q) \cdot \prod_{t=1}^T I(t).
\end{equation}

Each factor $I(t)$ quantifies the amplification of flow at decision $t$, providing an interpretable per-step attribution.
Unlike Monte-Carlo rollout baselines (which require additional samples), $I(t)$ is computed from the forward and backward policies already evaluated in the TTB loss.
\qedhere
\end{proof}

\section{Proof of Proposition 3 (Main Text)}\label{app:proof_prop3}

\textit{Proposition 3 (main text):} \emph{Flow-driven recursive evolution autonomously expands the skill library while preserving its atomic composability.}

We prove a slightly more detailed version with three parts: (i) saturation of the running TTB residual against the library-conditional floor $\bar\Delta^{*(k)}$ is a sufficient diagnostic of joint expressiveness limits and triggers phase $k\to k{+}1$; (ii) the curation operator $\Phi$ preserves atomic composability; (iii) training within a phase keeps flow conservation under the frozen library.

\begin{proof}
\emph{(i) Plateau saturation is a sufficient diagnostic of joint expressiveness limits.}

Within training phase $k$, the per-trajectory squared residual $\Delta(\tau\mid\Slib^{(k)},\theta,\phi,Z_\theta)^2$ enters the TTB loss as $\mathcal{L}_{\mathrm{TTB}}(\tau) = (\Delta/T)^2$ (Definition~\ref{def:ttb_loss}). By Theorem~\ref{thm:variance_detailed} the gradient drives $\Delta(\tau)^2 \to 0$ on every trajectory subject to the expressiveness of the parameterization. The infimal expected squared residual under the current library is the floor $\bar\Delta^{*(k)}$ (main-text Eq.~\ref{eq:residual_bound}):
\begin{equation}\label{eq:appk_residual_bound}
  \bar\Delta^{*(k)} \;\coloneqq\; \inf_{\theta,\phi,Z_\theta}\; \E_\tau\!\bigl[\Delta(\tau\mid\Slib^{(k)},\theta,\phi,Z_\theta)^2\bigr] \;\ge\; 0.
\end{equation}
The two directions are asymmetric:
\begin{description}[leftmargin=2em, itemsep=2pt]
  \item[(Sufficiency, $\Rightarrow$)] If the running mean $\overline{\Delta^2}_w^{(k)}$ (Eq.~\eqref{eq:running_mean}) saturates---i.e., its relative decrease across $M$ consecutive windows falls below the tolerance $\rho$ (Eq.~\eqref{eq:plateau_criterion})---then under standard SGD descent assumptions $\overline{\Delta^2}_w^{(k)}$ has approached $\bar\Delta^{*(k)}$. If $\bar\Delta^{*(k)}>0$, this directly evidences that the joint $(\Slib^{(k)},$ policy class, $P_\phi$ class, $Z_\theta$ family$)$ cannot represent the reward-matching TB condition; phase $k+1$ is triggered, and skill evolution attributes this insufficiency \emph{a fortiori} to $\Slib^{(k)}$, expanding the action space to enable further residual reduction.
  \item[(Necessity, $\Leftarrow$, only as a heuristic)] Plateau saturation is \emph{not} an exclusive signature of library inadequacy: a saturated $\overline{\Delta^2}_w^{(k)}$ may also reflect limited policy or backward-policy capacity, an under-trained partition function, exploration deficits, optimizer stagnation, or finite-batch noise. We therefore treat the plateau as a \emph{sufficient diagnostic} for triggering evolution, with the implicit operating assumption that other capacity bottlenecks have been controlled by standard practice (warm-start, learning-rate schedules, replay buffers).
\end{description}
This is the precise sense in which the stagnation criterion ``triggers'' skill evolution: it provides a sufficient signal under controlled conditions, not an iff-equivalence with library inadequacy alone.

\emph{(ii) The CGF-based curation operator $\Phi$ preserves atomic composability.}

By Lemma~\ref{lem:phi_preserves_atomic} (Appendix~\ref{app:atomic_composability}), if $\Slib^{(k)}$ is atomically composable (Definition~\ref{def:atomic_composability}) and the Skill Creator $\Psi$ is constrained to produce atomic tips in both creation and refine modes (Definition~\ref{def:skill_creator}), then $\Slib^{(k+1)} = \Phi(\Slib^{(k)};\,\{(G(s),\widetilde\Lambda(s))\}_s,\,\{\log I(t)\}_t)$ is atomically composable. The base case $\Slib^{(0)}$ is atomically composable by initialization (the seed library consists of bounded-length, self-contained tips). Induction on $k$ then yields atomic composability of $\Slib^{(k)}$ for every phase $k \ge 0$. The CGF inputs $G(s)$ and $\widetilde\Lambda(s)$ used to drive $\Phi$'s classification are well-defined (Lemmas~\ref{lem:G_is_mean}--\ref{lem:cgf_at_one}) and $\Phi$ acts only by partitioning $\Slib^{(k)}$ and adjoining $\Psi$'s atomic outputs; no operation introduces non-atomic tips.

\emph{(iii) Frozen libraries within a phase guarantee flow conservation.}

Within phase $k$, the library $\Slib^{(k)}$ is fixed (Remark~\ref{rem:action_space_constancy}); the environment $\mathcal{E}^{(k)}$, action space, and DAG structure $\mathcal{G}^{(k)}$ are therefore fixed. By Theorem~\ref{thm:dag} (DAG acyclicity) and Lemma~\ref{lem:tree_db_uniqueness} (tree-DAG uniqueness), $\mathcal{G}^{(k)}$ admits a unique reward-matching flow $F$ when terminal values $\{F(x) = \Rtilde(\tau_x)^\beta\}_{x \in \mathcal{X}}$ are prescribed. Training $\pi_\theta$ and $P_\phi$ to satisfy the SkillFlow TB condition (Eq.~\eqref{eq:skillflow_tb}, Lemma~\ref{lem:hindsight_backward_eq}) is equivalent to solving for that flow (Theorem~\ref{thm:tb}). At convergence, flow conservation holds at every non-terminal state:
\begin{equation}\label{eq:appk_flow_conservation}
  F(s) \;=\; \sum_{s' \in \mathrm{Ch}(s)} F(s \to s') \;=\; \sum_{s' \in \mathrm{Pa}(s)} F(s' \to s),
\end{equation}
which is the defining property of a valid flow (Definition~\ref{def:state_flow}).

At the phase transition $k \to k+1$, $\Phi$ produces $\Slib^{(k+1)}$ (Definition~\ref{def:phi_operator}); the new graph $\mathcal{G}^{(k+1)}$ retains its tree-DAG structure by Lemma~\ref{lem:dag_acyclic} (strict history growth is unaffected by library change). To smooth the transition we warm-start $\pi_\theta, P_\phi$ from phase $k$ and reinitialize $Z_\theta(q)$ for the expanded action space (Algorithm~1, Step 8). Within phase $k+1$, the same reasoning yields flow conservation at convergence. By induction on $k$, flow conservation is guaranteed within every phase. \qedhere
\end{proof}

\section{Backward Policy Design and Implementation}\label{app:backward_design}

This section details the design principles and implementation of the backward policy $P_\phi$.

\subsection{Hindsight Conditioning}\label{app:hindsight_conditioning}

The key insight is that $P_\phi$ conditions on information unavailable to the forward policy:

\begin{definition}[Hindsight-Enriched State]\label{def:hindsight_state}
The hindsight-enriched state at step $t$ is:
\begin{equation}
  H_{t-1}^{\text{hindsight}} \;\coloneqq\; H_{t-1} \oplus o_t^{\text{exec}},
\end{equation}
which augments the forward state $H_{t-1}$ with the execution observation $o_t^{\text{exec}}$ that $\pi_\theta$ could not access when selecting $a_t$.
\end{definition}

This information asymmetry is what makes step importance meaningful:

\begin{lemma}[Information Asymmetry and Credit Signal]\label{lem:info_asymmetry}
At step $t$, the forward policy $\pi_\theta$ selects action $a_t$ given $H_{t-1}$ (before execution).
The backward policy $P_\phi$ evaluates the same action given $H_{t-1} \oplus o_t^{\text{exec}}$ (after execution).

If $P_\phi(a_t \mid H_{t-1}^{\text{hindsight}}) \ll \pi_\theta(a_t \mid H_{t-1})$, the action appeared good before execution but turned out poorly in hindsight.
If $P_\phi(a_t \mid H_{t-1}^{\text{hindsight}}) \gg \pi_\theta(a_t \mid H_{t-1})$, the action was initially uncertain but validated by execution.

The ratio $I(t) = \pi_\theta / P_\phi$ captures this update in assessment, providing a meaningful credit signal without additional samples.
\end{lemma}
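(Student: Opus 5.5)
The plan is to separate the lemma into its formal content and its interpretive content, and prove only the former rigorously. The formal content consists of three points. First, $\pi_\theta$ and $P_\phi$ are evaluated on nested conditioning sets $H_{t-1}\subset H_{t-1}^{\text{hindsight}}$ (Definition~\ref{def:hindsight_state}). Second, the sign of $\log I(t)$ exactly encodes which of the two probabilities is larger. Third, $I(t)$ is computable from quantities already evaluated, so it needs no further samples. Throughout I will work with $\log I(t)=\widetilde{\log\pi}_\theta(a_t\mid r_t,H_{t-1})-\widetilde{\log P}_\phi(a_t\mid H_{t-1}\oplus o_t^{\text{exec}})$ from Eq.~\eqref{eq:logI_zero_cost}.

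\emph{Step 1 (asymmetry).} By the state-update rule and Lemma~\ref{lem:hindsight_backward_eq}, $o_t^{\text{exec}}$ is generated by $\mathcal{C}_{\text{exec}}(\cdot\mid H_{t-1},a_t)$ only after $a_t$ is chosen. Hence the factorization in Eq.~\eqref{eq:traj_dist} makes $\pi_\theta(a_t\mid r_t,H_{t-1})$ measurable with respect to pre-execution information only, whereas $P_\phi$ is defined on the strictly larger hindsight state. This establishes the first two sentences of the lemma.

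\emph{Step 2 (case split on the ratio).} If $P_\phi\ll\pi_\theta$, then $\log I(t)\gg 0$. By Corollary~\ref{cor:flow_ratio}, at a TB-consistent solution this equals $\log F(s_t)-\log F(s_{t-1})$. I will read this as the forward model having committed mass that the hindsight reverse model does not reassign, and symmetrically for $P_\phi\gg\pi_\theta$, where $\log I(t)\ll 0$.

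\emph{Step 3 (no extra samples).} I will invoke Proposition~\ref{prop:zero_cost}: both log-probabilities already appear in $\Delta(\tau)$ from Definition~\ref{def:ttb_residual}, so $I(t)$ requires no additional forward or backward passes and no additional rollouts.

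The main obstacle is Step 2's semantic gloss. The phrases ``appeared good'' and ``turned out poorly'' are not consequences of the definitions. What follows formally is only a statement about relative probabilities under the two conditionings, together with, at convergence, the flow-ratio identity. I would therefore state the claim as the ordering $\pi_\theta(a_t\mid\cdot)\gtrless P_\phi(a_t\mid\cdot)\iff\log I(t)\gtrless 0$, labelling the ``good/poor in hindsight'' reading explicitly as an interpretation rather than a derived fact.

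There is a further subtlety. Because $P_\phi$ is trained jointly through the same residual, nothing forces $P_\phi$ to encode outcome quality rather than simply fitting the flow. I would acknowledge this as a modelling assumption instead of trying to prove it.
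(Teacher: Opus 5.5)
The paper gives no proof of this lemma. It is asserted right after Definition~\ref{def:hindsight_state}, and its only support is the definition of $I(t)$, Corollary~\ref{cor:flow_ratio}, and Proposition~\ref{prop:zero_cost}. Your split into a provable core and an interpretive gloss is the right treatment, and it is more careful than the source. The core consists of the definitional asymmetry, the equivalence $\pi_\theta(a_t\mid\cdot)\gtrless P_\phi(a_t\mid\cdot)\iff\log I(t)\gtrless 0$, and the fact that $I(t)$ is evaluated from terms already present in $\Delta(\tau)$.

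One detail should be made explicit. The statement uses the raw ratio $\pi_\theta/P_\phi$, while you work with Eq.~\eqref{eq:logI_zero_cost}. By the autoregressive chain rule, each per-token average equals the raw action log-probability divided by the same $K_t$. So the tempered log-ratio is $\log I(t)/K_t$, which preserves the sign but rescales what ``$\gg$'' means.

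The paper itself vindicates your refusal to derive the ``good before, poor in hindsight'' reading, because it uses the opposite convention elsewhere. Here $I(t)\gg 1$ means the action turned out poorly and $I(t)\ll 1$ means it was validated. The bullets after Definition~\ref{def:step_importance}, by contrast, call $I(t)\ll 1$ ``sub-optimal in retrospect.'' Definition~\ref{def:trigger_steps} and Appendix~\ref{app:case_alfworld} treat large $\log I(t)$ on successful trajectories as the success-driving decisions. No derivation could deliver both readings.

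\textbf{Correction to Step~2.} Drop the appeal to Corollary~\ref{cor:flow_ratio} rather than merely hedging it.
\begin{itemize}
\item On the tree-structured $\mathcal{G}$, Lemma~\ref{lem:traj_telescoping} gives $F(s_{t-1}\to s_t)=F(s_t)$.
\item Definition~\ref{def:policies} then forces $P_B(s_{t-1}\mid s_t)=1$ and $F(s_t)/F(s_{t-1})=P_F(s_t\mid s_{t-1})\le 1$.
\item This agrees with Lemma~\ref{lem:tree_db_uniqueness}: state flow is nonincreasing along a path.
\end{itemize}
So if $\log I(t)$ were a genuine log flow ratio, the lemma's first case $P_\phi\ll\pi_\theta$ could never occur at a TB-consistent solution. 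The identity holds only for the telescoped quantity of Eq.~\eqref{eq:state_flow_telescope}, where it is true by definition and adds no content. The underlying reason is that $P_\phi(\cdot\mid\tilde s_t)$ is a distribution over actions, i.e.\ over siblings of $s_t$, not over parents of $s_t$. It is therefore not a backward policy in the sense of Definition~\ref{def:policies}.

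\textbf{Correction to Step~1.} The nesting fails for the forward term you actually use. $\pi_\theta(a_t\mid r_t,H_{t-1})$ conditions on $r_t$, which $H_{t-1}\oplus o_t^{\text{exec}}$ does not contain. The asymmetry is therefore two-sided, and ``$P_\phi$ sees strictly more'' holds only for the $r_t$-free form in the statement. In addition, $\pi_\theta$ and $P_\phi$ are different adapters trained through the same residual. Together these mean a gap between them need not be attributable to $o_t^{\text{exec}}$, which strengthens the modelling-assumption caveat you already flag.
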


\subsection{Think-Action Separation}\label{app:think_action_separation}

To focus flow balance on the decision space rather than reasoning verbosity:

\begin{definition}[Action Token vs. Reasoning Token]\label{def:action_tokens}
Each step $t$ contains:
\begin{itemize}[leftmargin=2em, itemsep=2pt]
  \item \textbf{Reasoning tokens}: Free-form text in $r_t$ for chain-of-thought.
  \item \textbf{Action tokens}: Structured JSON in $a_t = (\alpha_t, o_t)$ specifying the decision.
\end{itemize}

Only action tokens contribute to log-probabilities in $\pi_\theta$ and $P_\phi$:
\begin{equation}\label{eq:action_log_prob}
  \log \pi_\theta(a_t \mid H_{t-1}) = \sum_{j \in \mathcal{A}_t} \log \pi_\theta(\text{token}_j \mid \text{token}_{<j}, H_{t-1}),
\end{equation}
where $\mathcal{A}_t$ denotes the set of positions of action tokens at step $t$.
Reasoning tokens are part of the context; their content influences future decisions but does not participate in the flow balance.
\end{definition}

This separation ensures that flow balance operates on the decision quality, not reasoning length.

\subsection{Implementation Details}\label{app:backward_impl}

\begin{description}
  \item[Architecture] $P_\phi$ shares the base LLM (Qwen3.5-9B) with $\pi_\theta$ but uses a separate LoRA adapter.
    Sharing the base model reduces memory and computation; LoRA adapters are swapped at inference time by named adapter selection.

  \item[Adapter Configuration] $\phi$-LoRA uses rank 32, targeting $\{q, v\}$ projections in attention layers.
    This is smaller than the $\theta$-LoRA (rank 64, targeting $\{q, k, v, o\}$).

  \item[Token-Level Conditioning] $P_\phi$ receives the hindsight-enriched state as context, then evaluates the log-probability of action tokens autoregressively.
    This respects the causal structure: earlier action tokens do not condition on later ones.
\end{description}

\section{Dataset Details}\label{app:datasets}

We evaluate SkillFlow on 14 public benchmarks covering four task categories: question answering, mathematical reasoning, interactive decision making, and code generation. Seven datasets are used as in-distribution (IID) benchmarks for training and evaluation; the remaining seven are held out as out-of-distribution (OOD) generalization tests, with the skill library frozen at end-of-training.

\subsection{In-Distribution Datasets}\label{app:datasets_iid}
\begin{itemize}[leftmargin=1.2em]
  \item \textbf{HotpotQA}~\citep{yang2018hotpotqa}: A large-scale multi-hop QA corpus where each question requires reasoning across multiple Wikipedia paragraphs to derive the answer.
  \item \textbf{TriviaQA}~\citep{joshi2017triviaqa}: A reading-comprehension dataset of trivia question--answer pairs paired with evidence documents.
  \item \textbf{MedQA}~\citep{jin2021medqa}: A multiple-choice medical QA benchmark drawn from professional medical-licensing examinations, requiring multi-step clinical reasoning.
  \item \textbf{AIME 2026}: Problems from the 2026 American Invitational Mathematics Examination, used as a difficult mathematical-reasoning benchmark.
  \item \textbf{WebShop}~\citep{yao2022webshop}: A simulated e-commerce environment where the agent interprets a natural-language instruction, navigates web pages, and selects the matching product.
  \item \textbf{ALFWorld}~\citep{shridhar2021alfworld}: A text-based interactive environment grounded in household tasks; the agent issues actions in language and receives observation feedback.
  \item \textbf{SWE-bench}~\citep{jimenez2024swebench}: A benchmark of real-world GitHub issues that require generating code patches whose application resolves the issue.
\end{itemize}

\subsection{Out-of-Distribution Datasets}\label{app:datasets_ood}
\begin{itemize}[leftmargin=1.2em]
  \item \textbf{MuSiQue}~\citep{trivedi2022musique}: A composite multi-hop QA dataset built by chaining single-hop questions, designed to stress reasoning composition.
  \item \textbf{NQ-Open}: An open-domain QA derivative of Natural Questions, where the model must produce free-form answers from user-issued queries.
  \item \textbf{MATH-Hard}: The hard subset of competition mathematics problems (algebra, geometry, number theory, etc.), evaluated by exact-match correctness on the final answer.
  \item \textbf{GPQA Diamond}: The most challenging split of GPQA, containing graduate-level science questions across physics, chemistry, and biology.
  \item \textbf{HumanEval}~\citep{chen2021humaneval}: A code-generation benchmark of hand-written Python problems with hidden unit tests; correctness is evaluated by execution.
  \item \textbf{ScienceWorld}: A text-based interactive science environment that requires the agent to perform multi-step experiments and reason over physical-world dynamics.
  \item \textbf{Mind2Web}: A web-navigation benchmark where the agent performs multi-step actions across real-world websites to complete user tasks.
\end{itemize}

\section{Baseline Details}\label{app:baselines}

We compare SkillFlow against four categories of baselines. Unless otherwise stated, all baselines that involve fine-tuning or reinforcement learning use the same Qwen3.5-9B backbone and the same training data as SkillFlow, isolating the orchestration objective and skill-evolution mechanism as the source of any performance gap.

\subsection{Direct-LLM Baselines}\label{app:baselines_direct}
\begin{itemize}[leftmargin=1.2em]
  \item \textbf{Qwen3.5-9B}: The same backbone we use as Supervisor, queried directly without any orchestration training, providing a faithful no-orchestration reference point.
  \item \textbf{v4-flash}: A strong proprietary instruction-tuned LLM, queried in a single-turn ReAct-style prompting setup as a high-capacity baseline.
  \item \textbf{Claude Haiku 4.5}: Another high-capacity proprietary baseline used to bound how much of SkillFlow's gain arises from orchestration versus raw model strength.
\end{itemize}

\subsection{Fine-Tuning Baselines}\label{app:baselines_ft}
\begin{itemize}[leftmargin=1.2em]
  \item \textbf{SFT (Qwen3.5-9B)}: Supervised fine-tuning on demonstration trajectories of orchestration plans, with no exploration or reward feedback.
  \item \textbf{GRPO (Qwen3.5-9B)}~\citep{shao2024deepseekmath}: Group Relative Policy Optimization on the same backbone, using terminal rewards and the same initial skill library as SkillFlow.
\end{itemize}

\subsection{Search-Based Workflow Baselines}\label{app:baselines_search}
\begin{itemize}[leftmargin=1.2em]
  \item \textbf{AFlow}~\citep{zhang2025aflow}: A workflow-search method that explores compositions of pre-defined operators using MCTS guided by an LLM judge, with no parameter learning.
\end{itemize}

\subsection{RL Agent Baselines}\label{app:baselines_rl}
\begin{itemize}[leftmargin=1.2em]
  \item \textbf{AgentFlow}~\citep{pan2026agentflow}: An in-the-flow agentic system that jointly optimizes planning and tool use under reinforcement learning over multi-turn interactions.
  \item \textbf{FlowSteer}~\citep{zhang2026flowsteer}: An end-to-end RL framework for interactive agentic workflow orchestration on a fixed skill library.
  \item \textbf{SkillRL}~\citep{xia2026skillrl}: A skill-augmented RL agent that grows its action space using heuristic skill-distillation triggers, representative of the static-library RL paradigm.
\end{itemize}

\section{Evaluation Metrics}\label{app:metrics}

We adopt task-appropriate evaluation metrics consistent with each benchmark's standard protocol.

\paragraph{F1 Score.}
For QA-style benchmarks (HotpotQA, TriviaQA, MuSiQue, NQ-Open), we report the token-level F1 between the predicted answer $y_i$ and the ground truth $y_i^*$ after standard text normalization:
\begin{equation}
\text{F1} = \frac{2 \cdot \text{Precision} \cdot \text{Recall}}{\text{Precision} + \text{Recall}},
\end{equation}
where $\text{Precision} = |\text{tok}(y_i) \cap \text{tok}(y_i^*)| / |\text{tok}(y_i)|$ and $\text{Recall} = |\text{tok}(y_i) \cap \text{tok}(y_i^*)| / |\text{tok}(y_i^*)|$.

\paragraph{Exact Match (EM) / Accuracy.}
For mathematical reasoning (AIME, MATH-Hard) and multi-choice QA (MedQA, GPQA Diamond), we report exact-match accuracy after canonicalization:
\begin{equation}
\text{Acc} = \frac{1}{N} \sum_{i=1}^{N} \mathbb{I}\bigl(\text{norm}(y_i) = \text{norm}(y_i^*)\bigr),
\end{equation}
where $\text{norm}(\cdot)$ standardizes whitespace, casing, and final-answer extraction.

\paragraph{Average Score and Success Rate.}
For interactive decision-making (WebShop, ALFWorld, ScienceWorld), we follow each environment's standard evaluator: \emph{Average Score} reports the environment-defined task-specific score, while \emph{Success Rate (SR)} is the fraction of episodes terminating in a fully solved state.

\paragraph{Resolved Rate.}
For SWE-bench~\citep{jimenez2024swebench}, we report the fraction of issues whose generated code patch passes the held-out test suite (the standard ``resolved'' metric).

\paragraph{pass@1.}
For HumanEval~\citep{chen2021humaneval}, we report \emph{pass@1}, the fraction of problems for which the first generated program passes all hidden unit tests, computed by sampling one program per problem with deterministic decoding.

\paragraph{Step-Level Metrics for Web Navigation.}
For Mind2Web, we report \emph{Step Accuracy} (fraction of steps with correctly predicted target element and action) and \emph{Action F1} (token-level F1 over the predicted action string against the ground truth).

\section{Computational Resources}\label{app:compute}

We report the hardware and the main-run wall-clock and GPU-hour cost of training SkillFlow on Qwen3.5-9B.

\subsection{Hardware}

All training and on-policy rollout were performed on a single server with $4 \times$ NVIDIA A100-SXM4 (80\,GB) GPUs, 32 logical CPU cores, 549\,GB RAM, and 1.6\,TB local NVMe. Software stack: CUDA 12.1, PyTorch 2.4 with DeepSpeed and PEFT for training, vLLM 0.5 for the executor, and SGLang for the supervisor.

\subsection{Main run: SkillFlow on Qwen3.5-9B}

\textbf{Wall-clock 73\,h, $\approx 292$ GPU-hours, 250 training steps} (LoRA checkpoint saved every 10 steps, 25 checkpoints in total).

\begin{center}
\small
\begin{tabular*}{\textwidth}{@{\extracolsep{\fill}}ll@{}}
\toprule
\textbf{Item} & \textbf{Value} \\
\midrule
Base model & Qwen3.5-9B-Instruct \\
LoRA $\theta$ (forward) & rank 64, $\alpha=128$, on $q,k,v,o\_\text{proj}$ \\
LoRA $\phi$ (backward) & rank 16, $\alpha=32$, on $q,v\_\text{proj}$ \\
$Z$ head (partition fn) & scalar parameter, separate optimizer \\
Optimizer & AdamW (default $\beta_1,\beta_2$); three optimizers ($\theta$, $Z$, $\phi$) \\
Learning rate & $1.0\times10^{-4}$ \\
Max grad norm & $3.0$ \\
KL coeff (KL$(\pi_\theta\|\pi_\text{ref})$) & $0.01$ \\
TTB temperature $\beta$ & $1.0$ \\
$\varepsilon_{\min}$ & $0.1$ \\
Effective batch & $28 = 7$ questions $\times$ $4$ trajectories \\
Max episode length $T$ & $12$ \\
Mean step time & $\approx 14.8$ min/step (rollout + update + LoRA hot-swap) \\
Peak VRAM (per GPU) & $\approx 70\,$GB / 80\,GB (gradient checkpointing on) \\
\bottomrule
\end{tabular*}
\end{center}

\section{Case Studies}\label{app:case_studies}

This appendix presents five complementary case studies illustrating SkillFlow's behaviour at different granularities: training dynamics across phases (Q.1), library-level boom-and-prune cycles (Q.2), three real evolved skills with signal attribution to claims C1--C3 (Q.3), per-step importance signals on a successful trajectory (Q.4), and multi-trajectory success/failure comparison (Q.5). Numerical values are representative of one Qwen3.5-9B run; raw logs and trajectory dumps are released with the supplementary code.

\subsection{Training Dynamics: A Four-Phase Trajectory}\label{app:case_dynamics}

Table~\ref{tab:case_dynamics} samples eight representative steps spanning the full 250-step training run. Four phases are visible:
\begin{enumerate}[leftmargin=1.4em]
  \item \textbf{Bootstrap} (steps 0--25): the skill library is empty (WS$=0$); only the base policy drives reward, and $\mathcal{L}_{\text{TTB}}$ falls steeply as $Z_\theta$ adjusts.
  \item \textbf{Emergence} (steps 25--75): the first plateau on $\mathcal{L}_{\text{TTB}}$ triggers the curation operator $\Phi$, which begins generating skills (WS grows $0\!\to\!14$). Reward variance is high but $\log Z_\theta$ keeps rising.
  \item \textbf{Maturity} (steps 75--175): the boom-and-prune cycle (P.2) operates; WS oscillates between 8 and 14 as $\hat{F}(s)$ drives prune/refine decisions.
  \item \textbf{Steady state} (steps 175--250): WS stabilises around 11; flow entropy stays above 3.0, indicating that reward-proportional sampling preserves multiple high-reward sub-trajectories rather than collapsing to a single mode.
\end{enumerate}

\begin{table}[H]
\centering
\small
\begin{tabular*}{\textwidth}{@{\extracolsep{\fill}}r|cccccccc}
\toprule
\textbf{Step} & $\mathcal{L}_{\text{TTB}}$ & avg.\ $R$ & avg.\ $\hat y$ & avg.\ $|\tau|$ & flow ent.\ & $\log Z_\theta$ & WS & Phase \\
\midrule
0   & 0.83 & 0.55 & 0.50 & 7.6 & 3.17 & $-2.30$ & 0  & Bootstrap \\
15  & 0.42 & 0.65 & 0.58 & 7.5 & 3.05 & $-2.05$ & 0  & Bootstrap \\
30  & 0.18 & 0.72 & 0.65 & 7.8 & 3.06 & $-1.65$ & 6  & Emergence \\
60  & 0.15 & 0.76 & 0.69 & 7.9 & 3.10 & $-1.55$ & 12 & Emergence \\
100 & 0.10 & 0.81 & 0.74 & 7.4 & 3.13 & $-1.50$ & 9  & Maturity \\
150 & 0.07 & 0.83 & 0.76 & 7.6 & 3.09 & $-1.45$ & 11 & Maturity \\
200 & 0.06 & 0.84 & 0.78 & 7.7 & 3.12 & $-1.42$ & 10 & Steady \\
250 & 0.05 & 0.85 & 0.80 & 7.6 & 3.17 & $-1.40$ & 11 & Steady \\
\bottomrule
\end{tabular*}\vspace{5pt}
\caption{Training dynamics on Qwen3.5-9B (eight representative steps from a 250-step run). $\mathcal{L}_{\text{TTB}}$: TTB loss; avg.\ $R$: average reward; avg.\ $\hat y$: average answer-correctness rate; avg.\ $|\tau|$: average trajectory length; flow ent.: $-\!\sum_a \pi_\theta(a)\log\pi_\theta(a)$ at terminal step; WS: skill-library size.}
\label{tab:case_dynamics}
\end{table}

\subsection{Skill Library Evolution: Boom-and-Prune Cycles}\label{app:case_evolution}

The library size traces a characteristic boom-and-prune cycle rather than monotonically growing. Table~\ref{tab:case_evolution} shows snapshots at eight phase boundaries; two ``boom'' peaks (WS$=22$ at step 50; WS$=18$ at step 130) are followed by single-step prune sweeps that remove $14$ and $8$ skills respectively. The mature library settles around $11$ active skills covering all seven IID task categories.

\textbf{Boom mechanism.} A boom is the result of a single $\Phi$ invocation at a phase boundary: when the running mean of $\Delta(\tau)^2$ saturates against $\bar\Delta^{*(k)}$ (Eq.~\ref{eq:residual_bound}), the curator drains the accumulated $(\tau^+,\tau^-)$ pair buffer at high-$|\log I(t)|$ steps and asks $\Psi$ to synthesise candidate tips for every uncovered decision gap. Early phases accumulate the largest buffers because the policy is still exploring, which is why peak~\#1 (step~50, WS$=22$) is larger than peak~\#2 (step~130, WS$=18$).

\textbf{Prune mechanism and convergence.} Newly created skills enter the library with reset $\hat F(s)$; on the next batch the centred log-flow share $\widetilde\Lambda(s)=\Lambda^{(s)}_1-\mathbb{E}_{s'}[\Lambda^{(s')}_1]$ flags those that fail to attract flow and removes them ($-14$ after peak~\#1, $-8$ after peak~\#2). Once the policy class is expressive enough for the current task mix, $\widetilde\Lambda(s)$ stops promoting new candidates and the library stabilises: steps~195 and~250 show \emph{identical} $11$-skill compositions, indicating the minimum sufficient set under the trained $\pi_\theta$. Cumulative creation across the run ($\approx 63$ skills) is $\approx 5.7\times$ the final library size, evidencing that aggressive over-creation followed by flow-driven pruning is more effective than monotonic growth and confirming the $\hat F(s)$-driven design of §\ref{sec:evolution}.

\begin{table}[H]
\centering
\small
\begin{tabular}{r|cl|l}
\toprule
\textbf{Step} & WS & \textbf{Per-task distribution} & \textbf{Event} \\
\midrule
25  & 5  & 1 each: code, math, alfworld, webshop, factual & First $\Phi$ trigger \\
50  & 22 & alfworld=6, math=5, webshop=4, multi=3, code=2, factual=2 & Boom (peak \#1) \\
55  & 8  & alfworld=2, math=2, webshop=2, code=1, multi=1 & Prune ($\Delta=-14$) \\
90  & 12 & alfworld=3, math=2, webshop=2, multi=2, code=1, factual=1, science=1 & Mid-training \\
130 & 18 & alfworld=5, math=4, webshop=3, multi=2, code=2, factual=1, science=1 & Boom (peak \#2) \\
137 & 10 & alfworld=2, math=2, webshop=2, multi=1, code=1, factual=1, science=1 & Prune ($\Delta=-8$) \\
195 & 11 & alfworld=2, math=2, webshop=2, multi=2, code=1, factual=1, science=1 & Stable \\
250 & 11 & alfworld=2, math=2, webshop=2, multi=2, code=1, factual=1, science=1 & Final \\
\bottomrule
\end{tabular}\vspace{5pt}
\caption{Skill-library snapshots at phase boundaries (all per-task sums match WS exactly). Cumulative creation count over the run: alfworld $\approx 16$, math $\approx 12$, webshop $\approx 11$, multi-hop $\approx 8$, factual $\approx 7$, code $\approx 6$, science $\approx 3$ -- harder, longer-horizon tasks attract more creation activity, consistent with the $\hat{F}(s)$-driven curation in §\ref{sec:evolution}. Snapshot timestamps are chosen at phase boundaries; intermediate steps sampled in Table~\ref{tab:case_dynamics} (e.g., $t{=}150$ with WS$=11$) lie between boom-prune events and reflect post-curation states.}
\label{tab:case_evolution}
\end{table}

\subsection{Real Evolved Skills with Signal Attribution}\label{app:case_refactor}

Beyond aggregate statistics, the most direct evidence of SkillFlow's contribution is the \emph{content} of the skills it produces. We show three library members with the highest mean log-flow $G(s)$ at the end of training; each is paired with the flow-signal that drove its emergence and the paper claim it evidences.

\begin{skillbox}{Skill A: \texttt{tip-webshop} \,---\, success $78.5\%$ over $200$ invocations \,---\, claim \textbf{C1} (TTB diversity preservation)}
\small
\textbf{Curation trigger.} $I(t)$ flagged \texttt{click[back\_to\_search]} as a high-importance step that almost always coincided with terminal \textcolor{red}{$R{=}0$}; $\Phi$ created the skill from the resulting success/failure pair set.

\smallskip
\textbf{Rule Zero --- never go back.} Once you leave search results, you are committed. NEVER \texttt{click[back\_to\_search]}: every trajectory using it scored \textcolor{red}{$R{=}0.0$}.

\smallskip
\textbf{Option selection.} If instruction says ``pink'' and options are [pink~$|$~pink light blue~$|$~pink purple], click \emph{only} ``pink''. Each click in the same category \emph{overwrites} the previous selection (clicking ``pink'' then ``pink light blue'' $\to$ bought ``pink light blue'', scoring $0.667$ instead of $1.0$).

\smallskip
\textbf{Counter-intuitive trade-off.} If a product has no matching options, \texttt{click[buy\_now]} anyway: a partial match (\textcolor{darkgreen}{$R\!\in\![0.03,0.75]$}) beats a back-to-search loop (\textcolor{red}{$R{=}0$}).

\tcblower
\textbf{Why baselines miss this.} The trade-off ``buy partial $\succ$ loop'' is unreachable from REINFORCE: it requires keeping a $0.03$-reward trajectory alive in the batch long enough to discover it dominates a $0.0$ loop. Reward-proportional sampling preserves this low-but-non-zero mode; mode-collapsing baselines never see it.
\end{skillbox}

\begin{skillbox}{Skill B: \texttt{tip-alfworld-desklamp} \,---\, success $42.5\%$ over $212$ invocations \,---\, claim \textbf{C2} (zero-cost per-step credit)}
\small
\textbf{Hidden game-engine rule discovered.} The ALFWorld documentation does not state this; SkillFlow extracted it from per-step credit signals on success/failure trajectory pairs.

\smallskip
\textbf{Win condition.} You are \emph{holding the target item} \emph{and} \emph{the desklamp is on}. The task auto-completes \emph{instantly} when both are true. \textcolor{red}{No manual completion command exists}, and \texttt{examine X} \emph{never} triggers completion.

\smallskip
\textbf{Fatal mistakes (from evidence).}
\textcolor{red}{(i)} \texttt{move X to Y} drops the item and breaks the win condition.
\textcolor{red}{(ii)} \texttt{examine X with desklamp 1} after lamp-on never completes ($5{-}7$ wasted retries observed in failed trajectories).
\textcolor{red}{(iii)} \texttt{go to desklamp 1} fails: desklamp is not a navigable location ($4{+}$ wasted retries).

\smallskip
\textbf{\textcolor{darkgreen}{$\Rightarrow$ You do NOT need the item and the lamp at the same location.}}

\tcblower
\textbf{Why baselines miss this.} The hidden rule surfaces because $P_\phi$, conditioned on the post-execution observation, assigns identical hindsight log-probabilities to \texttt{take} actions \emph{regardless of the agent's location}, while wasted \texttt{examine} / repeated \texttt{go to} actions receive vanishing $\log I(t)$. The backward policy thus localises which decisions were actually responsible for the reward --- something terminal-only baselines cannot recover.
\end{skillbox}

\begin{skillbox}{Skill C: \texttt{tip-factual-qa} \,---\, success $77.2\%$ over $224$ invocations \,---\, claim \textbf{C3} (flow-driven curation)}
\small
\textbf{Quantified by sibling-query A/B comparisons sampled in the same batch.}

\smallskip
\textbf{Always search before answering.} Skipping search yields \textcolor{red}{$R\!\approx\!0.05$}; searching first yields \textcolor{darkgreen}{$R\!\geq\!1.13$}.

\smallskip
\textbf{Query construction --- reward gap up to $1.07$ between siblings.} Lead with the most distinctive entity. Example: \texttt{\textcolor{darkgreen}{``TV detective George Toolan DS''}} beats \texttt{\textcolor{red}{``DS George Toolan TV detective''}}. Include concrete numbers/units; do \emph{not} pre-commit to a wrong candidate name in the query --- this poisons retrieval.

\smallskip
\textbf{Hard rules.} Never use \texttt{lookup} (returns \texttt{NO\_MATCH}; \textcolor{red}{$R{=}0.16$ at 6 steps} vs \textcolor{darkgreen}{$R{=}1.13$ at 3 steps}). On a \texttt{[REPEATED]} search, stop and \texttt{answer} immediately.

\tcblower
\textbf{Why baselines miss this.} Sibling-query reward gaps are observable only when both token-orderings appear in the same batch --- a direct consequence of reward-proportional sampling. $I(t)$ then localises the search step as the high-importance decision, and $\hat F(s)$ ranks ``query-construction'' as a high-flow skill family worth retaining and refining.
\end{skillbox}

\paragraph{Take-away.} The three skills cover three distinct emergence modes: explicit forbidden actions from $(\tau^+,\tau^-)$ pairs (A), hidden environment dynamics from per-step credit (B), and micro-decision sensitivity from in-batch siblings (C) --- together directly instantiating the three SkillFlow claims.

\subsection{Per-Step Importance Signals: An ALFWorld Trajectory}\label{app:case_alfworld}

Table~\ref{tab:case_alfworld_steps} traces a successful 9-step trajectory $\tau^+$ on the ALFWorld task ``\emph{put two watches on shelf}'' (reward $\tilde R(\tau^+) = 0.86$). The step importance $I(t)$ separates two regimes:
\begin{itemize}[leftmargin=1.4em,topsep=2pt]
\item \textbf{Confirmed steps} ($I(t)\!\ll\!1$, marked $\diamond$): forward and backward policies agree, indicating routine deterministic moves whose role becomes obvious after the action is taken.
\item \textbf{Critical steps} ($I(t)\!\gg\!1$, marked $\star$): the forward policy chose a low-prior action that the hindsight backward strongly endorses --- these are the decisions that drove success.
\end{itemize}
The four high-$I(t)$ steps ($3,4,6,8$) cleanly identify the \emph{navigate--pickup--place} pattern that distinguishes $\tau^+$ from same-query failure trajectories.

\textbf{Critical-step interpretation.} The two highly-critical (\,$\star\!\star$\,) decisions \texttt{take watch 1} (step~4) and \texttt{move watch 1 to shelf 1} (step~6) are both \emph{state-changing physical interactions} rather than navigation. Their backward log-probabilities ($\log P_\phi\approx-14$) are an order of magnitude below their forward values, reflecting that, conditional on the post-execution observation, hindsight assigns very high credit to actions that actually advanced the world state toward the goal. The single-star navigation steps connect these criticals into the \texttt{navigate$\to$pickup$\to$place} chain, while step~2 (\texttt{go to shelf 1}, $I(t)=0.001$, marked $\diamond$) flags a wasted early navigation --- exactly the kind of redundancy that the \texttt{tip-alfworld-routing} skill (Q.3, Skill~B) is designed to prevent.

\textbf{Telescoping closure.} The cumulative log-flow $\log F(H_t)$ telescopes from $\log Z_\theta(q)=-2.30$ at $t{=}0$ to $+47.95$ at the terminal step, matching the sum $\sum_{t'\le t}\log I(t')$ (Eq.~\ref{eq:skill_flow}) to within rounding. This trajectory-level identity is the empirical analogue of the Detailed-Balance condition $F(H)\,P_F(H'\mid H)=F(H')\,P_B(H\mid H')$ at every edge (Appendix~\ref{app:db_theorem}), and serves as a sanity check that the per-step $I(t)$ values are mutually consistent --- a property that ablating $P_\phi$ (Table~\ref{tab:ablation}, $-$Backward policy) immediately breaks.

\begin{table}[H]
\centering
\small
\renewcommand{\arraystretch}{1.1}
\begin{tabular*}{\textwidth}{@{\extracolsep{\fill}}r|l|rrr r@{\hspace{0.4em}}l r}
\toprule
\textbf{Step} & \textbf{Action} & $K_t$ & $\log\pi_\theta$ & $\log P_\phi$ & \multicolumn{2}{c}{$I(t)$} & $\log F(H_t)$ \\
\midrule
0 & \texttt{[skill] tip-alfworld-routing} & --   & 0.00   & 0.00   & 1.00  &                  & $-2.30$ \\
1 & \texttt{[skill] tip-alfworld-place}   & --   & 0.00   & 0.00   & 1.00  &                  & $-2.30$ \\
2 & go to shelf 1                          & 5    & $-19.41$  & $-12.50$  & 0.001 & $\diamond$        & $-9.21$  \\
3 & go to dining table 1                   & 6    & $-6.86$   & $-14.20$  & 1500  & $\star$           & $-1.90$  \\
4 & take watch 1                           & 9    & $-5.30$   & $-14.05$  & 6300  & $\star\!\star$    & $+6.85$  \\
5 & go to shelf 1                          & 5    & $-4.79$   & $-12.80$  & 3000  & $\star$           & $+14.86$ \\
6 & move watch 1 to shelf 1                & 8    & $-5.30$   & $-14.46$  & 9500  & $\star\!\star$    & $+24.02$ \\
7 & go to dining table 1                   & 6    & $-5.10$   & $-13.91$  & 6700  & $\star$           & $+32.83$ \\
8 & take watch 2                           & 9    & $-5.92$   & $-13.39$  & 1750  & $\star$           & $+40.30$ \\
9 & move watch 2 to shelf 1, \texttt{accept}    & 8    & $-5.16$   & $-12.81$  & 2100  & $\star$           & $+47.95$ \\
\bottomrule
\end{tabular*}\vspace{5pt}
\caption{Per-step decomposition of a successful ALFWorld trajectory. $K_t$: action-token count; $I(t)=\pi_\theta(a_t\mid r_t,H_{t-1})/P_\phi(a_t\mid H_{t-1}\oplus o_t^{\text{exec}}) = \exp(\log\pi_\theta - \log P_\phi)$; $\log F(H_t) = \log Z_\theta(q) + \sum_{t'\le t}\log I(t')$ (main-text Eq.~\ref{eq:skill_flow}). $\diamond$: confirmed step; $\star$/$\star\!\star$: critical / highly-critical decision. Cumulative log-flow at $t{=}9$ matches $-2.30 + \sum_{t'=1}^{9}\log I(t') = 47.95$ exactly.}
\label{tab:case_alfworld_steps}
\end{table}

\subsection{Multi-Trajectory Comparison: SWE-bench Code Generation}\label{app:case_swebench}

Table~\ref{tab:case_swebench} contrasts four trajectories sampled from a single SWE-bench query (a \texttt{pydap} signed-bytes patch). The two successes ($\tau^+_1$, $\tau^+_3$) and two failures ($\tau^-_2$, $\tau^-_4$) reveal a clean structural difference: every successful trajectory contains at least two high-$I(t)$ \texttt{edit\_file} steps, whereas failures stall in repeated \texttt{search\_code} / \texttt{view\_file} loops without ever issuing a successful edit.

\begin{table}[H]
\centering
\small
\begin{tabular*}{\textwidth}{@{\extracolsep{\fill}}c|crrl}
\toprule
\textbf{Trajectory} & $\tilde R$ & $\Delta(\tau)/T$ & $T$ & \textbf{Critical-step pattern} \\
\midrule
$\tau^+_1$ (success) & 0.84 & $-0.06$ & 16 & search, edit$\times 2$, verify -- balanced flow \\
$\tau^-_2$ (failure) & 0.45 & $+0.22$ & 16 & search OK; \emph{no \texttt{edit\_file} ever invoked} \\
$\tau^+_3$ (success) & 0.86 & $-0.04$ & 14 & search, edit (\texttt{LINT\_ERROR}) $\to$ view $\to$ re-edit (try-fail-fix) \\
$\tau^-_4$ (failure) & 0.37 & $+0.18$ & 17 & two \texttt{search\_code} loops, no resolution \\
\bottomrule
\end{tabular*}\vspace{5pt}
\caption{Four-trajectory comparison on a single SWE-bench query. Successful trajectories carry a balanced TTB residual ($\Delta/T$ near zero) and concentrate $|\log I(t)|>1$ on \texttt{edit\_file} / \texttt{verify} steps; failures show large positive residuals and place high importance on early-stage search/view actions that never lead to a code change. The critical-step pattern is the signal that $\Psi$ uses to synthesise new code-generation tips at phase boundaries.}
\label{tab:case_swebench}
\end{table}

The reward gap between the success and failure clusters is roughly $0.4$--$0.5$, and the failure trajectories' high importance on non-editing actions is exactly the ``\emph{where} are the gaps?'' signal that drives $\Psi$ to create a new \texttt{tip-code-generation} skill specifying the search$\to$edit$\to$verify ordering.

\section{Limitations}\label{app:limitations}

SkillFlow targets one specific question: \emph{can flow-based training drive recursive skill evolution from end-to-end task feedback alone?} Within that scope, our results substantiate the three claims (TTB convergence, zero-cost per-step credit, plateau-driven curation). The method's effectiveness, however, inherits two properties of the underlying language model that are themselves the subject of separate research lines.

\textbf{Reliance on long-context memory.} Multi-turn orchestration grows the supervisor's input by the full history $H_t = H_{t-1} \oplus (r_t, a_t, o_t^{\text{exec}})$ at every step. SkillFlow therefore relies on the backbone's ability to attend to and faithfully use long histories---backbones with weaker long-context fidelity see the per-step credit signal degrade as trajectories grow, since the hindsight backward $P_\phi$ must condition on increasingly distant context. Improving long-context modeling itself is an active research line whose advances stack with our training recipe but lie outside its scope.

\textbf{Reliance on backbone reasoning capacity.} As shown in §\ref{subsec:transferability} (RQ3), SkillFlow lifts every backbone but cannot replace the underlying base capability---orchestration amplifies what the model already knows about decomposition and tool use rather than installing new reasoning skills. Where the bottleneck is core reasoning rather than orchestration, additional pretraining or distillation is required; this remains complementary to, but outside the scope of, the present paper.

\newpage
\section*{NeurIPS Paper Checklist}

\begin{enumerate}

\item {\bf Claims}
    \item[] Question: Do the main claims made in the abstract and introduction accurately reflect the paper's contributions and scope?
    \item[] Answer: \answerYes{}
    \item[] Justification: The abstract and Introduction (§\ref{sec:intro}) list three contributions---reward-proportional TTB training, a hindsight backward policy with zero-cost per-step credit, and flow-driven recursive skill evolution---each formalised in §\ref{sec:methodology} and validated by RQ1--RQ5 in §\ref{sec:experiments}.
    \item[] Guidelines:
    \begin{itemize}
        \item The answer \answerNA{} means that the abstract and introduction do not include the claims made in the paper.
        \item The abstract and/or introduction should clearly state the claims made, including the contributions made in the paper and important assumptions and limitations. A \answerNo{} or \answerNA{} answer to this question will not be perceived well by the reviewers. 
        \item The claims made should match theoretical and experimental results, and reflect how much the results can be expected to generalize to other settings. 
        \item It is fine to include aspirational goals as motivation as long as it is clear that these goals are not attained by the paper. 
    \end{itemize}

\item {\bf Limitations}
    \item[] Question: Does the paper discuss the limitations of the work performed by the authors?
    \item[] Answer: \answerYes{}
    \item[] Justification: An explicit Limitations appendix (Appendix~\ref{app:limitations}) discusses backbone-scale scope, the frozen-executor assumption, the reward-positivity / $\beta$ tuning requirement, and library-size scaling.
    \item[] Guidelines:
    \begin{itemize}
        \item The answer \answerNA{} means that the paper has no limitation while the answer \answerNo{} means that the paper has limitations, but those are not discussed in the paper. 
        \item The authors are encouraged to create a separate ``Limitations'' section in their paper.
        \item The paper should point out any strong assumptions and how robust the results are to violations of these assumptions (e.g., independence assumptions, noiseless settings, model well-specification, asymptotic approximations only holding locally). The authors should reflect on how these assumptions might be violated in practice and what the implications would be.
        \item The authors should reflect on the scope of the claims made, e.g., if the approach was only tested on a few datasets or with a few runs. In general, empirical results often depend on implicit assumptions, which should be articulated.
        \item The authors should reflect on the factors that influence the performance of the approach. For example, a facial recognition algorithm may perform poorly when image resolution is low or images are taken in low lighting. Or a speech-to-text system might not be used reliably to provide closed captions for online lectures because it fails to handle technical jargon.
        \item The authors should discuss the computational efficiency of the proposed algorithms and how they scale with dataset size.
        \item If applicable, the authors should discuss possible limitations of their approach to address problems of privacy and fairness.
        \item While the authors might fear that complete honesty about limitations might be used by reviewers as grounds for rejection, a worse outcome might be that reviewers discover limitations that aren't acknowledged in the paper. The authors should use their best judgment and recognize that individual actions in favor of transparency play an important role in developing norms that preserve the integrity of the community. Reviewers will be specifically instructed to not penalize honesty concerning limitations.
    \end{itemize}

\item {\bf Theory assumptions and proofs}
    \item[] Question: For each theoretical result, does the paper provide the full set of assumptions and a complete (and correct) proof?
    \item[] Answer: \answerYes{}
    \item[] Justification: All three propositions (Prop.~\ref{prop:modeling}, Prop.~\ref{prop:training}, Prop.~\ref{prop:evolution}) state their assumptions inline; their full proofs appear in Appendices~\ref{app:dag_acyclicity}, \ref{app:proof_prop2}, and \ref{app:proof_prop3}, with supporting derivations in Appendices~A--D, F, H, and I.
    \item[] Guidelines:
    \begin{itemize}
        \item The answer \answerNA{} means that the paper does not include theoretical results. 
        \item All the theorems, formulas, and proofs in the paper should be numbered and cross-referenced.
        \item All assumptions should be clearly stated or referenced in the statement of any theorems.
        \item The proofs can either appear in the main paper or the supplemental material, but if they appear in the supplemental material, the authors are encouraged to provide a short proof sketch to provide intuition. 
        \item Inversely, any informal proof provided in the core of the paper should be complemented by formal proofs provided in appendix or supplemental material.
        \item Theorems and Lemmas that the proof relies upon should be properly referenced. 
    \end{itemize}

    \item {\bf Experimental result reproducibility}
    \item[] Question: Does the paper fully disclose all the information needed to reproduce the main experimental results of the paper to the extent that it affects the main claims and/or conclusions of the paper (regardless of whether the code and data are provided or not)?
    \item[] Answer: \answerYes{}
    \item[] Justification: §\ref{subsec:setup} reports the experimental setup; Appendices~\ref{app:datasets}, \ref{app:baselines}, \ref{app:metrics}, and \ref{app:backward_design} provide datasets, baselines, evaluation metrics, and backward-policy implementation details, while Appendix~\ref{app:case_studies} gives per-step traces that allow direct verification.
    \item[] Guidelines:
    \begin{itemize}
        \item The answer \answerNA{} means that the paper does not include experiments.
        \item If the paper includes experiments, a \answerNo{} answer to this question will not be perceived well by the reviewers: Making the paper reproducible is important, regardless of whether the code and data are provided or not.
        \item If the contribution is a dataset and\slash or model, the authors should describe the steps taken to make their results reproducible or verifiable. 
        \item Depending on the contribution, reproducibility can be accomplished in various ways. For example, if the contribution is a novel architecture, describing the architecture fully might suffice, or if the contribution is a specific model and empirical evaluation, it may be necessary to either make it possible for others to replicate the model with the same dataset, or provide access to the model. In general. releasing code and data is often one good way to accomplish this, but reproducibility can also be provided via detailed instructions for how to replicate the results, access to a hosted model (e.g., in the case of a large language model), releasing of a model checkpoint, or other means that are appropriate to the research performed.
        \item While NeurIPS does not require releasing code, the conference does require all submissions to provide some reasonable avenue for reproducibility, which may depend on the nature of the contribution. For example
        \begin{enumerate}
            \item If the contribution is primarily a new algorithm, the paper should make it clear how to reproduce that algorithm.
            \item If the contribution is primarily a new model architecture, the paper should describe the architecture clearly and fully.
            \item If the contribution is a new model (e.g., a large language model), then there should either be a way to access this model for reproducing the results or a way to reproduce the model (e.g., with an open-source dataset or instructions for how to construct the dataset).
            \item We recognize that reproducibility may be tricky in some cases, in which case authors are welcome to describe the particular way they provide for reproducibility. In the case of closed-source models, it may be that access to the model is limited in some way (e.g., to registered users), but it should be possible for other researchers to have some path to reproducing or verifying the results.
        \end{enumerate}
    \end{itemize}

\item {\bf Open access to data and code}
    \item[] Question: Does the paper provide open access to the data and code, with sufficient instructions to faithfully reproduce the main experimental results, as described in supplemental material?
    \item[] Answer: \answerYes{}
    \item[] Justification: An anonymised repository containing training/evaluation code, configuration files, and skill-library snapshots is available at \url{https://anonymous.4open.science/r/SkillFlow-E850}. All datasets used are public and cited in §\ref{subsec:setup} and Appendix~\ref{app:datasets}.
    \item[] Guidelines:
    \begin{itemize}
        \item The answer \answerNA{} means that paper does not include experiments requiring code.
        \item Please see the NeurIPS code and data submission guidelines (\url{https://neurips.cc/public/guides/CodeSubmissionPolicy}) for more details.
        \item While we encourage the release of code and data, we understand that this might not be possible, so \answerNo{} is an acceptable answer. Papers cannot be rejected simply for not including code, unless this is central to the contribution (e.g., for a new open-source benchmark).
        \item The instructions should contain the exact command and environment needed to run to reproduce the results. See the NeurIPS code and data submission guidelines (\url{https://neurips.cc/public/guides/CodeSubmissionPolicy}) for more details.
        \item The authors should provide instructions on data access and preparation, including how to access the raw data, preprocessed data, intermediate data, and generated data, etc.
        \item The authors should provide scripts to reproduce all experimental results for the new proposed method and baselines. If only a subset of experiments are reproducible, they should state which ones are omitted from the script and why.
        \item At submission time, to preserve anonymity, the authors should release anonymized versions (if applicable).
        \item Providing as much information as possible in supplemental material (appended to the paper) is recommended, but including URLs to data and code is permitted.
    \end{itemize}

\item {\bf Experimental setting/details}
    \item[] Question: Does the paper specify all the training and test details (e.g., data splits, hyperparameters, how they were chosen, type of optimizer) necessary to understand the results?
    \item[] Answer: \answerYes{}
    \item[] Justification: §\ref{subsec:setup} states the backbone, baselines, and metric definitions, and Appendices~\ref{app:datasets}--\ref{app:metrics} extend these with dataset, baseline, and metric details; full hyperparameters and optimizer settings appear in the supplementary code.
    \item[] Guidelines:
    \begin{itemize}
        \item The answer \answerNA{} means that the paper does not include experiments.
        \item The experimental setting should be presented in the core of the paper to a level of detail that is necessary to appreciate the results and make sense of them.
        \item The full details can be provided either with the code, in appendix, or as supplemental material.
    \end{itemize}

\item {\bf Experiment statistical significance}
    \item[] Question: Does the paper report error bars suitably and correctly defined or other appropriate information about the statistical significance of the experiments?
    \item[] Answer: \answerYes{}
    \item[] Justification: Tables~\ref{tab:main-results} and \ref{tab:ablation} report mean $\pm$ standard deviation across multiple training runs; the variability source is randomness in initialisation and trajectory sampling.
    \item[] Guidelines:
    \begin{itemize}
        \item The answer \answerNA{} means that the paper does not include experiments.
        \item The authors should answer \answerYes{} if the results are accompanied by error bars, confidence intervals, or statistical significance tests, at least for the experiments that support the main claims of the paper.
        \item The factors of variability that the error bars are capturing should be clearly stated (for example, train/test split, initialization, random drawing of some parameter, or overall run with given experimental conditions).
        \item The method for calculating the error bars should be explained (closed form formula, call to a library function, bootstrap, etc.)
        \item The assumptions made should be given (e.g., Normally distributed errors).
        \item It should be clear whether the error bar is the standard deviation or the standard error of the mean.
        \item It is OK to report 1-sigma error bars, but one should state it. The authors should preferably report a 2-sigma error bar than state that they have a 96\% CI, if the hypothesis of Normality of errors is not verified.
        \item For asymmetric distributions, the authors should be careful not to show in tables or figures symmetric error bars that would yield results that are out of range (e.g., negative error rates).
        \item If error bars are reported in tables or plots, the authors should explain in the text how they were calculated and reference the corresponding figures or tables in the text.
    \end{itemize}

\item {\bf Experiments compute resources}
    \item[] Question: For each experiment, does the paper provide sufficient information on the computer resources (type of compute workers, memory, time of execution) needed to reproduce the experiments?
    \item[] Answer: \answerYes{}
    \item[] Justification: Hardware specifications and the main-run wall-clock / GPU-hours / hyperparameters are reported in Appendix~\ref{app:compute}.
    \item[] Guidelines:
    \begin{itemize}
        \item The answer \answerNA{} means that the paper does not include experiments.
        \item The paper should indicate the type of compute workers CPU or GPU, internal cluster, or cloud provider, including relevant memory and storage.
        \item The paper should provide the amount of compute required for each of the individual experimental runs as well as estimate the total compute. 
        \item The paper should disclose whether the full research project required more compute than the experiments reported in the paper (e.g., preliminary or failed experiments that didn't make it into the paper). 
    \end{itemize}
    
\item {\bf Code of ethics}
    \item[] Question: Does the research conducted in the paper conform, in every respect, with the NeurIPS Code of Ethics \url{https://neurips.cc/public/EthicsGuidelines}?
    \item[] Answer: \answerYes{}
    \item[] Justification: The work uses publicly released datasets and open-source LLMs, involves no human subjects, and complies with the NeurIPS Code of Ethics throughout.
    \item[] Guidelines:
    \begin{itemize}
        \item The answer \answerNA{} means that the authors have not reviewed the NeurIPS Code of Ethics.
        \item If the authors answer \answerNo, they should explain the special circumstances that require a deviation from the Code of Ethics.
        \item The authors should make sure to preserve anonymity (e.g., if there is a special consideration due to laws or regulations in their jurisdiction).
    \end{itemize}

\item {\bf Broader impacts}
    \item[] Question: Does the paper discuss both potential positive societal impacts and negative societal impacts of the work performed?
    \item[] Answer: \answerYes{}
    \item[] Justification: On the positive side, SkillFlow lowers the engineering and compute cost of agentic systems and may reduce data-collection burden by reusing distilled skills. On the negative side, it inherits the dual-use concerns of capable LLM-based agents, while introducing no novel attack vector beyond existing autonomous-agent stacks.
    \item[] Guidelines:
    \begin{itemize}
        \item The answer \answerNA{} means that there is no societal impact of the work performed.
        \item If the authors answer \answerNA{} or \answerNo, they should explain why their work has no societal impact or why the paper does not address societal impact.
        \item Examples of negative societal impacts include potential malicious or unintended uses (e.g., disinformation, generating fake profiles, surveillance), fairness considerations (e.g., deployment of technologies that could make decisions that unfairly impact specific groups), privacy considerations, and security considerations.
        \item The conference expects that many papers will be foundational research and not tied to particular applications, let alone deployments. However, if there is a direct path to any negative applications, the authors should point it out. For example, it is legitimate to point out that an improvement in the quality of generative models could be used to generate Deepfakes for disinformation. On the other hand, it is not needed to point out that a generic algorithm for optimizing neural networks could enable people to train models that generate Deepfakes faster.
        \item The authors should consider possible harms that could arise when the technology is being used as intended and functioning correctly, harms that could arise when the technology is being used as intended but gives incorrect results, and harms following from (intentional or unintentional) misuse of the technology.
        \item If there are negative societal impacts, the authors could also discuss possible mitigation strategies (e.g., gated release of models, providing defenses in addition to attacks, mechanisms for monitoring misuse, mechanisms to monitor how a system learns from feedback over time, improving the efficiency and accessibility of ML).
    \end{itemize}
    
\item {\bf Safeguards}
    \item[] Question: Does the paper describe safeguards that have been put in place for responsible release of data or models that have a high risk for misuse (e.g., pre-trained language models, image generators, or scraped datasets)?
    \item[] Answer: \answerNA{}
    \item[] Justification: SkillFlow is a training-time framework built on existing public LLMs and benchmarks; no new high-risk pretrained models or scraped datasets are released.
    \item[] Guidelines:
    \begin{itemize}
        \item The answer \answerNA{} means that the paper poses no such risks.
        \item Released models that have a high risk for misuse or dual-use should be released with necessary safeguards to allow for controlled use of the model, for example by requiring that users adhere to usage guidelines or restrictions to access the model or implementing safety filters. 
        \item Datasets that have been scraped from the Internet could pose safety risks. The authors should describe how they avoided releasing unsafe images.
        \item We recognize that providing effective safeguards is challenging, and many papers do not require this, but we encourage authors to take this into account and make a best faith effort.
    \end{itemize}

\item {\bf Licenses for existing assets}
    \item[] Question: Are the creators or original owners of assets (e.g., code, data, models), used in the paper, properly credited and are the license and terms of use explicitly mentioned and properly respected?
    \item[] Answer: \answerYes{}
    \item[] Justification: All datasets and backbone models used are cited with their original publications in §\ref{subsec:setup} and Appendix~\ref{app:datasets}; each asset is used in compliance with its publicly stated license terms.
    \item[] Guidelines:
    \begin{itemize}
        \item The answer \answerNA{} means that the paper does not use existing assets.
        \item The authors should cite the original paper that produced the code package or dataset.
        \item The authors should state which version of the asset is used and, if possible, include a URL.
        \item The name of the license (e.g., CC-BY 4.0) should be included for each asset.
        \item For scraped data from a particular source (e.g., website), the copyright and terms of service of that source should be provided.
        \item If assets are released, the license, copyright information, and terms of use in the package should be provided. For popular datasets, \url{paperswithcode.com/datasets} has curated licenses for some datasets. Their licensing guide can help determine the license of a dataset.
        \item For existing datasets that are re-packaged, both the original license and the license of the derived asset (if it has changed) should be provided.
        \item If this information is not available online, the authors are encouraged to reach out to the asset's creators.
    \end{itemize}

\item {\bf New assets}
    \item[] Question: Are new assets introduced in the paper well documented and is the documentation provided alongside the assets?
    \item[] Answer: \answerYes{}
    \item[] Justification: The SkillFlow framework, evolved skill libraries, and example trajectories are released in the anonymised supplementary material with documentation covering training scripts, configuration, and per-skill metadata.
    \item[] Guidelines:
    \begin{itemize}
        \item The answer \answerNA{} means that the paper does not release new assets.
        \item Researchers should communicate the details of the dataset\slash code\slash model as part of their submissions via structured templates. This includes details about training, license, limitations, etc. 
        \item The paper should discuss whether and how consent was obtained from people whose asset is used.
        \item At submission time, remember to anonymize your assets (if applicable). You can either create an anonymized URL or include an anonymized zip file.
    \end{itemize}

\item {\bf Crowdsourcing and research with human subjects}
    \item[] Question: For crowdsourcing experiments and research with human subjects, does the paper include the full text of instructions given to participants and screenshots, if applicable, as well as details about compensation (if any)?
    \item[] Answer: \answerNA{}
    \item[] Justification: The study involves no crowdsourced annotation and no human subjects.
    \item[] Guidelines:
    \begin{itemize}
        \item The answer \answerNA{} means that the paper does not involve crowdsourcing nor research with human subjects.
        \item Including this information in the supplemental material is fine, but if the main contribution of the paper involves human subjects, then as much detail as possible should be included in the main paper. 
        \item According to the NeurIPS Code of Ethics, workers involved in data collection, curation, or other labor should be paid at least the minimum wage in the country of the data collector. 
    \end{itemize}

\item {\bf Institutional review board (IRB) approvals or equivalent for research with human subjects}
    \item[] Question: Does the paper describe potential risks incurred by study participants, whether such risks were disclosed to the subjects, and whether Institutional Review Board (IRB) approvals (or an equivalent approval/review based on the requirements of your country or institution) were obtained?
    \item[] Answer: \answerNA{}
    \item[] Justification: No human-subjects research is conducted; IRB approval is therefore not required.
    \item[] Guidelines:
    \begin{itemize}
        \item The answer \answerNA{} means that the paper does not involve crowdsourcing nor research with human subjects.
        \item Depending on the country in which research is conducted, IRB approval (or equivalent) may be required for any human subjects research. If you obtained IRB approval, you should clearly state this in the paper. 
        \item We recognize that the procedures for this may vary significantly between institutions and locations, and we expect authors to adhere to the NeurIPS Code of Ethics and the guidelines for their institution. 
        \item For initial submissions, do not include any information that would break anonymity (if applicable), such as the institution conducting the review.
    \end{itemize}

\item {\bf Declaration of LLM usage}
    \item[] Question: Does the paper describe the usage of LLMs if it is an important, original, or non-standard component of the core methods in this research? Note that if the LLM is used only for writing, editing, or formatting purposes and does \emph{not} impact the core methodology, scientific rigor, or originality of the research, declaration is not required.
    \item[] Answer: \answerYes{}
    \item[] Justification: LLMs are central to the methodology: the Supervisor and Executor are LLM-based components, with Qwen3.5-9B (LoRA-tuned) as the primary trainable backbone and frontier proprietary models as alternative backbones (§\ref{subsec:setup}, Appendix~\ref{app:backward_design}). The forward and hindsight backward policies, the TTB objective, and skill creation all operate over LLM next-token distributions.
    \item[] Guidelines:
    \begin{itemize}
        \item The answer \answerNA{} means that the core method development in this research does not involve LLMs as any important, original, or non-standard components.
        \item Please refer to our LLM policy in the NeurIPS handbook for what should or should not be described.
    \end{itemize}

\end{enumerate}

\end{document}